\documentclass[letterpaper]{article} 
\usepackage{aaai2027}  
\usepackage[hyphens]{url}  
\usepackage{graphicx} 
\usepackage{natbib}  
\usepackage{caption} 
\usepackage{algorithm}
\usepackage{algorithmic}
\usepackage{amsmath,amssymb,booktabs,graphicx}
\usepackage{amsfonts}
\usepackage{amsthm}
\usepackage{enumitem}
\usepackage{booktabs}
\usepackage{tabularx}
\usepackage[table]{xcolor}
\usepackage{subcaption}
\usepackage{adjustbox}
\usepackage{grffile}
\usepackage{mathtools}
\usepackage{dsfont}
\newtheorem{theorem}{Theorem}
\newtheorem{proposition}[theorem]{Proposition}
\newtheorem{lemma}[theorem]{Lemma}
\newtheorem{corollary}[theorem]{Corollary}
\theoremstyle{remark}

\usepackage{tikz}
\usetikzlibrary{positioning,shapes,arrows}
\usepackage{tcolorbox}

\newtheorem{definition}[theorem]{Definition}

\theoremstyle{definition}

\usepackage{xcolor,colortbl}

\providecommand{\tabstd}[2]{%
  \ensuremath{#1\!\pm\!#2}%
}

\providecommand{\btabstd}[2]{%
  \ensuremath{\mathbf{#1}\!\pm\!\mathbf{#2}}%
}

\usepackage{newfloat}
\usepackage{listings}
\DeclareCaptionStyle{ruled}{labelfont=normalfont,labelsep=colon,strut=off} 
\floatstyle{ruled}
\newfloat{listing}{tb}{lst}{}
\floatname{listing}{Listing}

\usepackage{booktabs}

\title{TARL: Transaction-Aware Reliable Ledgers for Executable Memory Management in Long-Term Agents}

\author{
    Han Xiao,
    Hongjun Xu\equalcontrib,
    Xin Zhang\equalcontrib,
    Yidong Chen\corresponding,
    Xiaodong Shi\corresponding
}

\affiliations{
    School of Informatics, Xiamen University\\
    15220242202054@stu.xmu.edu.cn,
    mandel@xmu.edu.cn,
    ydchen@xmu.edu.cn
}

\begin{document}
\nocopyright
\maketitle

\begin{abstract}
Persistent memory helps long-term agents retain knowledge, yet a single update error can repeatedly distort future retrieval and reasoning. Most existing systems reduce memory updating to a binary Write/Hold decision, which cannot distinguish whether new information should be added, ignored, used to revise an outdated belief, rejected as unreliable, or deferred for verification. These choices may share the same binary label while producing fundamentally different memory states. We introduce TARL, a memory state update framework that maps each statement to one of five executable actions. TARL identifies the affected memory, resolves its temporal scope, compares source reliability, and updates accepted, pending, and rejected ledgers. It is further trained by comparing the memory states produced by alternative update operations, encouraging the model to select the operation that leads to the correct result. We also introduce TARL-Mem, a benchmark with fine-grained action labels and next-state targets. Across in-domain, cross-source, temporal, counterfactual, and sequential evaluations, TARL improves action prediction and state recovery, reduces memory pollution, preserves conflicting evidence, and limits cumulative corruption.
\end{abstract}

\begin{figure}[t]
\centering
\includegraphics[
width=\columnwidth,
height=0.5\textheight,
keepaspectratio
]{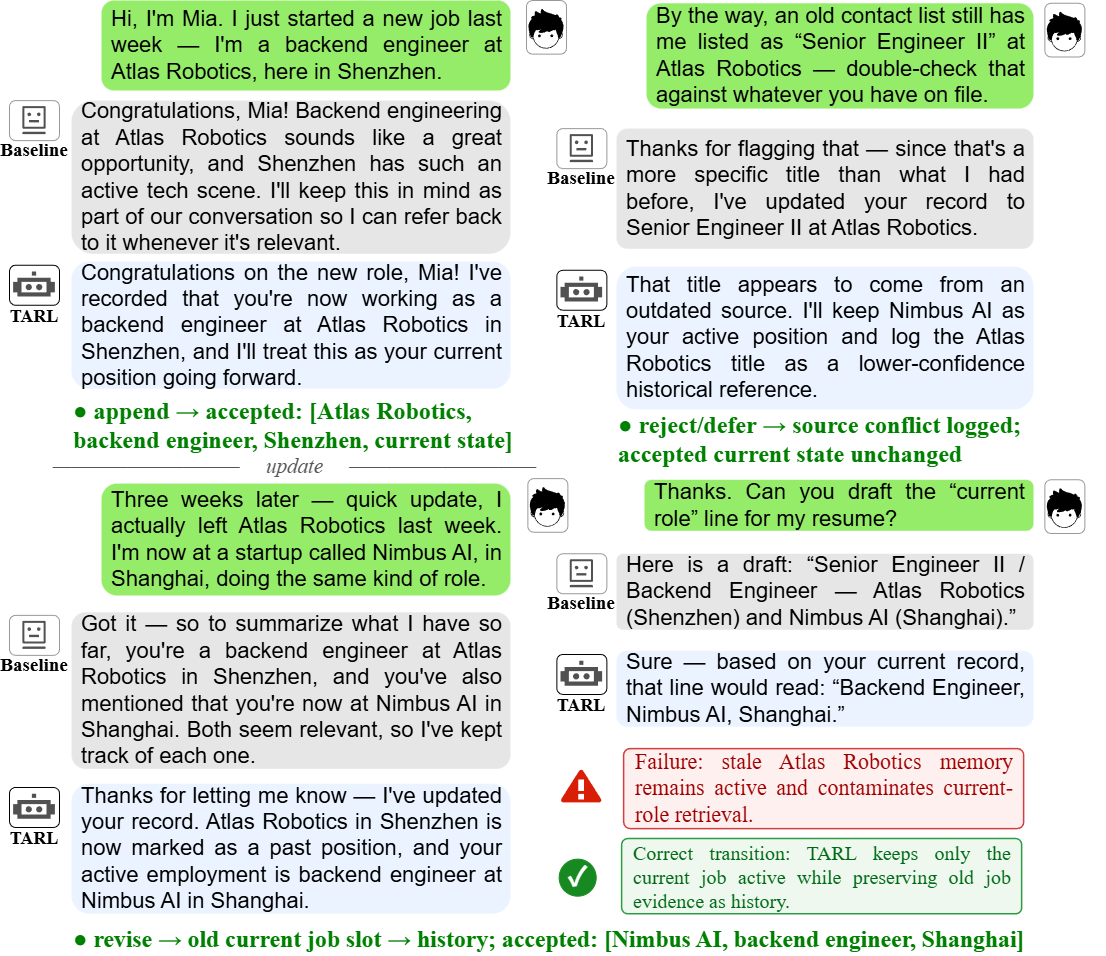}
\caption{
Motivating example of revision-aware memory management.
Read each column top to bottom, then left to right.
A naive baseline keeps stale facts active, while TARL archives superseded facts and preserves the current state for downstream use.
}
\label{fig:motivation}
\end{figure}


\section{Introduction}

Persistent memory allows long-term agents to retain knowledge across sessions
and supports reflection, retrieval augmentation, memory management, and
knowledge organization
\cite{park2023generative,shinn2023reflexion,wang2023longmem,
zhong2024memorybank,gutierrez2024hipporag,
qian2025memorag,xu2025amem,sumers2024cognitive,
yu2025memagentreshapinglongcontextllm,zhang2025gmemory}.
Yet persistence also amplifies update errors. A single incorrect update can
remain in memory, repeatedly surface through retrieval, and distort future
reasoning and decisions
\cite{chen2026halumemevaluatinghallucinationsmemory,
lin2026surveylongtermmemorysecurity}.
Reliable long-term agents must therefore determine not only whether new
information should be remembered, but also how it should change the current
memory state.

Many existing systems update memory through a binary \textit{Write/Hold}
decision
\cite{chen2026halumemevaluatinghallucinationsmemory,wu2025longmemeval,
tan2025membench,zhang2025survey}.
However, new information may need to be added, ignored, used to revise an
outdated belief, rejected because it is unreliable, or deferred until further
verification. Write/Hold indicates whether information should enter memory,
but cannot specify which update should be performed. The same binary label can
therefore produce fundamentally different memory states.

We represent these cases with five executable actions:
\texttt{append}, \texttt{noop}, \texttt{revise},
\texttt{reject\_conflict}, and \texttt{defer\_verify}.
Write merges addition with revision, while Hold merges ignoring, rejection,
and deferred verification. Appendix~A shows that Write/Hold cannot uniquely
determine the resulting memory state. Given the memory targeted by a revision,
the five actions are sufficient to distinguish the update outcomes considered
under our ledger semantics.

We introduce \textbf{Transaction-Aware Reliable Ledgers (TARL)}, a memory
state update framework that maps each incoming statement to one of these five
actions. TARL first identifies the existing memory affected by the statement,
then resolves its temporal scope and compares the reliability of new and
existing evidence. Based on these judgments, it selects an action and updates
three ledgers: accepted, pending, and rejected. Current reliable information is
stored in the accepted ledger, unresolved information remains in the pending
ledger, and unreliable, conflicting, or superseded records are moved to the
rejected ledger with their provenance preserved.

TARL is further trained with \textit{counterfactual execution supervision}.
Standard classification only checks whether the predicted action matches its
label. It does not directly evaluate the memory produced after that action is
executed. We instead apply each possible update action to the current memory state and compare the state it produces with the gold next state. This encourages the model
to select the update that produces the correct result, rather than merely
predicting the correct action name.

To evaluate this capability, we introduce \textbf{TARL-Mem}, a memory update
benchmark derived from HaluMem, LoCoMo, and LongMemEval
\cite{chen2026halumemevaluatinghallucinationsmemory,maharana2024locomo,
wu2025longmemeval}.
For each incoming statement, TARL-Mem identifies the affected memory, provides
the required action, and specifies the correct memory state after execution.
It evaluates whether models can update familiar examples, transfer across data
sources, resolve time-dependent facts, distinguish easily confused actions,
and remain reliable over sequential updates.

Across history-based, retrieval-augmented, prompting-based, and specialized
memory baselines, TARL achieves the strongest overall action prediction and
next-state recovery. It also reduces memory pollution, preserves conflicting
evidence, and limits cumulative corruption over long interaction sequences.
These results show that explicitly predicting and executing fine-grained
memory updates provides a more reliable foundation for persistent agent memory
than binary Write/Hold decisions.

Our contributions are fourfold:
\begin{enumerate}
\item \textbf{Executable update theory.}
We show that coarse Write/Hold supervision loses information required for exact
next-state recovery and characterize five executor-distinguishable transition
effects represented by our current ledger semantics.

\item \textbf{Transaction-Aware Reliable Ledgers.}
We introduce a memory state update framework that identifies the affected
memory, resolves temporal scope, compares source reliability, and updates
accepted, pending, and rejected ledgers.

\item \textbf{Counterfactual execution supervision.}
We train the model by comparing the memory states produced by alternative
updates, directly encouraging actions that lead to the correct result.

\item \textbf{TARL-Mem benchmark.}
We provide fine-grained action labels and executable next-state targets for
in-domain, cross-source, temporal, counterfactual, and sequential evaluation.
\end{enumerate}

\section{Related Work}

\paragraph{Memory Architectures Leave Update Semantics Implicit.}
Long-term agent memory has progressed through persistent experience,
reflection, feedback, and cross-session adaptation
\cite{park2023generative, shinn2023reflexion,
zhong2024memorybank, wang2024memoryllm,
kang2025memoryos, chhikara2025mem0, zhao2024expel,
sumers2024cognitive, xi2025rise},
alongside retrieval advances in dense indexing, fusion decoding, and
reflective reranking
\cite{guu2020realm, lewis2020rag, karpukhin2020dpr, khattab2020colbert,
borgeaud2022retro, izacard2021fid, izacard2023atlas,
gutierrez2024hipporag}.
Recent systems also organize linked memories, compress bounded textual states,
or structure multi-agent experience
\cite{xu2025amem,yu2025memagentreshapinglongcontextllm,
zhang2025gmemory}.
These methods improve memory storage and access, yet update semantics remain
embedded in insertion, merging, summarization, or overwriting.
Consequently, novel, superseding, conflicting, and unverified evidence lacks
explicit operations despite inducing different memory states.
TARL exposes these distinctions as executable transactions over accepted,
pending, and rejected ledgers.

\paragraph{Adjacent Traditions Offer Partial Update Principles.}
Knowledge editing modifies factual associations in model parameters or
auxiliary memories
\cite{de2021editing,meng2022rome,mitchell2022serac,meng2023memit},
usually with the target and replacement value already specified.
Belief revision and contradiction resolution study conflicting evidence,
belief preservation, and coherent inference
\cite{lynn2022conditional,wilie2024belief,wen2024red},
but focus on belief sets, conclusions, or dialogue statements rather than
persistent memory transitions.
Temporal knowledge models represent evolving facts, validity intervals, and
supersession
\cite{trivedi2017knowevolve,dasgupta2018hyte,goel2020diachronic,
lacroix2020tensor,dhingra2022timeaware},
while fact verification separates supported, refuted, and insufficiently
verified claims
\cite{thorne2018fever}.
TARL unifies these signals by jointly predicting the affected slot, temporal
relation, source reliability, and transaction, then executing a well-defined
next-state transition.

\paragraph{Existing Benchmarks Conflate Updating with Access.}
LoCoMo, LongMemEval, MemBench, HaluMem, and related evaluations measure answer
accuracy, retrieval quality, factual consistency, or hallucination
\cite{maharana2024locomo, wu2025longmemeval, tan2025membench,
chen2026halumemevaluatinghallucinationsmemory, chhikara2025mem0,
lin2026surveylongtermmemorysecurity, min2023factscore}.
These outcome-level metrics cannot identify whether an error originated during
memory construction or subsequent access, limiting both diagnosis and direct
update supervision.
TARL-Mem isolates the update stage with candidate-level five-way operation
labels and executable next-state targets, enabling update decisions and their
state consequences to be evaluated independently of downstream retrieval.

\begin{figure*}[t]
\centering
\includegraphics[
width=1\textwidth,
keepaspectratio
]{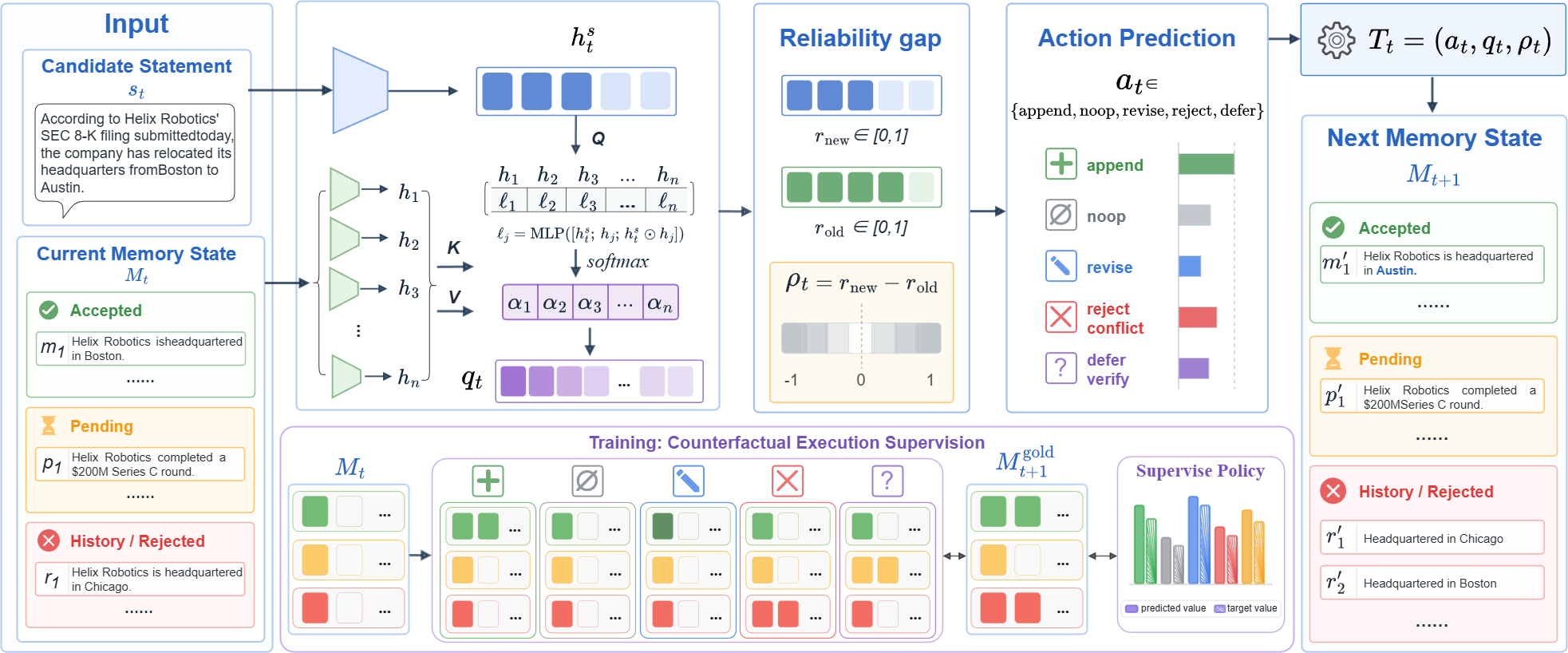}
\caption{
Overview of TARL. TARL grounds the target slot, compares candidate and stored reliability, predicts a five-way memory transaction, and deterministically updates the three ledgers. Counterfactual execution supervision guides policy learning.
}
\label{fig:method}
\end{figure*}

\section{Methodology}
\label{sec:methodology}

TARL, short for \emph{Transaction-Aware Reliable Ledgers}, formulates memory
updating as explicit state transitions. Given a candidate statement $s_t$
and current memory $\mathcal{M}_t$ at turn $t$, TARL grounds the candidate
against stored evidence, compares their reliability, and selects one of five
executable operations. A deterministic executor applies the transaction to
produce $\mathcal{M}_{t+1}$, making its concrete effect on memory explicit.

Training evaluates the state induced by every operation and uses execution
quality as supervision beyond the action label. At inference, all operations
are scored jointly, and only the selected operation is executed.

\subsection{Three-Ledger Transaction Semantics}
\label{subsec:ledger_semantics}

Reliable updating requires trusted, unresolved, and inactive information to
remain distinguishable. TARL therefore represents persistent memory as
\begin{equation}
\mathcal{M}_t
=
\left(
A_t,
P_t,
H_t
\right),
\label{eq:memory_state}
\end{equation}
where $\mathcal{M}_t$ is the complete state at turn $t$, and $A_t$, $P_t$,
and $H_t$ are the \emph{Accepted}, \emph{Pending}, and
\emph{History / Rejected} ledgers. Accepted stores active evidence for later
access. Pending preserves unresolved evidence for verification. History /
Rejected stores inactive entries, including rejected candidates and Accepted
entries displaced by later revisions. Rejected and superseded entries remain
distinct for auditability without exposing inactive evidence.

To specify how a candidate changes this state, TARL defines
\begin{equation}
\begin{aligned}
\mathcal{A}
=
\{
&
\texttt{append},
\texttt{noop},
\texttt{revise},
\\
&
\texttt{reject\_conflict},
\texttt{defer\_verify}
\}.
\end{aligned}
\label{eq:operation_space}
\end{equation}
Here, $\mathcal{A}$ is the operation space. Each action determines the
candidate's destination and whether the grounded Accepted entry remains active,
thereby specifying an executable transition.

\begin{table}[t]
\centering
\footnotesize
\setlength{\tabcolsep}{3.5pt}
\renewcommand{\arraystretch}{1.05}
\resizebox{\columnwidth}{!}{%
\begin{tabular}{lll}
\toprule
Operation
&
Candidate outcome
&
Grounded Accepted entry
\\
\midrule
\texttt{append}
&
Added to Accepted
&
Preserved
\\
\texttt{noop}
&
Not retained
&
Preserved
\\
\texttt{revise}
&
Added to Accepted
&
Archived as superseded
\\
\texttt{reject\_conflict}
&
Added to History / Rejected
&
Preserved
\\
\texttt{defer\_verify}
&
Added to Pending
&
Preserved
\\
\bottomrule
\end{tabular}%
}
\caption{Execution semantics of the five operations over the three ledgers.}
\label{tab:operation_semantics}
\end{table}

These operations preserve distinctions lost under a binary write or hold
decision. \texttt{append} and \texttt{revise} both activate the candidate,
yet only \texttt{revise} retires prior evidence.
\texttt{revise} and \texttt{reject\_conflict} both resolve a contradiction,
yet preserve opposite claims as active memory. \texttt{noop} and
\texttt{defer\_verify} both avoid an Accepted write, yet only
\texttt{defer\_verify} retains the candidate for later resolution.

\subsection{Ledger-Conditioned Transaction Prediction}
\label{subsec:transaction_prediction}

This module turns the candidate and current ledgers into an executable update by
locating relevant memory, comparing evidence strength, and selecting the
resulting transition. TARL represents the predicted transaction as
\begin{equation}
\mathcal{T}_t
=
\left(
a_t,
\xi_t,
\rho_t
\right),
\label{eq:transaction}
\end{equation}
where $a_t\in\mathcal{A}$ is the selected operation, $\xi_t$ is its execution
argument, and $\rho_t$ is the reliability margin between new and stored
evidence. For target-dependent actions, $\xi_t$ is the grounded slot index;
otherwise it is empty. Hereafter, $[\cdot;\cdot]$ denotes concatenation,
$\odot$ elementwise multiplication, and $\sigma(\cdot)$ the sigmoid function.

\paragraph{Candidate and memory encoding.}
The encoder places incoming and stored information in a shared space. A text
backbone maps $s_t$ to $h_t^s\in\mathbb{R}^{d}$, where $d$ is the hidden
dimension. Each targetable slot $m_{t,i}$ is mapped to
$h_{t,i}^m\in\mathbb{R}^{d}$, while its visible confidence, source, and
temporal metadata are retained as update evidence.

\paragraph{Target grounding.}
Before selecting an action, TARL locates the stored entry that the candidate
repeats, updates, or contradicts. Let $K_t$ be the number of targetable slots:
\begin{equation}
\begin{aligned}
\alpha_{t,i}
&=
\operatorname{softmax}_{i}
\left(
f_{\mathrm{g}}
\left(
\left[
h_t^s;
h_{t,i}^m;
h_t^s\odot h_{t,i}^m
\right]
\right)
\right),
\\
q_t
&=
\sum_{i=1}^{K_t}
\alpha_{t,i}h_{t,i}^m,
\qquad
\hat{\jmath}_t
=
\arg\max_{1\le i\le K_t}\alpha_{t,i}.
\end{aligned}
\label{eq:target_grounding}
\end{equation}
The scalar network $f_{\mathrm{g}}$ measures candidate-slot compatibility,
including their elementwise interaction. Softmax yields grounding probabilities
$\alpha_{t,i}$. Their weighted sum $q_t$ provides differentiable target
evidence, while $\hat{\jmath}_t$ supplies the discrete execution target. We set
$\xi_t=\hat{\jmath}_t$ when required; actions that modify active evidence must
target Accepted.

\paragraph{Reliability comparison.}
Grounding identifies the relevant entry, while reliability comparison determines
which side should remain active. TARL computes
\begin{equation}
\begin{aligned}
z_t
&=
f_{\mathrm{tx}}
\left(
\left[
h_t^s;
q_t;
h_t^s-q_t;
h_t^s\odot q_t;
g_t
\right]
\right),
\\
\begin{bmatrix}
r_t^{\mathrm{new}}\\
r_t^{\mathrm{old}}
\end{bmatrix}
&=
\sigma\!\left(f_{\mathrm{rel}}(z_t)\right),
\qquad
\rho_t
=
r_t^{\mathrm{new}}-r_t^{\mathrm{old}}.
\end{aligned}
\label{eq:reliability}
\end{equation}
Here, $g_t$ collects temporal, confidence, source, and conflict cues.
$f_{\mathrm{tx}}$ combines them with semantic agreement, difference, and
interaction to form $z_t$. The joint head $f_{\mathrm{rel}}$ outputs
$r_t^{\mathrm{new}},r_t^{\mathrm{old}}\in(0,1)$, measuring support for the
candidate and grounded memory in the same context.

The signed margin $\rho_t$ makes this comparison actionable. A positive value
supports committing the candidate and may justify revision. A negative value
supports preserving existing evidence under conflict. A small magnitude
indicates insufficient separation and favors deferred verification.

\paragraph{Operation-conditioned valuation.}
This stage converts grounded evidence into an executable action. Each
$a\in\mathcal{A}$ has a learned embedding $e_a$ and a fixed execution code
$\phi(a)$ describing its destination ledger, target requirement, preservation
rule, and effect on active evidence. TARL computes
\begin{equation}
\begin{aligned}
v_t(a)
&=
f_{\mathrm{op}}
\left(
\left[
z_t;
e_a;
\phi(a);
r_t^{\mathrm{new}};
r_t^{\mathrm{old}};
\rho_t
\right]
\right),
\\
p_{\theta}
\left(
a_t=a
\mid
s_t,\mathcal{M}_t
\right)
&=
\frac{
\exp\!\left(v_t(a)\right)
}{
\displaystyle
\sum_{a'\in\mathcal{A}}
\exp\!\left(v_t(a')\right)
}.
\end{aligned}
\label{eq:operation_policy}
\end{equation}
The network $f_{\mathrm{op}}$ produces score $v_t(a)$, and softmax forms the
five-way policy $p_{\theta}$ under parameters $\theta$. Conditioning on
$\phi(a)$ evaluates each action together with its induced state change, linking
evidence comparison to executable behavior.

\subsection{Deterministic Ledger Execution}
\label{subsec:ledger_execution}

The predicted transaction must be converted into an exact state change. A fixed
executor guarantees deterministic mutation. Let $e_t$ be the structured entry
created from $s_t$, and let $\xi_t$ be the target index or empty argument. The
next state is
\begin{equation}
\mathcal{M}_{t+1}
=
\operatorname{Exec}
\left(
\mathcal{M}_t,
e_t,
a_t,
\xi_t
\right),
\label{eq:ledger_execution}
\end{equation}
where $\operatorname{Exec}$ implements Table~\ref{tab:operation_semantics},
separating neural prediction from rule-governed mutation.

Under \texttt{append}, $e_t$ enters Accepted. Under \texttt{noop}, all ledgers
remain unchanged. Under \texttt{revise}, $e_t$ replaces the Accepted entry
indexed by $\xi_t$, and the displaced entry is archived as superseded. Under
\texttt{reject\_conflict}, $e_t$ is archived as rejected while the Accepted
target remains active. Under \texttt{defer\_verify}, $e_t$ enters Pending.
Revision and rejection retain provenance links to the evidence they replace
or contradict.

\subsection{Counterfactual Execution Supervision}
\label{subsec:counterfactual_supervision}

Action labels treat all mistakes equally despite their different memory
consequences. An incorrect \texttt{noop} omits a useful update, whereas an
incorrect \texttt{revise} may remove valid evidence and contaminate later turns.
TARL therefore evaluates every action through its induced next state.

During training, all actions are executed from the same current state:
\begin{equation}
\begin{aligned}
\widetilde{\mathcal{M}}_{t+1}^{(a)}
&=
\operatorname{Exec}
\left(
\mathcal{M}_t,
e_t,
a,
\xi_t^{\mathrm{cf}}(a)
\right),
\\
Q_t(a)
&=
\sum_{
L\in\{\mathrm{A},\mathrm{P},\mathrm{H}\}
}
\omega_L
S_L
\left(
\widetilde{\mathcal{M}}_{t+1}^{(a),L},
\mathcal{M}_{t+1}^{*,L}
\right).
\end{aligned}
\label{eq:counterfactual_quality}
\end{equation}
Here, $\widetilde{\mathcal{M}}_{t+1}^{(a)}$ is the hypothetical state induced
by $a$, and $\xi_t^{\mathrm{cf}}(a)$ uses the annotated target when available
and $\hat{\jmath}_t$ otherwise. $\mathcal{M}_{t+1}^{*}$ is the gold state.
$L\in\{\mathrm{A},\mathrm{P},\mathrm{H}\}$ selects a ledger, $S_L$ measures
its agreement, and nonnegative weights $\omega_L$ sum to one. Accepted receives
the largest weight because it controls later access. Thus, $Q_t(a)$ measures
state quality and penalizes invalid action-target pairs.

These qualities form a graded supervision target:
\begin{equation}
\begin{aligned}
\pi_t^{\mathrm{cf}}(a)
&=
\frac{
\exp\!\left(Q_t(a)/\tau\right)
}{
\displaystyle
\sum_{a'\in\mathcal{A}}
\exp\!\left(Q_t(a')/\tau\right)
},
\\
\mathcal{L}_{\mathrm{cf}}
&=
-
\sum_{a\in\mathcal{A}}
\pi_t^{\mathrm{cf}}(a)
\log
p_{\theta}
\left(
a
\mid
s_t,\mathcal{M}_t
\right).
\end{aligned}
\label{eq:counterfactual_loss}
\end{equation}
The temperature $\tau>0$ controls the sharpness of $\pi_t^{\mathrm{cf}}$,
while $\mathcal{L}_{\mathrm{cf}}$ aligns the policy with next-state quality,
giving near-correct alternatives more credit than destructive transitions.

\subsection{Training and Single-Path Inference}
\label{subsec:training_inference}

Training jointly supervises operation selection and the intermediate decisions
required for correct execution. The objective is
\begin{equation}
\mathcal{L}
=
\mathcal{L}_{\mathrm{act}}
+
\lambda_q\mathcal{L}_{\mathrm{slot}}
+
\lambda_r\mathcal{L}_{\mathrm{rel}}
+
\lambda_{\mathrm{cf}}\mathcal{L}_{\mathrm{cf}},
\label{eq:training_objective}
\end{equation}
where $\mathcal{L}_{\mathrm{act}}$ supervises the operation,
$\mathcal{L}_{\mathrm{slot}}$ the target, $\mathcal{L}_{\mathrm{rel}}$ the two
reliability estimates, and $\mathcal{L}_{\mathrm{cf}}$ execution quality. Their
weights $\lambda_q$, $\lambda_r$, and $\lambda_{\mathrm{cf}}$ are all set to
$0.2$.

At inference, TARL commits to one operation and one execution path:
\begin{equation}
\begin{aligned}
\hat{a}_t
&=
\arg\max_{a\in\mathcal{A}}
p_{\theta}
\left(
a
\mid
s_t,\mathcal{M}_t
\right),
\\
\widehat{\mathcal{M}}_{t+1}
&=
\operatorname{Exec}
\left(
\mathcal{M}_t,
e_t,
\hat{a}_t,
\hat{\xi}_t
\right).
\end{aligned}
\label{eq:inference}
\end{equation}
Here, $\hat{a}_t$ is the selected operation,
$\hat{\xi}_t=\hat{\jmath}_t$ when a target is required and is empty otherwise,
and $\widehat{\mathcal{M}}_{t+1}$ is the executed memory state.

Gold next states only compute $Q_t(a)$ during training and never enter the
forward path. At inference, TARL scores all five operations jointly, constructs
no hypothetical states, and invokes the executor once. Counterfactual
supervision therefore adds no branched execution or gold-state dependence at
test time. Appendix B studies loss-weight sensitivity, Appendix F establishes bounded error propagation and Accepted-memory containment, and Appendix G
reports implementation details.

\section{Experiment}
\subsection{Experimental Setup}
\label{sec:experimental_setup}

\paragraph{Dataset.}
Existing long-term memory benchmarks mainly assess retrieval or downstream
question answering, offering limited supervision for how memory should evolve
under novelty, redundancy, revision, conflict, and uncertainty. We introduce
\textbf{TARL-Mem}, an architecture-agnostic benchmark that constrains
externally observable memory-update behavior. Each instance couples a candidate
statement $c_i$, current three-ledger state $M_i$, and inference-visible
metadata with a five-way transaction label, a target slot when required, and
an exact gold next state $M_{i+1}^{\star}$.

HaluMem-hard, LoCoMo, and LongMemEval provide source interactions
\cite{chen2026halumemevaluatinghallucinationsmemory,maharana2024locomo,wu2025longmemeval}.
We transform these interactions into candidate-state instances with
five-action, target-slot, and executable three-ledger annotations, yielding
5,422 examples. We construct entity- and memory-topic-group-disjoint splits
and audit duplicate, entity, topic, pattern-signature, and source-majority
overlap to reduce cross-split leakage
\cite{kaufman2011leakage,joeres2025datasail}.
Appendix C details the construction, annotation, split statistics, and leakage
audits.

\paragraph{Evaluation Protocol and Metrics.}
All methods receive the same candidate statement, current ledger state, and
inference-visible metadata. Gold actions, target slots, reliability labels,
temporal labels, and next states are withheld at test time. Each model output
is converted into a common transaction
$\widehat{\tau}_i=(\widehat{a}_i,\widehat{q}_i)$ and applied by the same
deterministic ledger executor:
\begin{equation}
\widehat{M}_{i+1}
=
\mathcal{E}
\bigl(
M_i,
c_i,
\widehat{\tau}_i
\bigr).
\label{eq:experimental_execution}
\end{equation}

Let $\mathcal{A}$ denote the five-action space, and let
$\mathcal{R}=\{i:a_i^{\star}=\texttt{reject\_conflict}\}$ denote the set of
gold conflict-rejection examples. We evaluate fine-grained action prediction
using five-action Macro F1:
\begin{equation}
\mathrm{F1}_{5}
=
\frac{1}{|\mathcal{A}|}
\sum_{a\in\mathcal{A}}
\mathrm{F1}(a).
\label{eq:experimental_five_f1}
\end{equation}

We measure executable correctness using exact next-state accuracy:
\begin{equation}
\mathrm{Acc}_{\mathrm{state}}
=
\frac{1}{N}
\sum_{i=1}^{N}
\mathbf{1}
\bigl[
\widehat{M}_{i+1}
=
M_{i+1}^{\star}
\bigr].
\label{eq:experimental_state_acc}
\end{equation}

Memory pollution is the proportion of predicted accepted commitments that
should not appear in the gold accepted ledger:
\begin{equation}
\mathrm{Pollution}
=
\frac{
\displaystyle
\sum_{i=1}^{N}
\mathbf{1}
\bigl[
\widehat{z}_{i}^{\mathrm{acc}}=1
\land
z_{i}^{\star,\mathrm{acc}}=0
\bigr]
}{
\displaystyle
\sum_{i=1}^{N}
\mathbf{1}
\bigl[
\widehat{z}_{i}^{\mathrm{acc}}=1
\bigr]
+
\varepsilon
}.
\label{eq:experimental_pollution}
\end{equation}

Conflict preservation measures whether the trusted existing entry remains in
the accepted ledger and the conflicting candidate is placed in the rejected
ledger:
\begin{equation}
\mathrm{Pres}_{\mathrm{conf}}
=
\frac{1}{|\mathcal{R}|}
\sum_{i\in\mathcal{R}}
\mathbf{1}
\bigl[
t_i\in\widehat{M}_{i+1}^{\mathrm{acc}}
\land
c_i\in\widehat{M}_{i+1}^{\mathrm{rej}}
\bigr].
\label{eq:experimental_conflict_pres}
\end{equation}

We assess confidence calibration using expected calibration error:
\begin{equation}
\mathrm{ECE}
=
\sum_{b=1}^{B}
\frac{|\mathcal{B}_b|}{N}
\left|
\mathrm{Acc}(\mathcal{B}_b)
-
\mathrm{Conf}(\mathcal{B}_b)
\right|.
\label{eq:experimental_ece}
\end{equation}

Here, $\widehat{z}_{i}^{\mathrm{acc}}$ and
$z_{i}^{\star,\mathrm{acc}}$ indicate whether candidate $c_i$ is committed
to the predicted and gold accepted ledgers, respectively. The symbol $t_i$
denotes the trusted existing entry involved in a conflict, while
$\{\mathcal{B}_b\}_{b=1}^{B}$ denotes a collection of equal-width confidence
bins. Exact next-state accuracy requires agreement across the accepted,
pending, and rejected ledgers. Higher values are preferred for
$\mathrm{F1}_{5}$, $\mathrm{Acc}_{\mathrm{state}}$, and
$\mathrm{Pres}_{\mathrm{conf}}$, while lower values are preferred for
Pollution and ECE. Supplementary analyses additionally report Write/Hold F1
and Temporal Macro F1. All results are reported as mean $\pm$ standard
deviation over five independent runs.

\begin{table*}[!htbp]
\centering
\footnotesize
\setlength{\tabcolsep}{4.0pt}
\renewcommand{\arraystretch}{0.92}
\resizebox{\textwidth}{!}{%
\begin{tabular}{lccccc}
\toprule
Model
& \shortstack{5-way Macro\\F1 $\uparrow$}
& \shortstack{Next Memory State\\Accuracy $\uparrow$}
& \shortstack{Memory Pollution\\Rate $\downarrow$}
& \shortstack{Conflict Preservation\\Accuracy $\uparrow$}
& ECE $\downarrow$
\tabularnewline
\midrule

Full History \cite{wu2025longmemeval}
& \tabstd{0.7713}{0.0136}
& \tabstd{0.6302}{0.0077}
& \tabstd{0.3191}{0.0193}
& \tabstd{0.5043}{0.0046}
& \tabstd{0.1073}{0.0033}
\tabularnewline

LongMemEval \cite{wu2025longmemeval}
& \tabstd{0.7887}{0.0142}
& \tabstd{0.6354}{0.0101}
& \tabstd{0.2865}{0.0081}
& \tabstd{0.5189}{0.0054}
& \tabstd{0.1042}{0.0029}
\tabularnewline

HippoRAG \cite{gutierrez2024hipporag}
& \tabstd{0.7364}{0.0113}
& \tabstd{0.6033}{0.0121}
& \tabstd{0.2880}{0.0081}
& \tabstd{0.5176}{0.0149}
& \tabstd{0.1695}{0.0052}
\tabularnewline

MemoryBank \cite{zhong2024memorybank}
& \tabstd{0.6865}{0.0107}
& \tabstd{0.5498}{0.0139}
& \tabstd{0.3450}{0.0105}
& \tabstd{0.5059}{0.0058}
& \tabstd{0.1556}{0.0044}
\tabularnewline

A-Mem \cite{xu2025amem}
& \tabstd{0.7419}{0.0132}
& \tabstd{0.6125}{0.0189}
& \tabstd{0.2857}{0.0084}
& \tabstd{0.4941}{0.0041}
& \tabstd{0.1431}{0.0046}
\tabularnewline

MemAgent \cite{yu2025memagentreshapinglongcontextllm}
& \tabstd{0.7871}{0.0131}
& \tabstd{0.6324}{0.0079}
& \tabstd{0.2844}{0.0081}
& \tabstd{0.5283}{0.0069}
& \tabstd{0.0459}{0.0014}
\tabularnewline

G-Memory \cite{zhang2025gmemory}
& \tabstd{0.7441}{0.0132}
& \tabstd{0.6107}{0.0089}
& \tabstd{0.2809}{0.0085}
& \tabstd{0.5059}{0.0061}
& \tabstd{0.1240}{0.0037}
\tabularnewline

\textbf{Ours}
& \btabstd{0.8286}{0.0147}
& \btabstd{0.6621}{0.0101}
& \btabstd{0.2524}{0.0080}
& \btabstd{0.5476}{0.0066}
& \btabstd{0.0369}{0.0013}
\tabularnewline

\bottomrule
\end{tabular}%
}
\caption{Main comparison on reliable memory updating.}
\label{tab:main_comparison}
\end{table*}

\paragraph{Baselines.}
We compare Ours with seven representative baselines.
Full History retains the complete interaction history, whereas
LongMemEval~\cite{wu2025longmemeval} applies optimized retrieval.
HippoRAG~\cite{gutierrez2024hipporag} and
G-Memory~\cite{zhang2025gmemory} represent graph-based memory;
MemoryBank~\cite{zhong2024memorybank} and
A-Mem~\cite{xu2025amem} represent persistent and adaptive memory; and
MemAgent~\cite{yu2025memagentreshapinglongcontextllm} represents recurrent
memory. We use official implementations and recommended configurations,
preserving each method's native memory construction, retrieval, organization,
and updating. Only task-facing interfaces are adapted; all methods share the
same splits, visible inputs, backbone, training and evaluation protocols, and
deterministic executor.

\subsection{Experimental Overview}
\label{sec:experimental_overview}

Our experiments examine whether fine-grained memory transactions enable
accurate decisions, faithful state transitions, and stable long-term memory
evolution. We first benchmark TARL against representative memory systems across
transaction prediction, next-state recovery, pollution control, conflict
preservation, and calibration. We then probe action granularity, testing whether
binary \texttt{write}/\texttt{hold} supervision can recover the correct
executable transition. Next, we examine cross-source generalization, emphasizing
temporal reasoning and next-state recovery under distribution shift. We further
conduct systematic ablations over target grounding, temporal modeling,
reliability comparison, transaction reasoning, and ledger execution to quantify
their contributions. Finally, we evaluate sequential rollout, measuring whether
transaction-level reliability is sustained as executed states accumulate over
successive memory updates under compounding state changes across long horizons.

Appendix~E provides complementary diagnostics. Appendix~E.1 analyzes temporal
scope reasoning, Appendix~E.2 evaluates counterfactual consistency,
Appendix~E.3 measures downstream QA utility and unsafe information propagation,
Appendix~E.4 studies prompting-only LLMs, and Appendix~E.5 examines direct
inference without task-specific adaptation. Together, these evaluations test
TARL from individual transaction decisions to long-horizon state evolution,
showing whether fine-grained supervision can produce memory states that remain
accurate, conflict-aware, accessible, and stable over time.

\subsection{Main Comparison}
\label{main}

Table~\ref{tab:main_comparison} shows that TARL performs best across all
evaluation dimensions, achieving the highest 5-way Macro F1, next-memory-state
accuracy, and conflict preservation, together with the lowest memory pollution
and ECE. These gains demonstrate that TARL improves both transaction prediction
and the ledger states produced by execution. Its low pollution rate reflects
more reliable routing of untrustworthy evidence, while its strong conflict
preservation shows that contradictions can be retained without contaminating
accepted memory. Overall, explicit transaction modeling and ledger-aware
execution enable accurate, conflict-resilient, and well-calibrated memory
updating. Further analysis appears in Appendix~D.1.

\subsection{Why Binary Memory Supervision Is Insufficient}
\label{motivation}

Table~\ref{tab:binary_executor_state} compares three executors.
GoldBinary DefaultExec directly executes gold Write/Hold labels,
GoldBinary HeuristicExec supplements them with hand-crafted rules, and
Gold5way Oracle executes gold five-action labels. Even with gold binary
supervision, DefaultExec cannot preserve conflicts, and heuristics recover only
part of the target state. This failure arises because \texttt{write} conflates
\texttt{append} and \texttt{revise}, while \texttt{hold} conflates
\texttt{noop}, \texttt{defer\_verify}, and \texttt{reject\_conflict}, despite
their distinct ledger transitions. Write/Hold labels therefore do not uniquely
determine the next state. Gold5way Oracle achieves exact recovery, confirming
that five-action supervision retains the information required for execution.
TARL substantially surpasses both binary executors by predicting these actions
explicitly, with the remaining oracle gap attributable to action and target
localization errors. Further analysis appears in Appendix~D.2.

\begin{table}[!htbp]
\centering
\footnotesize
\setlength{\tabcolsep}{4.5pt}
\renewcommand{\arraystretch}{0.92}
\resizebox{\columnwidth}{!}{%
\begin{tabular}{lcc}
\toprule
Executor
&
Next State Acc. $\uparrow$
&
Conflict Pres. $\uparrow$
\\
\midrule
GoldBinary DefaultExec.
&
\tabstd{0.2860}{0.0089}
&
\tabstd{0.0000}{0.0000}
\\
GoldBinary HeuristicExec.
&
\tabstd{0.4539}{0.0178}
&
\tabstd{0.1059}{0.0050}
\\
Gold5way Oracle
&
\tabstd{1.0000}{0.0000}
&
\tabstd{1.0000}{0.0000}
\\
Ours
&
\tabstd{0.6521}{0.0101}
&
\tabstd{0.5376}{0.0066}
\\
\bottomrule
\end{tabular}%
}
\caption{Executable state recovery under binary and five-action supervision.}
\label{tab:binary_executor_state}
\end{table}

\begin{figure*}[t]
\centering
\includegraphics[
width=1\textwidth,
height=0.3\textheight,
keepaspectratio
]{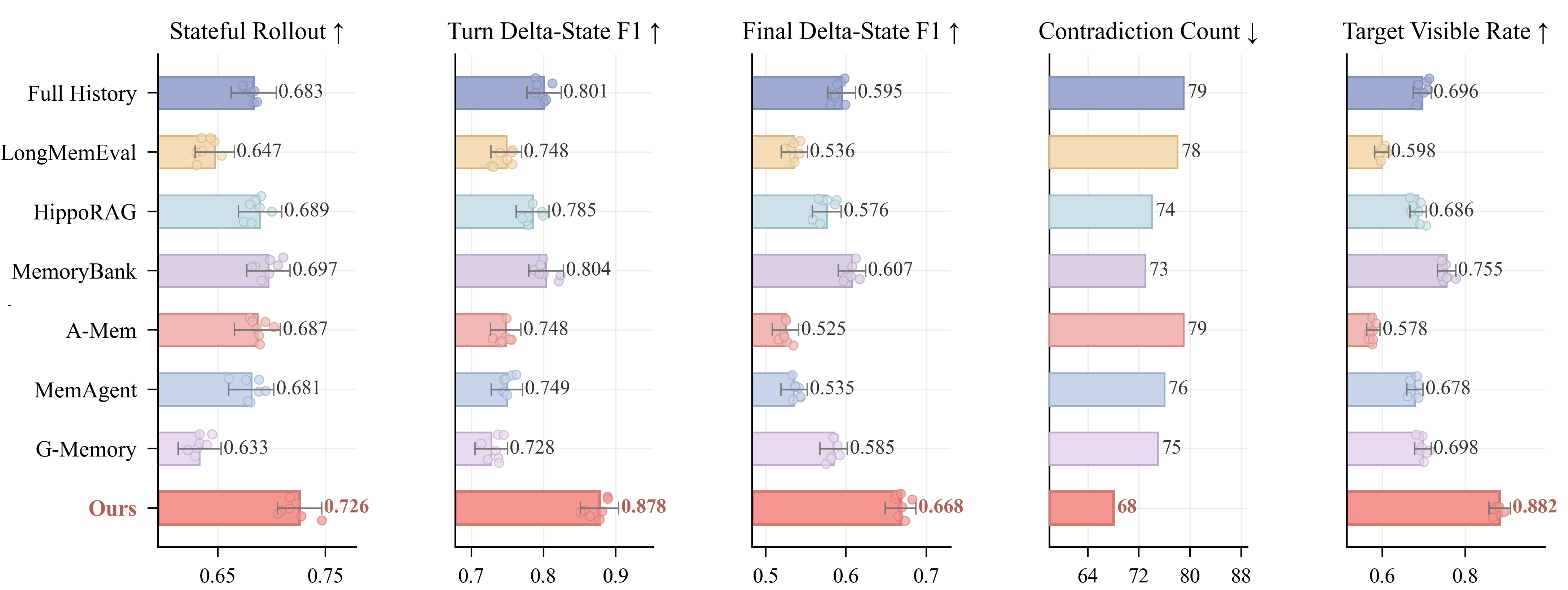}
\caption{
Sequential memory rollout with executed-state feedback.Higher is better except for contradiction count.
}
\label{fig:rollout}
\end{figure*}

\begin{figure}[t]
\centering
\includegraphics[
width=0.98\columnwidth,
height=0.25\textheight,
keepaspectratio
]{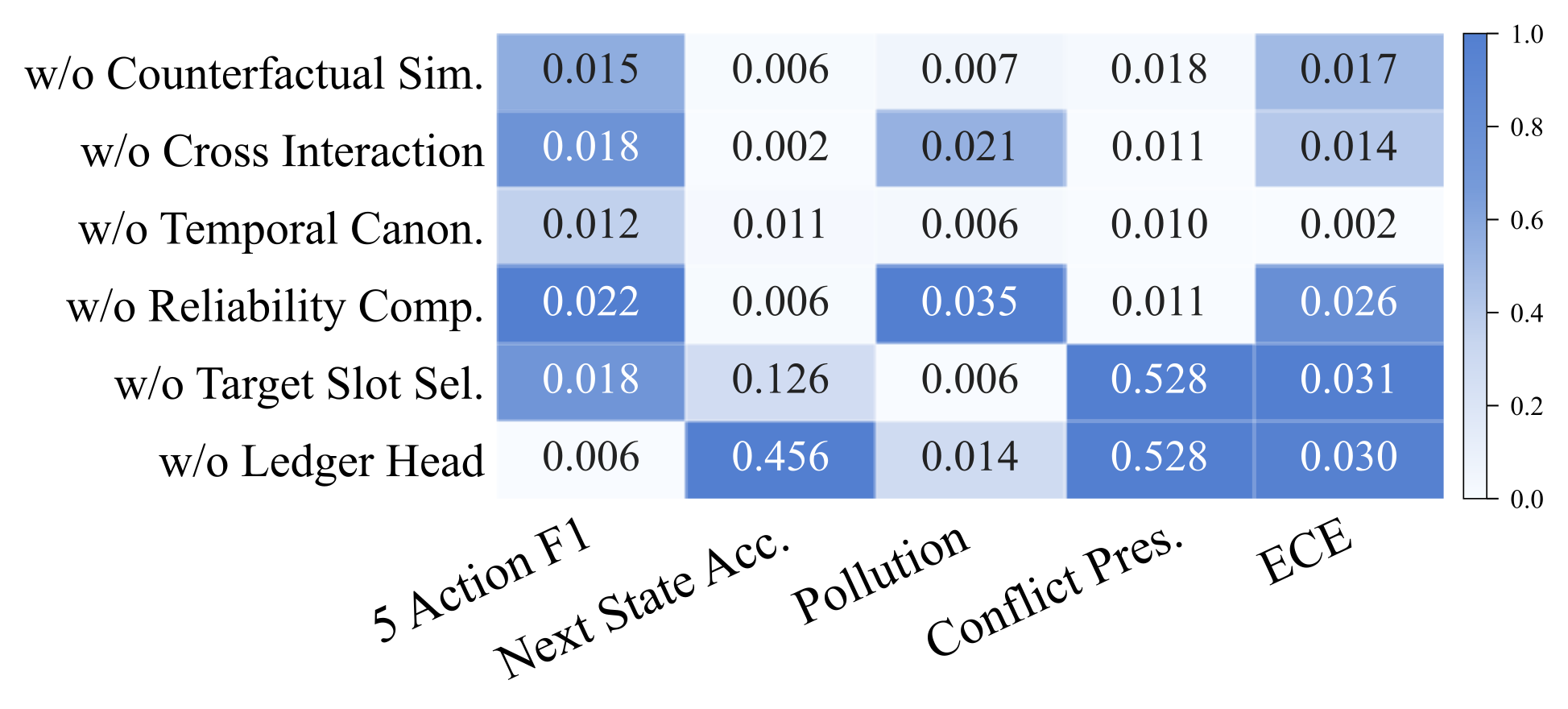}
\caption{
Component ablation on Ours. Larger degradation indicates greater component contribution.
}
\label{fig:ablation}
\end{figure}

\begin{figure}[t]
\centering
\includegraphics[
width=0.98\columnwidth,
height=0.2\textheight,
keepaspectratio
]{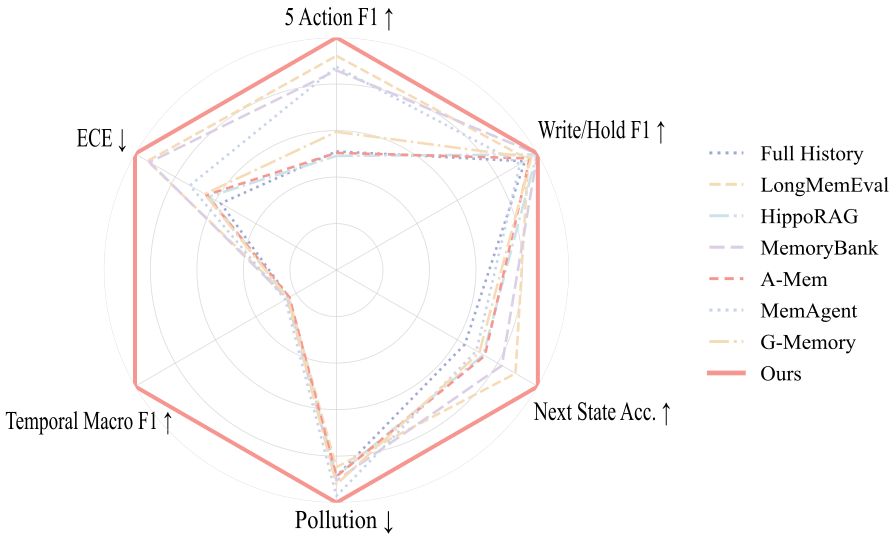}
\caption{
Cross-source generalization to the LoCoMo-derived holdout. All axes are direction-aligned; higher is better.
}
\label{fig:cross}
\end{figure}

\subsection{Cross-Source Generalization}
\label{cross}

Figure~\ref{fig:cross} evaluates transfer to the unseen LoCoMo-derived source.
TARL achieves the strongest overall profile, leading in five-action F1,
next-state accuracy, Temporal Macro F1, pollution control, and calibration.
Its gains are largest in temporal reasoning and executable state recovery,
while all methods remain closer on Write/Hold F1. This contrast shows that
similar binary performance can conceal substantially different transaction
choices and resulting ledger states. TARL preserves its fine-grained decision
boundaries and execution quality under source shift, indicating that its
transaction semantics generalize beyond source-specific update patterns.
Additional holdout results appear in Appendix~D.3.

\subsection{Ablation Study}
\label{ablation}

Figure~\ref{fig:ablation} reveals a clear functional decomposition within
TARL. Removing the target-slot selector or ledger head sharply degrades
next-state recovery and conflict preservation, despite smaller changes in
five-action F1, showing that accurate actions alone do not ensure correct state
transitions. Removing the reliability comparator causes the largest decline in
action discrimination and pollution control, confirming that relative source
trust is critical for separating reliable updates from harmful writes.
Cross-interaction modeling, temporal canonicalization, and counterfactual
simulation provide complementary gains in pollution, temporal consistency,
conflict handling, and calibration. Reliable updating therefore requires
coordinated action prediction, target localization, reliability reasoning,
temporal resolution, and ledger execution. Full results appear in
Appendix~D.4.

\subsection{Executable Memory Rollout}
\label{rollout}

Figure~\ref{fig:rollout} evaluates sequential rollout, where each predicted
transaction is executed and the resulting ledger conditions subsequent turns.
This setting exposes compounding errors because an incorrect update can alter
the evidence available to later decisions and progressively distort the memory
trajectory. The metrics evaluate intermediate and final state recovery,
accumulated contradictions, and the visibility of evidence required downstream.

TARL achieves the best performance across all metrics, recovering more accurate
intermediate and final states, preserving more target evidence, and accumulating
the fewest contradictions. Its simultaneous gains in target visibility and
conflict control show that useful information remains accessible without
allowing unreliable evidence to propagate across turns. These results
demonstrate that fine grained transaction execution limits cumulative error and
maintains accurate, consistent, and usable memory throughout sequential
interaction. Formal definitions and turn level analyses are provided in
Appendix~D.5.

\section{Conclusion}

We establish that binary Write/Hold supervision assigns the same label to
updates that should change memory in different ways, making exact next-state
recovery impossible. Under our ledger design, even when the revision target is
known, exact recovery requires distinguishing five operations that produce
different memory changes. TARL learns when and where to apply these operations
by locating the affected slot, resolving temporal scope and source reliability,
and supervising each decision through the ledger state it produces. TARL-Mem
evaluates both predicted transactions and their resulting states. Across
in-domain, cross-source, counterfactual, stress, and sequential settings, TARL
improves fine-grained action prediction and next-state recovery while reducing
memory pollution, preserving conflicting evidence, improving calibration, and
slowing cumulative error propagation. These results show that reliable
long-term memory requires learning how each update should change memory,
providing a stronger foundation for persistent and trustworthy agents.

\bibliography{aaai2027}

@inproceedings{guu2020realm,
  author        = {Guu, Kelvin and Lee, Kenton and Tung, Zora and Pasupat, Panupong and Chang, Ming-Wei},
  title         = {{REALM}: Retrieval-Augmented Language Model Pre-Training},
  booktitle     = {Proceedings of the 37th International Conference on Machine Learning},
  series        = {Proceedings of Machine Learning Research},
  volume        = {119},
  pages         = {3929--3938},
  year          = {2020},
  publisher     = {PMLR},
  url           = {https://proceedings.mlr.press/v119/guu20a.html},
}

@inproceedings{lewis2020rag,
  author        = {Lewis, Patrick and Perez, Ethan and Piktus, Aleksandra and Petroni, Fabio and Karpukhin, Vladimir and Goyal, Naman and K{\"u}ttler, Heinrich and Lewis, Mike and Yih, Wen-tau and Rockt{\"a}schel, Tim and Riedel, Sebastian and Kiela, Douwe},
  title         = {Retrieval-Augmented Generation for Knowledge-Intensive {NLP} Tasks},
  booktitle     = {Advances in Neural Information Processing Systems},
  volume        = {33},
  pages         = {9459--9474},
  year          = {2020},
}

@inproceedings{karpukhin2020dpr,
  author        = {Karpukhin, Vladimir and O{\u{g}}uz, Barlas and Min, Sewon and Lewis, Patrick and Wu, Ledell and Edunov, Sergey and Chen, Danqi and Yih, Wen-tau},
  title         = {Dense Passage Retrieval for Open-Domain Question Answering},
  booktitle     = {Proceedings of the 2020 Conference on Empirical Methods in Natural Language Processing},
  pages         = {6769--6781},
  year          = {2020},
  publisher     = {Association for Computational Linguistics},
  doi           = {10.18653/v1/2020.emnlp-main.550},
  url           = {https://aclanthology.org/2020.emnlp-main.550/},
}

@inproceedings{khattab2020colbert,
  author        = {Khattab, Omar and Zaharia, Matei},
  title         = {{ColBERT}: Efficient and Effective Passage Search via Contextualized Late Interaction over {BERT}},
  booktitle     = {Proceedings of the 43rd International ACM SIGIR Conference on Research and Development in Information Retrieval},
  pages         = {39--48},
  year          = {2020},
  publisher     = {Association for Computing Machinery},
  doi           = {10.1145/3397271.3401075},
}

@inproceedings{izacard2021fid,
  author        = {Izacard, Gautier and Grave, {\'E}douard},
  title         = {Leveraging Passage Retrieval with Generative Models for Open Domain Question Answering},
  booktitle     = {Proceedings of the 16th Conference of the European Chapter of the Association for Computational Linguistics: Main Volume},
  pages         = {874--880},
  year          = {2021},
  publisher     = {Association for Computational Linguistics},
  doi           = {10.18653/v1/2021.eacl-main.74},
  url           = {https://aclanthology.org/2021.eacl-main.74/},
}

@inproceedings{borgeaud2022retro,
  author        = {Borgeaud, Sebastian and Mensch, Arthur and Hoffmann, Jordan and Cai, Trevor and Rutherford, Eliza and Millican, Katie and van den Driessche, George and Lespiau, Jean-Baptiste and Damoc, Bogdan and Clark, Aidan and de Las Casas, Diego and Guy, Aurelia and Menick, Jacob and Ring, Roman and Hennigan, Tom and Huang, Saffron and Maggiore, Loren and Jones, Chris and Cassirer, Albin and Brock, Andy and Paganini, Michela and Irving, Geoffrey and Vinyals, Oriol and Osindero, Simon and Simonyan, Karen and Rae, Jack W. and Elsen, Erich and Sifre, Laurent},
  title         = {Improving Language Models by Retrieving from Trillions of Tokens},
  booktitle     = {Proceedings of the 39th International Conference on Machine Learning},
  series        = {Proceedings of Machine Learning Research},
  volume        = {162},
  pages         = {2206--2240},
  year          = {2022},
  publisher     = {PMLR},
  url           = {https://proceedings.mlr.press/v162/borgeaud22a.html},
}

@inproceedings{park2023generative,
  author        = {Park, Joon Sung and O'Brien, Joseph C. and Cai, Carrie J. and Morris, Meredith Ringel and Liang, Percy and Bernstein, Michael S.},
  title         = {Generative Agents: Interactive Simulacra of Human Behavior},
  booktitle     = {Proceedings of the 36th Annual ACM Symposium on User Interface Software and Technology},
  pages         = {1--22},
  year          = {2023},
  publisher     = {Association for Computing Machinery},
  doi           = {10.1145/3586183.3606763},
}

@inproceedings{shinn2023reflexion,
  author        = {Shinn, Noah and Cassano, Federico and Gopinath, Ashwin and Narasimhan, Karthik and Yao, Shunyu},
  title         = {Reflexion: Language Agents with Verbal Reinforcement Learning},
  booktitle     = {Advances in Neural Information Processing Systems},
  volume        = {36},
  pages         = {8634--8652},
  year          = {2023},
  url           = {https://proceedings.neurips.cc/paper_files/paper/2023/hash/1b44b878bb782e6954cd888628510e90-Abstract-Conference.html},
}

@article{izacard2023atlas,
  author        = {Izacard, Gautier and Lewis, Patrick and Lomeli, Maria and Hosseini, Lucas and Petroni, Fabio and Schick, Timo and Dwivedi-Yu, Jane and Joulin, Armand and Riedel, Sebastian and Grave, {\'E}douard},
  title         = {Atlas: Few-shot Learning with Retrieval Augmented Language Models},
  journal       = {Journal of Machine Learning Research},
  volume        = {24},
  number        = {251},
  pages         = {1--43},
  year          = {2023},
  url           = {https://jmlr.org/papers/v24/23-0037.html},
}

@inproceedings{wang2023longmem,
  author        = {Wang, Weizhi and Dong, Li and Cheng, Hao and Liu, Xiaodong and Yan, Xifeng and Gao, Jianfeng and Wei, Furu},
  title         = {Augmenting Language Models with Long-Term Memory},
  booktitle     = {Advances in Neural Information Processing Systems},
  volume        = {36},
  pages         = {74530--74543},
  year          = {2023},
  url           = {https://proceedings.neurips.cc/paper_files/paper/2023/hash/ebd82705f44793b6f9ade5a669d0f0bf-Abstract-Conference.html},
}

@inproceedings{zhong2024memorybank,
  author        = {Zhong, Wanjun and Guo, Lianghong and Gao, Qiqi and Ye, He and Wang, Yanlin},
  title         = {{MemoryBank}: Enhancing Large Language Models with Long-Term Memory},
  booktitle     = {Proceedings of the AAAI Conference on Artificial Intelligence},
  volume        = {38},
  number        = {17},
  pages         = {19724--19731},
  year          = {2024},
  doi           = {10.1609/aaai.v38i17.29946},
}

@inproceedings{maharana2024locomo,
  author        = {Maharana, Adyasha and Lee, Dong-Ho and Tulyakov, Sergey and Bansal, Mohit and Barbieri, Francesco and Fang, Yuwei},
  title         = {Evaluating Very Long-Term Conversational Memory of {LLM} Agents},
  booktitle     = {Proceedings of the 62nd Annual Meeting of the Association for Computational Linguistics},
  pages         = {13851--13870},
  year          = {2024},
  publisher     = {Association for Computational Linguistics},
  doi           = {10.18653/v1/2024.acl-long.747},
  url           = {https://aclanthology.org/2024.acl-long.747/},
}

@inproceedings{wang2024memoryllm,
  author        = {Wang, Yu and Gao, Yifan and Chen, Xiusi and Jiang, Haoming and Li, Shiyang and Yang, Jingfeng and Yin, Qingyu and Li, Zheng and Li, Xian and Yin, Bing and Shang, Jingbo and McAuley, Julian},
  title         = {{{MEMORYLLM}: Towards Self-Updatable Large Language Models}},
  booktitle     = {Proceedings of the 41st International Conference on Machine Learning},
  series        = {Proceedings of Machine Learning Research},
  volume        = {235},
  pages         = {50453--50466},
  year          = {2024},
  publisher     = {PMLR},
  url           = {https://proceedings.mlr.press/v235/wang24s.html},
}

@inproceedings{gutierrez2024hipporag,
  author        = {Jim{\'e}nez Guti{\'e}rrez, Bernal and Shu, Yiheng and Gu, Yu and Yasunaga, Michihiro and Su, Yu},
  title         = {{HippoRAG}: Neurobiologically Inspired Long-Term Memory for Large Language Models},
  booktitle     = {Advances in Neural Information Processing Systems},
  volume        = {37},
  pages         = {59532--59569},
  year          = {2024},
}

@inproceedings{qian2025memorag,
  author        = {Qian, Hongjin and Liu, Zheng and Zhang, Peitian and Mao, Kelong and Lian, Defu and Dou, Zhicheng and Huang, Tiejun},
  title         = {{MemoRAG}: Boosting Long Context Processing with Global Memory-Enhanced Retrieval Augmentation},
  booktitle     = {Proceedings of the ACM on Web Conference 2025},
  pages         = {2366--2377},
  year          = {2025},
  month         = apr,
  address       = {Sydney, NSW, Australia},
  publisher     = {Association for Computing Machinery},
  doi           = {10.1145/3696410.3714805},
  url           = {https://doi.org/10.1145/3696410.3714805},
}

@article{zhang2025survey,
  author        = {Zhang, Zeyu and Dai, Quanyu and Bo, Xiaohe and Ma, Chen and Li, Rui and Chen, Xu and Zhu, Jieming and Dong, Zhenhua and Wen, Ji-Rong},
  title         = {A Survey on the Memory Mechanism of Large Language Model-based Agents},
  journal       = {ACM Transactions on Information Systems},
  volume        = {43},
  number        = {6},
  pages         = {155:1--155:47},
  year          = {2025},
  doi           = {10.1145/3748302},
  url           = {https://doi.org/10.1145/3748302},
}

@inproceedings{wu2025longmemeval,
  author        = {Wu, Di and Wang, Hongwei and Yu, Wenhao and Zhang, Yuwei and Chang, Kai-Wei and Yu, Dong},
  title         = {{LongMemEval}: Benchmarking Chat Assistants on Long-Term Interactive Memory},
  booktitle     = {The Thirteenth International Conference on Learning Representations},
  year          = {2025},
  url           = {https://openreview.net/forum?id=pZiyCaVuti},
}

@inproceedings{xu2025amem,
  author        = {Xu, Wujiang and Liang, Zujie and Mei, Kai and Gao, Hang and Tan, Juntao and Zhang, Yongfeng},
  title         = {{A-Mem}: Agentic Memory for {LLM} Agents},
  booktitle     = {Advances in Neural Information Processing Systems},
  volume        = {38},
  year          = {2025},
  url           = {https://papers.nips.cc/paper_files/paper/2025/hash/19909c36f51abc4856b4560aff3d36d6-Abstract-Conference.html},
}

@misc{chen2026halumemevaluatinghallucinationsmemory,
      title={HaluMem: Evaluating Hallucinations in Memory Systems of Agents}, 
      author={Ding Chen and Simin Niu and Kehang Li and Peng Liu and Xiangping Zheng and Bo Tang and Xinchi Li and Feiyu Xiong and Zhiyu Li},
      year={2026},
      eprint={2511.03506},
      archivePrefix={arXiv},
      primaryClass={cs.CL},
      url={https://arxiv.org/abs/2511.03506}, 
}

@inproceedings{tan2025membench,
  author        = {Tan, Haoran and Zhang, Zeyu and Ma, Chen and Chen, Xu and Dai, Quanyu and Dong, Zhenhua},
  title         = {{MemBench}: Towards More Comprehensive Evaluation on the Memory of {LLM}-based Agents},
  booktitle     = {Findings of the Association for Computational Linguistics: ACL 2025},
  pages         = {19336--19352},
  year          = {2025},
  month         = jul,
  address       = {Vienna, Austria},
  publisher     = {Association for Computational Linguistics},
  doi           = {10.18653/v1/2025.findings-acl.989},
  url           = {https://aclanthology.org/2025.findings-acl.989/},
}

@inproceedings{kang2025memoryos,
  author        = {Kang, Jiazheng and Ji, Mingming and Zhao, Zhe and Bai, Ting},
  title         = {Memory {OS} of {AI} Agent},
  booktitle     = {Proceedings of the 2025 Conference on Empirical Methods in Natural Language Processing},
  pages         = {25961--25970},
  year          = {2025},
  address       = {Suzhou, China},
  publisher     = {Association for Computational Linguistics},
  doi           = {10.18653/v1/2025.emnlp-main.1318},
  url           = {https://aclanthology.org/2025.emnlp-main.1318/},
}

@misc{lin2026surveylongtermmemorysecurity,
      title={A Survey on Long-Term Memory Security in LLM Agents: Attacks, Defenses, and Governance Across the Memory Lifecycle}, 
      author={Zehao Lin and Xixuan Hao and Renyu Fu and Shaobo Cui and Kai Chen and Chunyu Li and Zhiyu Li and Feiyu Xiong},
      year={2026},
      eprint={2604.16548},
      archivePrefix={arXiv},
      primaryClass={cs.CR},
      url={https://arxiv.org/abs/2604.16548}, 
}

@inproceedings{chhikara2025mem0,
  author        = {Chhikara, Prateek and Khant, Dev and Aryan, Saket and Singh, Taranjeet and Yadav, Deshraj},
  title         = {{Mem0}: Building Production-Ready {AI} Agents with Scalable Long-Term Memory},
  booktitle     = {ECAI 2025 - 28th European Conference on Artificial Intelligence, 25--30 October 2025, Bologna, Italy, Including the 14th Conference on Prestigious Applications of Intelligent Systems (PAIS 2025)},
  series        = {Frontiers in Artificial Intelligence and Applications},
  volume        = {413},
  pages         = {2993--3000},
  year          = {2025},
  publisher     = {IOS Press},
  doi           = {10.3233/FAIA251160},
  url           = {https://doi.org/10.3233/FAIA251160},
}

@inproceedings{meng2022rome,
  author        = {Meng, Kevin and Bau, David and Andonian, Alex and Belinkov, Yonatan},
  title         = {Locating and Editing Factual Associations in {GPT}},
  booktitle     = {Advances in Neural Information Processing Systems},
  volume        = {35},
  pages         = {17359--17372},
  year          = {2022},
}

@inproceedings{mitchell2022serac,
  author        = {Mitchell, Eric and Lin, Charles and Bosselut, Antoine and Manning, Christopher D. and Finn, Chelsea},
  title         = {Memory-Based Model Editing at Scale},
  booktitle     = {Proceedings of the 39th International Conference on Machine Learning},
  series        = {Proceedings of Machine Learning Research},
  volume        = {162},
  pages         = {15817--15831},
  year          = {2022},
  publisher     = {PMLR},
  url           = {https://proceedings.mlr.press/v162/mitchell22a.html},
}

@inproceedings{meng2023memit,
  author        = {Meng, Kevin and Sharma, Arnab Sen and Andonian, Alex and Belinkov, Yonatan and Bau, David},
  title         = {Mass-Editing Memory in a Transformer},
  booktitle     = {International Conference on Learning Representations},
  year          = {2023},
  url           = {https://openreview.net/forum?id=MkbcAHIYgyS},
}

@inproceedings{thorne2018fever,
  author        = {Thorne, James and Vlachos, Andreas and Christodoulopoulos, Christos and Mittal, Arpit},
  title         = {{FEVER}: A Large-Scale Dataset for Fact Extraction and {VER}ification},
  booktitle     = {Proceedings of the 2018 Conference of the North American Chapter of the Association for Computational Linguistics: Human Language Technologies},
  pages         = {809--819},
  year          = {2018},
  publisher     = {Association for Computational Linguistics},
  doi           = {10.18653/v1/N18-1074},
  url           = {https://aclanthology.org/N18-1074/},
}

@inproceedings{trivedi2017knowevolve,
  author        = {Trivedi, Rakshit and Dai, Hanjun and Wang, Yichen and Song, Le},
  title         = {{Know-Evolve}: Deep Temporal Reasoning for Dynamic Knowledge Graphs},
  booktitle     = {Proceedings of the 34th International Conference on Machine Learning},
  series        = {Proceedings of Machine Learning Research},
  volume        = {70},
  pages         = {3462--3471},
  year          = {2017},
  publisher     = {PMLR},
  url           = {https://proceedings.mlr.press/v70/trivedi17a.html},
}

@inproceedings{dasgupta2018hyte,
  author        = {Dasgupta, Shib Sankar and Ray, Swayambhu Nath and Talukdar, Partha},
  title         = {{HyTE}: Hyperplane-Based Temporally Aware Knowledge Graph Embedding},
  booktitle     = {Proceedings of the 2018 Conference on Empirical Methods in Natural Language Processing},
  pages         = {2001--2011},
  year          = {2018},
  publisher     = {Association for Computational Linguistics},
  doi           = {10.18653/v1/D18-1225},
  url           = {https://aclanthology.org/D18-1225/},
}

@inproceedings{goel2020diachronic,
  author        = {Goel, Rishab and Kazemi, Seyed Mehran and Brubaker, Marcus and Poupart, Pascal},
  title         = {Diachronic Embedding for Temporal Knowledge Graph Completion},
  booktitle     = {Proceedings of the AAAI Conference on Artificial Intelligence},
  volume        = {34},
  number        = {4},
  pages         = {3988--3995},
  year          = {2020},
  doi           = {10.1609/aaai.v34i04.5815},
}

@inproceedings{brown2020language,
  author        = {Brown, Tom B. and Mann, Benjamin and Ryder, Nick and Subbiah, Melanie and Kaplan, Jared D. and Dhariwal, Prafulla and Neelakantan, Arvind and Shyam, Pranav and Sastry, Girish and Askell, Amanda and Agarwal, Sandhini and Herbert-Voss, Ariel and Krueger, Gretchen and Henighan, Tom and Child, Rewon and Ramesh, Aditya and Ziegler, Daniel M. and Wu, Jeffrey and Winter, Clemens and Hesse, Christopher and Chen, Mark and Sigler, Eric and Litwin, Mateusz and Gray, Scott and Chess, Benjamin and Clark, Jack and Berner, Christopher and McCandlish, Sam and Radford, Alec and Sutskever, Ilya and Amodei, Dario},
  title         = {Language Models are Few-Shot Learners},
  booktitle     = {Advances in Neural Information Processing Systems},
  volume        = {33},
  pages         = {1877--1901},
  year          = {2020},
}

@inproceedings{kaufman2011leakage,
  author        = {Kaufman, Shachar and Rosset, Saharon and Perlich, Claudia},
  title         = {Leakage in data mining},
  booktitle     = {Proceedings of the 17th ACM SIGKDD International Conference on Knowledge Discovery and Data Mining},
  pages         = {556--563},
  year          = {2011},
  publisher     = {Association for Computing Machinery},
  doi           = {10.1145/2020408.2020496},
}

@article{joeres2025datasail,
  author        = {Joeres, Roman and Blumenthal, David B. and Kalinina, Olga V.},
  title         = {Data Splitting to Avoid Information Leakage with {DataSAIL}},
  journal       = {Nature Communications},
  volume        = {16},
  pages         = {3337},
  year          = {2025},
  doi           = {10.1038/s41467-025-58606-8},
}

@article{sumers2024cognitive,
  author        = {Sumers, Theodore R. and Yao, Shunyu and Narasimhan, Karthik and Griffiths, Thomas L.},
  title         = {Cognitive Architectures for Language Agents},
  journal       = {Transactions on Machine Learning Research},
  year          = {2024},
  url           = {https://openreview.net/forum?id=1i6ZCvflQJ},
}

@article{dhingra2022timeaware,
  author        = {Dhingra, Bhuwan and Cole, Jeremy R. and Eisenschlos, Julian Martin and Gillick, Daniel and Eisenstein, Jacob and Cohen, William W.},
  title         = {Time-Aware Language Models as Temporal Knowledge Bases},
  journal       = {Transactions of the Association for Computational Linguistics},
  volume        = {10},
  pages         = {257--273},
  year          = {2022},
  doi           = {10.1162/tacl_a_00459},
}

@article{xi2025rise,
  author        = {Xi, Zhiheng and Chen, Wenxiang and Guo, Xin and He, Wei and Ding, Yiwen and Hong, Boyang and Zhang, Ming and Wang, Junzhe and Jin, Senjie and Zhou, Enyu and Zheng, Rui and Fan, Xiaoran and Wang, Xiao and Xiong, Limao and Zhou, Yuhao and Wang, Weiran and Jiang, Changhao and Zou, Yicheng and Liu, Xiangyang and Yin, Zhangyue and Dou, Shihan and Weng, Rongxiang and Cheng, Wensen and Zhang, Qi and Qin, Wenjuan and Zheng, Yongyan and Qiu, Xipeng and Huang, Xuanjing and Gui, Tao},
  title         = {The rise and potential of large language model based agents: a survey},
  journal       = {Science China Information Sciences},
  volume        = {68},
  number        = {2},
  pages         = {121101},
  year          = {2025},
  doi           = {10.1007/s11432-024-4222-0},
  url           = {https://doi.org/10.1007/s11432-024-4222-0},
}

@inproceedings{zhao2024expel,
  author        = {Zhao, Andrew and Huang, Daniel and Xu, Quentin and Lin, Matthieu and Liu, Yong-Jin and Huang, Gao},
  title         = {{ExpeL}: {LLM} Agents Are Experiential Learners},
  booktitle     = {Proceedings of the AAAI Conference on Artificial Intelligence},
  volume        = {38},
  number        = {17},
  pages         = {19632--19642},
  year          = {2024},
  doi           = {10.1609/aaai.v38i17.29936},
}

@inproceedings{de2021editing,
  author        = {De Cao, Nicola and Aziz, Wilker and Titov, Ivan},
  title         = {Editing Factual Knowledge in Language Models},
  booktitle     = {Proceedings of the 2021 Conference on Empirical Methods in Natural Language Processing},
  pages         = {6491--6506},
  year          = {2021},
  publisher     = {Association for Computational Linguistics},
  doi           = {10.18653/v1/2021.emnlp-main.522},
  url           = {https://aclanthology.org/2021.emnlp-main.522/},
}

@inproceedings{lacroix2020tensor,
  author        = {Lacroix, Timoth{\'e}e and Obozinski, Guillaume and Usunier, Nicolas},
  title         = {Tensor Decompositions for Temporal Knowledge Base Completion},
  booktitle     = {International Conference on Learning Representations},
  year          = {2020},
  url           = {https://openreview.net/forum?id=rke2P1BFwS},
}

@inproceedings{min2023factscore,
  author        = {Min, Sewon and Krishna, Kalpesh and Lyu, Xinxi and Lewis, Mike and Yih, Wen-tau and Koh, Pang Wei and Iyyer, Mohit and Zettlemoyer, Luke and Hajishirzi, Hannaneh},
  title         = {{FActScore}: Fine-grained Atomic Evaluation of Factual Precision in Long Form Text Generation},
  booktitle     = {Proceedings of the 2023 Conference on Empirical Methods in Natural Language Processing},
  pages         = {12076--12100},
  year          = {2023},
  month         = dec,
  address       = {Singapore},
  publisher     = {Association for Computational Linguistics},
  doi           = {10.18653/v1/2023.emnlp-main.741},
  url           = {https://aclanthology.org/2023.emnlp-main.741/},
}

@misc{yu2025memagentreshapinglongcontextllm,
      title={MemAgent: Reshaping Long-Context LLM with Multi-Conv RL-based Memory Agent}, 
      author={Hongli Yu and Tinghong Chen and Jiangtao Feng and Jiangjie Chen and Weinan Dai and Qiying Yu and Ya-Qin Zhang and Wei-Ying Ma and Jingjing Liu and Mingxuan Wang and Hao Zhou},
      year={2025},
      eprint={2507.02259},
      archivePrefix={arXiv},
      primaryClass={cs.CL},
      url={https://arxiv.org/abs/2507.02259}, 
}

@inproceedings{zhang2025gmemory,
  author        = {Zhang, Guibin and Fu, Muxin and Wang, Kun and Wan, Frank and Yu, Miao and Yan, Shuicheng},
  title         = {{G-Memory}: Tracing Hierarchical Memory for Multi-Agent Systems},
  booktitle     = {Advances in Neural Information Processing Systems},
  volume        = {38},
  year          = {2025},
  url           = {https://papers.nips.cc/paper_files/paper/2025/hash/136a45cd9b841bf785625709a19c6508-Abstract-Conference.html},
}

@article{guo2025deepseekr1,
  author        = {Guo, Daya and Yang, Dejian and Zhang, Haowei and Song, Junxiao and Wang, Peiyi and Zhu, Qihao and Xu, Runxin and Zhang, Ruoyu and Ma, Shirong and Bi, Xiao and Zhang, Xiaokang and Yu, Xingkai and Wu, Yu and Wu, {Z. F.} and Gou, Zhibin and Shao, Zhihong and Li, Zhuoshu and Gao, Ziyi and Liu, Aixin and Xue, Bing and Wang, Bingxuan and Wu, Bochao and Feng, Bei and Lu, Chengda and Zhao, Chenggang and Deng, Chengqi and Ruan, Chong and Dai, Damai and Chen, Deli and Ji, Dongjie and Li, Erhang and Lin, Fangyun and Dai, Fucong and Luo, Fuli and Hao, Guangbo and Chen, Guanting and Li, Guowei and Zhang, {H.} and Xu, Hanwei and Ding, Honghui and Gao, Huazuo and Qu, Hui and Li, Hui and Guo, Jianzhong and Li, Jiashi and Chen, Jingchang and Yuan, Jingyang and Tu, Jinhao and Qiu, Junjie and Li, Junlong and Cai, {J. L.} and Ni, Jiaqi and Liang, Jian and Chen, Jin and Dong, Kai and Hu, Kai and You, Kaichao and Gao, Kaige and Guan, Kang and Huang, Kexin and Yu, Kuai and Wang, Lean and Zhang, Lecong and Zhao, Liang and Wang, Litong and Zhang, Liyue and Xu, Lei and Xia, Leyi and Zhang, Mingchuan and Zhang, Minghua and Tang, Minghui and Zhou, Mingxu and Li, Meng and Wang, Miaojun and Li, Mingming and Tian, Ning and Huang, Panpan and Zhang, Peng and Wang, Qiancheng and Chen, Qinyu and Du, Qiushi and Ge, Ruiqi and Zhang, Ruisong and Pan, Ruizhe and Wang, Runji and Chen, {R. J.} and Jin, {R. L.} and Chen, Ruyi and Lu, Shanghao and Zhou, Shangyan and Chen, Shanhuang and Ye, Shengfeng and Wang, Shiyu and Yu, Shuiping and Zhou, Shunfeng and Pan, Shuting and Li, {S. S.} and Zhou, Shuang and Wu, Shaoqing and Yun, Tao and Pei, Tian and Sun, Tianyu and Wang, {T.} and Zeng, Wangding and Liu, Wen and Liang, Wenfeng and Gao, Wenjun and Yu, Wenqin and Zhang, Wentao and Xiao, {W. L.} and An, Wei and Liu, Xiaodong and Wang, Xiaohan and Chen, Xiaokang and Nie, Xiaotao and Cheng, Xin and Liu, Xin and Xie, Xin and Liu, Xingchao and Yang, Xinyu and Li, Xinyuan and Su, Xuecheng and Lin, Xuheng and Li, {X. Q.} and Jin, Xiangyue and Shen, Xiaojin and Chen, Xiaosha and Sun, Xiaowen and Wang, Xiaoxiang and Song, Xinnan and Zhou, Xinyi and Wang, Xianzu and Shan, Xinxia and Li, {Y. K.} and Wang, {Y. Q.} and Wei, {Y. X.} and Zhang, Yang and Xu, Yanhong and Li, Yao and Zhao, Yao and Sun, Yaofeng and Wang, Yaohui and Yu, Yi and Zhang, Yichao and Shi, Yifan and Xiong, Yiliang and He, Ying and Piao, Yishi and Wang, Yisong and Tan, Yixuan and Ma, Yiyang and Liu, Yiyuan and Guo, Yongqiang and Ou, Yuan and Wang, Yuduan and Gong, Yue and Zou, Yuheng and He, Yujia and Xiong, Yunfan and Luo, Yuxiang and You, Yuxiang and Liu, Yuxuan and Zhou, Yuyang and Zhu, {Y. X.} and Huang, Yanping and Li, Yaohui and Zheng, Yi and Zhu, Yuchen and Ma, Yunxian and Tang, Ying and Zha, Yukun and Yan, Yuting and Ren, {Z. Z.} and Ren, Zehui and Sha, Zhangli and Fu, Zhe and Xu, Zhean and Xie, Zhenda and Zhang, Zhengyan and Hao, Zhewen and Ma, Zhicheng and Yan, Zhigang and Wu, Zhiyu and Gu, Zihui and Zhu, Zijia and Liu, Zijun and Li, Zilin and Xie, Ziwei and Song, Ziyang and Pan, Zizheng and Huang, Zhen and Xu, Zhipeng and Zhang, Zhongyu and Zhang, Zhen},
  title         = {{DeepSeek-R1} incentivizes reasoning in {LLM}s through reinforcement learning},
  journal       = {Nature},
  volume        = {645},
  number        = {8081},
  pages         = {633--638},
  year          = {2025},
  doi           = {10.1038/s41586-025-09422-z},
}

@inproceedings{lynn2022conditional,
  title     = {Using Conditional Independence for Belief Revision},
  author    = {Lynn, Matthew James and Delgrande, James P. and
               Peppas, Pavlos},
  booktitle = {Proceedings of the AAAI Conference on Artificial Intelligence},
  volume    = {36},
  number    = {5},
  pages     = {5809--5816},
  year      = {2022}
}

@inproceedings{wilie2024belief,
  title     = {Belief Revision: The Adaptability of Large Language
               Models Reasoning},
  author    = {Wilie, Bryan and Cahyawijaya, Samuel and Ishii, Etsuko
               and He, Junxian and Fung, Pascale},
  booktitle = {Proceedings of the 2024 Conference on Empirical Methods
               in Natural Language Processing},
  pages     = {10480--10496},
  year      = {2024},
  publisher = {Association for Computational Linguistics}
}

@inproceedings{wen2024red,
  title     = {Red Teaming Language Models for Processing
               Contradictory Dialogues},
  author    = {Wen, Xiaofei and Li, Bangzheng and Huang, Tenghao
               and Chen, Muhao},
  booktitle = {Proceedings of the 2024 Conference on Empirical Methods
               in Natural Language Processing},
  pages     = {11611--11630},
  year      = {2024},
  publisher = {Association for Computational Linguistics}
}

\clearpage
\appendix

\section{A Executable Semantics and Command–State Completeness of the TARL Executor}
\label{app:theory}

This section formalizes the transaction information consumed by the TARL
ledger executor. The goal is deliberately limited. We establish that the five
named operations encode distinct persistent-state effects under TARL's
implemented three-ledger semantics, and that the complete TARL transaction
preserves the operation and target information needed for deterministic
execution. We make no universal claim about action-space cardinality, the uniqueness of
the three-ledger state, or the necessity of representing each transition type
by one atomic class label.

The analysis supports four executor-level statements. First, whenever a valid
Accepted target exists, the five operations produce pairwise distinct typed
ledger states under the TARL executor. Second, any representation that maps two
such commands to the same code and supplies no additional disambiguating field
cannot recover both next states exactly. Third, a binary Write/Hold value loses
transition information, even when a canonical target field is retained.
Fourth, a structured command with fewer high-level families can be fully
equivalent to the flat five-way vocabulary when its subtype and target fields
preserve the same transition information. These statements characterize the
implemented interface without asserting universal action-space optimality.

\subsection{Typed Ledger State and Canonical Commands}
\label{app:executable_semantics}

\paragraph{Entries and inactive-history records.}
Let $\mathcal{E}$ be a nonempty entry domain. Each
$e\in\mathcal{E}$ is a candidate payload containing its content, stable
identity, provenance, confidence, and temporal metadata. Equality on
$\mathcal{E}$ is the ordinary equality relation of the entry representation;
the arguments below require only equality and inequality, not an algorithmic
decidability assumption. Ledger membership and inactive status are supplied by
the transition and history-record constructors. Let $\mathbb{T}$ be a
nonempty execution-time domain. TARL stores two kinds of inactive history
record. The symbols $\texttt{superseded}$ and $\texttt{rejected}$ are
distinct status values:
\begin{equation}
\begin{aligned}
h^{\mathrm{sup}}(m,e,t)
&=
\bigl(
\texttt{superseded},m,e,t
\bigr),
\\
h^{\mathrm{rej}}(e,m,t)
&=
\bigl(
\texttt{rejected},e,m,t
\bigr).
\end{aligned}
\label{eq:history_record_constructors}
\end{equation}
Both constructors return four-tuples in the common product type
$\{\texttt{superseded},\texttt{rejected}\}
\times\mathcal{E}\times\mathcal{E}\times\mathbb{T}$.
Their second and third coordinates therefore have identical positional types
in both variants. We interpret these coordinates by the shared role schema
$(\textit{status},\textit{subject},\textit{counterpart},\textit{time})$.
For a supersession record, the subject is the displaced Accepted entry $m$
and the counterpart is its successor $e$. For a rejection record, the subject
is the rejected candidate $e$ and the counterpart is the preserved Accepted
entry $m$ against which it was evaluated. The change in symbol order records
this role assignment while preserving a single tuple type. Because the two
status values are distinct, the variants are disjoint even though their
remaining positional types coincide.

Define the inactive-record domain
\begin{equation}
\begin{aligned}
\mathcal{R}_{H}
=
&\left(
\{\texttt{superseded}\}
\times
\mathcal{E}
\times
\mathcal{E}
\times
\mathbb{T}
\right)
\\
&\cup
\left(
\{\texttt{rejected}\}
\times
\mathcal{E}
\times
\mathcal{E}
\times
\mathbb{T}
\right).
\end{aligned}
\label{eq:history_record_space}
\end{equation}
Let $\operatorname{Seq}(\mathcal{X})$ denote the set of finite ordered
sequences over $\mathcal{X}$. The ambient typed state space is
\begin{equation}
\begin{aligned}
\mathcal{S}
=
\operatorname{Seq}(\mathcal{E})
\times
\operatorname{Seq}(\mathcal{E})
\times
\operatorname{Seq}(\mathcal{R}_{H}).
\end{aligned}
\label{eq:ledger_state_space}
\end{equation}
A state is written as
\begin{equation}
\begin{aligned}
\mathcal{M}
=
(A,P,H)
\in
\mathcal{S},
\end{aligned}
\label{eq:theory_memory_state}
\end{equation}
where $A$, $P$, and $H$ are the Accepted, Pending, and History / Rejected
ledgers. Accepted contains active evidence, Pending contains unresolved
evidence, and History / Rejected contains inactive records tagged as
\texttt{superseded} or \texttt{rejected}. The empty sequence is denoted by
$[]$, and sequence concatenation is denoted by $\mathbin{\|}$.

State equality in this section is exact equality of the typed executor state:
all three ordered sequences and every stored record field must agree. The
space in Eq.~\eqref{eq:ledger_state_space} is an ambient type space. An
implementation may impose additional data-integrity constraints, such as
identity uniqueness. The results below use only explicitly constructed states
and executable transitions, so they do not assume closure under any unlisted
integrity rule.

\paragraph{Accepted-slot targets.}
Let
\begin{equation}
\begin{aligned}
A
=
[m_1,\ldots,m_K].
\end{aligned}
\label{eq:indexed_accepted_entries}
\end{equation}
Write
\begin{equation}
\begin{aligned}
I(A)
&=
\bigl\{
 i\in\mathbb{N}_{+}:
 i\leq K
\bigr\},
\\
\Xi(A)
&=
\{\bot\}
\cup
I(A),
\end{aligned}
\label{eq:target_space}
\end{equation}
Here $\mathbb{N}_{+}$ is the set of positive integers. The set-builder
definition gives $I(A)=\varnothing$ automatically when $K=0$. The fresh
symbol $\bot\notin\mathbb{N}_{+}$ denotes the absence of an executable
target. For $\xi\in I(A)$, $m_{\xi}(A)$ denotes the selected
Accepted occurrence. The target is occurrence based, so it remains defined
when two stored entries have identical visible content.

For $\xi\in I(A)$, define single-slot replacement as the length-$K$ sequence
whose $j$th entry is
\begin{equation}
\begin{aligned}
\left[
\operatorname{Replace}_{\xi}(A,e)
\right]_{j}
=
\begin{cases}
e,
&
 j=\xi,
\\
m_j,
&
 j\neq\xi.
\end{cases}
\end{aligned}
\label{eq:single_slot_replacement}
\end{equation}
This coordinate definition covers the boundary cases $\xi=1$ and $\xi=K$
without requiring an empty-range convention.

\paragraph{Base context and canonical executable commands.}
The operation vocabulary is
\begin{equation}
\begin{aligned}
\mathcal{A}
=
\{&
\texttt{append},
\texttt{noop},
\texttt{revise},
\\
&
\texttt{reject\_conflict},
\texttt{defer\_verify}
\}.
\end{aligned}
\label{eq:theory_operation_space}
\end{equation}
Partition it into target-independent and target-required operations:
\begin{equation}
\begin{aligned}
\mathcal{A}_{\mathrm{ind}}
&=
\{
\texttt{append},
\texttt{noop},
\texttt{defer\_verify}
\},
\\
\mathcal{A}_{\mathrm{req}}
&=
\{
\texttt{revise},
\texttt{reject\_conflict}
\}.
\end{aligned}
\label{eq:target_partition}
\end{equation}
Define the base executor context space
\begin{equation}
\begin{aligned}
\mathcal{X}
=
\bigl\{
(\mathcal{M},e,t):
\mathcal{M}=(A,P,H)\in\mathcal{S},
\ e\in\mathcal{E},
\ t\in\mathbb{T}
\bigr\}.
\end{aligned}
\label{eq:base_execution_context}
\end{equation}
For $x=(\mathcal{M},e,t)$ with $\mathcal{M}=(A,P,H)$, the canonical command
set is
\begin{equation}
\begin{aligned}
\mathcal{U}(x)
=
&\left\{
(a,\bot):
a\in\mathcal{A}_{\mathrm{ind}}
\right\}
\\
&\cup
\left\{
(a,\xi):
a\in\mathcal{A}_{\mathrm{req}},
\ \xi\in I(A)
\right\}.
\end{aligned}
\label{eq:canonical_command_set}
\end{equation}
The canonical form assigns $\bot$ to target-independent operations and a
valid Accepted occurrence to target-required operations. This removes
irrelevant duplicate encodings such as attaching an unused slot index to
\texttt{noop}. The exact executable domain is
\begin{equation}
\begin{aligned}
\mathcal{D}
=
\left\{
(x,u):
x\in\mathcal{X},
\ u\in\mathcal{U}(x)
\right\}.
\end{aligned}
\label{eq:admissible_transition_domain}
\end{equation}
Here, executability is structural. It does not assert that every command in
$\mathcal{U}(x)$ is an equally appropriate semantic decision for the observed
candidate.

\paragraph{Deterministic TARL transitions.}
For $x=((A,P,H),e,t)$, define
\begin{equation}
\begin{aligned}
T_{(\texttt{append},\bot)}(x)
&=
\bigl(
A\mathbin{\|}[e],
P,
H
\bigr),
\\
T_{(\texttt{noop},\bot)}(x)
&=
\bigl(
A,
P,
H
\bigr),
\\
T_{(\texttt{revise},\xi)}(x)
&=
\bigl(
\operatorname{Replace}_{\xi}(A,e),
P,
\\
&\qquad
H\mathbin{\|}
[
 h^{\mathrm{sup}}
 (
 m_{\xi}(A),e,t
 )
]
\bigr),
\\
T_{(\texttt{reject\_conflict},\xi)}(x)
&=
\bigl(
A,
P,
\\
&\qquad
H\mathbin{\|}
[
 h^{\mathrm{rej}}
 (
 e,m_{\xi}(A),t
 )
]
\bigr),
\\
T_{(\texttt{defer\_verify},\bot)}(x)
&=
\bigl(
A,
P\mathbin{\|}[e],
H
\bigr).
\end{aligned}
\label{eq:theory_executor}
\end{equation}
Revision replaces one active Accepted occurrence and archives the displaced
entry as superseded evidence. Conflict rejection preserves the active target
and archives the rejected candidate together with its evidence link. Deferral
retains the candidate in Pending, while \texttt{noop} leaves all persistent
ledgers unchanged.

\begin{lemma}[Well-typed executable outputs]
\label{lem:transition_well_defined}
For every $(x,u)\in\mathcal{D}$,
$T_u(x)\in\mathcal{S}$.
\end{lemma}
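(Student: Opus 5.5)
The argument is a direct case analysis over the canonical command $u\in\mathcal{U}(x)$, checking each coordinate of the output triple against the ambient type $\mathcal{S}=\operatorname{Seq}(\mathcal{E})\times\operatorname{Seq}(\mathcal{E})\times\operatorname{Seq}(\mathcal{R}_H)$. Fix $(x,u)\in\mathcal{D}$ with $x=((A,P,H),e,t)$, so $A,P\in\operatorname{Seq}(\mathcal{E})$, $H\in\operatorname{Seq}(\mathcal{R}_H)$, $e\in\mathcal{E}$ and $t\in\mathbb{T}$. By Eq.~\eqref{eq:canonical_command_set}, $u$ is of one of five forms. It is either $(a,\bot)$ with $a\in\mathcal{A}_{\mathrm{ind}}$, or $(a,\xi)$ with $a\in\mathcal{A}_{\mathrm{req}}$ and $\xi\in I(A)$. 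Hence each clause of Eq.~\eqref{eq:theory_executor} is invoked only on arguments for which it is defined.

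Two closure facts will be used throughout. First, concatenation of finite sequences over a set $\mathcal{X}$ is again a finite sequence over $\mathcal{X}$, so $S\mathbin{\|}[y]\in\operatorname{Seq}(\mathcal{X})$ whenever $S\in\operatorname{Seq}(\mathcal{X})$ and $y\in\mathcal{X}$. Second, for $\xi\in I(A)$ the occurrence $m_\xi(A)$ exists and lies in $\mathcal{E}$, since $1\le\xi\le K$. Moreover, $\operatorname{Replace}_\xi(A,e)$ is by Eq.~\eqref{eq:single_slot_replacement} a length-$K$ sequence each of whose entries is either $e$ or some $m_j$, so it belongs to $\operatorname{Seq}(\mathcal{E})$.

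With these facts, the cases go as follows.
\begin{itemize}
\item \texttt{noop} returns $(A,P,H)$ unchanged, which is trivially in $\mathcal{S}$.
\item \texttt{append} and \texttt{defer\_verify} each modify a single $\mathcal{E}$-sequence by appending $e\in\mathcal{E}$, so they stay in $\mathcal{S}$ by the first fact.
\item \texttt{revise}: the Accepted coordinate is in $\operatorname{Seq}(\mathcal{E})$ by the second fact. The appended record is $h^{\mathrm{sup}}(m_\xi(A),e,t)=(\texttt{superseded},m_\xi(A),e,t)$, which lies in the first component of $\mathcal{R}_H$ in Eq.~\eqref{eq:history_record_space}.
\item \texttt{reject\_conflict}: $A$ and $P$ are untouched. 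The appended record is $(\texttt{rejected},e,m_\xi(A),t)$, which lies in the second component of $\mathcal{R}_H$. By the first fact, $H$ extended with either record stays in $\operatorname{Seq}(\mathcal{R}_H)$.
\end{itemize}

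The only step needing care is the target-required case. There we must use membership $\xi\in I(A)$ from $\mathcal{D}$, rather than merely $\xi\ne\bot$, to guarantee that $m_\xi(A)$ is defined. In particular, when $K=0$ we have $I(A)=\varnothing$, so no target-required command lies in $\mathcal{U}(x)$ and the clause is vacuous. No integrity constraints beyond the ambient type are needed, consistent with the remark after Eq.~\eqref{eq:theory_memory_state}.
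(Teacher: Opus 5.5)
Your proposal is correct and follows essentially the same route as the paper. Both split on canonical commands, use closure of $\operatorname{Seq}(\cdot)$ under appending, and use the membership $\xi\in I(A)$, rather than merely $\xi\neq\bot$, to guarantee that $m_\xi(A)$ and $\operatorname{Replace}_\xi(A,e)$ are defined and that both history records lie in $\mathcal{R}_H$; yours just spells this out coordinate by coordinate.
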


\begin{proof}
Let $x=((A,P,H),e,t)$. For a target-independent command, canonicality gives
$u=(a,\bot)$, so no Accepted target is accessed. Appending $e$ to $A$ or $P$
preserves membership in $\operatorname{Seq}(\mathcal{E})$, and
\texttt{noop} preserves every component. For a target-required command,
write $u=(a,\xi)$. Then
$u\in\mathcal{U}(x)$ implies $\xi\in I(A)$. Hence $m_{\xi}(A)$ exists and
$\operatorname{Replace}_{\xi}(A,e)\in\operatorname{Seq}(\mathcal{E})$.
Equation~\eqref{eq:history_record_space} gives
$h^{\mathrm{sup}}(m_{\xi}(A),e,t)\in\mathcal{R}_{H}$ and
$h^{\mathrm{rej}}(e,m_{\xi}(A),t)\in\mathcal{R}_{H}$. Appending either
record therefore preserves membership in
$\operatorname{Seq}(\mathcal{R}_{H})$.
\end{proof}

\subsection{Separation of the Five Ledger Effects}
\label{app:pairwise_distinguishability}

For $H=[r_1,\ldots,r_L]$, let $\operatorname{status}(r_j)$ be the first field
of record $r_j$, and define
\begin{equation}
\begin{aligned}
N_{\mathrm{sup}}(H)
&=
\sum_{j=1}^{L}
\mathbf{1}
\bigl[
\operatorname{status}(r_j)=\texttt{superseded}
\bigr],
\\
N_{\mathrm{rej}}(H)
&=
\sum_{j=1}^{L}
\mathbf{1}
\bigl[
\operatorname{status}(r_j)=\texttt{rejected}
\bigr].
\end{aligned}
\label{eq:history_status_counts}
\end{equation}
Define the aggregate ledger signature
\begin{equation}
\begin{aligned}
\Sigma(A,P,H)
=
\bigl(
|A|,
|P|,
N_{\mathrm{sup}}(H),
N_{\mathrm{rej}}(H)
\bigr).
\end{aligned}
\label{eq:ledger_signature}
\end{equation}
Exact state equality implies equality of signatures.

\begin{proposition}[Common-context separation]
\label{prop:pairwise_distinguishability}
Let $x=((A,P,H),e,t)\in\mathcal{X}$ satisfy $|A|\geq 1$. For every
$\xi\in I(A)$, define
\begin{equation}
\begin{aligned}
u_{\mathrm{app}}
&=(\texttt{append},\bot),
\\
u_{\mathrm{nop}}
&=(\texttt{noop},\bot),
\\
u_{\mathrm{rev}}
&=(\texttt{revise},\xi),
\\
u_{\mathrm{rej}}
&=(\texttt{reject\_conflict},\xi),
\\
u_{\mathrm{def}}
&=(\texttt{defer\_verify},\bot).
\end{aligned}
\label{eq:five_canonical_commands}
\end{equation}
Then
\begin{equation}
\begin{aligned}
&T_{u_{\mathrm{app}}}(x),
T_{u_{\mathrm{nop}}}(x),
T_{u_{\mathrm{rev}}}(x),
\\
&T_{u_{\mathrm{rej}}}(x),
T_{u_{\mathrm{def}}}(x)
\end{aligned}
\end{equation}
are pairwise distinct.
\end{proposition}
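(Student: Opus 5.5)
The plan is to compare the five outputs through the aggregate ledger signature $\Sigma$ of Eq.~\eqref{eq:ledger_signature}. Exact state equality implies equal signatures, so it suffices to show that the five signatures are pairwise distinct. First, Lemma~\ref{lem:transition_well_defined} guarantees that every output lies in $\mathcal{S}$. Here $|A|\geq 1$ ensures that $\xi\in I(A)$ exists, so all five commands are canonical and the outputs are well defined.

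Write $K=|A|$, $p=|P|$, $s=N_{\mathrm{sup}}(H)$, and $r=N_{\mathrm{rej}}(H)$. I will read the signatures directly from Eq.~\eqref{eq:theory_executor}.

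\texttt{append} yields $(K+1,p,s,r)$. \texttt{noop} yields $(K,p,s,r)$. \texttt{revise} yields $(K,p,s+1,r)$, because $\operatorname{Replace}_{\xi}$ preserves length by Eq.~\eqref{eq:single_slot_replacement} and the appended record has status \texttt{superseded}. \texttt{reject\_conflict} yields $(K,p,s,r+1)$, because the appended record has status \texttt{rejected}. \texttt{defer\_verify} yields $(K,p+1,s,r)$.

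Each signature differs from the \texttt{noop} signature $(K,p,s,r)$ by adding $1$ to exactly one coordinate, and the coordinate is different for each of the four non-\texttt{noop} commands. Hence any two of the five signatures differ: either one of them is the \texttt{noop} signature and the other is not, or the two differ in two coordinates. This settles all ten pairs at once.

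The only step needing care is the status counting. I must justify that $N_{\mathrm{sup}}$ and $N_{\mathrm{rej}}$ change as claimed, which holds because appending a record adds exactly its indicator to each sum. I must also check that the values $\texttt{superseded}$ and $\texttt{rejected}$ are distinct, so that a single appended record contributes to exactly one of the two counts; this distinctness is stipulated in the construction of $\mathcal{R}_H$.

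No assumption about $e$ differing from stored entries is needed. The separation is purely by lengths and counts, so it holds even when $e=m_\xi(A)$. In that case \texttt{revise} would leave $A$ literally unchanged, and only the history count distinguishes it from \texttt{noop}.
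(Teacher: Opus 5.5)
Your proof is correct and is essentially the paper's argument: compute the signature $\Sigma=(|A|,|P|,N_{\mathrm{sup}}(H),N_{\mathrm{rej}}(H))$ of each output from the executor definition, note that the four non-\texttt{noop} commands each increment a different coordinate of the \texttt{noop} signature, and conclude because equal states have equal signatures. Your extra details are accurate points the paper leaves implicit: $\operatorname{Replace}_{\xi}$ preserves length, the two status values are disjoint, and the argument still works when $e=m_{\xi}(A)$.
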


\begin{proof}
Write
\begin{equation}
\begin{aligned}
K
&=|A|,
&L
&=|P|,
&S
&=N_{\mathrm{sup}}(H),
&R
&=N_{\mathrm{rej}}(H).
\end{aligned}
\end{equation}
Equation~\eqref{eq:theory_executor} gives the five signatures
\begin{equation}
\begin{aligned}
\Sigma
\bigl(
T_{u_{\mathrm{app}}}(x)
\bigr)
&=(K+1,L,S,R),
\\
\Sigma
\bigl(
T_{u_{\mathrm{nop}}}(x)
\bigr)
&=(K,L,S,R),
\\
\Sigma
\bigl(
T_{u_{\mathrm{rev}}}(x)
\bigr)
&=(K,L,S+1,R),
\\
\Sigma
\bigl(
T_{u_{\mathrm{rej}}}(x)
\bigr)
&=(K,L,S,R+1),
\\
\Sigma
\bigl(
T_{u_{\mathrm{def}}}(x)
\bigr)
&=(K,L+1,S,R).
\end{aligned}
\label{eq:five_distinct_signatures}
\end{equation}
Each signature differs from every other signature in at least one coordinate.
Since equality of states implies equality of their signatures, unequal
signatures imply unequal states. The five states are therefore pairwise
distinct.
\end{proof}

The proposition is stronger than a single constructed counterexample. Every
executor context with at least one valid Accepted target separates all five
ledger effects. It remains an executor statement. It does not claim that all
five commands are equally plausible labels for the same training instance.

\subsection{Executor-Faithful Command Representations}
\label{app:executor_faithfulness}

\begin{definition}[Executor-faithful representation]
\label{def:executor_faithful}
Let $\mathcal{C}$ be any flat or structured code space. A representation
\begin{equation}
\begin{aligned}
R:
\mathcal{D}
\rightarrow
\mathcal{C}
\end{aligned}
\label{eq:command_representation}
\end{equation}
is executor faithful if there exists a deterministic decoder
\begin{equation}
\begin{aligned}
G:
\mathcal{X}
\times
\mathcal{C}
\rightarrow
\mathcal{S}
\end{aligned}
\label{eq:faithful_decoder}
\end{equation}
such that
\begin{equation}
\begin{aligned}
G
\bigl(
x,
R(x,u)
\bigr)
=
T_u(x)
\end{aligned}
\label{eq:executor_faithfulness}
\end{equation}
for every $(x,u)\in\mathcal{D}$.
\end{definition}

This is a semantics-preservation notion relative to the fixed transition
family $T$: the code is information-complete for exact executor outcomes once
the base context is supplied. We use the shorter term
``executor faithful'' throughout.

The decoder receives the base state, candidate, and execution time through
$x$. Any operation choice, subtype, route, or target that is not already fixed
by $x$ and is needed to distinguish executable outcomes must therefore be
preserved by the code $R(x,u)$. The decoder is
defined on all of $\mathcal{X}\times\mathcal{C}$, but
Eq.~\eqref{eq:executor_faithfulness} constrains it only on encoded pairs
$(x,R(x,u))$ with $(x,u)\in\mathcal{D}$; its behavior on all other codes may
be chosen arbitrarily.

\begin{lemma}[Unresolved code collisions prevent exact decoding]
\label{lem:erased_distinction}
Fix $x\in\mathcal{X}$ and let $u,v\in\mathcal{U}(x)$ satisfy
$T_u(x)\neq T_v(x)$. If
\begin{equation}
\begin{aligned}
R(x,u)
=
R(x,v),
\end{aligned}
\label{eq:representation_collision}
\end{equation}
then $R$ is not executor faithful.
\end{lemma}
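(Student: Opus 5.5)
The argument is by contradiction, using only Definition~\ref{def:executor_faithful}. Suppose $R$ is executor faithful despite the collision in Eq.~\eqref{eq:representation_collision}. Then Definition~\ref{def:executor_faithful} supplies a deterministic decoder $G:\mathcal{X}\times\mathcal{C}\to\mathcal{S}$ satisfying Eq.~\eqref{eq:executor_faithfulness} on every pair in $\mathcal{D}$. Since $u,v\in\mathcal{U}(x)$, Eq.~\eqref{eq:admissible_transition_domain} places both $(x,u)$ and $(x,v)$ in $\mathcal{D}$. Faithfulness therefore applies to each, giving $G(x,R(x,u))=T_u(x)$ and $G(x,R(x,v))=T_v(x)$. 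Lemma~\ref{lem:transition_well_defined} guarantees that both right-hand sides are genuine elements of $\mathcal{S}$, so these equalities are meaningful.

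Next I would use the collision. Set $c=R(x,u)=R(x,v)\in\mathcal{C}$. Because $G$ is a function, its value at the single input pair $(x,c)$ is one element of $\mathcal{S}$. Chaining the two faithfulness equalities through this value gives $T_u(x)=G(x,c)=T_v(x)$. This contradicts the hypothesis $T_u(x)\neq T_v(x)$, so no such decoder exists and $R$ is not executor faithful.

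There is no real obstacle here; the only care needed is bookkeeping. First, both pairs must lie in the executable domain $\mathcal{D}$, since faithfulness constrains $G$ only there; this is why the lemma requires canonical commands in $\mathcal{U}(x)$. Second, the decoder's first argument must be the same context $x$ in both evaluations, so the collision is on the identical input pair $(x,c)$. Determinism, meaning $G$ is a single-valued map, is what forces the equal outputs. I would close with a remark combining this lemma with Proposition~\ref{prop:pairwise_distinguishability}: any code that merges two of the five canonical commands in a context with $|A|\ge 1$ fails to be faithful. This will be the route to the later Write/Hold result.
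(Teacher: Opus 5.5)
Your proof is correct and follows the paper's argument: faithfulness forces the single decoder value at the shared input $(x,R(x,u))=(x,R(x,v))$ to equal both $T_u(x)$ and $T_v(x)$, which contradicts $T_u(x)\neq T_v(x)$. You also explicitly check that $(x,u),(x,v)\in\mathcal{D}$, a bookkeeping step the paper leaves implicit.
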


\begin{proof}
A deterministic decoder receives the identical pair
$(x,R(x,u))=(x,R(x,v))$ and must return one state. Exact faithfulness would
require that state to equal both $T_u(x)$ and $T_v(x)$, contradicting
$T_u(x)\neq T_v(x)$.
\end{proof}

The lemma rules out an unresolved collision. It leaves open every structured
representation that adds enough fields to separate the colliding commands.

\subsection{Flat TARL Transactions and Equivalent Factorizations}
\label{app:flat_factorized_interfaces}

\paragraph{Complete flat transaction.}
Let
\begin{equation}
\begin{aligned}
\mathcal{J}
=
\{\bot\}
\cup
\mathbb{N}_{+},
\qquad
\mathcal{C}_{\mathrm{flat}}
=
\mathcal{A}
\times
\mathcal{J}.
\end{aligned}
\label{eq:flat_command_space}
\end{equation}
Define
\begin{equation}
\begin{aligned}
R_{\mathrm{flat}}(x,u)
=u
\end{aligned}
\label{eq:flat_command_encoding}
\end{equation}

\begin{proposition}[Faithfulness of the complete TARL transaction]
\label{prop:flat_faithfulness}
The representation $R_{\mathrm{flat}}$ is executor faithful.
\end{proposition}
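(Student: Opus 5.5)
The plan is to exhibit an explicit deterministic decoder $G_{\mathrm{flat}}:\mathcal{X}\times\mathcal{C}_{\mathrm{flat}}\to\mathcal{S}$ and verify Eq.~\eqref{eq:executor_faithfulness} directly. Since $R_{\mathrm{flat}}(x,u)=u$ is the identity on commands, the natural choice is to let the decoder run the executor itself. For $x=((A,P,H),e,t)$ and $c=(a,\xi)\in\mathcal{C}_{\mathrm{flat}}$, set
\begin{equation}
\begin{aligned}
G_{\mathrm{flat}}(x,c)
=
\begin{cases}
T_{c}(x), & c\in\mathcal{U}(x),\\
(A,P,H), & \text{otherwise}.
\end{cases}
\end{aligned}
\end{equation}
The fallback branch is arbitrary. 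Definition~\ref{def:executor_faithful} constrains $G$ only on encoded pairs, so any fixed element of $\mathcal{S}$ would do; returning the unchanged state is merely convenient.

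The key steps, in order, are as follows.

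\textbf{Codomain.} First check that $G_{\mathrm{flat}}$ is well defined with codomain $\mathcal{S}$. On the branch $c\in\mathcal{U}(x)$ we have $(x,c)\in\mathcal{D}$, so Lemma~\ref{lem:transition_well_defined} gives $T_c(x)\in\mathcal{S}$. The fallback branch returns the input state, which lies in $\mathcal{S}$ by the definition of $\mathcal{X}$.

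\textbf{Determinism.} Next observe that the case split is a deterministic function of $(x,c)$. Membership $c\in\mathcal{U}(x)$ is fixed by three things: whether $a\in\mathcal{A}_{\mathrm{ind}}$ with $\xi=\bot$, or whether $a\in\mathcal{A}_{\mathrm{req}}$ with $\xi\in I(A)$, where $I(A)$ depends only on $|A|$. Consistent with the paper's remark, no decidability assumption is needed, since we only need $G$ to be a well-defined function.

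\textbf{Faithfulness.} Finally, for any $(x,u)\in\mathcal{D}$ we have $u\in\mathcal{U}(x)$, and therefore $G_{\mathrm{flat}}(x,R_{\mathrm{flat}}(x,u))=G_{\mathrm{flat}}(x,u)=T_u(x)$.

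The main obstacle, mild as it is, is type bookkeeping rather than any real argument. One must confirm that $\mathcal{U}(x)\subseteq\mathcal{C}_{\mathrm{flat}}=\mathcal{A}\times\mathcal{J}$, which holds because $I(A)\subseteq\mathbb{N}_{+}$ and $\bot\in\mathcal{J}$. This guarantees that $R_{\mathrm{flat}}$ really maps $\mathcal{D}$ into $\mathcal{C}_{\mathrm{flat}}$. One must also confirm that the decoder's total domain $\mathcal{X}\times\mathcal{C}_{\mathrm{flat}}$ includes non-canonical codes, such as $(\texttt{noop},3)$ or $(\texttt{revise},\xi)$ with $\xi>|A|$. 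These must be handled without invoking $T$ outside its domain, which is exactly why the fallback branch is included.
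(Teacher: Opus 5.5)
Your proof is correct and follows the paper's: the decoder returns $T_u(x)$ when $u\in\mathcal{U}(x)$ and a fallback state otherwise, and faithfulness then holds because every encoded pair has $u\in\mathcal{U}(x)$. The only difference is the fallback, which the paper takes to be the empty state $\mathcal{M}_{\varnothing}=([],[],[])$ while you return the input state; both are valid because Definition~\ref{def:executor_faithful} constrains the decoder only on encoded pairs.
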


\begin{proof}
Because $[]\in\operatorname{Seq}(\mathcal{E})$ and
$[]\in\operatorname{Seq}(\mathcal{R}_{H})$, the empty state
$\mathcal{M}_{\varnothing}=([],[],[])$ belongs to $\mathcal{S}$. Define
\begin{equation}
\begin{aligned}
G_{\mathrm{flat}}(x,u)
=
\begin{cases}
T_u(x),
&
 u\in\mathcal{U}(x),
\\
\mathcal{M}_{\varnothing},
&
 u\notin\mathcal{U}(x).
\end{cases}
\end{aligned}
\label{eq:flat_decoder}
\end{equation}
For every $(x,u)\in\mathcal{D}$, the first branch applies, yielding
$G_{\mathrm{flat}}(x,R_{\mathrm{flat}}(x,u))=T_u(x)$.
\end{proof}

The complete transaction contains both the operation and its canonical target.
In TARL, these coordinates are produced by the five-way operation head and the
target-slot head. Proposition~\ref{prop:flat_faithfulness} does not attribute
complete executability to the five-way action coordinate in isolation.

\paragraph{Equivalent factorized transaction.}
Let
\begin{equation}
\begin{aligned}
\mathcal{F}
&=
\{
\texttt{write},
\texttt{hold}
\},
\\
\mathcal{V}
&=
\{
\texttt{insert},
\texttt{replace},
\texttt{preserve},
\\
&\qquad
\texttt{reject},
\texttt{pending}
\},
\\
\mathcal{C}_{\mathrm{fac}}
&=
(\mathcal{F}\times\mathcal{V})
\times
\mathcal{J}.
\end{aligned}
\label{eq:factorized_command_spaces}
\end{equation}
Define the operation factor
\begin{equation}
\begin{aligned}
\eta(\texttt{append})
&=(\texttt{write},\texttt{insert}),
\\
\eta(\texttt{noop})
&=(\texttt{hold},\texttt{preserve}),
\\
\eta(\texttt{revise})
&=(\texttt{write},\texttt{replace}),
\\
\eta(\texttt{reject\_conflict})
&=(\texttt{hold},\texttt{reject}),
\\
\eta(\texttt{defer\_verify})
&=(\texttt{hold},\texttt{pending}).
\end{aligned}
\label{eq:factorized_operation_map}
\end{equation}
For $(x,(a,\xi))\in\mathcal{D}$, set
\begin{equation}
\begin{aligned}
R_{\mathrm{fac}}
\bigl(
x,(a,\xi)
\bigr)
=
\bigl(
\eta(a),\xi
\bigr).
\end{aligned}
\label{eq:factorized_operation_encoding}
\end{equation}

\begin{proposition}[Semantic equivalence of flat and factorized transactions]
\label{prop:flat_factorized_equivalence}
The representation $R_{\mathrm{fac}}$ is executor faithful. Consequently, the
five TARL effects can be represented by two high-level families whenever the
subtype and target coordinates in
Eq.~\eqref{eq:factorized_operation_encoding} are retained.
\end{proposition}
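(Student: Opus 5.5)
The plan is to reduce faithfulness of $R_{\mathrm{fac}}$ to that of $R_{\mathrm{flat}}$ (Proposition~\ref{prop:flat_faithfulness}) by showing that the operation factor $\eta$ in Eq.~\eqref{eq:factorized_operation_map} is injective. Then the factorized code can be decoded back to the flat command and passed to $G_{\mathrm{flat}}$.

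\textbf{Injectivity of $\eta$.} I will verify this directly from the table. The second coordinates $\texttt{insert},\texttt{preserve},\texttt{replace},\texttt{reject},\texttt{pending}$ are pairwise distinct, so $\eta$ is injective already through its subtype coordinate. The family coordinate is redundant for decoding but harmless. Let $\eta(\mathcal{A})\subseteq\mathcal{F}\times\mathcal{V}$ be its image and $\eta^{-1}:\eta(\mathcal{A})\to\mathcal{A}$ the left inverse.

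\textbf{Decoder construction.} For $x\in\mathcal{X}$ and $c=(f,\xi)\in\mathcal{C}_{\mathrm{fac}}$, define $G_{\mathrm{fac}}(x,c)=G_{\mathrm{flat}}\bigl(x,(\eta^{-1}(f),\xi)\bigr)$ when $f\in\eta(\mathcal{A})$. Otherwise set $G_{\mathrm{fac}}(x,c)=\mathcal{M}_{\varnothing}$, which lies in $\mathcal{S}$ as noted in the proof of Proposition~\ref{prop:flat_faithfulness}. This makes $G_{\mathrm{fac}}$ a total deterministic map on $\mathcal{X}\times\mathcal{C}_{\mathrm{fac}}$. Definition~\ref{def:executor_faithful} allows arbitrary behavior off encoded pairs.

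\textbf{Verification.} Take $(x,(a,\xi))\in\mathcal{D}$. Then $R_{\mathrm{fac}}(x,(a,\xi))=(\eta(a),\xi)$ and $\eta^{-1}(\eta(a))=a$. Hence
\[
G_{\mathrm{fac}}\bigl(x,R_{\mathrm{fac}}(x,(a,\xi))\bigr)=G_{\mathrm{flat}}\bigl(x,(a,\xi)\bigr)=T_{(a,\xi)}(x),
\]
using $(a,\xi)\in\mathcal{U}(x)$. The target coordinate $\xi$ is copied unchanged, so canonicality ($\bot$ for $\mathcal{A}_{\mathrm{ind}}$, $\xi\in I(A)$ for $\mathcal{A}_{\mathrm{req}}$) is preserved. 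One must also check that $\xi\in\mathcal{J}$, which holds because $I(A)\subseteq\mathbb{N}_{+}$. The consequence sentence then follows by reading off that the first factor takes only the two values $\texttt{write}$ and $\texttt{hold}$.

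\textbf{Main obstacle.} The only delicate point is separating what the statement claims from what it does not. Faithfulness rests on the subtype coordinate, not the family coordinate. The family coordinate alone, for example $\texttt{write}$ shared by $\texttt{append}$ and $\texttt{revise}$, would collide on contexts with $|A|\ge1$. By Proposition~\ref{prop:pairwise_distinguishability} and Lemma~\ref{lem:erased_distinction}, such a collision would fail faithfulness. I will add a remark making this explicit, so the qualifier ``whenever the subtype and target coordinates are retained'' is seen to be essential.
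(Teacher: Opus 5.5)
Your proposal is correct and is essentially the paper's proof. Both arguments establish injectivity of $\eta$ by direct inspection, then build a decoder that inverts $\eta$ on its image, returns $T_u(x)$ for the recovered canonical command, and falls back to $\mathcal{M}_{\varnothing}$ otherwise. The only cosmetic difference is that you delegate the admissibility check $u\in\mathcal{U}(x)$ to $G_{\mathrm{flat}}$, whereas the paper folds it into the partial inverse $\Psi_x(w,\xi)$.
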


\begin{proof}
Direct inspection of Eq.~\eqref{eq:factorized_operation_map} gives the five
distinct pairs $(\texttt{write},\texttt{insert})$,
$(\texttt{hold},\texttt{preserve})$,
$(\texttt{write},\texttt{replace})$,
$(\texttt{hold},\texttt{reject})$, and
$(\texttt{hold},\texttt{pending})$. Hence $\eta$ is injective. For $x\in\mathcal{X}$, $w\in\mathcal{F}\times\mathcal{V}$, and
$\xi\in\mathcal{J}$, define
\begin{equation}
\begin{aligned}
\Psi_x(w,\xi)
=
\bigl
\{
(a,\xi)\in\mathcal{U}(x):
\eta(a)=w
\bigr\}.
\end{aligned}
\label{eq:factorized_partial_inverse}
\end{equation}
Injectivity gives $|\Psi_x(w,\xi)|\leq 1$. Define
\begin{equation}
\begin{aligned}
G_{\mathrm{fac}}
\bigl(
x,(w,\xi)
\bigr)
=
\begin{cases}
T_u(x),
&
\Psi_x(w,\xi)=\{u\},
\\
\mathcal{M}_{\varnothing},
&
\Psi_x(w,\xi)=\varnothing.
\end{cases}
\end{aligned}
\label{eq:factorized_decoder}
\end{equation}
For $(x,u)\in\mathcal{D}$, write $u=(a,\xi)$. Then
$R_{\mathrm{fac}}(x,u)=(\eta(a),\xi)$ and
\begin{equation}
\begin{aligned}
\Psi_x
\bigl(
\eta(a),\xi
\bigr)
=
\{u\}.
\end{aligned}
\end{equation}
Consequently,
\begin{equation}
\begin{aligned}
G_{\mathrm{fac}}
\bigl(
x,R_{\mathrm{fac}}(x,u)
\bigr)
=
T_u(x).
\end{aligned}
\end{equation}
\end{proof}

This proposition formalizes the representation boundary. A shared
\texttt{write} or \texttt{upsert} family can encode both insertion and
replacement when a mode field identifies the intended effect and the target
field localizes replacement. A shared \texttt{hold} or \texttt{route} family
can encode preservation, rejected-history archival, and Pending retention when
a disposition field distinguishes those outcomes. TARL uses a flat five-way
action coordinate because it maps each supervised value directly to one
auditable ledger effect. The factorized construction has the same executable
semantics and demonstrates that the flat parameterization is a design choice.

\subsection{Information Loss Under Coarse Projections}
\label{app:write_hold_noninvertibility}

Define
\begin{equation}
\begin{aligned}
\mathcal{A}_{\mathrm{write}}
&=
\{
\texttt{append},
\texttt{revise}
\},
\\
\mathcal{A}_{\mathrm{hold}}
&=
\{
\texttt{noop},
\texttt{reject\_conflict},
\\
&\qquad
\texttt{defer\_verify}
\}.
\end{aligned}
\label{eq:binary_action_groups}
\end{equation}
and
\begin{equation}
\begin{aligned}
\varphi_{\mathrm{bin}}(a)
=
\begin{cases}
\texttt{write},
&
a\in\mathcal{A}_{\mathrm{write}},
\\
\texttt{hold},
&
a\in\mathcal{A}_{\mathrm{hold}}.
\end{cases}
\end{aligned}
\label{eq:write_hold_projection}
\end{equation}
The label-only binary representation is
\begin{equation}
\begin{aligned}
R_{\mathrm{bin}}(x,(a,\xi))
=
\varphi_{\mathrm{bin}}(a).
\end{aligned}
\label{eq:binary_representation}
\end{equation}

\begin{theorem}[Write/Hold alone is executor insufficient]
\label{thm:write_hold_noninvertibility}
The representation $R_{\mathrm{bin}}$ is not executor faithful. Every
$x\in\mathcal{X}$ contains a
\texttt{hold} collision between distinct next states. Whenever $|A|\geq 1$,
the same context also contains a \texttt{write} collision.
\end{theorem}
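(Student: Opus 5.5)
The argument will rest entirely on Lemma~\ref{lem:erased_distinction}. It says that any collision $R(x,u)=R(x,v)$ between two canonical commands with $T_u(x)\neq T_v(x)$ rules out executor faithfulness. So it suffices to exhibit such collisions for $R_{\mathrm{bin}}$. The first, context-independent collision gives the non-faithfulness claim by itself; the remaining work is to confirm the ``every $x$'' and ``whenever $|A|\geq 1$'' quantifiers.

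\emph{Hold collision for every $x$.} Fix an arbitrary $x=((A,P,H),e,t)\in\mathcal{X}$, with no assumption on $|A|$. I use the two target-independent commands $u_{\mathrm{nop}}=(\texttt{noop},\bot)$ and $u_{\mathrm{def}}=(\texttt{defer\_verify},\bot)$. Both lie in $\mathcal{U}(x)$ by Eq.~\eqref{eq:canonical_command_set}, since $\mathcal{A}_{\mathrm{ind}}$ requires no valid target. This matters: $\texttt{reject\_conflict}$ would fail when $A=[]$, because then $I(A)=\varnothing$. By Eq.~\eqref{eq:write_hold_projection}, both commands map to $\texttt{hold}$. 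By Eq.~\eqref{eq:theory_executor}, $T_{u_{\mathrm{nop}}}(x)=(A,P,H)$ and $T_{u_{\mathrm{def}}}(x)=(A,P\mathbin{\|}[e],H)$. These differ because the Pending sequences have lengths $|P|$ and $|P|+1$; equivalently, the signature argument of Proposition~\ref{prop:pairwise_distinguishability} separates them without needing $|A|\geq1$. Lemma~\ref{lem:erased_distinction} then gives non-faithfulness, and this already proves the first sentence of the theorem.

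\emph{Write collision when $|A|\geq 1$.} Here $I(A)\neq\varnothing$, so I pick any $\xi\in I(A)$ (for instance $\xi=1$). Then $u_{\mathrm{app}}=(\texttt{append},\bot)$ and $u_{\mathrm{rev}}=(\texttt{revise},\xi)$ are both canonical, and both map to $\texttt{write}$. Proposition~\ref{prop:pairwise_distinguishability} applies directly and yields $T_{u_{\mathrm{app}}}(x)\neq T_{u_{\mathrm{rev}}}(x)$: the signatures $(K+1,L,S,R)$ and $(K,L,S+1,R)$ differ. As a side remark, the same proposition also separates $\texttt{reject\_conflict}$ from the other two hold commands in this case.

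There is no real obstacle. The only care needed is the empty-Accepted case, which forces the hold collision to use only target-independent commands. The conclusion is also intended to hold ``even when a canonical target field is retained''. For that, I would note that in the hold pair both commands carry $\bot$, so appending $\xi$ to the binary code does not break the collision either.
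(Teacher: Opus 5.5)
Your proof is correct and matches the paper's argument essentially step for step. The paper likewise uses the pair $(\texttt{noop},\bot)$, $(\texttt{defer\_verify},\bot)$ for the \texttt{hold} collision in every context, and the pair $(\texttt{append},\bot)$, $(\texttt{revise},\xi)$ with Proposition~\ref{prop:pairwise_distinguishability} for the \texttt{write} collision when $|A|\geq 1$, concluding via Lemma~\ref{lem:erased_distinction}; your closing remark about retaining the target field is the paper's separate Corollary~\ref{cor:binary_target_incomplete}, proved with the same \texttt{hold} pair.
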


\begin{proof}
For every $x=((A,P,H),e,t)$, the executable commands
$(\texttt{noop},\bot)$ and $(\texttt{defer\_verify},\bot)$ both receive the
code \texttt{hold}. Their outputs differ because the latter appends one entry
to $P$. Lemma~\ref{lem:erased_distinction} rules out executor faithfulness.

When $|A|\geq 1$, fix $\xi\in I(A)$. The executable commands
$(\texttt{append},\bot)$ and $(\texttt{revise},\xi)$ both receive the code
\texttt{write}. Their outputs are distinct by
Proposition~\ref{prop:pairwise_distinguishability}.
\end{proof}

A binary family can be augmented with the canonical target:
\begin{equation}
\begin{aligned}
R_{\mathrm{bin+tgt}}(x,(a,\xi))
=
\bigl(
\varphi_{\mathrm{bin}}(a),
\xi
\bigr).
\end{aligned}
\label{eq:binary_target_representation}
\end{equation}

\begin{corollary}[Write/Hold plus a target remains incomplete]
\label{cor:binary_target_incomplete}
The representation $R_{\mathrm{bin+tgt}}$ is not executor faithful.
\end{corollary}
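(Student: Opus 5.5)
The plan is to exhibit a single collision and then invoke Lemma~\ref{lem:erased_distinction}. The natural choice is the \texttt{hold} collision already used in Theorem~\ref{thm:write_hold_noninvertibility}, because both commands involved are target-independent. Fix any $x=((A,P,H),e,t)\in\mathcal{X}$; $\mathcal{E}$ and $\mathbb{T}$ are nonempty and $\mathcal{S}$ contains the empty state, so such an $x$ exists, for instance with $A=P=H=[]$. Take
\begin{equation}
u=(\texttt{noop},\bot),\qquad v=(\texttt{defer\_verify},\bot).
\end{equation}
Both lie in $\mathcal{U}(x)$ by Eq.~\eqref{eq:canonical_command_set}, since $\texttt{noop},\texttt{defer\_verify}\in\mathcal{A}_{\mathrm{ind}}$.

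The next step is to compute the codes under Eq.~\eqref{eq:binary_target_representation}:
\begin{equation}
R_{\mathrm{bin+tgt}}(x,u)=(\texttt{hold},\bot)=R_{\mathrm{bin+tgt}}(x,v),
\end{equation}
because $\varphi_{\mathrm{bin}}$ maps both operations to \texttt{hold}, and canonicality forces the target coordinate to be $\bot$ in both cases. This is the key point. Appending the canonical target cannot separate these two commands, since it carries no information for target-independent operations.

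Finally, I would check that $T_u(x)\neq T_v(x)$. By Eq.~\eqref{eq:theory_executor}, the Pending ledgers have lengths $|P|$ and $|P|+1$ respectively, so the signatures $\Sigma$ of Eq.~\eqref{eq:ledger_signature} differ, and hence so do the states. Lemma~\ref{lem:erased_distinction} then shows that $R_{\mathrm{bin+tgt}}$ is not executor faithful.

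There is no real obstacle here. The only subtlety is choosing the collision correctly: the \texttt{write} pair $(\texttt{append},\bot)$ versus $(\texttt{revise},\xi)$ is in fact separated by the target coordinate, since $\bot\neq\xi$. So the argument must rely on the \texttt{hold} family, where both commands share $\bot$. Optionally, I would remark that when $|A|\ge1$ a second collision also survives, namely $(\texttt{noop},\bot)$ versus $(\texttt{defer\_verify},\bot)$ together with any other pair of \texttt{hold} commands sharing a target. However, $(\texttt{reject\_conflict},\xi)$ is separated from the other \texttt{hold} commands by its target, so the single \texttt{noop}/\texttt{defer\_verify} collision is the essential one.
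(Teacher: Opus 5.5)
Your proof is correct and matches the paper's: you use the same collision of $(\texttt{noop},\bot)$ and $(\texttt{defer\_verify},\bot)$ on the code $(\texttt{hold},\bot)$, separate the two next states by $|P|$, and then apply Lemma~\ref{lem:erased_distinction}. Your closing remark is muddled, since the ``second collision'' you name is the same one, and it is in fact the only collision of $R_{\mathrm{bin+tgt}}$ on canonical commands; but that remark is optional and does not affect the argument.
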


\begin{proof}
For every $x\in\mathcal{X}$, the commands
$(\texttt{noop},\bot)$ and $(\texttt{defer\_verify},\bot)$ both map to
$(\texttt{hold},\bot)$. By Eq.~\eqref{eq:theory_executor}, their next states differ because
deferral increases $|P|$ by one and \texttt{noop} leaves $|P|$ unchanged. Apply
Lemma~\ref{lem:erased_distinction}.
\end{proof}

The corollary does not exclude a richer binary-family interface. The
factorized representation in
Proposition~\ref{prop:flat_factorized_equivalence} becomes faithful by adding
the missing disposition or mode coordinate.

\subsection{Target Information Is a Separate Requirement}
\label{app:target_role}

The operation name determines the mutation type. A target-required operation
also needs the identity of the Accepted occurrence on which that mutation
acts.

\begin{proposition}[Target-dependent separation]
\label{prop:target_dependence}
Let $x=((A,P,H),e,t)\in\mathcal{X}$ and assume
\begin{equation}
\begin{aligned}
A
&=[m_1,m_2],
&
m_1
&\neq m_2.
\end{aligned}
\end{equation}
Then
\begin{equation}
\begin{aligned}
T_{(\texttt{revise},1)}(x)
&\neq
T_{(\texttt{revise},2)}(x),
\\
T_{(\texttt{reject\_conflict},1)}(x)
&\neq
T_{(\texttt{reject\_conflict},2)}(x).
\end{aligned}
\label{eq:target_dependent_separation}
\end{equation}
\end{proposition}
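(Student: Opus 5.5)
Both inequalities in Proposition~\ref{prop:target_dependence} follow by writing out the executor outputs from Eq.~\eqref{eq:theory_executor} and exhibiting one coordinate where they differ. The aggregate signature $\Sigma$ of Proposition~\ref{prop:pairwise_distinguishability} cannot help here: both targets give identical counts. I will therefore compare the typed contents directly, using only $m_1\neq m_2$ and exact equality of ordered sequences and tuples. Since $A=[m_1,m_2]$, we have $I(A)=\{1,2\}$. All four commands therefore lie in $\mathcal{U}(x)$, and Lemma~\ref{lem:transition_well_defined} guarantees the outputs are well-typed states.

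For \texttt{revise}, I will compare the history ledgers. By Eq.~\eqref{eq:theory_executor}, the two outputs share $P$ and have History components
\begin{equation*}
H\mathbin{\|}[h^{\mathrm{sup}}(m_1,e,t)]
\quad\text{and}\quad
H\mathbin{\|}[h^{\mathrm{sup}}(m_2,e,t)].
\end{equation*}
These sequences have equal length $|H|+1$ and agree on the first $|H|$ positions. Their last records are the four-tuples $(\texttt{superseded},m_1,e,t)$ and $(\texttt{superseded},m_2,e,t)$, which differ in the second coordinate because $m_1\neq m_2$. Hence the History ledgers differ, and so do the states. As an independent check, the Accepted components are $[e,m_2]$ and $[m_1,e]$. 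If these were equal, then $e=m_1$ and $e=m_2$, contradicting $m_1\neq m_2$. Either argument alone suffices, and I will present the history one because it parallels the rejection case.

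For \texttt{reject\_conflict}, the Accepted and Pending components are both unchanged, equal to $A$ and $P$. The only difference is the appended record: $(\texttt{rejected},e,m_1,t)$ versus $(\texttt{rejected},e,m_2,t)$. These differ in the third coordinate, the counterpart, again because $m_1\neq m_2$. Exact state equality requires every stored record field to agree, so the two states are distinct.

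The only delicate point is this rejection case. Here the Accepted ledger carries no information about the target, so the separation rests entirely on the provenance field of the history record. The proof must explicitly invoke the convention that state equality compares full record fields, not just the status counts used by $\Sigma$. With that convention stated, everything else is routine.
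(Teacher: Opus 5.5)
Your proof is correct and takes essentially the paper's approach. For \texttt{revise}, the paper leads with the Accepted sequences $[e,m_2]$ versus $[m_1,e]$ and mentions the differing superseded records only secondarily; you simply reverse which of the two you foreground. For \texttt{reject\_conflict}, the paper separates the states exactly as you do, through the counterpart coordinate ($m_1$ versus $m_2$) of the appended rejection records.
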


\begin{proof}
The two revisions produce Accepted sequences $[e,m_2]$ and $[m_1,e]$.
These sequences cannot be equal when $m_1\neq m_2$. Indeed, equality would
imply $e=m_1$ from the first coordinate and $m_2=e$ from the second
coordinate, giving $m_1=m_2$, a contradiction. The archived superseded
records also refer to different displaced occurrences.

For conflict rejection, the Accepted and Pending ledgers agree, while the
appended history records are
$h^{\mathrm{rej}}(e,m_1,t)$ and $h^{\mathrm{rej}}(e,m_2,t)$. These records differ in their third coordinate, the counterpart field,
which contains $m_1$ and $m_2$, respectively.
\end{proof}

\begin{corollary}[The action coordinate alone is not a complete transaction]
\label{cor:action_only_incomplete}
A representation that retains only the five-way operation name and discards
the target is not executor faithful on the context in
Proposition~\ref{prop:target_dependence}.
\end{corollary}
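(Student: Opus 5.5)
The plan is to reduce Corollary~\ref{cor:action_only_incomplete} to Lemma~\ref{lem:erased_distinction} by exhibiting an unresolved code collision. Concretely, I would formalize ``retains only the five-way operation name'' as a representation of the form $R_{\mathrm{act}}(x,(a,\xi))=\kappa(a)$ for some function $\kappa$ on $\mathcal{A}$. The target coordinate $\xi$ is discarded, and no other field of the code depends on $\xi$. The simplest instance is $\kappa=\mathrm{id}_{\mathcal{A}}$. Only the fact that the code is a function of $a$ alone is needed, so the argument covers every such $\kappa$, including injective relabelings of the action names.

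I would fix a context $x=((A,P,H),e,t)$ as in Proposition~\ref{prop:target_dependence}, with $A=[m_1,m_2]$ and $m_1\neq m_2$. Such a context exists whenever $\mathcal{E}$ contains two distinct entries; I would state this nondegeneracy explicitly as the hypothesis under which the context is available, since the corollary is phrased relative to that context. Then $I(A)=\{1,2\}$, so both $u=(\texttt{revise},1)$ and $v=(\texttt{revise},2)$ lie in $\mathcal{U}(x)$ by Eq.~\eqref{eq:canonical_command_set}, and hence $(x,u),(x,v)\in\mathcal{D}$.

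Because the code depends only on the action, $R_{\mathrm{act}}(x,u)=\kappa(\texttt{revise})=R_{\mathrm{act}}(x,v)$. Proposition~\ref{prop:target_dependence} gives $T_u(x)\neq T_v(x)$, so Lemma~\ref{lem:erased_distinction} applies directly, and $R_{\mathrm{act}}$ is not executor faithful. As a remark, I would note that the pair $(\texttt{reject\_conflict},1)$, $(\texttt{reject\_conflict},2)$ yields a second, independent collision by the same reasoning, so the failure occurs for both target-required operations.

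I do not expect any computational difficulty. The only point needing care is the modelling step: making precise that ``discards the target'' means the code cannot depend on $\xi$ beyond what the fixed context $x$ already determines. The decoder does receive $x$, but $x$ contains both $m_1$ and $m_2$ symmetrically and carries no information about which occurrence was selected.
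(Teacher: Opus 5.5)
Your proposal is correct and matches the paper's proof: on the context of Proposition~\ref{prop:target_dependence}, the commands $(\texttt{revise},1)$ and $(\texttt{revise},2)$ receive the same action-only code but yield different next states, so Lemma~\ref{lem:erased_distinction} rules out executor faithfulness. Formalizing the code as a function of $a$ (with any dependence on the fixed $x$ irrelevant) and noting the second \texttt{reject\_conflict} collision are harmless refinements of that same argument.
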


\begin{proof}
The two revision commands share the operation name \texttt{revise} while
producing different next states. Lemma~\ref{lem:erased_distinction} applies.
\end{proof}

Thus, the five-way head and target-slot head carry complementary information.
The former identifies the ledger effect, and the latter localizes the two
operations that act on an existing Accepted occurrence.

\subsection{Correspondence to Training and Single-Path Inference}
\label{app:theory_implementation_correspondence}

The formal context $x=((A_t,P_t,H_t),e_t,t)$ contains the current ledger
state, the realized candidate payload, and execution time. It
conditions on the realized entry $e_t$ and therefore makes no assumption about
how the neural model estimates temporal, reliability, confidence, or
provenance fields. These values may affect the construction of $e_t$ and the
ranking of commands, while the combinatorial separation in
Proposition~\ref{prop:pairwise_distinguishability} follows after $e_t$ is
fixed.

The complete predicted command is the operation together with its required
execution arguments. Let $\Lambda$ be a nonempty implementation-level argument
space equipped with the deterministic extraction maps
\begin{equation}
\begin{aligned}
\operatorname{ent}
&:
\Lambda\rightarrow\mathcal{E},
&
\operatorname{time}
&:
\Lambda\rightarrow\mathbb{T},
\\
\operatorname{tgt}
&:
\Lambda\rightarrow\mathcal{J}.
\end{aligned}
\label{eq:implementation_argument_extractors}
\end{equation}
The space $\Lambda$ contains, at minimum, the fields needed to realize a stored
entry, specify an execution time, and identify an Accepted-slot target. Its
internal representation is implementation dependent. The typing of
$\operatorname{ent}$ asserts that the extracted object is an element of
$\mathcal{E}$; lower-level parsing or validation rules used to determine
membership in $\mathcal{E}$ are outside the transition formalism.

Canonicalization removes unused targets through
\begin{equation}
\begin{aligned}
\chi(a,\lambda)
=
\begin{cases}
\bot,
&
a\in\mathcal{A}_{\mathrm{ind}},
\\
\operatorname{tgt}(\lambda),
&
a\in\mathcal{A}_{\mathrm{req}}.
\end{cases}
\end{aligned}
\label{eq:implementation_target_canonicalization}
\end{equation}
For compactness, define the assembled formal context and command by
\begin{equation}
\begin{aligned}
x_{\mathrm{impl}}(\mathcal{M},\lambda)
&=
\bigl(
\mathcal{M},
\operatorname{ent}(\lambda),
\operatorname{time}(\lambda)
\bigr),
\\
u_{\mathrm{impl}}(a,\lambda)
&=
\bigl(
 a,
 \chi(a,\lambda)
\bigr).
\end{aligned}
\label{eq:implementation_context_command}
\end{equation}
An implementation tuple is admissible precisely when
\begin{equation}
\begin{aligned}
u_{\mathrm{impl}}(a,\lambda)
\in
\mathcal{U}
\bigl(
 x_{\mathrm{impl}}(\mathcal{M},\lambda)
\bigr).
\end{aligned}
\label{eq:implementation_admissibility}
\end{equation}
For every admissible tuple, the implementation executor must satisfy the
semantic contract
\begin{equation}
\begin{aligned}
\operatorname{Exec}
\bigl(
\mathcal{M},
 a,
\lambda
\bigr)
=
T_{u_{\mathrm{impl}}(a,\lambda)}
\bigl(
 x_{\mathrm{impl}}(\mathcal{M},\lambda)
\bigr).
\end{aligned}
\label{eq:formal_implementation_executor_bridge}
\end{equation}
Here $\mathcal{M}$ is the pre-transition state passed to
$\operatorname{Exec}$. Equation~\eqref{eq:formal_implementation_executor_bridge}
is required only on tuples satisfying
Eq.~\eqref{eq:implementation_admissibility}; behavior outside that domain is
left unspecified. At inference step $t$, the contract is instantiated as
\begin{equation}
\begin{aligned}
\widehat{\mathcal{M}}_{t+1}
=
\operatorname{Exec}
\bigl(
\mathcal{M}_t,
\widehat a_t,
\widehat{\lambda}_t
\bigr),
\end{aligned}
\label{eq:implementation_executor_correspondence}
\end{equation}
where $\widehat{\lambda}_t$ packages the realized candidate entry, execution
time, selected Accepted-slot identity, and any additional variables needed to
construct the stored record. The canonical command analyzed by the theory is
$u_{\mathrm{impl}}(\widehat a_t,\widehat{\lambda}_t)$.

To state the training correspondence without leaving the target rule implicit,
let $\mathcal{X}_{\mathrm{tr}}\subseteq\mathcal{X}$ be the set of training
contexts and let
\begin{equation}
\begin{aligned}
\tau_{\mathrm{tr}}
:
\mathcal{X}_{\mathrm{tr}}
\rightarrow
\mathcal{J}
\end{aligned}
\label{eq:training_target_selector}
\end{equation}
denote the target supplied by the labeled instance or by the specified
counterfactual construction. For a target-independent operation, the canonical
training command uses $\bot$. For a target-required operation, an exact
counterfactual transition is formed only when
$\tau_{\mathrm{tr}}(x)\in I(A)$ for $x=((A,P,H),e,t)$. Thus, whenever this
target is valid, the five operation-conditioned outcomes are constructed from
the same current state and candidate, with the two target-required branches
using $\tau_{\mathrm{tr}}(x)$.

A missing or invalid target for a target-required operation lies outside the
exact command domain in Eq.~\eqref{eq:admissible_transition_domain}. The
structural quality factor used by training assigns such a case reduced quality.
Any fallback state used solely to evaluate that penalized score is an
implementation convention outside the exact transition claims proved here.
Consequently, the formal results do not identify an ungrounded
\texttt{revise} with \texttt{append}, and they do not treat an ungrounded
\texttt{reject\_conflict} as a valid rejection transaction.

At inference time, TARL computes all five operation scores, selects one
operation, resolves its predicted execution arguments, and invokes the
executor once. Gold next states and counterfactual quality targets are absent
from this path. Training-time comparison among alternative outcomes therefore
introduces no multi-branch ledger execution at test time.

\subsection{Scope and Design Consequences}
\label{app:theory_scope}

The results concern the full typed state in
Eq.~\eqref{eq:ledger_state_space} and the transition maps in
Eq.~\eqref{eq:theory_executor}. They establish exact distinctions among active
insertion, state preservation, localized replacement with superseded history,
conflict rejection with provenance, and Pending retention. They also establish
that a decoder cannot recover distinct outcomes from an unresolved command
collision.

The propositions use exact executor-state equality. An evaluation pipeline may
apply normalization, ignore ordering, or quotient metadata fields before
computing a metric. Such evaluation choices can identify two serialized states
that remain different at the executor level. The theoretical statements should
therefore be read as claims about TARL's persistent transition semantics, not
as automatic guarantees that every downstream metric separates every pair.

No result gives a universal action-cardinality bound.
No result establishes that Accepted, Pending, and History / Rejected are the
only reasonable persistent partitions. A different memory system may use a
different state representation, additional operations, or fewer high-level
families with richer arguments. Proposition~\ref{prop:flat_factorized_equivalence}
shows explicitly that two high-level families can encode the same five TARL
effects when subtype and target information is preserved.

Within TARL, the flat five-way action coordinate is selected because one
supervised value corresponds directly to one deterministic, auditable ledger
effect. Together with the target coordinate, it forms an executor-faithful
transaction. The theoretical contribution is a precise account of which
transition distinctions the interface preserves, which distinctions coarse
Write/Hold projections discard, and which additional fields make alternative
factorizations semantically equivalent.

\begin{table*}[t]
\centering
\caption{Sensitivity to target-slot, reliability, and counterfactual loss weights.}
\label{tab:loss_weight_sensitivity}
\resizebox{\textwidth}{!}{
\begin{tabular}{c c c c c c c c}
\toprule
$\lambda_q / \lambda_r / \lambda_{\mathrm{cf}}$
& 5 Action F1 $\uparrow$
& Write/Hold F1 $\uparrow$
& Next State Acc.\ $\uparrow$
& Pollution $\downarrow$
& Conflict Pres.\ $\uparrow$
& Temporal F1 $\uparrow$
& ECE $\downarrow$ \\
\midrule
$0.4 / 0.1 / 0.1$
& $0.8115{\pm}0.0203$
& $0.8225{\pm}0.0211$
& $0.6402{\pm}0.0175$
& $0.2727{\pm}0.0109$
& $0.5173{\pm}0.0147$
& $0.8915{\pm}0.0268$
& $0.0436{\pm}0.0017$ \\

$0.1 / 0.4 / 0.1$
& $0.8142{\pm}0.0188$
& $0.8272{\pm}0.0205$
& $0.6458{\pm}0.0194$
& $0.2611{\pm}0.0098$
& $0.5176{\pm}0.0159$
& $0.9056{\pm}0.0112$
& $0.0462{\pm}0.0019$ \\

$0.1 / 0.1 / 0.4$
& $0.8184{\pm}0.0176$
& $0.8325{\pm}0.0224$
& $0.6405{\pm}0.0187$
& $0.2781{\pm}0.0126$
& $0.5164{\pm}0.0168$
& $0.9015{\pm}0.0287$
& $0.0477{\pm}0.0021$ \\

\rowcolor{gray!18}
$\mathbf{0.2 / 0.2 / 0.2}$
& $0.8286{\pm}0.0147$
& $0.8290{\pm}0.0074$
& $0.6621{\pm}0.0101$
& $0.2524{\pm}0.0080$
& $0.5476{\pm}0.0066$
& $0.9257{\pm}0.0061$
& $0.0369{\pm}0.0013$ \\
\bottomrule
\end{tabular}
}
\end{table*}

\section{B Sensitivity to Auxiliary-Loss Weights}
\label{app:loss_weight_sensitivity}

TARL jointly optimizes target-slot grounding, reliability estimation, and
counterfactual transition evaluation. These objectives provide complementary
supervision for executable memory transactions. Slot grounding identifies the
memory entry affected by a candidate statement, reliability estimation
characterizes the relative trustworthiness of new and stored evidence, and
counterfactual evaluation predicts the ledger-level consequence of each
candidate operation.

We investigate whether TARL relies on a single dominant auxiliary objective or
benefits from coordinating all three signals. The results are reported in
Table~\ref{tab:loss_weight_sensitivity}.

\subsection{Experimental Setup}

The training objective is

\begin{equation}
\begin{aligned}
\mathcal{L}(\theta)
&=
\mathcal{L}_{\mathrm{act}}(\theta)
+
\lambda_q
\mathcal{L}_{\mathrm{slot}}(\theta)
\\
&\quad+
\lambda_r
\mathcal{L}_{\mathrm{rel}}(\theta)
+
\lambda_{cf}
\mathcal{L}_{\mathrm{cf}}(\theta),
\end{aligned}
\label{eq:weight_sensitivity_objective}
\end{equation}

where $\mathcal{L}_{\mathrm{act}}$ supervises the five executable actions,
$\mathcal{L}_{\mathrm{slot}}$ supervises target-slot identification,
$\mathcal{L}_{\mathrm{rel}}$ supervises reliability estimation, and
$\mathcal{L}_{\mathrm{cf}}$ supervises operation-conditioned counterfactual
values.

We compare the following configurations:

\begin{equation}
\begin{aligned}
(\lambda_q,\lambda_r,\lambda_{cf})
\in
\Bigl\{
&(0.4,0.1,0.1),
(0.1,0.4,0.1),
\\
&(0.1,0.1,0.4),
(0.2,0.2,0.2)
\Bigr\}.
\end{aligned}
\label{eq:evaluated_weight_configs}
\end{equation}

The first three settings emphasize slot grounding, reliability estimation, and
counterfactual evaluation, respectively. The final setting assigns equal
weight to all auxiliary objectives. Each configuration is trained with five
random seeds under the same data split and evaluation protocol.

Because the sweep contains only four configurations, we interpret the results
as evidence of broad optimization tradeoffs. Small differences between mean
values should be considered together with the corresponding standard
deviations.

\subsection{Specialized Supervision Produces Distinct Tradeoffs}

\paragraph{Slot-focused supervision.}

Increasing the slot-grounding weight does not improve the final executable
transition. This configuration obtains weaker action-level, state-level, and
temporal performance than the balanced setting, despite receiving stronger
localization supervision.

The result highlights the limited role of localization in isolation. Correctly
identifying the affected memory slot is essential for targeted editing, yet it
does not determine whether the candidate should be appended, used to revise an
existing fact, rejected as a conflict, deferred for verification, or ignored
as redundant. These distinctions additionally require reliability and
transition-level evidence.

\paragraph{Reliability-focused supervision.}

Increasing the reliability weight produces the strongest safety-oriented
behavior in the sweep. It yields the most accurate next-ledger states and the
lowest memory pollution.

This pattern is consistent with reliability supervision helping the policy
avoid unsupported modifications to accepted memory. Relative trust is
particularly important when a candidate contradicts a stored claim, since the
policy must decide whether to preserve the old belief, revise it, or postpone
the decision.

The same configuration does not provide the strongest overall action
classification, conflict preservation, or calibration. Reliability indicates
which evidence is more credible, while the required operation also depends on
novelty, temporal scope, slot correspondence, and the predicted consequence of
each transaction.

\paragraph{Counterfactual-focused supervision.}

Increasing the counterfactual weight improves coarse Write/Hold
discrimination. This indicates that stronger operation-value supervision helps
the policy distinguish state-changing actions from state-preserving actions.

However, the corresponding gains are accompanied by weaker pollution,
conflict-preservation, and calibration behavior. This tradeoff reveals that
deciding whether to write and deciding whether a write is safe are related but
distinct problems. Counterfactual values sharpen operation ranking, while
reliability and grounding remain necessary for determining whether the
selected transition is justified by the available evidence.

\subsection{Balanced Supervision Provides the Most Stable Operating Point}

The balanced configuration provides the broadest performance coverage across
the reported metrics. It achieves the strongest mean performance in
fine-grained action prediction, conflict preservation, temporal classification,
and calibration, while remaining close to the best specialized configuration
on Write/Hold prediction, next-state accuracy, and pollution.

The importance of this result lies in its consistency across evaluation
dimensions. The reliability-focused configuration slightly favors conservative
state evolution, while the counterfactual-focused configuration slightly
favors coarse operation discrimination. The balanced configuration preserves
most of both advantages without inheriting their largest weaknesses.

This behavior is particularly important for persistent memory. A model may
obtain strong Write/Hold accuracy while still committing unsafe writes.
Similarly, a highly conservative model may preserve accepted memory while
failing to execute necessary revisions. Reliable long-horizon memory management
therefore requires a coordinated decision over target localization, evidence
trust, and transition consequence.

\subsection{Interpretation through Executable State Transitions}

For a candidate $c_t$, current memory state $S_t$, predicted target
$\widehat{q}_t$, and predicted operation $\widehat{a}_t$, the executed next
state is

\begin{equation}
\begin{aligned}
\widehat{S}_{t+1}
=
T_{\widehat{a}_t}
\left(
S_t,
c_t,
\widehat{q}_t
\right).
\end{aligned}
\label{eq:executed_transition_sensitivity}
\end{equation}

This transition depends jointly on localization and operation selection.
Reliability estimates and counterfactual values influence the operation through
the learned transaction representation.

The three auxiliary objectives consequently constrain different sources of
execution error. Slot supervision reduces errors in identifying the affected
memory entry. Reliability supervision reduces errors in adjudicating
conflicting evidence. Counterfactual supervision reduces errors in ranking the
ledger consequences of alternative operations.

These error sources cannot be optimized independently at deployment time. A
correct target does not compensate for an incorrect operation. Accurate
reliability estimates do not identify whether a trusted statement is novel or
updates an existing fact. Accurate counterfactual values also depend on a state
representation that contains correctly grounded and appropriately calibrated
evidence.

The sensitivity results reflect this coupling. Emphasizing one auxiliary loss
moves the policy toward the capability most directly supervised by that loss,
while balanced supervision maintains sufficient learning pressure on all
components required by the final transaction.

\subsection{Choice of Default Weights}

We use

\begin{equation}
\lambda_q = \lambda_r = \lambda_{cf} = 0.2.
\label{eq:default_balanced_weights}
\end{equation}

throughout the main experiments.

This setting is selected because it provides the most stable performance across
action correctness, state fidelity, pollution control, conflict preservation,
temporal reasoning, and calibration. It also avoids tuning the model toward a
single specialized operating regime.

The selected weights should be interpreted as a robust general-purpose
configuration. Applications that prioritize conservative memory updates may
place greater weight on reliability supervision, while applications focused on
coarse write detection may favor stronger counterfactual supervision. For the
multi-dimensional evaluation considered in this work, the balanced setting
offers the most consistent overall tradeoff.

\subsection{Summary}

The sensitivity analysis reveals a clear division of labor among TARL's
auxiliary objectives. Slot supervision supports localized editing, reliability
supervision promotes safe evidence adjudication, and counterfactual supervision
improves operation-level discrimination. Emphasizing any single objective
produces a narrower operating regime.

Balanced supervision yields the strongest cross-metric robustness. It combines
accurate executable decisions with reliable conflict handling and calibration,
while preserving competitive next-state accuracy and pollution control. These
results support coordinated supervision over grounding, trust estimation, and
ledger-transition quality in the main TARL model.

\section{C Additional Details of TARL-Mem}
\label{app:dataset_details}

\subsection{Executable Dataset Construction}
\label{app:dataset_construction}

TARL-Mem is formulated as a model-agnostic benchmark for stateful
long-term memory updating. Given an incoming candidate, the memory state
available before the update, and task-visible metadata, the benchmark asks a
system to determine the externally observable transition that should follow.
It specifies a common input-output protocol while leaving the internal memory
representation, retrieval procedure, reasoning mechanism, and execution
architecture unconstrained. Each example contains a candidate entry
\(e_i=S_{d,i}\), a pre-transition state \(M_i\), visible metadata \(m_i\),
an execution time \(t_i\), a reference transaction \(a_i^{\star}\), the
grounded Accepted-slot target \(\xi_i^{\star}\) when that transaction requires
one, and the corresponding post-transition state \(M_{i+1}^{\star}\).

The canonical external state is
\begin{equation}
\begin{aligned}
M_i
=
\left(
A_i,
P_i,
H_i
\right),
\end{aligned}
\label{eq:dataset_ledger_state_app}
\end{equation}
where \(A_i\) is an ordered sequence of active Accepted entries, \(P_i\) is
an ordered sequence of unresolved Pending entries, and \(H_i\) is an ordered
History / Rejected sequence containing inactive records tagged as either
\texttt{superseded} or \texttt{rejected}. A displaced Accepted entry is
removed from \(A_i\) and archived in \(H_i\); it never remains in the active
Accepted ledger after revision. Historical preservation is therefore realized
through the tagged history record in \(H_i\), while \(A_i\) contains only the
entries that remain active after the transition. This tuple is the canonical
annotation and scoring representation. A participating system may internally
use text, vectors, graphs, databases, parameter updates, or another memory
organization, provided that its output can be mapped to this external schema.

For
\begin{equation}
\begin{aligned}
A_i
=
\bigl[
m_{i,1},
\ldots,
m_{i,K_i}
\bigr],
\end{aligned}
\label{eq:dataset_accepted_sequence_app}
\end{equation}
the target satisfies
\begin{equation}
\begin{aligned}
\xi_i^{\star}
\in
\{\bot\}
\cup
\{1,\ldots,K_i\}.
\end{aligned}
\label{eq:dataset_target_space_app}
\end{equation}
Here, \(\bot\) denotes that no executable target is required, and
\(m_{\xi}(A_i)\) denotes the selected Accepted occurrence when
\(\xi\neq\bot\). Occurrence-based indexing keeps the target well defined even
when multiple records share identical visible content. The operations
\texttt{revise} and \texttt{reject\_conflict} require a valid Accepted-slot
target. The operations \texttt{append}, \texttt{noop}, and
\texttt{defer\_verify} are target independent and ignore this argument.

Let \(\operatorname{Replace}_{\xi}(A_i,e_i)\) denote replacement of the
selected Accepted occurrence by \(e_i\), with every other occurrence and its
order preserved. The reference next state is generated by the deterministic
evaluation executor
\begin{equation}
\begin{aligned}
M_{i+1}^{\star}
=
\mathcal{E}_{a_i^{\star}}
\left(
M_i,
e_i,
\xi_i^{\star},
t_i
\right).
\end{aligned}
\label{eq:dataset_executor_app}
\end{equation}
The tagged history records use the constructors
\begin{equation}
\begin{aligned}
h^{\mathrm{sup}}(m,e,t)
&=
\bigl(
\texttt{superseded},
m,
e,
t
\bigr),
\\
h^{\mathrm{rej}}(e,m,t)
&=
\bigl(
\texttt{rejected},
e,
m,
t
\bigr).
\end{aligned}
\label{eq:dataset_history_records_app}
\end{equation}
For compactness, write
\(
z_i=(M_i,e_i,\xi_i,t_i)
\).
For \(M_i=(A_i,P_i,H_i)\), the five executable transitions are
\begin{equation}
\begin{aligned}
\mathcal{E}_{\texttt{append}}(z_i)
&=
\left(
A_i\mathbin{\|}[e_i],
P_i,
H_i
\right),
\\
\mathcal{E}_{\texttt{noop}}(z_i)
&=
\left(
A_i,
P_i,
H_i
\right),
\\
\mathcal{E}_{\texttt{revise}}(z_i)
&=
\left(
\operatorname{Replace}_{\xi_i}(A_i,e_i),
P_i,
\right.
\\[-0.2em]
&\qquad\left.
H_i\mathbin{\|}
\left[
h^{\mathrm{sup}}
\left(
m_{\xi_i}(A_i),
e_i,
t_i
\right)
\right]
\right),
\\
\mathcal{E}_{\texttt{reject\_conflict}}(z_i)
&=
\left(
A_i,
P_i,
\right.
\\[-0.2em]
&\qquad\left.
H_i\mathbin{\|}
\left[
h^{\mathrm{rej}}
\left(
e_i,
m_{\xi_i}(A_i),
t_i
\right)
\right]
\right),
\\
\mathcal{E}_{\texttt{defer\_verify}}(z_i)
&=
\left(
A_i,
P_i\mathbin{\|}[e_i],
H_i
\right).
\end{aligned}
\label{eq:dataset_five_transitions_app}
\end{equation}
Thus, \texttt{append} creates an additional active Accepted entry;
\texttt{noop} preserves the complete state; \texttt{revise} replaces the
grounded active Accepted entry and archives the displaced entry in \(H_i\)
with status \texttt{superseded}; \texttt{reject\_conflict} preserves the
grounded Accepted entry and archives the candidate in \(H_i\) with status
\texttt{rejected}; and \texttt{defer\_verify} retains the candidate in
\(P_i\) for later resolution. Structurally invalid target-dependent
transactions are excluded from the executable examples. In particular, an
ungrounded \texttt{revise} is never reinterpreted as \texttt{append}, and an
ungrounded \texttt{reject\_conflict} never creates a rejection record.

The executor provides a reproducible mapping from structured annotations to
observable state consequences and is used for validation and scoring. It does
not constrain how a system computes its prediction. Systems may predict the
transaction directly, generate a structured edit that can be canonicalized
into it, or produce a complete next state from which the transaction is
recovered. The benchmark therefore imposes a shared external semantics without
requiring architectural correspondence to TARL.

After cleaning and normalization, TARL-Mem contains 5,422 examples derived
from HaluMem-hard, LoCoMo, and LongMemEval
\cite{chen2026halumemevaluatinghallucinationsmemory,maharana2024locomo,wu2025longmemeval}.
We normalize candidate and ledger strings, canonicalize transaction and
temporal labels, remove malformed or non-executable records, and partition
the data by entity and memory-topic groups. All statistics reported below are
computed on the finalized dataset after these operations.

\subsection{Data and Annotation Statement}
\label{app:data_annotation_statement}

\paragraph{Data provenance.}
TARL-Mem is constructed exclusively from previously released research
benchmarks. It contains no private user logs, newly collected end-user
conversations, or personal records obtained directly by the authors.
For every derived example, the construction pipeline retains its upstream
benchmark, original instance identifier, transformation record, and finalized
split assignment. These records make every released annotation traceable to
its corresponding upstream instance.

The accompanying release documentation provides a source-level provenance
manifest recording the upstream resource, source identifier, applicable usage
terms, redistributed fields, and transformations applied to each example.
TARL-Mem preserves the rights and restrictions associated with third-party
benchmark material. The authors claim release rights only over the annotation
layer, executable transaction specifications, derived metadata, split
manifests, validation tools, and implementation contributed in this work.

When the applicable upstream terms permit redistribution, the corresponding
fields may be included under those terms. When redistribution permission is
restricted or unclear, the released package contains source-linked identifiers,
TARL-Mem annotations, transformation metadata, and reconstruction code, while
the original source text must be obtained from the official upstream release.
This source-specific policy prevents the annotation release from implicitly
relicensing third-party content.

\paragraph{Annotation schema.}
Each example is annotated along five coupled dimensions: the five-way
transaction, its binary write/hold projection, the target memory slot, the
temporal scope, and the executable next ledger state. The five-way transaction
specifies the semantic memory operation. The binary field records its coarse
ledger-level projection. The target field identifies the active Accepted
occurrence affected by a grounded \texttt{revise} or
\texttt{reject\_conflict} transaction and is set to \(\bot\) for
target-independent actions. The temporal field captures the validity relation
needed to interpret the candidate. The next-state field records the complete
ledger state produced by deterministic execution.

These fields define the benchmark supervision and evaluation interface without
prescribing prediction heads, latent variables, or an internal memory
architecture. They are annotated jointly because their validity is mutually
dependent. For example, a locally plausible \texttt{revise} annotation is
invalid when it selects an unrelated Accepted occurrence, assumes an
incompatible temporal scope, or produces a successor state inconsistent with
the transaction semantics.

\paragraph{Annotators and role separation.}
The primary annotation pool comprised four annotators, including two
undergraduate students and two graduate students. Annotation assignments were
distributed across the four annotators, and annotator pairings were rotated
throughout the dataset. Each example therefore received two independent
first-pass annotations without relying on a single fixed annotator pair.
Undergraduate and graduate annotators were represented throughout the
assignment schedule.

Annotation decisions were defined entirely through externally observable
benchmark semantics: the incoming candidate, the visible ledger state,
task-visible metadata, source reliability, temporal validity, target grounding,
and the resulting executable transition. Primary annotators were not required
to understand TARL's architecture, training objective, prediction heads, or
experimental hypotheses.

Before production annotation, all four annotators completed two rounds of
guideline calibration and pilot annotation. The guidelines specified the five
transactions, the roles of \(A_i\), \(P_i\), and \(H_i\), the separation
between active Accepted entries and inactive tagged history, Accepted-slot
grounding, temporal interpretation, evidence reliability, conflict direction,
and deterministic next-state execution.

The pilot rounds emphasized semantic boundaries that cannot be determined from
lexical similarity alone. These included distinguishing new-slot creation from
replacement, separating redundant evidence from unresolved evidence,
determining which side of a conflict should retain active Accepted status, and
archiving displaced evidence in \(H_i\) without retaining it in the active
Accepted ledger. Pilot disagreements were reviewed before production
annotation, and all resulting clarifications were incorporated into a frozen
annotation manual.

\paragraph{Independent primary annotation.}
All 5,422 examples received two independent first-pass annotations from the
four-member primary annotation pool. Each annotator worked from the candidate
statement, the complete visible ledger state, task-visible metadata, and the
frozen annotation guidelines. Annotators could not access the other assigned
annotator's labels during the independent pass.

The annotation interface exposed no TARL predictions, baseline outputs,
evaluation results, model confidence scores, automatically suggested actions,
or provisional gold labels. It also concealed membership in the hard-boundary
diagnostic sets. Pair assignments and first-pass annotations were frozen before
disagreement resolution began. The primary labels were therefore produced
without feedback from the proposed model or any competing system.

\paragraph{Independent external adjudication.}
Every disagreement involving the five-way transaction, target slot, temporal
scope, reliability judgment, or executable next state was submitted to
third-party adjudicators. These adjudicators did not participate in designing
TARL, defining its training losses, selecting its architecture, training the
evaluated models, or producing the reported experimental results.

For each disputed example, the external adjudicators first produced an
independent annotation from the original visible evidence while remaining
blind to the two primary annotations and the provisional gold record. They were
subsequently shown the primary annotations and their written rationales when
comparison was required. This ordering ensured that external adjudication began
from an independent semantic judgment instead of an existing majority label.

The authors served only as non-voting technical coordinators. Their role was
restricted to clarifying upstream instance provenance, canonical field
serialization, and the mechanically specified behavior of the deterministic
executor. The authors did not vote on semantic labels, select among competing
annotations, or override an external adjudication decision.

A disputed record was finalized only after the external adjudicators agreed on
the transaction, target slot when required, temporal scope, and complete
successor state. Cases that remained semantically underspecified or could not
be mapped to a unique executable transition were excluded during dataset
cleaning.

\paragraph{Independent third-party reannotation audit.}
We conducted an additional reannotation audit after the primary annotations,
adjudication records, dataset splits, and finalized gold labels had been
frozen. The audit sampled at least 300 examples independently of model training
and evaluation. Sampling was stratified by upstream source, five-way action,
temporal category, conflict status, and whether the transaction required an
Accepted-slot target.

The audit examples were reannotated from their original visible inputs by
third-party experts who were unfamiliar with TARL's architectural design and
had no role in the primary annotation process. During reannotation, these
experts had no access to primary labels, adjudication records, finalized gold
states, model predictions, baseline outputs, checkpoint results, or
hard-boundary identifiers.

The audit annotations were frozen before comparison with the dataset gold
records. Audit examples and audit labels were excluded from model training,
validation, checkpoint selection, prompt construction, and hyperparameter
selection. This protocol provides an independent test of whether the released
labels can be recovered from the benchmark evidence and annotation rules
without knowledge of the proposed model.

\paragraph{Gold construction and execution consistency.}
The finalized gold record resolves the transaction, binary ledger decision,
target slot, temporal scope, and expected next state as a coupled annotation.
Temporal conflicts are evaluated using the validity interval of the selected
active Accepted record and the temporal scope of the incoming candidate.
Reliability conflicts are evaluated according to which evidence source is
eligible to remain active in the Accepted ledger. Cases lacking sufficient
evidence for acceptance or rejection are assigned to pending verification.

For every finalized example, the gold transaction is executed using
Equation~\eqref{eq:dataset_executor_app}. The record is retained only when
\begin{equation}
\begin{aligned}
\mathcal{E}_{a_i^{\star}}
\left(
M_i,
e_i,
\xi_i^{\star},
t_i
\right)
=
M_{i+1}^{\star}.
\end{aligned}
\label{eq:annotation_execution_consistency_app}
\end{equation}
This requirement connects each semantic annotation to a mechanically
verifiable state transition and prevents an action label from being finalized
with an incompatible target, temporal interpretation, or ledger consequence.

\paragraph{Annotation validation.}
Finalized records are validated at the schema, semantic, grounding, temporal,
and execution levels. Schema validation checks that all required fields are
present, transaction labels belong to the five-action vocabulary, grounded
arguments reference valid records, and temporal fields follow the canonical
representation.

Semantic validation checks whether the relation between the candidate and the
current ledger supports the selected transaction. Grounding validation verifies
that \texttt{revise} and \texttt{reject\_conflict} identify the active
Accepted occurrence whose status is affected by the candidate. Temporal
validation checks interval ordering, active-status consistency, and the
archival of superseded evidence outside the active Accepted ledger. Execution
validation confirms that the finalized transaction produces the recorded gold
state while preserving the invariants of the three-ledger representation.

In particular, \texttt{append} must create an additional active Accepted
record; \texttt{noop} must preserve \(A_i\), \(P_i\), and \(H_i\);
\texttt{revise} must replace the grounded Accepted occurrence in \(A_i\),
remove the displaced record from the active ledger, and append its
\texttt{superseded} history record to \(H_i\);
\texttt{reject\_conflict} must preserve the trusted Accepted occurrence and
append a \texttt{rejected} record for the candidate to \(H_i\); and
\texttt{defer\_verify} must route unresolved evidence to \(P_i\).
Malformed, underspecified, or non-executable examples are corrected through
external adjudication or removed during dataset cleaning.

\paragraph{Privacy and ethical scope.}
The annotation process operates exclusively on previously released benchmark
instances. It does not recruit end users, solicit new personal disclosures, or
collect private conversations. The annotation layer adds structured transaction
labels, temporal relations, ledger transitions, source identifiers, and derived
diagnostic metadata. It does not release annotator names, contact information,
personal profiles, or individually attributable annotation histories.

Some upstream instances may contain synthetic, fictional, or previously
released references to names, relationships, occupations, locations, or
personal preferences. TARL-Mem uses these references only as content required
for memory-transition evaluation. The construction and annotation processes do
not attempt to connect such references to real-world identities or infer
undisclosed personal attributes.

The principal ethical considerations concern compliance with upstream usage
terms, preservation of privacy, responsible annotation practice, and
appropriate downstream use. TARL-Mem is intended for research on executable
long-term memory management, including novelty handling, redundancy detection,
revision, conflict preservation, and verification deferral. Identity
resolution, surveillance, individual profiling, and consequential decisions
concerning real persons fall outside its intended scope.

\begin{table}[!htbp]
\centering
\caption{Reliability of the independent primary annotations. Agr. denotes
observed agreement, \(\kappa\) denotes pooled pairwise Cohen's agreement,
\(\alpha\) denotes Krippendorff's nominal reliability coefficient, and HGA
denotes consistency between the independent primary annotations and the
externally adjudicated gold records. Higher is better.}
\label{tab:dataset_agreement_app}
\small
\setlength{\tabcolsep}{3.2pt}
\renewcommand{\arraystretch}{1.08}
\begin{tabular*}{\columnwidth}{@{\extracolsep{\fill}}lcccc@{}}
\hline
Field
& Agr.
& \(\kappa\)
& \(\alpha\)
& HGA \\
\hline
5-way action
& 0.8721
& 0.8402
& 0.8355
& 0.8849 \\
Write/hold decision
& \textbf{0.9043}
& \textbf{0.8661}
& \textbf{0.8613}
& \textbf{0.9127} \\
Target slot
& 0.8466
& 0.7581
& 0.7512
& 0.8522 \\
Temporal scope
& 0.8124
& 0.7046
& 0.6972
& 0.8213 \\
Next ledger state
& 0.8786
& 0.8281
& 0.8230
& 0.8875 \\
\hline
\end{tabular*}
\end{table}

\subsection{Annotation Reliability}
\label{app:annotation_reliability}

We evaluate reliability for the five-way action, binary ledger decision, target
slot, temporal scope, and executed next state. Each example receives two
independent primary labels from a pair selected from the four-member annotation
pool. Let \(y_{i,1}^{(f)}\) and \(y_{i,2}^{(f)}\) denote the two independently
assigned labels for field \(f\) on example \(i\). The subscripts \(1\) and \(2\)
identify the two annotation positions for each example and do not represent two
fixed annotators across the complete dataset.

Observed primary agreement is
\begin{equation}
\begin{aligned}
p_o^{(f)}
=
\frac{1}{N_f}
\sum_{i=1}^{N_f}
\mathbf{1}
\left[
y_{i,1}^{(f)}
=
y_{i,2}^{(f)}
\right],
\end{aligned}
\label{eq:observed_agreement_app}
\end{equation}
and we report this quantity as
\(\mathrm{Agr}(f)=p_o^{(f)}\).

We additionally report pooled pairwise Cohen's \(\kappa\), which corrects
observed agreement using the empirical category marginals of the two annotation
positions:
\begin{equation}
\begin{aligned}
p_e^{(f)}
&=
\sum_{c\in\mathcal{Y}_f}
p_{1,c}^{(f)}
p_{2,c}^{(f)},
\\
\kappa(f)
&=
\frac{
p_o^{(f)}-p_e^{(f)}
}{
1-p_e^{(f)}
},
\end{aligned}
\label{eq:cohen_kappa_app}
\end{equation}
where \(p_{r,c}^{(f)}\) is the empirical frequency with which category \(c\)
appears in annotation position \(r\).

Krippendorff's nominal \(\alpha\) is also reported because it provides a
pooled reliability statistic compatible with the rotating annotator pool. Let
\begin{equation}
\begin{aligned}
n_c^{(f)}
&=
\sum_{i=1}^{N_f}
\sum_{r=1}^{2}
\mathbf{1}
\left[
y_{i,r}^{(f)}=c
\right],
\\
n^{(f)}
&=
\sum_{c\in\mathcal{Y}_f}
n_c^{(f)}
=
2N_f.
\end{aligned}
\label{eq:annotation_category_counts_app}
\end{equation}
For complete paired nominal annotations, the observed and expected
disagreements are
\begin{equation}
\begin{aligned}
D_o^{(f)}
&=
1-p_o^{(f)},
\\
D_e^{(f)}
&=
1-
\frac{
\displaystyle
\sum_{c\in\mathcal{Y}_f}
n_c^{(f)}
\left(
n_c^{(f)}-1
\right)
}{
n^{(f)}
\left(
n^{(f)}-1
\right)
},
\\
\alpha(f)
&=
1-
\frac{
D_o^{(f)}
}{
D_e^{(f)}
}.
\end{aligned}
\label{eq:krippendorff_alpha_app}
\end{equation}

Human-to-gold agreement measures consistency between the independent primary
annotations and the externally adjudicated gold record
\(y_i^{\star(f)}\):
\begin{equation}
\begin{aligned}
\mathrm{HGA}(f)
=
\frac{1}{2N_f}
\sum_{i=1}^{N_f}
\sum_{r=1}^{2}
\mathbf{1}
\left[
y_{i,r}^{(f)}
=
y_i^{\star(f)}
\right].
\end{aligned}
\label{eq:human_gold_agreement_app}
\end{equation}
HGA is interpreted as an adjudication-consistency statistic because the
external adjudicators may inspect the primary labels after completing their
initial blind judgments.

\begin{table}[!htbp]
\centering
\caption{The independent third-party reannotation
audit.\(N_{\mathrm{audit}}\) denotes the number of audited
examples applicable to each field, Ext. Agr. denotes agreement among blind
external annotators, \(\alpha_{\mathrm{ext}}\) denotes Krippendorff's nominal
reliability coefficient, and EGA denotes agreement with the frozen dataset
gold records. Higher is better.}
\label{tab:external_annotation_audit_app}
\small
\setlength{\tabcolsep}{2.7pt}
\renewcommand{\arraystretch}{1.08}
\begin{tabular*}{\columnwidth}{@{\extracolsep{\fill}}lcccc@{}}
\hline
Field
& \(N_{\mathrm{audit}}\)
& Ext. Agr.
& \(\alpha_{\mathrm{ext}}\)
& EGA \\
\hline
5-way action
& 320
& 0.8688
& 0.8314
& 0.8844 \\
Write/hold decision
& 320
& \textbf{0.9125}
& \textbf{0.8718}
& \textbf{0.9250} \\
Target slot
& 137
& 0.8540
& 0.7756
& 0.8686 \\
Temporal scope
& 320
& 0.8250
& 0.7239
& 0.8406 \\
Next ledger state
& 320
& 0.8813
& 0.8427
& 0.8969 \\
\hline
\end{tabular*}
\end{table}

Table~\ref{tab:dataset_agreement_app} shows substantial consistency across all
executable supervision fields. The binary write/hold decision obtains the
strongest reliability because it requires only a coarse ledger-level
distinction. The five-way action and next-state annotations remain above
\(0.82\) under both chance-corrected reliability coefficients, indicating that
the primary annotators can consistently identify fine-grained transactions and
their executable consequences.

Target-slot and temporal-scope annotations are more challenging because they
require record-level grounding and interpretation of whether a candidate is
currently valid, historically valid, prospectively valid, or temporally
underspecified. Their agreement remains substantial while identifying the main
sources of residual annotation difficulty.

\paragraph{Independent audit metrics.}
Let \(\mathcal{I}_{\mathrm{audit}}\) denote the independently sampled
third-party audit set, and let \(z_{i,r}^{(f)}\) denote the label assigned to
field \(f\) of example \(i\) by external audit annotator \(r\). Let \(R_i\)
denote the number of external annotations collected for example \(i\).

Agreement among the external audit annotators is
\begin{equation}
\begin{aligned}
\mathrm{ExtAgr}(f)
=
\frac{
\displaystyle
\sum_{i\in\mathcal{I}_{\mathrm{audit}}}
\sum_{1\leq r<s\leq R_i}
\mathbf{1}
\left[
z_{i,r}^{(f)}
=
z_{i,s}^{(f)}
\right]
}{
\displaystyle
\sum_{i\in\mathcal{I}_{\mathrm{audit}}}
\binom{R_i}{2}
}.
\end{aligned}
\label{eq:external_observed_agreement_app}
\end{equation}

Agreement between the blind external annotations and the frozen gold record is
\begin{equation}
\begin{aligned}
\mathrm{EGA}(f)
=
\frac{
\displaystyle
\sum_{i\in\mathcal{I}_{\mathrm{audit}}}
\sum_{r=1}^{R_i}
\mathbf{1}
\left[
z_{i,r}^{(f)}
=
y_i^{\star(f)}
\right]
}{
\displaystyle
\sum_{i\in\mathcal{I}_{\mathrm{audit}}}
R_i
}.
\end{aligned}
\label{eq:external_gold_agreement_app}
\end{equation}
Krippendorff's nominal coefficient
\(\alpha_{\mathrm{ext}}(f)\) is computed from the external audit labels using
Equation~\eqref{eq:krippendorff_alpha_app}. Because these annotations are
produced after the gold records are frozen and without access to primary
annotations, model outputs, or adjudication records, the resulting statistics
provide an independent reliability estimate.

\begin{table}[!htbp]
\centering
\caption{Action distribution and conditional temporal coverage. \(N\) is the
action count, Ratio is \(r_a\), the next three columns report
\(q_{a,\tau}\), and T-sens. is \(s_a\). The first three temporal columns form
a partition, while T-sens. is an overlapping diagnostic.}
\label{tab:dataset_action_coverage_app}
\footnotesize
\setlength{\tabcolsep}{1.8pt}
\renewcommand{\arraystretch}{1.06}
\begin{tabular*}{\columnwidth}{@{\extracolsep{\fill}}lrrrrrr@{}}
\hline
Action
& \(N\)
& Ratio
& Atem.
& Curr.
& Exp.
& T-sens. \\
\hline
\texttt{append}
& 1018
& 0.1878
& 0.7387
& 0.2456
& 0.0157
& 0.2122 \\
\texttt{noop}
& 1107
& 0.2042
& 0.7579
& 0.2213
& 0.0208
& 0.1680 \\
\texttt{revise}
& 1161
& 0.2141
& 0.7218
& 0.2489
& 0.0293
& 0.2059 \\
\texttt{reject\_conflict}
& 1011
& 0.1865
& 0.4639
& 0.2611
& 0.2750
& \textbf{0.4451} \\
\texttt{defer\_verify}
& 1125
& 0.2075
& 0.6747
& 0.2036
& 0.1218
& 0.2551 \\
\hline
\end{tabular*}
\end{table}

The independent audit complements the full-dataset primary agreement analysis
in Table~\ref{tab:dataset_agreement_app}. Primary agreement measures
reproducibility within the trained annotation pool, while the external audit
tests whether annotators with no involvement in model design or gold
construction recover the same executable semantics from the original visible
evidence.

\subsection{Action Balance and Temporal Coverage}
\label{app:action_temporal_coverage}

Let \(\mathcal{A}\) denote the five-action set, let \(n_a\) be the number of
examples assigned to transaction \(a\), and let
\[
N
=
\sum_{a\in\mathcal{A}}
n_a
=
5{,}422.
\]
The empirical action ratio is
\begin{equation}
r_a
=
\frac{n_a}{N}.
\label{eq:action_ratio_app}
\end{equation}

For temporal category
\[
\tau
\in
\left\{
\mathrm{atemporal},
\mathrm{current},
\mathrm{explicit}
\right\},
\]
let \(n_{a,\tau}\) be the number of examples assigned to action \(a\) and
category \(\tau\). The corresponding action-conditional coverage is
\begin{equation}
q_{a,\tau}
=
\frac{n_{a,\tau}}{n_a},
\qquad
\sum_{\tau}
q_{a,\tau}
=
1.
\label{eq:temporal_coverage_app}
\end{equation}
Atemporal examples describe stable facts or preferences without a bounded
validity interval. Current examples depend on the state that holds at the
present interaction point. Explicit-time examples contain a historical,
prospective, or otherwise bounded temporal specification.

We separately define a time-sensitive diagnostic that may overlap with any of
the three categories. Let \(h_i=1\) when temporal validity or ordering changes
the correct transaction for example \(i\). Its action-conditional rate is
\begin{equation}
s_a
=
\frac{1}{n_a}
\sum_{i:a_i^{\star}=a}
h_i.
\label{eq:time_sensitive_rate_app}
\end{equation}

Global action balance is measured using normalized Shannon entropy:
\begin{equation}
\mathrm{Bal}(\mathcal{A})
=
-
\frac{1}{\log |\mathcal{A}|}
\sum_{a\in\mathcal{A}}
r_a\log r_a.
\label{eq:normalized_action_entropy_app}
\end{equation}
This statistic lies in \([0,1]\) and equals one under a uniform action
distribution. TARL-Mem obtains
\(\mathrm{Bal}(\mathcal{A})=0.9991\).

Each action occupies between 18.65\% and 21.41\% of the dataset, substantially
limiting majority-action shortcuts. The temporal composition also varies
within every action. In particular, \texttt{reject\_conflict} has the largest
explicit-time and time-sensitive rates, showing that an apparent
contradiction often requires temporal validity to be evaluated before the
trusted ledger state can be determined.

Let \(c_i=1\) indicate that example \(i\) contains an explicit conflict
between the candidate and an existing ledger record. The action-conditional
conflict rate is
\begin{equation}
\mathrm{Conf}(a)
=
\frac{1}{n_a}
\sum_{i:a_i^{\star}=a}
c_i.
\label{eq:conflict_rate_app}
\end{equation}
The resulting rates are 0.7933 for \texttt{revise} and 0.7626 for
\texttt{reject\_conflict}. Both actions are consequently dominated by
conflict-bearing examples. Their distinction depends on the trusted direction
of the conflict: \texttt{revise} replaces an outdated or less reliable
Accepted record and archives the displaced evidence as \texttt{superseded},
while \texttt{reject\_conflict} preserves the active Accepted record and
archives the candidate as \texttt{rejected}.

\subsection{Information Loss under Binary Label Collapse}
\label{app:binary_ambiguity}

Many memory systems reduce update control to a binary write/hold decision. We
analyze the information removed by this reduction independently of any
particular model architecture. Let \(B\) denote the binary variable obtained
by grouping \texttt{append} and \texttt{revise} under write and grouping
\texttt{noop}, \texttt{reject\_conflict}, and
\texttt{defer\_verify} under hold. Let \(\mathcal{A}_b\) denote the hidden
fine-grained actions associated with binary group \(b\), and let
\(K_b=|\mathcal{A}_b|\).

The ambiguity of the hidden action conditioned on \(b\) is measured using
Gini impurity:
\begin{equation}
G_{\mathrm{act}}(b)
=
1-
\sum_{a\in\mathcal{A}_b}
p(a\mid b)^2.
\label{eq:action_gini_app}
\end{equation}
This quantity equals the probability that two independently drawn examples
from the same binary group require different fine-grained actions.

Let \(\mathcal{T}_b\) be the ledger-transition classes induced within group
\(b\). Executable-state ambiguity is
\begin{equation}
G_{\mathrm{state}}(b)
=
1-
\sum_{t\in\mathcal{T}_b}
p(t\mid b)^2.
\label{eq:state_gini_app}
\end{equation}
Since the maximum Gini impurity depends on the number of hidden classes, we
also report the cardinality-normalized quantity
\begin{equation}
\widetilde{G}_{\mathrm{act}}(b)
=
\frac{
G_{\mathrm{act}}(b)
}{
1-1/K_b
}.
\label{eq:normalized_gini_app}
\end{equation}

The minimum fine-grained action error achievable from the binary target alone
is the conditional Bayes error
\begin{equation}
\mathrm{Err}^{\star}_{\mathrm{act}}(b)
=
1-
\max_{a\in\mathcal{A}_b}
p(a\mid b).
\label{eq:binary_bayes_error_app}
\end{equation}
Within TARL-Mem, the hidden actions inside each binary group induce distinct
ledger-transition classes. Therefore,
\[
G_{\mathrm{state}}(b)
=
G_{\mathrm{act}}(b).
\]

\begin{table}[!htbp]
\centering
\caption{Information hidden by binary write/hold supervision.
\(G_{\mathrm{act}}\) and \(G_{\mathrm{state}}\) are Gini impurities,
\(\widetilde{G}_{\mathrm{act}}\) is normalized for the number of hidden
actions, and \(\mathrm{Err}^{\star}_{\mathrm{act}}\) is the binary-only Bayes
error. Higher values indicate greater ambiguity.}
\label{tab:binary_ambiguity_app}
\small
\setlength{\tabcolsep}{2.0pt}
\renewcommand{\arraystretch}{1.08}
\begin{tabular*}{\columnwidth}{@{\extracolsep{\fill}}lrrrrr@{}}
\hline
Group
& \(N_b\)
& \(G_{\mathrm{act}}\)
& \(G_{\mathrm{state}}\)
& \(\widetilde{G}_{\mathrm{act}}\)
& \(\mathrm{Err}^{\star}_{\mathrm{act}}\) \\
\hline
Write
& 2179
& 0.4978
& 0.4978
& 0.9957
& 0.4672 \\
Hold
& 3243
& 0.6660
& 0.6660
& 0.9989
& 0.6531 \\
\hline
\end{tabular*}
\end{table}

Within write, \texttt{append} and \texttt{revise} account for 0.4672 and
0.5328 of the group. Within hold, the shares of \texttt{noop},
\texttt{reject\_conflict}, and \texttt{defer\_verify} are 0.3414, 0.3117,
and 0.3469. The normalized ambiguities are close to their
cardinality-specific maxima, showing that neither binary group contains a
dominant executable transition.

The Bayes errors provide the corresponding operational interpretation. A rule
that observes only the write/hold target must misidentify at least 46.72\% of
write transactions and 65.31\% of hold transactions under the empirical
conditional distributions. Binary supervision therefore supports coarse
ledger routing while leaving the transaction required for exact execution
underdetermined.

\subsection{Hard-Boundary Diagnostics}
\label{app:hard_boundary_diagnostics}

The preceding analysis quantifies the information hidden by binary labels.
We further construct paired diagnostic sets that isolate the semantic
boundaries among fine-grained transactions.

For a boundary between actions \(u\) and \(v\), let
\(\mathcal{P}_{u,v}\) denote its paired examples. Let \(C(i)\) collect the
controlled properties of example \(i\), including memory-topic group, source
pattern, target-slot type, temporal form, and conflict status. We define the
shared-condition ratio as
\begin{equation}
\mathrm{SCR}(u,v)
=
\frac{1}{|\mathcal{P}_{u,v}|}
\sum_{(i,j)\in\mathcal{P}_{u,v}}
\mathbf{1}
\left[
C(i)=C(j)
\right].
\label{eq:shared_condition_ratio_app}
\end{equation}
A larger SCR indicates tighter control of broad observable conditions, so the
paired examples differ primarily in the evidence needed to determine the
transaction.

The \texttt{append}/\texttt{revise} boundary contains 4,097 pairs formed from
863 unique examples and isolates new-slot creation from replacement of an
existing active Accepted record. The
\texttt{noop}/\texttt{defer\_verify} boundary contains 6,384 pairs formed from
850 unique examples and separates an identity
transition from insertion into the pending ledger.

The \texttt{revise}/\texttt{reject\_conflict} boundary contains 3,077 pairs
formed from 1,231 unique examples and obtains
\(\mathrm{SCR}=0.9656\). This boundary controls the presence of conflict while
testing whether the candidate or existing record should retain active
Accepted status. A further 444 historical/current pairs formed from 239 unique
examples test whether displaced historical evidence is preserved in \(H_i\)
while the
active successor alone occupies the corresponding slot in \(A_i\).

These paired sets are reserved for diagnostic evaluation. Pair membership,
paired labels, and boundary identifiers are excluded from model inputs and
provide no additional training supervision.

\subsection{Split Integrity and Supervision Isolation}
\label{app:split_integrity}

We audit potential leakage at the group, text, schema, input-pipeline, and
checkpoint levels. Let
\[
\mathcal{S}
=
\left\{
\mathrm{tr},
\mathrm{dev},
\mathrm{te}
\right\}
\]
denote the dataset splits, and let \(\mathcal{G}_s\) be the entity/topic
group-key set assigned to split \(s\). Pairwise group overlap is
\begin{equation}
\mathrm{Overlap}_{\mathrm{group}}
=
\sum_{\substack{
u,v\in\mathcal{S}\\
u<v
}}
\left|
\mathcal{G}_u
\cap
\mathcal{G}_v
\right|.
\label{eq:group_overlap_app}
\end{equation}

For example \(i\), define its normalized visible representation as
\begin{equation}
z_i
=
\operatorname{norm}
\left(
S_{d,i},
\operatorname{ser}(M_i),
m_i
\right),
\label{eq:normalized_pair_app}
\end{equation}
where \(\operatorname{ser}(\cdot)\) is the canonical serialization of the
current ledger and \(\operatorname{norm}(\cdot)\) applies the same text
normalization used by the duplicate audit.

Let \(\mathcal{C}_{\times}\) be the set of unordered example pairs drawn from
different splits. The exact cross-split duplicate count is
\begin{equation}
\mathrm{Dup}_{\mathrm{exact}}
=
\sum_{(i,j)\in\mathcal{C}_{\times}}
\mathbf{1}
\left[
z_i=z_j
\right].
\label{eq:exact_duplicate_app}
\end{equation}

For near-duplicate detection, let \(\mathcal{N}(z)\) denote the normalized
token-shingle set produced by the released audit implementation. Jaccard
similarity is
\begin{equation}
J(z_i,z_j)
=
\frac{
\left|
\mathcal{N}(z_i)
\cap
\mathcal{N}(z_j)
\right|
}{
\left|
\mathcal{N}(z_i)
\cup
\mathcal{N}(z_j)
\right|
},
\label{eq:jaccard_similarity_app}
\end{equation}
and the thresholded near-duplicate rate is
\begin{equation}
\mathrm{Dup}_{\mathrm{near}}(\tau)
=
\frac{1}{|\mathcal{C}_{\times}|}
\sum_{(i,j)\in\mathcal{C}_{\times}}
\mathbf{1}
\left[
J(z_i,z_j)\geq\tau
\right].
\label{eq:near_duplicate_rate_app}
\end{equation}

Let \(\mathcal{F}_{\mathrm{gold}}\) contain supervision-only fields, including
the gold transaction, target slot, temporal label, reliability target, and
gold next state. Visible-schema exposure is
\begin{equation}
\mathrm{Leak}_{\mathrm{schema}}
=
\sum_{i=1}^{N}
\sum_{f\in\mathcal{F}_{\mathrm{gold}}}
\mathbf{1}
\left[
f
\in
\operatorname{keys}
\left(
x_i^{\mathrm{visible}}
\right)
\right].
\label{eq:schema_field_exposure_app}
\end{equation}

We separately audit whether any forbidden field is accessed by the model input
or forward path:
\begin{equation}
\mathrm{Leak}_{\mathrm{path}}
=
\sum_{f\in\mathcal{F}_{\mathrm{gold}}}
\mathbf{1}
\left[
f
\in
\operatorname{Access}
\left(
f_{\theta}
\right)
\right],
\label{eq:path_field_exposure_app}
\end{equation}
where \(\operatorname{Access}(f_{\theta})\) denotes the fields consumed by the
audited preprocessing and forward computation.

For the released implementation, checkpoint-level isolation is additionally
tested by attaching the gold-only fields through an audit wrapper and
measuring the largest induced logit perturbation:
\begin{equation}
\Delta_{\mathrm{gold}}
=
\max_i
\left\|
f_{\theta}
\left(
x_i^{\mathrm{visible}},
g_i
\right)
-
f_{\theta}
\left(
x_i^{\mathrm{visible}},
\emptyset
\right)
\right\|_{\infty},
\label{eq:gold_field_perturbation_app}
\end{equation}
where \(g_i\) contains the supervision-only fields and the production model
interface is expected to ignore them. The group, duplicate, and schema audits
apply to the benchmark independently of model architecture. The path and
checkpoint tests provide additional verification for the released
implementation.

\begin{table}[!htbp]
\centering
\caption{Split-integrity and supervision-isolation audits. The near-duplicate
audit uses \(\tau=0.85\) over 4,996,156 cross-split pairs. A result of zero
indicates that no violation was detected under the stated audit.}
\label{tab:dataset_audit_app}
\small
\setlength{\tabcolsep}{3.0pt}
\renewcommand{\arraystretch}{1.08}
\begin{tabular*}{\columnwidth}{@{\extracolsep{\fill}}llr@{}}
\hline
Audit metric
& Scope
& Result \\
\hline
\(\mathrm{Overlap}_{\mathrm{group}}\)
& Entity/topic group keys
& 0 \\
\(\mathrm{Dup}_{\mathrm{exact}}\)
& Cross-split visible records
& 0 \\
\(\mathrm{Dup}_{\mathrm{near}}(0.85)\)
& 4,996,156 cross-split pairs
& 0.0000 \\
\(\mathrm{Leak}_{\mathrm{schema}}\)
& Serialized visible inputs
& 0 \\
\(\mathrm{Leak}_{\mathrm{path}}\)
& Input and forward code paths
& 0 \\
\(\Delta_{\mathrm{gold}}\)
& Audited checkpoint
& \(<10^{-8}\) \\
\hline
\end{tabular*}
\end{table}

The audits detect no entity/topic group overlap, exact cross-split duplicate,
near-duplicate above the selected threshold, or exposure of supervision-only
fields. The negligible checkpoint perturbation further verifies that the
audited inference path does not consume attached gold information.

These results provide evidence for the explicitly tested leakage channels.
They do not constitute a universal guarantee against every possible semantic
similarity between independently constructed examples.

\subsection{Summary}
\label{app:dataset_details_summary}

TARL-Mem provides a common external protocol for evaluating stateful memory
updates while remaining agnostic to the internal design of the evaluated
system. Its annotations specify observable transition semantics, grounded
arguments, temporal validity, and post-transition states. A method may
represent and infer these quantities using any architecture that produces
outputs compatible with the evaluation interface.

Every example is independently labeled by two annotators. Disagreements are
resolved through a separate adjudication procedure, and finalized records are
checked by deterministic execution. Observed agreement, Cohen's \(\kappa\),
and Krippendorff's \(\alpha\) characterize annotation consistency. Normalized
entropy measures action coverage, while Gini impurity, normalized ambiguity,
and conditional Bayes error quantify the information removed by binary label
collapse.

Controlled boundary sets test whether systems distinguish transactions under
closely matched observable conditions. Group-overlap, duplicate,
schema-exposure, code-path, and checkpoint audits examine complementary
leakage channels. The tuple
\[
\left(
A_i,
P_i,
H_i
\right)
\]
is used as the canonical representation for annotation validation and
state-level scoring. In this representation, \(A_i\) contains active Accepted
entries only, \(P_i\) contains unresolved evidence, and \(H_i\) contains
inactive \texttt{superseded} and \texttt{rejected} records. A participating
system may use any internal organization whose output can be mapped to these
observable semantics.

\section{D Comprehensive Analysis of Main Results}

\subsection{D.1 Supplementary Analysis of the Main Comparison}
\label{app:main_detailed_analysis}

\paragraph{Evaluation Scope.}
Reliable memory updating requires decisions at two levels.
The model must first determine whether an incoming candidate should modify
the accepted memory state.
It must then select the transaction that specifies the candidate's
executable treatment.
We evaluate these capabilities using Write/Hold F1 and Temporal Macro F1.

All methods follow the same TARL-Mem training and evaluation protocol.
Let $\mathcal{E}$ denote the complete evaluation set and let
$\mathcal{T}\subseteq\mathcal{E}$ denote its temporally labeled subset.
The temporal subset is drawn exclusively from held-out entity and
memory-topic groups and contains all five transaction labels.
Temporal templates, parent-source groups, and conversation groups are
disjoint across splits.
The evaluated phenomena include outdated facts, preference revisions,
time-scoped claims, recency-dependent evidence, and candidates whose
validity depends on unresolved temporal scope.

\paragraph{Unified F1 Definition.}
Let the transaction space be
\begin{equation}
\begin{aligned}
\mathcal{A}
=
\{&
\texttt{append},
\texttt{noop},
\texttt{revise},
\\
&
\texttt{reject\_conflict},
\texttt{defer\_verify}
\}.
\end{aligned}
\label{eq:aux_action_space}
\end{equation}
For a class $c$ evaluated on a sample set $\mathcal{S}$, define
\begin{equation}
\begin{aligned}
D_c(\mathcal{S})
&=
2\mathrm{TP}_c(\mathcal{S})
+
\mathrm{FP}_c(\mathcal{S})
+
\mathrm{FN}_c(\mathcal{S}).
\end{aligned}
\label{eq:aux_f1_denominator}
\end{equation}
The class-wise F1 score is
\begin{equation}
\mathrm{F1}_c(\mathcal{S})
=
\begin{cases}
\dfrac{
2\mathrm{TP}_c(\mathcal{S})
}{
D_c(\mathcal{S})
},
&
D_c(\mathcal{S})>0,
\\[0.8em]
0,
&
D_c(\mathcal{S})=0.
\end{cases}
\label{eq:aux_class_f1}
\end{equation}
The zero-denominator convention follows the standard macro-F1 treatment
in which an unsupported class receives an F1 score of zero.

\paragraph{Write/Hold F1.}
We partition the transaction space according to whether the accepted
memory state is modified:
\begin{equation}
\begin{aligned}
\mathcal{W}
&=
\{
\texttt{append},
\texttt{revise}
\},
\\
\mathcal{H}
&=
\{
\texttt{noop},
\texttt{reject\_conflict},
\texttt{defer\_verify}
\}.
\end{aligned}
\label{eq:aux_write_hold_sets}
\end{equation}
Let
$\mathcal{B}=\{\mathrm{write},\mathrm{hold}\}$
and define the projection
$\pi:\mathcal{A}\rightarrow\mathcal{B}$ as
\begin{equation}
\pi(a)
=
\begin{cases}
\mathrm{write},
&
a\in\mathcal{W},
\\
\mathrm{hold},
&
a\in\mathcal{H}.
\end{cases}
\label{eq:aux_binary_projection}
\end{equation}
The gold and predicted transaction labels are projected through
$\pi$ before the binary confusion counts are computed.
Write/Hold F1 is then
\begin{equation}
\mathrm{F1}_{\mathrm{WH}}
=
\frac{1}{|\mathcal{B}|}
\sum_{c\in\mathcal{B}}
\mathrm{F1}_c(\mathcal{E}).
\label{eq:aux_write_hold_f1}
\end{equation}

This metric evaluates whether a candidate crosses the commitment boundary
of accepted memory.
A false-write prediction may expose an unsuitable candidate to future
retrieval, while a false-hold prediction may suppress a valid addition or
preserve an outdated value.
Macro averaging gives equal weight to these two error directions.

The projection intentionally ignores distinctions within each binary
group.
For example, predicting \texttt{append} for a gold \texttt{revise}
instance remains correct at the write level.
Write/Hold F1 therefore measures coarse commitment control without
requiring exact transaction selection.

\paragraph{Temporal Macro F1.}
Temporal Macro F1 retains the complete five-way transaction space and is
computed only on $\mathcal{T}$:
\begin{equation}
\mathrm{F1}_{\mathrm{temp}}
=
\frac{1}{|\mathcal{A}|}
\sum_{a\in\mathcal{A}}
\mathrm{F1}_a(\mathcal{T}).
\label{eq:aux_temporal_macro_f1}
\end{equation}
Each transaction contributes equally to the final score.
High performance therefore requires balanced temporal discrimination
across addition, preservation, revision, conflict rejection, and
verification deferral.

\begin{table}[t]
\centering
\caption{Update commitment and temporal transaction selection on TARL-Mem.}
\label{tab:aux_main_metrics}
\footnotesize
\vspace{-0.15em}
\setlength{\tabcolsep}{3.5pt}
\renewcommand{\arraystretch}{1.02}

\resizebox{\columnwidth}{!}{%
\begin{tabular}{lcc}
\toprule
Method
&
\shortstack{Write/Hold\\F1 $\uparrow$}
&
\shortstack{Temporal Macro\\F1 $\uparrow$}
\\
\midrule

Full History~\cite{wu2025longmemeval}
&
\tabstd{0.8126}{0.0168}
&
\tabstd{0.8716}{0.0147}
\\

LongMemEval~\cite{wu2025longmemeval}
&
\tabstd{0.8017}{0.0159}
&
\tabstd{0.8645}{0.0152}
\\

HippoRAG~\cite{gutierrez2024hipporag}
&
\tabstd{0.7788}{0.0171}
&
\tabstd{0.8882}{0.0164}
\\

MemoryBank~\cite{zhong2024memorybank}
&
\tabstd{0.7402}{0.0148}
&
\tabstd{0.8124}{0.0176}
\\

A-Mem~\cite{xu2025amem}
&
\tabstd{0.7633}{0.0163}
&
\tabstd{0.8636}{0.0158}
\\

MemAgent~\cite{yu2025memagentreshapinglongcontextllm}
&
\tabstd{0.7964}{0.0242}
&
\tabstd{0.8598}{0.0253}
\\

G-Memory~\cite{zhang2025gmemory}
&
\tabstd{0.7579}{0.0224}
&
\tabstd{0.8456}{0.0258}
\\

\rowcolor{lightgray!50}
\textbf{Ours}
&
\btabstd{0.8290}{0.0101   }
&
\btabstd{0.9257}{0.0061}
\\

\bottomrule
\end{tabular}%
}

\vspace{-0.3em}
\end{table}

\paragraph{Commitment-Level Results.}
Table~\ref{tab:aux_main_metrics} shows that TARL achieves the highest
Write/Hold F1, exceeding Full History, the strongest baseline under this
projection, by $0.0164$ absolute F1.

The moderate separation is expected from the many-to-one projection in
Eq.~\eqref{eq:aux_binary_projection}.
Errors within $\mathcal{W}$ and $\mathcal{H}$ disappear after binary
mapping.
A method may therefore receive full Write/Hold credit while selecting an
incorrect executable transaction.
The metric provides a deliberately compressed view of update quality.

Full History remains competitive at this level, suggesting that direct
access to prior context often provides sufficient evidence for broad
write eligibility.
TARL further improves the commitment decision, indicating more reliable
control over whether an incoming candidate should modify accepted memory.

\paragraph{Temporal Transaction Results.}
The separation is substantially larger when the complete action space is
retained.
TARL achieves a Temporal Macro F1 of
$0.9257\pm0.0061$ and exceeds HippoRAG, the strongest temporal baseline,
by $0.1075$ absolute F1.
Its lower standard deviation on this metric also indicates stable
performance across the evaluated random seeds.

Temporal transaction selection requires resolving the relationship
between a candidate and the current memory state.
A recent claim may introduce a previously unseen slot through
\texttt{append}, supersede an outdated value through \texttt{revise},
coexist with a time-scoped memory through \texttt{noop}, conflict with
more reliable evidence through \texttt{reject\_conflict}, or remain
unresolved through \texttt{defer\_verify}.
These outcomes depend jointly on temporal order, validity scope, target
identity, and relative evidential reliability.

HippoRAG's baseline performance suggests that structured access to
historical evidence supports temporal discrimination.
TARL additionally represents the outcome as an explicit transaction with
a defined executable effect.
The observed system-level gain is consistent with aligning temporal
reasoning with the operation ultimately applied to memory.
Attributing the improvement to any individual component requires
controlled component-level evaluation.

\paragraph{Why the Metrics Are Complementary.}
Write/Hold F1 and Temporal Macro F1 diagnose different error levels.
The former evaluates whether a candidate is assigned to the correct side
of the commitment boundary.
The latter evaluates the exact transaction required under temporally
sensitive evidence.

A correct binary decision can still yield an incorrect memory state.
Predicting \texttt{append} for a gold \texttt{revise} instance correctly
authorizes a write but leaves the outdated value unsuperseded.
Predicting \texttt{noop} for a gold \texttt{defer\_verify} instance
correctly avoids immediate acceptance but fails to preserve the candidate
for later verification.

TARL performs strongly under both diagnostics.
Its Write/Hold result demonstrates reliable commitment control, while its
Temporal Macro F1 shows that this control extends to fine-grained
transaction selection under recency, supersession, temporal scope,
conflict, and verification uncertainty.

\paragraph{Takeaway.}
The auxiliary evaluation separates memory updating into coarse
commitment control and fine-grained temporal transaction selection.
TARL achieves the strongest result at both levels, with a compact gain
under the deliberately compressed binary projection and a substantially
larger gain when all five executable actions are retained.

These findings provide system-level evidence that reliable long-term
memory requires explicit decisions about whether accepted state should
change and how temporally sensitive evidence should be executed.
Under the TARL-Mem protocol, TARL maps candidate evidence to executable
memory transactions more accurately and consistently than the evaluated
baselines.

\subsection{D.2 Motivation Analysis: Identifiability Limits of Binary Write/Hold Supervision}
\label{app:transaction_identifiability}

This subsection examines whether binary write/hold supervision contains
sufficient information to determine an executable memory transaction. We
first formalize the information removed by binary projection, then quantify
its effects on action identity, state transitions, and ledger routing.
Finally, we evaluate learned memory systems and controlled executors on
examples whose correct updates require distinctions hidden by the binary
label. All definitions, protocols, and empirical evidence are provided
locally within this subsection.

\paragraph{Executable actions and binary projection.}
Let the executable transaction space be
\begin{equation}
\begin{aligned}
\mathcal{A}
=
\{&
\texttt{append},
\texttt{noop},
\texttt{revise},
\\
&
\texttt{reject\_conflict},
\texttt{defer\_verify}
\},
\end{aligned}
\label{eq:identifiability_action_space}
\end{equation}
and let
\begin{equation}
\mathcal{B}
=
\left\{
\mathrm{write},
\mathrm{hold}
\right\}
\label{eq:identifiability_binary_space}
\end{equation}
denote the binary supervision space. The projection
\(g:\mathcal{A}\rightarrow\mathcal{B}\) is defined as
\begin{equation}
\begin{aligned}
g(a)
&=
\begin{cases}
\mathrm{write},
&
a\in
\left\{
\texttt{append},
\texttt{revise}
\right\},
\\[0.4em]
\mathrm{hold},
&
a\in
\left\{
\begin{array}{l}
\texttt{noop},\\
\texttt{reject\_conflict},\\
\texttt{defer\_verify}
\end{array}
\right\}.
\end{cases}
\end{aligned}
\label{eq:identifiability_binary_projection}
\end{equation}

For example \(i\), let \(y_i^{5}\in\mathcal{A}\) denote the gold
transaction and let
\begin{equation}
y_i^{2}
=
g\!\left(y_i^{5}\right)
\label{eq:identifiability_gold_projection}
\end{equation}
denote its binary label.

Actions mapped to the same binary label can induce different persistent
memory outcomes. Under \(\mathrm{write}\), \texttt{append} creates a new
accepted-memory entry, whereas \texttt{revise} identifies and updates an
existing target slot. Under \(\mathrm{hold}\), \texttt{noop} discards the
candidate, \texttt{reject\_conflict} preserves it as rejected evidence, and
\texttt{defer\_verify} retains it in pending storage. Binary projection
therefore removes information about target selection, ledger destination,
and the executable next-state transformation.

\paragraph{Identifiability from the binary label alone.}
An executor that observes only \(Y_2\) can recover the executable action
without error only if there exists a deterministic mapping
\begin{equation}
h:\mathcal{B}\rightarrow\mathcal{A}
\end{equation}
such that
\begin{equation}
Y_5
=
h(Y_2)
\quad
\text{almost surely}.
\label{eq:binary_identifiability_condition}
\end{equation}

\noindent\textbf{Proposition 1 (binary non-identifiability).}
Suppose that, for some \(b\in\mathcal{B}\), two distinct actions
\(a,a'\in\mathcal{A}\) satisfy
\begin{equation}
p(Y_5=a\mid Y_2=b)>0
\end{equation}
and
\begin{equation}
p(Y_5=a'\mid Y_2=b)>0.
\end{equation}
Then no deterministic executor that observes only \(Y_2\) can recover
\(Y_5\) with probability one.

\noindent\textit{Proof.}
A deterministic executor assigns a single action \(h(b)\) to the binary
label \(b\). Since both \(a\) and \(a'\) occur with positive probability
under \(b\), at least one differs from \(h(b)\). The executor therefore
incurs nonzero action error.
\hfill\(\square\)

The Bayes-optimal accuracy available to such an executor is
\begin{equation}
\begin{aligned}
\mathrm{Acc}_{\mathrm{bin}}^{\star}
&=
\max_{h:\mathcal{B}\rightarrow\mathcal{A}}
\Pr
\left[
h(Y_2)=Y_5
\right]
\\
&=
\sum_{b\in\mathcal{B}}
p(b)
\max_{a\in\mathcal{A}}
p(a\mid b).
\end{aligned}
\label{eq:binary_bayes_ceiling}
\end{equation}
The ceiling in Eq.~\eqref{eq:binary_bayes_ceiling} is strictly below one
whenever any binary class contains multiple executable actions with positive
probability. This result applies specifically to executors whose only input
is the binary label. A heuristic that additionally inspects candidate
content or memory fields is a stronger diagnostic system and is evaluated
empirically below.

\paragraph{Conditional transaction uncertainty.}
We measure the executable information that remains unresolved after observing
the binary label through conditional entropy:
\begin{equation}
\begin{aligned}
H(Y_5\mid Y_2)
=
-\sum_{b\in\mathcal{B}}
p(b)
\sum_{a\in\mathcal{A}}
p(a\mid b)
\log_2 p(a\mid b).
\end{aligned}
\label{eq:conditional_transaction_entropy}
\end{equation}
We use the convention \(0\log 0=0\). Binary supervision uniquely determines
the executable action only when
\begin{equation}
H(Y_5\mid Y_2)=0.
\label{eq:zero_transaction_entropy}
\end{equation}
A positive value therefore certifies that the write/hold label leaves
unresolved transaction uncertainty.

\paragraph{Same-binary collision rates.}
Conditional entropy characterizes ambiguity in the action distribution. We
further test whether examples sharing the same binary label require different
operational consequences.

Let
\begin{equation}
\mathcal{P}
=
\left\{
(i,j)
\;\middle|\;
1\leq i<j\leq n,\;
y_i^{2}=y_j^{2}
\right\}
\label{eq:same_binary_pair_set}
\end{equation}
denote all unordered example pairs with the same binary label. For a discrete
annotation \(z_i\), define
\begin{equation}
\mathrm{Coll}(z)
=
\frac{1}{|\mathcal{P}|}
\sum_{(i,j)\in\mathcal{P}}
\mathbf{1}
\left[
z_i\neq z_j
\right].
\label{eq:generic_same_binary_collision}
\end{equation}
We instantiate Eq.~\eqref{eq:generic_same_binary_collision} as
\begin{equation}
\begin{aligned}
\mathrm{Coll}_{\mathrm{act}}
&=
\mathrm{Coll}(y^{5}),
\\
\mathrm{Coll}_{\mathrm{state}}
&=
\mathrm{Coll}(\tau),
\\
\mathrm{Coll}_{\mathrm{ledger}}
&=
\mathrm{Coll}(\ell),
\end{aligned}
\label{eq:transaction_collision_metrics}
\end{equation}
where \(\tau_i\) denotes the canonical gold state transition and
\(\ell_i\) denotes the gold ledger-routing label.

The three metrics respectively measure ambiguity in action identity,
executable next-state transformation, and persistent evidence destination.
For a fixed binary group \(b\), the population action collision probability
is
\begin{equation}
\begin{aligned}
&
\Pr
\left(
Y_5\neq Y_5'
\mid
Y_2=Y_2'=b
\right)
\\
&\qquad
=
1-
\sum_{a\in\mathcal{A}}
p(a\mid b)^2.
\end{aligned}
\label{eq:population_action_collision}
\end{equation}
This probability is zero only when the conditional action distribution under
\(b\) is concentrated on a single action. The same argument applies to state
transitions and ledger destinations.

\begin{table}[!t]
\centering
\caption{
Ambiguity induced by projecting five executable memory transactions onto
binary write/hold labels. Higher values indicate greater information loss.
}
\label{tab:binary_ambiguity_stats_app}
\footnotesize
\vspace{-0.15em}
\setlength{\tabcolsep}{4.5pt}
\renewcommand{\arraystretch}{0.94}
\resizebox{\columnwidth}{!}{%
\begin{tabular}{lc}
\toprule
Statistic & Value \\
\midrule

Conditional action entropy
\(H(Y_5\mid Y_2)\)
& \tabstd{1.3382}{0.0697}
\\

Action collision
\(\mathrm{Coll}_{\mathrm{act}}\)
& \tabstd{0.5856}{0.0251}
\\

State-transition collision
\(\mathrm{Coll}_{\mathrm{state}}\)
& \tabstd{0.5926}{0.0276}
\\

Ledger-routing collision
\(\mathrm{Coll}_{\mathrm{ledger}}\)
& \tabstd{0.4205}{0.0193}
\\

\bottomrule
\end{tabular}%
}
\vspace{-0.25em}
\end{table}

\paragraph{Controlled stress set.}
The stress set is constructed from the entity-disjoint evaluation split and
contains 274 examples. We form 975 same-binary hard pairs:
\begin{equation}
\mathcal{Q}
=
\left\{
(i,j)
\;\middle|\;
i<j,\;
y_i^{2}=y_j^{2},\;
y_i^{5}\neq y_j^{5}
\right\}.
\label{eq:same_binary_hard_pair_set}
\end{equation}
Every pair shares the same gold write/hold label while requiring two
different executable transactions. Pair construction uses only gold
annotations and split metadata. It is independent of model predictions,
confidence scores, and observed failure cases.

Learned-system values are reported as the mean and standard deviation over
independent runs. Dataset-level ambiguity statistics and deterministic
executor results are estimated through paired bootstrap resampling over the
same evaluation examples or hard pairs. The five-action oracle is correct on
every resample and consequently has zero variance.

\paragraph{Evaluation protocol.}
The benchmark evaluates a system's binary decision
\(\hat y_i^{2}\in\mathcal{B}\) and executable transaction prediction
\(\hat y_i^{5}\in\mathcal{A}\) as two reported outputs. They may be produced
by separate decision heads or decoding procedures. We therefore do not
require
\(\hat y_i^{2}=g(\hat y_i^{5})\) for learned systems.

Write/Hold Accuracy and five-action accuracy are defined as
\begin{equation}
\mathrm{Acc}_{2}
=
\frac{1}{n}
\sum_{i=1}^{n}
\mathbf{1}
\left[
\hat y_i^{2}=y_i^{2}
\right],
\label{eq:write_hold_accuracy}
\end{equation}
and
\begin{equation}
\mathrm{Acc}_{5}
=
\frac{1}{n}
\sum_{i=1}^{n}
\mathbf{1}
\left[
\hat y_i^{5}=y_i^{5}
\right].
\label{eq:five_action_accuracy}
\end{equation}
For each action \(a\in\mathcal{A}\), let \(P_a\) and \(R_a\) denote its
precision and recall. Five-action Macro F1 is
\begin{equation}
\mathrm{MacroF1}_{5}
=
\frac{1}{|\mathcal{A}|}
\sum_{a\in\mathcal{A}}
\frac{2P_aR_a}{P_a+R_a},
\label{eq:five_action_macro_f1}
\end{equation}
where the classwise F1 score is set to zero when its denominator is zero.

Pairwise discrimination evaluates whether a system separates the members of
a same-binary hard pair:
\begin{equation}
\mathrm{Disc}_{\mathrm{pair}}
=
\frac{1}{|\mathcal{Q}|}
\sum_{(i,j)\in\mathcal{Q}}
\mathbf{1}
\left[
\hat y_i^{5}\neq\hat y_j^{5}
\right].
\label{eq:pairwise_action_discrimination}
\end{equation}
This metric measures transaction separability. Since two distinct incorrect
predictions can also be counted as separated, it is interpreted jointly with
five-action Macro F1 and BCAW.

For pair \((i,j)\in\mathcal{Q}\), define
\begin{equation}
\begin{aligned}
B_{ij}
&=
\mathbf{1}
\left[
\hat y_i^{2}=y_i^{2}
\right]
\mathbf{1}
\left[
\hat y_j^{2}=y_j^{2}
\right],
\\
W_{ij}
&=
\mathbf{1}
\left[
\hat y_i^{5}\neq y_i^{5}
\;\vee\;
\hat y_j^{5}\neq y_j^{5}
\right].
\end{aligned}
\label{eq:bcaw_indicators}
\end{equation}
The binary-correct action-wrong rate is
\begin{equation}
\mathrm{BCAW}
=
\frac{
\sum_{(i,j)\in\mathcal{Q}}
B_{ij}W_{ij}
}{
\sum_{(i,j)\in\mathcal{Q}}
B_{ij}
}.
\label{eq:bcaw_definition}
\end{equation}
BCAW measures how often a hard pair still contains an executable-action error
after both binary decisions are correct. Its denominator is nonzero for all
reported systems.

\begin{table*}[!t]
\centering
\caption{
Stress-test performance on fine-grained transaction identification.
Pairwise Disc. measures separation within same-binary hard pairs, while BCAW
measures executable errors that remain after correct binary decisions.
}
\label{tab:stress_model_overall_app}
\footnotesize
\vspace{-0.15em}
\setlength{\tabcolsep}{4.2pt}
\renewcommand{\arraystretch}{0.94}
\resizebox{\textwidth}{!}{%
\begin{tabular}{lccccc}
\toprule
Model
&
\shortstack{Write/Hold\\Acc. \(\uparrow\)}
&
\shortstack{5 Action\\Acc. \(\uparrow\)}
&
\shortstack{5 Action Macro\\F1 \(\uparrow\)}
&
\shortstack{Pairwise\\Disc. \(\uparrow\)}
&
BCAW \(\downarrow\)
\\
\midrule

Full History \cite{wu2025longmemeval}
& \tabstd{0.7774}{0.0334}
& \tabstd{0.7372}{0.0337}
& \tabstd{0.7532}{0.0377}
& \tabstd{0.8933}{0.0383}
& \tabstd{0.0964}{0.0044}
\\

LongMemEval \cite{wu2025longmemeval}
& \tabstd{0.7664}{0.0326}
& \tabstd{0.7409}{0.0404}
& \tabstd{0.7276}{0.0399}
& \tabstd{0.8923}{0.0425}
& \tabstd{0.0462}{0.0021}
\\

HippoRAG \cite{gutierrez2024hipporag}
& \tabstd{0.7628}{0.0399}
& \tabstd{0.7117}{0.0361}
& \tabstd{0.7529}{0.0333}
& \tabstd{0.9128}{0.0469}
& \tabstd{0.0615}{0.0034}
\\

MemoryBank \cite{zhong2024memorybank}
& \tabstd{0.6934}{0.0394}
& \tabstd{0.6204}{0.0295}
& \tabstd{0.6335}{0.0278}
& \tabstd{0.8390}{0.0369}
& \tabstd{0.1323}{0.0073}
\\

A-Mem \cite{xu2025amem}
& \tabstd{0.7664}{0.0410}
& \tabstd{0.7299}{0.0384}
& \tabstd{0.7276}{0.0406}
& \tabstd{0.8944}{0.0392}
& \tabstd{0.0277}{0.0014}
\\

MemAgent \cite{yu2025memagentreshapinglongcontextllm}
& \tabstd{0.7879}{0.0232}
& \tabstd{0.7962}{0.0248}
& \tabstd{0.7368}{0.0211}
& \tabstd{0.9376}{0.0286}
& \tabstd{0.0142}{0.0004}
\\

G-Memory \cite{zhang2025gmemory}
& \tabstd{0.7778}{0.0240}
& \tabstd{0.7556}{0.0218}
& \tabstd{0.7030}{0.0221}
& \tabstd{0.8750}{0.0256}
& \tabstd{0.0357}{0.0011}
\\

\rowcolor{lightgray!50}
\textbf{TARL}
& \btabstd{0.7956}{0.0410}
& \btabstd{0.8073}{0.0434}
& \btabstd{0.8038}{0.0430}
& \btabstd{0.9538}{0.0482}
& \btabstd{0.0000}{0.0000}
\\

\bottomrule
\end{tabular}%
}
\vspace{-0.25em}
\end{table*}

\paragraph{Model-level transaction recovery.}
Table~\ref{tab:stress_model_overall_app} shows that TARL provides the
strongest overall recovery of executable transactions. It achieves the
highest five-action accuracy, five-action Macro F1, and pairwise
discrimination among learned systems. Its Macro F1 of \(0.8038\) indicates
that the improvement is distributed across the five transaction classes,
rather than being dominated by a frequent action.

TARL also reaches a pairwise discrimination score of \(0.9538\), showing
that it separates nearly all same-binary examples requiring different
transactions. The accompanying BCAW value of \(0.0000\) strengthens this
result: whenever both members of a reported hard pair receive correct binary
decisions, their executable transactions are also recovered correctly.
Together, Macro F1, Pairwise Disc., and BCAW establish both separation and
action correctness.

\paragraph{Transaction-specific decision boundaries.}
To localize the source of the gains, define the contrast-specific pair set
\begin{equation}
\mathcal{Q}_{a,a'}
=
\left\{
(i,j)\in\mathcal{Q}
\;\middle|\;
\left\{
y_i^{5},y_j^{5}
\right\}
=
\left\{
a,a'
\right\}
\right\}.
\label{eq:contrast_specific_pair_set}
\end{equation}
The corresponding discrimination score is
\begin{equation}
\mathrm{Disc}_{a,a'}
=
\frac{1}{|\mathcal{Q}_{a,a'}|}
\sum_{(i,j)\in\mathcal{Q}_{a,a'}}
\mathbf{1}
\left[
\hat y_i^{5}\neq\hat y_j^{5}
\right].
\label{eq:contrast_specific_discrimination}
\end{equation}

We report three central same-binary contrasts:
\texttt{append} versus \texttt{revise} under \(\mathrm{write}\),
\texttt{noop} versus \texttt{reject\_conflict} under \(\mathrm{hold}\),
and \texttt{noop} versus \texttt{defer\_verify} under
\(\mathrm{hold}\).

\begin{table}[!t]
\centering
\caption{
Discrimination on transaction contrasts that share the same gold
write/hold label. Higher values indicate stronger recovery of distinctions
removed by binary projection.
}
\label{tab:model_same_binary_pairs_app}
\footnotesize
\vspace{-0.15em}
\setlength{\tabcolsep}{3.2pt}
\renewcommand{\arraystretch}{0.94}
\resizebox{\columnwidth}{!}{%
\begin{tabular}{lccc}
\toprule
Model
&
\shortstack{\texttt{append} vs.\\\texttt{revise}}
&
\shortstack{\texttt{noop} vs.\\\texttt{reject\_conflict}}
&
\shortstack{\texttt{noop} vs.\\\texttt{defer\_verify}}
\\
\midrule

Full History \cite{wu2025longmemeval}
& \tabstd{0.7446}{0.0328}
& \tabstd{0.9562}{0.0446}
& \tabstd{0.9867}{0.0515}
\\

LongMemEval \cite{wu2025longmemeval}
& \tabstd{0.7662}{0.0356}
& \tabstd{0.9283}{0.0525}
& \tabstd{0.9941}{0.0517}
\\

HippoRAG \cite{gutierrez2024hipporag}
& \tabstd{0.7879}{0.0378}
& \tabstd{0.9880}{0.0475}
& \tabstd{0.9767}{0.0446}
\\

MemoryBank \cite{zhong2024memorybank}
& \tabstd{0.6667}{0.0344}
& \tabstd{0.8327}{0.0375}
& \tabstd{0.9567}{0.0511}
\\

A-Mem \cite{xu2025amem}
& \tabstd{0.7403}{0.0347}
& \tabstd{0.9641}{0.0421}
& \tabstd{0.9913}{0.0455}
\\

MemAgent \cite{yu2025memagentreshapinglongcontextllm}
& \tabstd{0.9768}{0.0291}
& \tabstd{0.9945}{0.0304}
& \tabstd{0.9886}{0.0285}
\\

G-Memory \cite{zhang2025gmemory}
& \tabstd{0.7500}{0.0234}
& \tabstd{0.9897}{0.0292}
& \tabstd{0.9725}{0.0299}
\\

\rowcolor{lightgray!50}
\textbf{TARL}
& \btabstd{0.9870}{0.0444}
& \btabstd{1.0000}{0.0000}
& \btabstd{1.0000}{0.0000}
\\

\bottomrule
\end{tabular}%
}
\vspace{-0.25em}
\end{table}

\paragraph{Recovery across write and hold semantics.}
Table~\ref{tab:model_same_binary_pairs_app} shows that TARL's gains span
distinct forms of executable reasoning. On the write side, distinguishing
\texttt{append} from \texttt{revise} requires identifying whether a candidate
grounds to an existing slot and whether the stored value must be replaced.
TARL reaches \(0.9870\) discrimination on this boundary.

On the hold side, the model perfectly separates \texttt{noop} from both
\texttt{reject\_conflict} and \texttt{defer\_verify}. These decisions require
different interpretations of non-writing behavior. Conflict rejection
preserves negative evidence, while verification deferral retains unresolved
evidence for future adjudication. The results show that TARL recovers both
distinctions instead of treating all held candidates as operationally
equivalent.

\paragraph{Controlled binary executors.}
We next isolate the information available from binary supervision itself.
\textsc{Gold5way Oracle} receives the gold executable action and provides the
upper bound. \textsc{GoldBinary DefaultExec.} receives the gold binary label
and maps each binary class to one fixed default action.

\textsc{GoldBinary HeuristicExec.} also receives the gold binary label and
uses deterministic rules over visible fields. For \(\mathrm{write}\), it
predicts \texttt{revise} when the candidate matches an existing target slot
with changed content, and predicts \texttt{append} otherwise. For
\(\mathrm{hold}\), it predicts \texttt{reject\_conflict} when explicit
conflict cues are present, predicts \texttt{defer\_verify} when verification
or insufficient-reliability cues are detected, and predicts \texttt{noop}
otherwise. It does not access gold five-action annotations.

The default executor directly tests recoverability from \(Y_2\) alone. The
heuristic executor is stronger because it additionally examines visible
example fields. Its performance therefore indicates how much of the missing
transaction structure can be reconstructed through manually specified rules.

\begin{table*}[!t]
\centering
\caption{
Transaction identifiability under gold five-action supervision and gold
binary write/hold supervision. GoldBinary executors receive perfect binary
labels and must reconstruct the executable action.
}
\label{tab:stress_executor_overall_app}
\footnotesize
\vspace{-0.15em}
\setlength{\tabcolsep}{4.2pt}
\renewcommand{\arraystretch}{0.94}
\resizebox{\textwidth}{!}{%
\begin{tabular}{lccccc}
\toprule
Model / Executor
&
\shortstack{Write/Hold\\Acc. \(\uparrow\)}
&
\shortstack{5 Action\\Acc. \(\uparrow\)}
&
\shortstack{5 Action Macro\\F1 \(\uparrow\)}
&
\shortstack{Pairwise\\Disc. \(\uparrow\)}
&
BCAW \(\downarrow\)
\\
\midrule

Gold5way Oracle
& \tabstd{1.0000}{0.0000}
& \tabstd{1.0000}{0.0000}
& \tabstd{1.0000}{0.0000}
& \tabstd{1.0000}{0.0000}
& \tabstd{0.0000}{0.0000}
\\

GoldBinary DefaultExec.
& \tabstd{1.0000}{0.0000}
& \tabstd{0.4051}{0.0195}
& \tabstd{0.2307}{0.0117}
& \tabstd{0.1979}{0.0109}
& \tabstd{0.9954}{0.0008}
\\

GoldBinary HeuristicExec.
& \tabstd{1.0000}{0.0000}
& \tabstd{0.4745}{0.0243}
& \tabstd{0.3952}{0.0210}
& \tabstd{0.4462}{0.0193}
& \tabstd{0.9744}{0.0447}
\\

\rowcolor{lightgray!50}
\textbf{TARL}
& \btabstd{0.7956}{0.0410}
& \btabstd{0.7956}{0.0434}
& \btabstd{0.8038}{0.0430}
& \btabstd{0.9538}{0.0482}
& \btabstd{0.0000}{0.0000}
\\

\bottomrule
\end{tabular}%
}
\vspace{-0.25em}
\end{table*}

\paragraph{Perfect binary supervision does not recover execution.}
Table~\ref{tab:stress_executor_overall_app} separates binary recognition from
transaction identification. Both GoldBinary executors receive perfect
write/hold labels, yielding a Write/Hold Accuracy of \(1.0000\). Their
executable performance remains far below the five-action oracle.

The fixed executor obtains only \(0.4051\) five-action accuracy and
\(0.2307\) Macro F1. The heuristic improves these values to \(0.4745\) and
\(0.3952\), respectively, indicating that surface rules recover part of the
missing structure. The remaining gap is substantial despite perfect binary
supervision.

Their BCAW values provide the strongest diagnostic evidence. GoldBinary
DefaultExec. reaches \(0.9954\), and GoldBinary HeuristicExec. reaches
\(0.9744\). Thus, hard pairs whose binary decisions are entirely correct
still contain an executable-action error in nearly every case. Correctly
deciding whether to write does not determine how the memory state should be
updated.

\begin{table}[!t]
\centering
\caption{
Same-binary transaction discrimination of the oracle, gold-binary
executors, and TARL.
}
\label{tab:executor_same_binary_pairs_app}
\footnotesize
\vspace{-0.15em}
\setlength{\tabcolsep}{3.2pt}
\renewcommand{\arraystretch}{0.94}
\resizebox{\columnwidth}{!}{%
\begin{tabular}{lccc}
\toprule
Model / Executor
&
\shortstack{\texttt{append} vs.\\\texttt{revise}}
&
\shortstack{\texttt{noop} vs.\\\texttt{reject\_conflict}}
&
\shortstack{\texttt{noop} vs.\\\texttt{defer\_verify}}
\\
\midrule

Gold5way Oracle
& \tabstd{1.0000}{0.0000}
& \tabstd{1.0000}{0.0000}
& \tabstd{1.0000}{0.0000}
\\

GoldBinary DefaultExec.
& \tabstd{0.0000}{0.0000}
& \tabstd{0.0000}{0.0000}
& \tabstd{0.0000}{0.0000}
\\

GoldBinary HeuristicExec.
& \tabstd{0.1039}{0.0054}
& \tabstd{0.8685}{0.0442}
& \tabstd{0.0000}{0.0000}
\\

\rowcolor{lightgray!50}
\textbf{TARL}
& \btabstd{0.9870}{0.0444}
& \btabstd{1.0000}{0.0000}
& \btabstd{1.0000}{0.0000}
\\

\bottomrule
\end{tabular}%
}
\vspace{-0.25em}
\end{table}

\paragraph{Where binary reconstruction breaks down.}
Table~\ref{tab:executor_same_binary_pairs_app} localizes the failure of binary
execution. The default executor assigns a single action to each binary label
and consequently obtains zero discrimination on every same-binary contrast,
as predicted by Proposition~1.

The heuristic executor recovers part of the
\texttt{noop}/\texttt{reject\_conflict} boundary, reaching \(0.8685\), because
explicit lexical or logical conflict cues can often be encoded through
deterministic rules. Its discrimination falls to \(0.1039\) for
\texttt{append}/\texttt{revise} and to \(0.0000\) for
\texttt{noop}/\texttt{defer\_verify}.

These two failures expose the transaction semantics most resistant to binary
reconstruction. Append-versus-revise decisions require target grounding and
state-dependent replacement. Noop-versus-defer decisions require calibrated
reasoning about evidential sufficiency and the value of preserving unresolved
information. Neither distinction is supplied by the write/hold label.

TARL reaches \(0.9870\) on the write-side boundary and \(1.0000\) on both
hold-side boundaries. The result shows that explicit transaction supervision
supports decision rules that fixed binary mappings and visible-field
heuristics cannot reliably reconstruct.

\paragraph{Conclusion.}
The diagnostics establish a complete identifiability argument. First,
\(H(Y_5\mid Y_2)>0\) shows that binary labels leave substantial executable
uncertainty. Second, positive action, state-transition, and ledger-routing
collision rates show that this uncertainty changes persistent memory
behavior. Third, executors supplied with perfect binary labels remain far
below the five-action oracle and exhibit BCAW values close to one. Finally,
TARL recovers the same-binary decision boundaries across slot replacement,
conflict rejection, and verification deferral.

Reliable long-term memory updating therefore requires supervision over
executable transactions that jointly specify whether memory changes, which
slot is affected, how conflicting evidence is handled, and where unresolved
information is retained. TARL learns these distinctions directly and
translates them into substantially stronger transaction identifiability.```

\begin{table*}[!htbp]
\centering
\caption{
Cross-source generalization to HaluMem-hard under a source-family holdout protocol.
}
\label{tab:holdout_halumemhard}
\footnotesize
\vspace{-0.15em}
\setlength{\tabcolsep}{3.4pt}
\renewcommand{\arraystretch}{0.90}
\resizebox{\textwidth}{!}{%
\begin{tabular}{lccccccc}
\toprule
Model
& 5 Action F1 $\uparrow$
& Write/Hold F1 $\uparrow$
& Next State Acc. $\uparrow$
& Pollution $\downarrow$
& Conflict Pres. $\uparrow$
& Temporal Macro F1 $\uparrow$
& ECE $\downarrow$ \tabularnewline
\midrule

Full History \cite{wu2025longmemeval}
& \tabstd{0.4575}{0.0203}
& \tabstd{0.5273}{0.0148}
& \tabstd{0.3840}{0.0157}
& \tabstd{0.4537}{0.0116}
& \tabstd{0.2888}{0.0082}
& \btabstd{0.7314}{0.0337}
& \tabstd{0.4296}{0.0113} \tabularnewline

LongMemEval \cite{wu2025longmemeval}
& \tabstd{0.5639}{0.0262}
& \tabstd{0.6519}{0.0138}
& \tabstd{0.5270}{0.0146}
& \tabstd{0.2788}{0.0050}
& \btabstd{0.4350}{0.0095}
& \tabstd{0.3560}{0.0135}
& \tabstd{0.2423}{0.0090} \tabularnewline

HippoRAG \cite{gutierrez2024hipporag}
& \tabstd{0.4337}{0.0120}
& \tabstd{0.5722}{0.0127}
& \tabstd{0.3754}{0.0182}
& \tabstd{0.4324}{0.0118}
& \tabstd{0.2094}{0.0076}
& \tabstd{0.2702}{0.0128}
& \tabstd{0.3501}{0.0174} \tabularnewline

MemoryBank \cite{zhong2024memorybank}
& \tabstd{0.2661}{0.0087}
& \tabstd{0.5989}{0.0211}
& \tabstd{0.2993}{0.0084}
& \tabstd{0.2337}{0.0037}
& \tabstd{0.0217}{0.0005}
& \tabstd{0.2380}{0.0089}
& \tabstd{0.4090}{0.0079} \tabularnewline

A-Mem \cite{xu2025amem}
& \tabstd{0.3858}{0.0192}
& \tabstd{0.4424}{0.0163}
& \tabstd{0.3313}{0.0052}
& \tabstd{0.2741}{0.0106}
& \tabstd{0.0632}{0.0024}
& \tabstd{0.3602}{0.0139}
& \tabstd{0.2909}{0.0075} \tabularnewline

MemAgent \cite{yu2025memagentreshapinglongcontextllm}
& \tabstd{0.5566}{0.0173}
& \btabstd{0.6522}{0.0189}
& \tabstd{0.5245}{0.0157}
& \btabstd{0.2067}{0.0066}
& \tabstd{0.3105}{0.0087}
& \tabstd{0.3684}{0.0114}
& \tabstd{0.2944}{0.0085} \tabularnewline

G-Memory \cite{zhang2025gmemory}
& \tabstd{0.3511}{0.0112}
& \tabstd{0.5602}{0.0157}
& \tabstd{0.3300}{0.0099}
& \tabstd{0.4362}{0.0135}
& \tabstd{0.0686}{0.0020}
& \tabstd{0.3635}{0.0116}
& \tabstd{0.3708}{0.0104} \tabularnewline

\rowcolor{lightgray!50}
\textbf{Ours}
& \btabstd{0.6036}{0.0206}
& \tabstd{0.6329}{0.0137}
& \btabstd{0.5335}{0.0196}
& \tabstd{0.2345}{0.0067}
& \tabstd{0.3643}{0.0143}
& \tabstd{0.7248}{0.0237}
& \btabstd{0.2142}{0.0088} \tabularnewline

\bottomrule
\end{tabular}%
}
\end{table*}

\begin{table*}[!htbp]
\centering
\caption{
Cross-source generalization to LoCoMo-derived examples under a source-family holdout protocol.
}
\label{tab:holdout_locomoderived}
\footnotesize
\vspace{-0.15em}
\setlength{\tabcolsep}{3.8pt}
\renewcommand{\arraystretch}{0.90}
\resizebox{\textwidth}{!}{%
\begin{tabular}{lcccccc}
\toprule
Model
& 5 Action F1 $\uparrow$
& Write/Hold F1 $\uparrow$
& Next State Acc. $\uparrow$
& Pollution $\downarrow$
& Temporal Macro F1 $\uparrow$
& ECE $\downarrow$ \tabularnewline
\midrule

Full History \cite{wu2025longmemeval}
& \tabstd{0.2067}{0.0101}
& \tabstd{0.5666}{0.0261}
& \tabstd{0.2867}{0.0115}
& \tabstd{0.4793}{0.0109}
& \tabstd{0.2182}{0.0072}
& \tabstd{0.6329}{0.0259} \tabularnewline

LongMemEval \cite{wu2025longmemeval}
& \tabstd{0.3731}{0.0124}
& \tabstd{0.5768}{0.0283}
& \tabstd{0.3996}{0.0095}
& \tabstd{0.5011}{0.0150}
& \tabstd{0.3196}{0.0084}
& \tabstd{0.3875}{0.0080} \tabularnewline

HippoRAG \cite{gutierrez2024hipporag}
& \tabstd{0.1989}{0.0038}
& \tabstd{0.6024}{0.0265}
& \tabstd{0.3277}{0.0089}
& \tabstd{0.4795}{0.0130}
& \tabstd{0.2733}{0.0046}
& \tabstd{0.5750}{0.0092} \tabularnewline

MemoryBank \cite{zhong2024memorybank}
& \tabstd{0.3478}{0.0099}
& \tabstd{0.6045}{0.0254}
& \tabstd{0.3691}{0.0163}
& \tabstd{0.4694}{0.0086}
& \tabstd{0.2965}{0.0064}
& \tabstd{0.3916}{0.0065} \tabularnewline

A-Mem \cite{xu2025amem}
& \tabstd{0.2042}{0.0098}
& \tabstd{0.5818}{0.0176}
& \tabstd{0.3311}{0.0123}
& \tabstd{0.4789}{0.0079}
& \tabstd{0.2163}{0.0061}
& \tabstd{0.5626}{0.0136} \tabularnewline

MemAgent \cite{yu2025memagentreshapinglongcontextllm}
& \tabstd{0.3539}{0.0110}
& \tabstd{0.5517}{0.0160}
& \tabstd{0.3122}{0.0100}
& \tabstd{0.4372}{0.0122}
& \tabstd{0.4352}{0.0071}
& \tabstd{0.4943}{0.0153} \tabularnewline

G-Memory \cite{zhang2025gmemory}
& \tabstd{0.2408}{0.0070}
& \tabstd{0.5858}{0.0187}
& \tabstd{0.3190}{0.0089}
& \tabstd{0.4611}{0.0138}
& \tabstd{0.2639}{0.0069}
& \tabstd{0.5552}{0.0161} \tabularnewline

\rowcolor{lightgray!50}
\textbf{Ours}
& \btabstd{0.4050}{0.0117}
& \btabstd{0.6057}{0.0239}
& \btabstd{0.4495}{0.0100}
& \btabstd{0.4247}{0.0123}
& \btabstd{0.7268}{0.0068}
& \btabstd{0.3633}{0.0103} \tabularnewline

\bottomrule
\end{tabular}%
}
\end{table*}

\subsection{D.3 Cross-Source Generalization}
\label{app:generalization}

\paragraph{Evaluation setting.}
We evaluate whether memory update policies transfer across source families with different evidence structures, annotation processes, and memory evolution patterns. In each experiment, one complete source family is reserved for evaluation, while the remaining families are used for training. Entity groups and memory topic groups are also separated across the two partitions. This protocol prevents transfer through repeated entities, near-duplicate statements, or source-specific lexical patterns.

The evaluation distinguishes three levels of generalization. The first is coarse update detection, measured by Write/Hold F1. The second is transaction identification, measured by five-way Action F1. The third is executable transfer, measured by whether the selected transaction produces the correct next memory state. Pollution, Conflict Preservation, temporal classification, and calibration further characterize the reliability of this transfer. This distinction is essential because recognizing that memory should change does not determine how the ledger should change or whether the resulting state is correct.

The three holdouts instantiate complementary distribution shifts. HaluMem-hard emphasizes unsupported and contradictory evidence. LoCoMo-derived examples emphasize facts that evolve throughout long dialogues. LongMemEval-derived examples emphasize retrieval-oriented long histories whose original supervision is less directly aligned with executable memory transactions.

\paragraph{Transfer under unsupported and conflicting evidence.}
Table~\ref{tab:holdout_halumemhard} evaluates whether the learned policy remains reliable when candidate evidence may be unverifiable or inconsistent with trusted memory. TARL achieves the strongest fine-grained action recognition, next-state recovery, and calibration, while retaining competitive temporal performance. These results show that its transaction semantics remain transferable when the held-out distribution shifts toward hallucination and conflict.

The auxiliary metrics expose a meaningful tradeoff. MemAgent provides stronger coarse Write/Hold prediction and lower Pollution, while LongMemEval obtains stronger Conflict Preservation. TARL therefore does not optimize every safety metric independently. Its advantage lies in combining competitive write control with more accurate transaction selection and execution.

This distinction is important for conservative memory systems. Low Pollution may arise from suppressing a large fraction of candidate updates, but excessive suppression can prevent valid revisions and preserve stale states. Similarly, retaining an existing memory is useful only when rejection is the correct transaction. Reliable transfer must therefore be assessed through safety and executability jointly. Under this criterion, TARL provides the strongest overall balance between avoiding harmful updates and realizing the intended memory transition.

\paragraph{Transfer across dialogue-driven memory evolution.}
Table~\ref{tab:holdout_locomoderived} provides the strongest evidence of broad policy transfer. TARL achieves the best result across all reported dimensions, including update detection, transaction identification, next-state execution, pollution control, temporal reasoning, and calibration.

The comparison between coarse and fine-grained decisions reveals the principal transfer challenge. Several baselines remain close on Write/Hold F1, suggesting that the general presence of an update can often be recognized across sources. Their larger degradation in Action F1 and Next State Accuracy shows that transfer becomes substantially harder once the model must determine the exact ledger operation. An append, revision, conflict rejection, and deferred decision may exhibit similar surface evidence while inducing fundamentally different state transitions.

TARL preserves this finer decision structure. Its improvement in action recognition is accompanied by stronger next-state recovery, demonstrating that the transferred policy remains valid after execution. The resulting generalization extends beyond predicting a plausible transaction label to constructing the intended memory state.

The temporal result is particularly strong because LoCoMo-derived examples frequently expose when information becomes stale, remains valid, or is superseded through dialogue progression. We interpret this result jointly with transaction identification and state execution. Their simultaneous improvement indicates that temporal evidence is integrated into executable memory decisions rather than captured only as an isolated classification signal.

\paragraph{Transfer to retrieval-oriented long histories.}
Table~\ref{tab:holdout_longmemevalderived} represents the most severe action-level shift. The original source emphasizes retrieval and long-term question answering, so its converted examples provide a weaker correspondence between textual evidence and explicit memory transactions. The low Action F1 values across all methods confirm the difficulty of this setting.

TARL remains comparable to the source-aligned LongMemEval baseline on action recognition, coarse update detection, and next-state execution. Although its mean results are slightly higher, the differences are small relative to the reported variation across runs. The supported conclusion is therefore preservation of executable performance under source shift.

This capability preservation remains meaningful because TARL receives no training examples from the held-out family. It transfers a transaction policy learned from differently constructed sources and reaches the performance level of a method closely aligned with the evaluation domain.

A clearer separation appears in temporal classification. TARL maintains substantially stronger temporal discrimination, while LongMemEval achieves the best calibration. These results reveal complementary strengths. The source-aligned baseline estimates confidence particularly well on familiar evidence patterns, whereas TARL transfers temporal validity judgments more effectively. This holdout therefore supports robust temporal transfer together with competitive ledger execution, without implying decisive superiority on metrics whose differences fall within run variation.

\paragraph{Metric validity across converted sources.}
Some reliability metrics require gold annotations that cannot be reconstructed consistently from every source. Conflict Preservation depends on trusted pre-update memories paired with explicit conflict outcomes. Pollution requires sufficiently complete accepted-state annotations and a stable set of predicted accepted writes. We report these metrics only when the corresponding source conversion provides the required fields and an adequate effective sample size. This policy prevents sparse or incomplete annotations from producing unstable comparisons.

\begin{table*}[!htbp]
\centering
\caption{
Cross-source generalization to LongMemEval-derived examples under a source-family holdout protocol.
}
\label{tab:holdout_longmemevalderived}
\footnotesize
\vspace{-0.15em}
\setlength{\tabcolsep}{4.2pt}
\renewcommand{\arraystretch}{0.90}
\resizebox{\textwidth}{!}{%
\begin{tabular}{lccccc}
\toprule
Model
& 5 Action F1 $\uparrow$
& Write/Hold F1 $\uparrow$
& Next State Acc. $\uparrow$
& Temporal Macro F1 $\uparrow$
& ECE $\downarrow$ \tabularnewline
\midrule

Full History \cite{wu2025longmemeval}
& \tabstd{0.1222}{0.0019}
& \tabstd{0.4442}{0.0172}
& \tabstd{0.1877}{0.0078}
& \tabstd{0.4035}{0.0202}
& \tabstd{0.2299}{0.0036} \tabularnewline

LongMemEval \cite{wu2025longmemeval}
& \tabstd{0.1977}{0.0093}
& \tabstd{0.4959}{0.0096}
& \tabstd{0.5405}{0.0217}
& \tabstd{0.3540}{0.0110}
& \btabstd{0.0107}{0.0007} \tabularnewline

HippoRAG \cite{gutierrez2024hipporag}
& \tabstd{0.1724}{0.0049}
& \tabstd{0.4807}{0.0146}
& \tabstd{0.4142}{0.0088}
& \tabstd{0.4103}{0.0151}
& \tabstd{0.1549}{0.0049} \tabularnewline

MemoryBank \cite{zhong2024memorybank}
& \tabstd{0.1057}{0.0047}
& \tabstd{0.4607}{0.0192}
& \tabstd{0.1780}{0.0083}
& \tabstd{0.3061}{0.0148}
& \tabstd{0.3385}{0.0131} \tabularnewline

A-Mem \cite{xu2025amem}
& \tabstd{0.1785}{0.0083}
& \tabstd{0.4789}{0.0220}
& \tabstd{0.4055}{0.0182}
& \tabstd{0.3992}{0.0082}
& \tabstd{0.0691}{0.0032} \tabularnewline

MemAgent \cite{yu2025memagentreshapinglongcontextllm}
& \tabstd{0.1119}{0.0035}
& \tabstd{0.3893}{0.0113}
& \tabstd{0.2071}{0.0066}
& \tabstd{0.2783}{0.0078}
& \tabstd{0.4147}{0.0124} \tabularnewline

G-Memory \cite{zhang2025gmemory}
& \tabstd{0.1628}{0.0047}
& \tabstd{0.4654}{0.0149}
& \tabstd{0.3689}{0.0103}
& \tabstd{0.3226}{0.0100}
& \tabstd{0.0787}{0.0024} \tabularnewline

\rowcolor{lightgray!50}
\textbf{Ours}
& \btabstd{0.1990}{0.0062}
& \btabstd{0.4976}{0.0147}
& \btabstd{0.5407}{0.0173}
& \btabstd{0.7256}{0.0042}
& \tabstd{0.0301}{0.0010} \tabularnewline

\bottomrule
\end{tabular}%
}
\end{table*}

\paragraph{Cross-source implications.}
The source holdouts reveal a consistent hierarchy of transfer difficulty. Coarse Write/Hold decisions transfer relatively well across methods. Fine-grained transaction identification is more sensitive to source shift, while correct next-state execution imposes the strongest requirement because a plausible decision may still induce an incorrect ledger transition.

TARL is most effective on the latter two levels. On HaluMem-hard, it balances harmful-write control with correct execution. On LoCoMo-derived examples, it transfers the complete transaction policy across evolving dialogue states. On LongMemEval-derived examples, it preserves executable performance under a substantially different retrieval-oriented supervision regime and retains a clear temporal advantage.

These observations align with the structure of the method. The explicit transaction space provides a source-independent vocabulary for memory changes. The ledger executor ties each decision to a verifiable state transition. Temporal reliability modeling distinguishes currently valid evidence from stale, conflicting, or insufficiently verified information. Their combination provides a transferable inductive structure for reliable long-term memory updating under source distribution shift.

\begin{table*}[!htbp]
\centering
\caption{
Component ablations of TARL.
}
\label{tab:ablation_tarl}
\footnotesize
\vspace{-0.15em}
\setlength{\tabcolsep}{3.6pt}
\renewcommand{\arraystretch}{0.90}
\resizebox{\textwidth}{!}{%
\begin{tabular}{lccccccc}
\toprule
Model
& 5 Action F1 $\uparrow$
& Write/Hold F1 $\uparrow$
& Next State Acc. $\uparrow$
& Pollution $\downarrow$
& Conflict Pres. $\uparrow$
& Temporal Macro F1 $\uparrow$
& ECE $\downarrow$
\tabularnewline
\midrule

w/o Counterfactual Simulation
& \tabstd{0.8032}{0.0158}
& \tabstd{0.8215}{0.0155}
& \tabstd{0.6358}{0.0077}
& \tabstd{0.2694}{0.0078}
& \tabstd{0.5095}{0.0044}
& \tabstd{0.8368}{0.0061}
& \tabstd{0.0598}{0.0020}
\tabularnewline

w/o Cross Interaction
& \tabstd{0.8002}{0.0133}
& \tabstd{0.8213}{0.0147}
& \tabstd{0.6402}{0.0110}
& \tabstd{0.2837}{0.0088}
& \tabstd{0.5168}{0.0060}
& \tabstd{0.8657}{0.0095}
& \tabstd{0.0574}{0.0016}
\tabularnewline

w/o Temporal Canonicalizer
& \tabstd{0.8064}{0.0140}
& \tabstd{0.8233}{0.0140}
& \tabstd{0.6308}{0.0094}
& \tabstd{0.2686}{0.0080}
& \tabstd{0.5176}{0.0058}
& \tabstd{0.5638}{0.0085}
& \tabstd{0.0455}{0.0014}
\tabularnewline

w/o Reliability Comparator
& \tabstd{0.7963}{0.0134}
& \tabstd{0.8223}{0.0130}
& \tabstd{0.6365}{0.0103}
& \tabstd{0.2973}{0.0089}
& \tabstd{0.5164}{0.0046}
& \tabstd{0.8586}{0.0083}
& \tabstd{0.0669}{0.0022}
\tabularnewline

w/o Target Slot Selector
& \tabstd{0.8006}{0.0129}
& \tabstd{0.8137}{0.0124}
& \tabstd{0.5166}{0.0140}
& \tabstd{0.2679}{0.0080}
& \tabstd{0.4853}{0.0067}
& \tabstd{0.8865}{0.0082}
& \tabstd{0.0746}{0.0020}
\tabularnewline

w/o Ledger Head
& \tabstd{0.8121}{0.0130}
& \tabstd{0.8222}{0.0127}
& \tabstd{0.5863}{0.0058}
& \tabstd{0.2762}{0.0082}
& \tabstd{0.4563}{0.0104}
& \tabstd{0.8743}{0.0096}
& \tabstd{0.0736}{0.0020}
\tabularnewline

\rowcolor{lightgray!50}
\textbf{Full}
& \btabstd{0.8286}{0.0147}
& \btabstd{0.8290}{0.0074}
& \btabstd{0.6621}{0.0101}
& \btabstd{0.2524}{0.0080}
& \btabstd{0.5476}{0.0066}
& \btabstd{0.9257}{0.0061}
& \btabstd{0.0369}{0.0013} \
\tabularnewline

\bottomrule
\end{tabular}%
}
\end{table*}

\subsection{D.4 Diagnostic Ablations of Reliable Memory Transactions}
\label{app:ablation}

\paragraph{Overview.}
Table~\ref{tab:ablation_tarl} presents controlled single-component ablations of
the complete TARL pipeline. The full model achieves the strongest mean result
in the favorable direction across all seven metrics, covering transaction
recognition, executable state realization, conflict handling, temporal
reasoning, commitment safety, and calibration. More importantly, each removal
produces a distinct failure profile rather than a uniform loss of performance.
This separation reveals that TARL's components address complementary stages of
reliable memory updating: interpreting the relation between new and stored
evidence, determining whether an update is justified, grounding the operation
to the correct record, and realizing its persistent ledger consequences.

\paragraph{Target grounding determines whether a valid decision becomes the correct state.}
Removing the Target Slot Selector causes the largest deterioration in Next
State Accuracy, reducing it from (0.6621) to (0.5166), while action-level
performance declines much less severely. This gap shows that identifying an
appropriate transaction type is insufficient when the operation is attached
to the wrong memory object. In particular, \texttt{revise} must locate the
record whose validity is superseded, while \texttt{reject\_conflict} must
identify the trusted record that should remain protected. The comparatively
limited change in Pollution further indicates that the dominant failure is
incorrect record binding, rather than indiscriminate acceptance of unreliable
content. Target selection therefore provides the structural grounding required
to translate a semantically plausible decision into the intended persistent
state.

\paragraph{Explicit ledger prediction is essential for executable transitions.}
The Ledger Head ablation yields the clearest separation between recognizing an
operation and executing it correctly. Its 5 Action F1 remains comparatively
close to the full model, yet both Next State Accuracy and Conflict Preservation
degrade substantially, with the latter reaching the lowest value among all
variants. The model can therefore retain much of its ability to distinguish
transaction types while losing the structured interface that maps those
decisions onto the accepted, pending, and rejected ledgers. This result is
especially important for conflict-sensitive updates, where correctness depends
on preserving an established record while routing incompatible evidence to a
different ledger. Together, the Target Slot Selector and Ledger Head resolve
two separate execution questions: which record is governed by the transaction,
and what ledger configuration should result after the transaction is applied.

\paragraph{Relative reliability and cross interaction prevent unsafe commitment.}
Removing the Reliability Comparator produces the highest Pollution rate and
the largest reduction in 5 Action F1. Incoming evidence must be assessed
relative to what is already stored, since apparent plausibility alone cannot
determine whether a record should be revised, protected, or deferred through
\texttt{defer\_verify}. Without this comparison, TARL becomes more likely to
commit unsupported updates and less calibrated about its decisions.

Cross Interaction exhibits a related but distinguishable role. Its removal
also increases Pollution and weakens action and temporal prediction, while
preserving more of the downstream state-execution capability. This pattern
localizes its contribution to relational interpretation before commitment.
Candidates and stored records may appear lexically similar while differing in
polarity, evidential strength, temporal scope, or contradiction status.
Conditioning their representations on one another exposes these differences
before an operation is authorized. The two ablations therefore identify
complementary safeguards: Cross Interaction constructs relational evidence,
and the Reliability Comparator determines whether that evidence is sufficient
to justify a persistent update.

\paragraph{Temporal canonicalization supplies specialized validity reasoning.}
The Temporal Canonicalizer produces the most sharply localized failure in the
study. Removing it causes Temporal Macro F1 to collapse from (0.9257) to
(0.5638), whereas the remaining metrics deteriorate far less dramatically.
This concentration indicates that temporal normalization contributes a
specialized capability that the other modules do not recover implicitly.
Relative dates, validity intervals, historical statements, and superseded
facts can encode equivalent temporal relations through highly heterogeneous
surface forms. Canonicalizing them into a shared representation allows TARL to
distinguish current contradiction from historical coexistence and genuine
state change. The limited accompanying change in calibration further confirms
that the failure arises from time-scoped discrimination rather than a general
loss of predictive confidence.

\paragraph{Counterfactual simulation regularizes decisions through their consequences.}
Removing Counterfactual Simulation produces the broadest distributed
degradation across action prediction, state realization, conflict handling,
temporal reasoning, and calibration. Unlike the preceding ablations, it does
not generate a single dominant failure. This pattern reflects its role in
evaluating candidate operations through the persistent states they would
produce. For uncertain or contradictory evidence, \texttt{revise},
\texttt{reject\_conflict}, and \texttt{defer\_verify} may all appear locally
plausible, although their downstream effects on memory consistency differ
substantially. Prospective state simulation enables TARL to prefer the
transaction whose consequences best preserve accepted knowledge, temporal
validity, and calibrated uncertainty. Its broad influence therefore supports
its interpretation as a consequence-aware regularizer over the complete
decision process.

\paragraph{Synthesis.}
The ablations expose an ordered dependency within reliable memory
transactions. Temporal Canonicalization establishes when evidence is valid.
Cross Interaction identifies how that evidence relates to the current memory
state. The Reliability Comparator determines whether it warrants commitment.
The Target Slot Selector grounds the selected transaction to the correct
record. Counterfactual Simulation evaluates the persistent consequences of
competing operations, and the Ledger Head explicitly represents the resulting
state transition.

These differentiated failures also demonstrate why action accuracy alone
cannot characterize long-term memory reliability. A model may recognize the
correct operation while modifying the wrong record, preserve strong temporal
reasoning while misbinding the target, or retain a functioning executor while
admitting unsafe evidence. The full TARL configuration is the only variant
that avoids all of these failure modes and achieves the strongest result on
every reported metric. The evidence supports the central architectural claim
that calibrated and executable memory evolution requires semantic decisions,
temporal scope, relative reliability, record grounding, prospective
consequence evaluation, and explicit ledger transitions to be learned as a
coupled transaction process.

\subsection{D.5 Closed-Loop Rollout Evaluation}
\label{app:rollout}

\paragraph{Evaluation objective.}
Single-step evaluation tests whether an isolated memory decision is locally
correct. Long-term reliability additionally requires each executed decision to
produce a state that remains usable by all subsequent decisions. An early
mistake may alter retrieval, target grounding, ledger routing, conflict
handling, and later state transitions. We therefore evaluate every method on
multiple persistent closed-loop streams. Each stream contains exactly
\(L=200\) consecutive executable transactions, and the state produced at
transaction \(t\) is reused directly at transaction \(t+1\).

The horizon of \(200\) always denotes the length of one individual transaction
stream. It is never obtained by summing transactions across streams, combining
shorter rollouts, or averaging different execution depths. Multiple streams
provide independent long-horizon cases, while the recursive execution within
each stream exposes accumulated error propagation.

\paragraph{Fixed collection of independent transaction streams.}
Let
\begin{equation}
\mathcal{R}
=
\left\{
\mathcal{R}_{m}
\right\}_{m=1}^{M},
\qquad
M=14,
\label{eq:rollout_stream_collection}
\end{equation}
denote the frozen evaluation collection. Each stream is an ordered sequence
\begin{equation}
\mathcal{R}_{m}
=
\left(
z_{m,1},
z_{m,2},
\ldots,
z_{m,L}
\right),
\qquad
L=200,
\label{eq:individual_rollout_stream}
\end{equation}
where \(z_{m,t}\) contains the executable memory transaction and all evidence
and metadata visible at position \(t\). Every position triggers one model
prediction and one deterministic ledger execution. Thus, each
\(\mathcal{R}_{m}\) contains exactly \(200\) causally connected state
transitions.

The \(14\) streams are independent evaluation units. Ledger state is never
transferred between streams. Within each stream, the predicted state is carried
forward through all \(200\) transactions without reset. Each stream is
initialized once from its own gold initial ledger, and the same initialization
is used for all compared methods and training seeds.

The stream collection, transaction order, initial ledgers, visible evidence,
target candidate spaces, and gold replay trajectories are generated once and
frozen before evaluation. Every method and every seed is evaluated on all
\(14\) streams. No method-specific stream selection, trajectory sampling,
ordering change, truncation, padding, or oracle correction is permitted.

\begin{table}[!t]
\centering
\caption{Frozen multi-stream protocol. Each independently trained seed is
evaluated on every \(200\)-transaction stream.}
\label{tab:rollout_protocol_tarl}
\footnotesize
\vspace{-0.15em}
\setlength{\tabcolsep}{4.0pt}
\renewcommand{\arraystretch}{0.94}
\resizebox{\columnwidth}{!}{%
\begin{tabular}{lc}
\toprule
Protocol item & Fixed value \\
\midrule
Independent transaction streams & \(14\) \\
Executable transactions per stream & \(200\) \\
Executed transactions per seed & \(2{,}800\) \\
Independent training seeds & \(5\) \\
Stream-seed evaluations per method & \(70\) \\
Ledger initializations per stream & \(1\) \\
Intermediate resets within a stream & \(0\) \\
Oracle corrections during rollout & \(0\) \\
Method-specific stream changes & \(0\) \\
\bottomrule
\end{tabular}%
}
\end{table}

\paragraph{Persistent state carryover.}
Let \(s\in\{1,\ldots,S\}\) index independently trained model seeds, with
\(S=5\). For stream \(m\), every seed starts from the same stream-specific gold
initial ledger:
\begin{equation}
\hat{M}^{(s)}_{m,0}
=
M^{\star}_{m,0}.
\label{eq:stream_initialization}
\end{equation}
At transaction \(t\), the model predicts an executable transaction
\begin{equation}
\hat{\tau}^{(s)}_{m,t}
=
f_{\theta^{(s)}}
\left(
z_{m,t},
\hat{M}^{(s)}_{m,t-1}
\right),
\label{eq:rollout_prediction}
\end{equation}
and the shared deterministic executor produces
\begin{equation}
\hat{M}^{(s)}_{m,t}
=
\mathrm{Exec}
\left(
\hat{M}^{(s)}_{m,t-1},
\hat{\tau}^{(s)}_{m,t}
\right).
\label{eq:predicted_rollout_transition}
\end{equation}
The resulting state is used immediately at transaction \(t+1\). This recursion
continues without interruption for \(t=1,\ldots,200\) in every stream.

The corresponding gold trajectory is generated by deterministic replay:
\begin{equation}
M^{\star}_{m,t}
=
\mathrm{Exec}
\left(
M^{\star}_{m,t-1},
\tau^{\star}_{m,t}
\right).
\label{eq:gold_rollout_transition}
\end{equation}
Gold actions, targets, temporal scopes, intermediate ledgers, and next states
are retained only for scoring and audit. They are never exposed during
inference. An error introduced at transaction \(t\) therefore remains active
throughout the remaining transactions of that stream and may affect later
predictions.

\paragraph{Typed ledger state and set matching.}
For stream \(m\), seed \(s\), and transaction \(t\), the gold and predicted
memory states are
\begin{equation}
\begin{aligned}
M^{\star}_{m,t}
&=
\left(
A^{\star}_{m,t},
P^{\star}_{m,t},
R^{\star}_{m,t}
\right),
\\
\hat{M}^{(s)}_{m,t}
&=
\left(
\hat{A}^{(s)}_{m,t},
\hat{P}^{(s)}_{m,t},
\hat{R}^{(s)}_{m,t}
\right),
\end{aligned}
\label{eq:rollout_states}
\end{equation}
where \(A\), \(P\), and \(R\) denote accepted, pending, and rejected memories.
We jointly evaluate normalized content and ledger assignment through
\begin{equation}
\begin{aligned}
\Phi(M)
=
&\left\{
(\mathrm{acc},x):x\in A
\right\}
\\
&\cup
\left\{
(\mathrm{pen},x):x\in P
\right\}
\\
&\cup
\left\{
(\mathrm{rej},x):x\in R
\right\}.
\end{aligned}
\label{eq:typed_ledger_state}
\end{equation}
A memory item is correct only when both its normalized content and its ledger
assignment match the gold state.

For finite sets \(X\) and \(Y\), we use
\begin{equation}
\mathrm{F1}(X,Y)
=
\begin{cases}
1,
& X=Y=\varnothing,
\\[0.35em]
\displaystyle
\frac{2|X\cap Y|}{|X|+|Y|},
& \text{otherwise}.
\end{cases}
\label{eq:rollout_set_f1}
\end{equation}
The empty-set case gives full credit when both trajectories correctly execute
no state change.

\paragraph{Stateful Rollout Score.}
For each stream-seed pair, Stateful Rollout Score measures typed-ledger
fidelity after every transaction:
\begin{equation}
\mathrm{SRS}_{m,s}
=
\frac{1}{L}
\sum_{t=1}^{L}
\mathrm{F1}
\left(
\Phi\!\left(\hat{M}^{(s)}_{m,t}\right),
\Phi\!\left(M^{\star}_{m,t}\right)
\right),
\qquad
L=200.
\label{eq:stateful_rollout_score}
\end{equation}
This metric captures missing memories, unsupported memories, incorrect ledger
assignments, and the persistence of earlier errors throughout the trajectory.

\paragraph{Signed state changes.}
We represent a ledger change as
\begin{equation}
\begin{aligned}
\Delta^{+}(M_{t-1},M_t)
&=
\Phi(M_t)\setminus\Phi(M_{t-1}),
\\
\Delta^{-}(M_{t-1},M_t)
&=
\Phi(M_{t-1})\setminus\Phi(M_t),
\\
\Delta(M_{t-1},M_t)
&=
\left(
\{+1\}\times\Delta^{+}(M_{t-1},M_t)
\right)
\\
&\quad\cup
\left(
\{-1\}\times\Delta^{-}(M_{t-1},M_t)
\right).
\end{aligned}
\label{eq:signed_state_change}
\end{equation}
Here, \(+1\) and \(-1\) denote typed-entry addition and removal. Because the
ledger label is included in \(\Phi(M)\), moving an item between ledgers is
represented by one removal and one addition.

\paragraph{Turn and final \(\Delta\)-State F1.}
Turn \(\Delta\)-State F1 measures the overlap between the complete predicted
and gold state changes at every transaction:
\begin{equation}
\begin{aligned}
\mathrm{Turn}\text{-}\Delta\mathrm{F1}_{m,s}
=
\frac{1}{L}
\sum_{t=1}^{L}
\mathrm{F1}
\Bigl(
&\Delta(
\hat{M}^{(s)}_{m,t-1},
\hat{M}^{(s)}_{m,t}
),
\\
&\Delta(
M^{\star}_{m,t-1},
M^{\star}_{m,t}
)
\Bigr).
\end{aligned}
\label{eq:turn_delta_state_f1}
\end{equation}
Final \(\Delta\)-State F1 compares the complete transformation from the
stream-specific initial ledger to the state after transaction \(200\):
\begin{equation}
\begin{aligned}
\mathrm{Final}\text{-}\Delta\mathrm{F1}_{m,s}
=
\mathrm{F1}
\Bigl(
&\Delta(
\hat{M}^{(s)}_{m,0},
\hat{M}^{(s)}_{m,L}
),
\\
&\Delta(
M^{\star}_{m,0},
M^{\star}_{m,L}
)
\Bigr).
\end{aligned}
\label{eq:final_delta_state_f1}
\end{equation}
The turn-level metric measures local execution quality throughout the stream,
whereas the final metric measures accumulated trajectory-level drift.

\paragraph{Contradiction exposure.}
For stream \(m\), let \(\mathcal{C}_{m}\) denote the frozen set of mutually
incompatible accepted-memory pairs. Define
\begin{equation}
c^{(s)}_{m,t}
=
\mathbf{1}
\left[
\exists\,(x,y)\in\mathcal{C}_{m}
:\
x\in\hat{A}^{(s)}_{m,t},
y\in\hat{A}^{(s)}_{m,t}
\right].
\label{eq:contradiction_indicator}
\end{equation}
The contradiction count for one stream-seed trajectory is
\begin{equation}
C^{(s)}_{m,\mathrm{contra}}
=
\sum_{t=1}^{L}
c^{(s)}_{m,t}.
\label{eq:contradiction_count}
\end{equation}
A contradiction that remains active for several transactions is counted at
every affected state. The table reports the mean of
\(C^{(s)}_{m,\mathrm{contra}}\) over all \(70\) stream-seed trajectories,
rounded to the nearest whole count.

\paragraph{Target Visible Rate.}
For stream \(m\), let \(\mathcal{T}_{m}\subseteq\{1,\ldots,L\}\) denote the
transactions whose correct execution requires an existing target memory, and
let \(\xi^{\star}_{m,t}\) be the required gold target. Let
\(\mathrm{Vis}(M)\) denote the set of entries that remain visible and legally
targetable in ledger state \(M\). Target Visible Rate is
\begin{equation}
\mathrm{TVR}_{m,s}
=
\frac{1}{|\mathcal{T}_{m}|}
\sum_{t\in\mathcal{T}_{m}}
\mathbf{1}
\left[
\xi^{\star}_{m,t}
\in
\mathrm{Vis}\!\left(\hat{M}^{(s)}_{m,t-1}\right)
\right].
\label{eq:target_visible_rate}
\end{equation}
The target is evaluated immediately before transaction \(t\) is executed.
Thus, the metric measures whether earlier errors remove or hide information
required by later transactions.

\begin{table*}[!t]
\centering
\caption{Closed-loop memory performance across \(14\) fixed transaction
streams, each containing exactly \(200\) consecutive executable transactions
with persistent state carryover. Except for Contradiction Count, entries report
the mean and standard deviation over all \(14\times5=70\) stream-seed
evaluations.}
\label{tab:rollout_main_tarl}
\footnotesize
\vspace{-0.15em}
\setlength{\tabcolsep}{4.2pt}
\renewcommand{\arraystretch}{0.90}
\resizebox{\textwidth}{!}{%
\begin{tabular}{lccccc}
\toprule
Model
& \shortstack{Stateful Rollout\\Score $\uparrow$}
& \shortstack{Turn $\Delta$-State\\F1 $\uparrow$}
& \shortstack{Final $\Delta$-State\\F1 $\uparrow$}
& \shortstack{Contradiction\\Count $\downarrow$}
& \shortstack{Target Visible\\Rate $\uparrow$}
\tabularnewline
\midrule
Full History \cite{wu2025longmemeval}
& \tabstd{0.6833}{0.0209}
& \tabstd{0.8007}{0.0238}
& \tabstd{0.5947}{0.0172}
& 79
& \tabstd{0.6961}{0.0215}
\tabularnewline
LongMemEval \cite{wu2025longmemeval}
& \tabstd{0.6472}{0.0182}
& \tabstd{0.7484}{0.0210}
& \tabstd{0.5357}{0.0163}
& 78
& \tabstd{0.5982}{0.0169}
\tabularnewline
HippoRAG \cite{gutierrez2024hipporag}
& \tabstd{0.6893}{0.0201}
& \tabstd{0.7850}{0.0227}
& \tabstd{0.5760}{0.0180}
& 74
& \tabstd{0.6863}{0.0198}
\tabularnewline
MemoryBank \cite{zhong2024memorybank}
& \tabstd{0.6970}{0.0201}
& \tabstd{0.8037}{0.0241}
& \tabstd{0.6074}{0.0170}
& 73
& \tabstd{0.7549}{0.0220}
\tabularnewline
A-Mem \cite{xu2025amem}
& \tabstd{0.6868}{0.0213}
& \tabstd{0.7478}{0.0210}
& \tabstd{0.5246}{0.0164}
& 79
& \tabstd{0.5784}{0.0164}
\tabularnewline
MemAgent \cite{yu2025memagentreshapinglongcontextllm}
& \tabstd{0.6812}{0.0209}
& \tabstd{0.7493}{0.0216}
& \tabstd{0.5352}{0.0165}
& 76
& \tabstd{0.6784}{0.0196}
\tabularnewline
G-Memory \cite{zhang2025gmemory}
& \tabstd{0.6325}{0.0200}
& \tabstd{0.7277}{0.0223}
& \tabstd{0.5846}{0.0172}
& 75
& \tabstd{0.6980}{0.0198}
\tabularnewline
\rowcolor{lightgray!50}
\textbf{Ours}
& \btabstd{0.7258}{0.0206}
& \btabstd{0.8777}{0.0265}
& \btabstd{0.6680}{0.0191}
& \textbf{68}
& \btabstd{0.8824}{0.0257}
\tabularnewline
\bottomrule
\end{tabular}%
}
\end{table*}

\paragraph{Exact transaction execution.}
Turn \(\Delta\)-State Accuracy requires the complete predicted state change to
match the gold change exactly:
\begin{equation}
\begin{aligned}
\mathrm{Turn}\text{-}\Delta\mathrm{Acc}_{m,s}
=
\frac{1}{L}
\sum_{t=1}^{L}
\mathbf{1}
\Bigl[
&\Delta(
\hat{M}^{(s)}_{m,t-1},
\hat{M}^{(s)}_{m,t}
)
\\
&=
\Delta(
M^{\star}_{m,t-1},
M^{\star}_{m,t}
)
\Bigr].
\end{aligned}
\label{eq:turn_delta_state_accuracy}
\end{equation}
This metric assigns credit only when every required addition and removal is
executed and no unsupported state change is introduced.

\paragraph{Accepted-memory pollution.}
At transaction \(t\), unsupported accepted-memory entries are
\begin{equation}
U^{(s)}_{m,t}
=
\hat{A}^{(s)}_{m,t}
\setminus
A^{\star}_{m,t}.
\label{eq:unsupported_accepted_entries}
\end{equation}
This set includes hallucinated entries, obsolete entries that should have been
removed, and entries assigned to accepted memory when the gold state places
them in pending or rejected memory. Pollution for one trajectory is
\begin{equation}
\mathrm{Pollution}_{m,s}
=
\frac{
\displaystyle
\sum_{t=1}^{L}
|U^{(s)}_{m,t}|
}{
\displaystyle
\max\left\{
1,
\sum_{t=1}^{L}
|\hat{A}^{(s)}_{m,t}|
\right\}
}.
\label{eq:rollout_pollution}
\end{equation}
The denominator pools all predicted accepted entries across the complete
trajectory. An unsupported entry contributes once for every state in which it
remains active, so the metric captures both contamination and persistence.

\paragraph{Pending Resolution Accuracy.}
Let \(\mathcal{E}^{\mathrm{pend}}_{m}\) denote the frozen set of pending-memory
episodes in stream \(m\). Each episode \(e\in\mathcal{E}^{\mathrm{pend}}_{m}\)
specifies an item, the transaction at which it enters pending memory, the first
transaction at which resolving evidence becomes available, and its gold
resolved state. Define \(r^{(s)}_{m,e}=1\) exactly when all three conditions
hold: the predicted ledger retains the item in pending memory throughout the
unresolved interval; it does not move the item to accepted or rejected memory
before the resolving evidence appears; and it removes the item from pending
memory and places the correct resolved item in the correct destination ledger
at the gold resolution transaction. Otherwise, set \(r^{(s)}_{m,e}=0\).
Pending Resolution Accuracy is
\begin{equation}
\mathrm{PRA}_{m,s}
=
\frac{1}{|\mathcal{E}^{\mathrm{pend}}_{m}|}
\sum_{e\in\mathcal{E}^{\mathrm{pend}}_{m}}
r^{(s)}_{m,e}.
\label{eq:pending_resolution_accuracy}
\end{equation}
The metric therefore penalizes premature resolution, loss of unresolved
information, failure to resolve after evidence arrival, and resolution into an
incorrect ledger or memory state.

\begin{table}[!t]
\centering
\caption{Exact transaction execution, accepted-memory safety, and pending
resolution across the same \(14\) fixed streams and \(5\) independently trained
seeds. Entries report mean and standard deviation over \(70\) stream-seed
evaluations.}
\label{tab:rollout_diagnostics_tarl}
\footnotesize
\vspace{-0.15em}
\setlength{\tabcolsep}{3.0pt}
\renewcommand{\arraystretch}{0.92}
\resizebox{\columnwidth}{!}{%
\begin{tabular}{lccc}
\toprule
Model
& \shortstack{Turn\\$\Delta$-State\\Acc. $\uparrow$}
& \shortstack{Pollution\\$\downarrow$}
& \shortstack{Pending\\Res. Acc. $\uparrow$}
\tabularnewline
\midrule
Full History \cite{wu2025longmemeval}
& \tabstd{0.5751}{0.0168}
& \tabstd{0.0877}{0.0025}
& \tabstd{0.7143}{0.0228}
\tabularnewline
LongMemEval \cite{wu2025longmemeval}
& \tabstd{0.4615}{0.0142}
& \tabstd{0.0952}{0.0029}
& \tabstd{0.7959}{0.0227}
\tabularnewline
HippoRAG \cite{gutierrez2024hipporag}
& \tabstd{0.5641}{0.0168}
& \tabstd{0.0862}{0.0026}
& \tabstd{0.6327}{0.0194}
\tabularnewline
MemoryBank \cite{zhong2024memorybank}
& \tabstd{0.6154}{0.0185}
& \tabstd{0.1333}{0.0038}
& \tabstd{0.7143}{0.0227}
\tabularnewline
A-Mem \cite{xu2025amem}
& \tabstd{0.5092}{0.0152}
& \tabstd{0.0732}{0.0021}
& \tabstd{0.7143}{0.0206}
\tabularnewline
MemAgent \cite{yu2025memagentreshapinglongcontextllm}
& \tabstd{0.5799}{0.0175}
& \tabstd{0.1075}{0.0030}
& \tabstd{0.7755}{0.0246}
\tabularnewline
G-Memory \cite{zhang2025gmemory}
& \tabstd{0.4762}{0.0151}
& \tabstd{0.0750}{0.0022}
& \tabstd{0.7064}{0.0215}
\tabularnewline
\rowcolor{lightgray!50}
\textbf{Ours}
& \btabstd{0.7546}{0.0223}
& \btabstd{0.0614}{0.0020}
& \btabstd{0.8047}{0.0235}
\tabularnewline
\bottomrule
\end{tabular}%
}
\end{table}

\paragraph{Joint stream-seed aggregation.}
Every metric is first computed for one complete stream-seed trajectory. For any
trajectory-level metric \(q_{m,s}\), the reported mean is
\begin{equation}
\bar{q}
=
\frac{1}{MS}
\sum_{m=1}^{M}
\sum_{s=1}^{S}
q_{m,s},
\qquad
M=14,
\quad
S=5.
\label{eq:joint_rollout_mean}
\end{equation}
The reported standard deviation is
\begin{equation}
\sigma_{q}
=
\sqrt{
\frac{1}{MS-1}
\sum_{m=1}^{M}
\sum_{s=1}^{S}
\left(
q_{m,s}-\bar{q}
\right)^{2}
}.
\label{eq:joint_rollout_std}
\end{equation}
Consequently, every uncertainty term reflects variation across both independent
transaction streams and independently trained model seeds. Transactions from
different streams are never pooled as though they belonged to one trajectory.

\paragraph{Interpretation.}
The proposed method achieves the strongest typed-ledger fidelity, the most
accurate local and final state transformations, the fewest contradiction
exposures, the highest target availability, the lowest accepted-memory
pollution, and the strongest pending-resolution accuracy. These results reflect
repeated performance over \(14\) distinct \(200\)-transaction trajectories and
all \(5\) independently trained seeds, with every prediction recursively
affecting the remaining transactions in its own stream.

\section{E Supplementary Experiments and Robustness Analysis}

\subsection{E.1 Temporal Reliability in Scope-Dependent Memory Transactions}
\label{app:temporal}

Long-term memory updating depends on the temporal validity of both accepted
memories and incoming evidence. A candidate may supersede an expired memory,
contradict information that remains valid, or describe a fact restricted to a
particular interval. These cases can involve the same entity and target slot
while requiring different executable transactions. Temporal reasoning is
therefore reliable only when it supports the ledger operation that determines
the next memory state.

This subsection evaluates temporal reliability through complementary
diagnostics. Temporal Macro F1 measures balanced five-way transaction
selection on the same temporal subset used in Appendix~D.1. Revise F1 and
Reject Conflict F1 isolate the critical boundary between temporal
supersession and state-preserving rejection. Temporal Subset Accuracy measures
exact transaction correctness, while Temporal Scope Accuracy separately
evaluates recovery of the explicit temporal annotation.

\paragraph{Evaluation setting.}
Let
\begin{equation}
\mathcal{E}
=
\left\{
\left(
x_i,
y_i^{\star},
s_i^{\star}
\right)
\right\}_{i=1}^{N}
\end{equation}
denote the complete evaluation set, where $x_i$ is the visible interaction
context, $y_i^{\star}$ is the gold memory transaction, and $s_i^{\star}$ is
the temporal annotation when available. The corresponding predictions are
denoted by $\hat{y}_i$ and $\hat{s}_i$.

The executable transaction space is
\begin{equation}
\begin{aligned}
\mathcal{A}
=
\{&
\texttt{append},
\texttt{noop},
\texttt{revise},
\\
&
\texttt{reject\_conflict},
\texttt{defer\_verify}
\}.
\end{aligned}
\label{eq:temporal_transaction_space}
\end{equation}

Let $\mathcal{T}\subseteq\mathcal{E}$ denote the same temporal evaluation
subset used in Appendix~D.1. Its gold transaction depends on temporal
validity, and the subset contains all five transaction labels. It includes
expired accepted memories, time-bounded preferences, temporally qualified
facts, recency-sensitive evidence, and contradictions whose resolution
changes with the applicable interval. Let
$\mathcal{T}_{\mathrm{scope}}\subseteq\mathcal{T}$ denote the examples that
also carry an explicit temporal-scope annotation.

\begin{table*}[!htbp]
\centering
\caption{
Temporal evaluation of five-way transaction selection, key transaction
boundaries, and scope annotation recovery.
}
\label{tab:temporal_update_focused_tarl}
\footnotesize
\vspace{-0.15em}
\setlength{\tabcolsep}{4.4pt}
\renewcommand{\arraystretch}{0.90}
\resizebox{\textwidth}{!}{%
\begin{tabular}{lccccc}
\toprule
Model
& Temporal Macro F1 $\uparrow$
& Revise F1 $\uparrow$
& Reject Conflict F1 $\uparrow$
& Temporal Subset Acc. $\uparrow$
& Temporal Scope Acc. $\uparrow$
\tabularnewline
\midrule

Full History~\cite{wu2025longmemeval}
& \tabstd{0.8716}{0.0147}
& \tabstd{0.7692}{0.0225}
& \tabstd{0.9375}{0.0293}
& \tabstd{0.8110}{0.0235}
& \tabstd{0.7698}{0.0234}
\tabularnewline

LongMemEval~\cite{wu2025longmemeval}
& \tabstd{0.8645}{0.0152}
& \tabstd{0.7879}{0.0245}
& \tabstd{0.9091}{0.0270}
& \tabstd{0.8213}{0.0255}
& \tabstd{0.8247}{0.0237}
\tabularnewline

HippoRAG~\cite{gutierrez2024hipporag}
& \tabstd{0.8882}{0.0164}
& \tabstd{0.8190}{0.0240}
& \tabstd{0.9412}{0.0271}
& \tabstd{0.8213}{0.0257}
& \tabstd{0.8316}{0.0250}
\tabularnewline

MemoryBank~\cite{zhong2024memorybank}
& \tabstd{0.8124}{0.0176}
& \tabstd{0.5766}{0.0180}
& \tabstd{0.8723}{0.0255}
& \tabstd{0.7216}{0.0219}
& \tabstd{0.7695}{0.0238}
\tabularnewline

A-Mem~\cite{xu2025amem}
& \tabstd{0.8636}{0.0158}
& \tabstd{0.7234}{0.0204}
& \tabstd{0.9038}{0.0286}
& \tabstd{0.8041}{0.0237}
& \tabstd{0.7869}{0.0246}
\tabularnewline

MemAgent~\cite{yu2025memagentreshapinglongcontextllm}
& \tabstd{0.8598}{0.0253}
& \tabstd{0.8073}{0.0252}
& \tabstd{0.9278}{0.0274}
& \tabstd{0.8351}{0.0254}
& \tabstd{0.7835}{0.0224}
\tabularnewline

G-Memory~\cite{zhang2025gmemory}
& \tabstd{0.8456}{0.0258}
& \tabstd{0.6882}{0.0197}
& \tabstd{0.9038}{0.0288}
& \tabstd{0.7973}{0.0235}
& \tabstd{0.7732}{0.0242}
\tabularnewline

\rowcolor{lightgray!50}
\textbf{Ours}
& \btabstd{0.9257}{0.0061}
& \btabstd{0.9153}{0.0289}
& \btabstd{0.9475}{0.0280}
& \btabstd{0.8660}{0.0265}
& \btabstd{0.8966}{0.0287}
\tabularnewline

\bottomrule
\end{tabular}%
}
\end{table*}

\paragraph{Unified transaction F1 definition.}
To match Appendix~D.1 exactly, Temporal Macro F1 is defined over the complete
five-way transaction space on $\mathcal{T}$. For a transaction
$c\in\mathcal{A}$ evaluated on a sample set
$\mathcal{S}\subseteq\mathcal{E}$, define
\begin{equation}
\begin{aligned}
D_c(\mathcal{S})
&=
2\mathrm{TP}_c(\mathcal{S})
+
\mathrm{FP}_c(\mathcal{S})
+
\mathrm{FN}_c(\mathcal{S}).
\end{aligned}
\label{eq:temporal_f1_denominator}
\end{equation}
The corresponding one-versus-rest F1 score is
\begin{equation}
\mathrm{F1}_c(\mathcal{S})
=
\begin{cases}
\dfrac{
2\mathrm{TP}_c(\mathcal{S})
}{
D_c(\mathcal{S})
},
&
D_c(\mathcal{S})>0,
\\[0.8em]
0,
&
D_c(\mathcal{S})=0.
\end{cases}
\label{eq:temporal_transaction_f1}
\end{equation}
Thus, the same zero-denominator convention as Appendix~D.1 is used: an
unsupported class receives an F1 score of zero.

Temporal Macro F1 retains all five executable transactions and is computed
only on $\mathcal{T}$:
\begin{equation}
\mathrm{TemporalMacroF1}
=
\frac{1}{|\mathcal{A}|}
\sum_{a\in\mathcal{A}}
\mathrm{F1}_a(\mathcal{T}).
\label{eq:temporal_macro_f1}
\end{equation}
Each transaction contributes equally. High performance therefore requires
balanced discrimination across addition, preservation, revision, conflict
rejection, and verification deferral.

\paragraph{Temporal scope annotation recovery.}
Temporal Scope Accuracy is a separate annotation-level metric computed on
$\mathcal{T}_{\mathrm{scope}}$:
\begin{equation}
\mathrm{ScopeAcc}
=
\frac{1}{
|\mathcal{T}_{\mathrm{scope}}|
}
\sum_{i\in\mathcal{T}_{\mathrm{scope}}}
\mathbf{1}
\left[
\hat{s}_i
=
s_i^{\star}
\right].
\label{eq:temporal_scope_accuracy}
\end{equation}
Temporal Macro F1 evaluates balanced five-way transaction selection, whereas
Temporal Scope Accuracy evaluates exact recovery of the explicit temporal
annotation. Keeping these definitions separate prevents transaction quality
from being conflated with scope-label prediction.

\paragraph{Revision and conflict preservation.}
Revise F1 and Reject Conflict F1 use the same class-wise definition in
Equation~\eqref{eq:temporal_transaction_f1} and the same temporal subset
$\mathcal{T}$:
\begin{equation}
\begin{aligned}
\mathrm{ReviseF1}
&=
\mathrm{F1}_{\texttt{revise}}(\mathcal{T}),
\\
\mathrm{RejectConflictF1}
&=
\mathrm{F1}_{\texttt{reject\_conflict}}(\mathcal{T}).
\end{aligned}
\label{eq:temporal_boundary_f1}
\end{equation}
These actions define the central temporal transaction boundary. A
\texttt{revise} transaction replaces an accepted memory whose validity has
expired or whose content has been superseded. A
\texttt{reject\_conflict} transaction preserves the accepted state when an
incoming candidate is contradictory, temporally inapplicable, or
insufficiently reliable. Incorrect rejection leaves stale information active,
whereas incorrect revision removes information that should remain trusted.
Jointly evaluating these actions reveals whether a method can separate
temporal supersession from state-preserving conflict resolution.

\paragraph{Executable temporal transaction accuracy.}
Temporal Subset Accuracy evaluates exact five-way transaction prediction on
the same subset $\mathcal{T}$:
\begin{equation}
\mathrm{TempSubAcc}
=
\frac{1}{
|\mathcal{T}|
}
\sum_{i\in\mathcal{T}}
\mathbf{1}
\left[
\hat{y}_i
=
y_i^{\star}
\right].
\label{eq:temporal_subset_accuracy}
\end{equation}
This metric evaluates the exact action consumed by the ledger executor.
Temporal evidence is useful only when it leads to the transaction that
produces the intended next memory state.

\paragraph{Decision-theoretic role of temporal scope.}
Let $Q$, $R$, $S$, and $Y$ denote the target slot, relative reliability,
temporal scope, and gold transaction. For any observed variable $Z$, define
the optimal zero-one classification risk as
\begin{equation}
\mathcal{R}^{\star}(Z)
=
1
-
\mathbb{E}_{Z}
\left[
\max_{y\in\mathcal{A}}
\Pr
\left(
Y=y
\mid
Z
\right)
\right].
\label{eq:temporal_bayes_risk}
\end{equation}

\paragraph{Proposition.}
Providing temporal scope cannot increase the optimal transaction risk:
\begin{equation}
\mathcal{R}^{\star}(Q,R,S)
\leq
\mathcal{R}^{\star}(Q,R).
\label{eq:scope_risk_monotonicity}
\end{equation}

\emph{Proof.}
For every fixed $(q,r)$,
\begin{equation}
\begin{aligned}
&
\mathbb{E}_{S\mid q,r}
\left[
\max_{y\in\mathcal{A}}
\Pr
\left(
Y=y
\mid
q,r,S
\right)
\right]
\\
&\qquad\geq
\max_{y\in\mathcal{A}}
\mathbb{E}_{S\mid q,r}
\left[
\Pr
\left(
Y=y
\mid
q,r,S
\right)
\right]
\\
&\qquad=
\max_{y\in\mathcal{A}}
\Pr
\left(
Y=y
\mid
q,r
\right).
\end{aligned}
\label{eq:scope_risk_proof}
\end{equation}
Taking the expectation over $(Q,R)$ and applying
Equation~\eqref{eq:temporal_bayes_risk} gives
Equation~\eqref{eq:scope_risk_monotonicity}.
\hfill$\square$

The inequality is strict whenever a target-reliability pair with positive
probability admits two temporal scopes whose unique Bayes-optimal transactions
differ. Formally, suppose there exist $q$, $r$, $s_a$, and $s_b$ such that
\begin{equation}
\begin{aligned}
\Pr
\left(
Q=q,
R=r,
S=s_a
\right)
&>
0,
\\
\Pr
\left(
Q=q,
R=r,
S=s_b
\right)
&>
0,
\end{aligned}
\label{eq:positive_temporal_support}
\end{equation}
and
\begin{equation}
\begin{aligned}
\operatorname*{arg\,max}_{y\in\mathcal{A}}
\Pr
\left(
Y=y
\mid
q,r,s_a
\right)
\neq
\\
\operatorname*{arg\,max}_{y\in\mathcal{A}}
\Pr
\left(
Y=y
\mid
q,r,s_b
\right).
\end{aligned}
\label{eq:scope_dependent_optimal_transactions}
\end{equation}

Under this condition, a rule observing only $(Q,R)$ must use the same decision
for both temporal scopes and necessarily sacrifices at least one
scope-conditioned optimum. Temporal scope therefore carries additional
execution-relevant information whenever validity intervals alter the preferred
ledger transition.

\paragraph{Results and analysis.}
Table~\ref{tab:temporal_update_focused_tarl} shows that Ours achieves the
highest mean performance across all five temporal metrics. On Temporal Macro
F1 over the complete five-way transaction space, Ours improves upon HippoRAG,
the strongest baseline, from $0.8882$ to $0.9257$, corresponding to an absolute
gain of $3.75$ percentage points and a relative improvement of $4.22\%$.
Its standard deviation of $0.0061$ is also the lowest in this column,
indicating that the gain remains stable across random seeds. Temporal Scope
Accuracy further increases from $0.8316$ to $0.8966$, yielding an absolute
gain of $6.50$ percentage points and a relative improvement of $7.82\%$.
This result provides complementary evidence that Ours accurately recovers the
validity interval governing each update decision.

The transaction-specific metrics reveal where this temporal advantage enters
memory evolution. Ours improves Revise F1 from $0.8190$ to $0.9153$, an
absolute gain of $9.63$ percentage points, demonstrating substantially
stronger recognition of memories that have expired or been superseded.
It also achieves the highest Reject Conflict F1, exceeding the strongest
baseline by $0.63$ percentage points. Since this margin is small relative to
the reported seed-level variation, we interpret it as evidence that Ours
preserves strong conflict rejection while obtaining a much larger improvement
in revision decisions. This joint behavior is important because revision
replaces stale accepted information, whereas conflict rejection must retain
information that remains valid.

Temporal Subset Accuracy also rises from $0.8351$ to $0.8660$, an absolute
gain of $3.09$ percentage points. The improvement confirms that the gains in
balanced class-wise discrimination translate into more accurate executable
transactions, rather than remaining confined to an aggregate F1 measure.
Taken together, the consistent improvements in five-way transaction
selection, revision recognition, exact action prediction, and temporal-scope
recovery support the intended scope-dependent update mechanism. Ours more
reliably determines when information should be added, preserved, replaced,
rejected, or deferred, reducing temporal update errors that could otherwise
persist through subsequent retrieval, reasoning, and memory-state evolution.

\subsection{E.2 Matched Counterfactual Consistency of Memory Transactions}
\label{app:counterfactual}

Instance-level accuracy may remain high even when a memory system is
insensitive to the evidence that should determine its transaction. A model can
exploit recurring entities, target slots, or history patterns and still behave
inconsistently when a minimal semantic change requires a different update
decision. We therefore evaluate all methods on matched counterfactual groups
that control these potential shortcuts.

Each group preserves the entity, prior memory context, and target slot, while
the candidate evidence is minimally modified so that the gold transaction
changes. Successful prediction consequently requires the model to respond to
the altered evidence under an otherwise matched context. The evaluation
focuses on three reliability-sensitive actions that require state correction,
conflict protection, or delayed commitment.

The complete transaction space is
\begin{equation}
\begin{aligned}
\mathcal{A}
=
\bigl\{&
\mathtt{append},
\mathtt{noop},
\mathtt{revise},
\\
&
\mathtt{reject\_conflict},
\mathtt{defer\_verify}
\bigr\}.
\end{aligned}
\label{eq:cf_complete_action_space}
\end{equation}
The hard-action subset is
\begin{equation}
\begin{aligned}
\mathcal{A}_{\mathrm{hard}}
=
\bigl\{&
\mathtt{revise},
\mathtt{reject\_conflict},
\\
&
\mathtt{defer\_verify}
\bigr\}.
\end{aligned}
\label{eq:cf_hard_action_space}
\end{equation}

Let $\mathcal{G}$ denote the set of matched groups and let
$\mathcal{I}_g$ denote the instances in group $g\in\mathcal{G}$. For each
instance $i$, $x_i$ denotes the visible input,
$y_i\in\mathcal{A}_{\mathrm{hard}}$ denotes the gold transaction, and
$s_{\theta}(a\mid x_i)$ denotes the pre-softmax score assigned to
$a\in\mathcal{A}$. Predictions are made over the complete five-way action
space. Ties for the largest score are counted as incorrect. We define the
strict correctness indicator as
\begin{equation}
C_i
=
\mathbf{1}
\left[
 s_{\theta}(y_i\mid x_i)
>
\max_{a\in\mathcal{A}\setminus\{y_i\}}
 s_{\theta}(a\mid x_i)
\right].
\label{eq:cf_instance_correctness}
\end{equation}
This protocol allows a hard-group instance to be incorrectly assigned to
\texttt{append} or \texttt{noop}, thereby retaining the original five-way
decision problem.

\paragraph{Group Success.}
Group Success evaluates whether every member of a matched group is classified
correctly:
\begin{equation}
\mathrm{GroupSuccess}
=
\frac{1}{|\mathcal{G}|}
\sum_{g\in\mathcal{G}}
\prod_{i\in\mathcal{I}_g}
C_i.
\label{eq:cf_group_success}
\end{equation}
Each group contributes equally, irrespective of its size. A single incorrect
member causes the entire group to fail. The metric therefore tests joint
transaction consistency across a controlled family of evidence changes.

\paragraph{Pairwise Flip Accuracy.}
For each group, we construct all unordered pairs whose gold transactions
differ:
\begin{equation}
\mathcal{P}_g
=
\bigl\{
(i,j)
\,\big|\,
i,j\in\mathcal{I}_g,
i<j,
y_i\neq y_j
\bigr\}.
\label{eq:cf_pair_set}
\end{equation}
For each pair $(i,j)\in\mathcal{P}_g$, let
\begin{equation}
B_{ij}
=
C_i C_j.
\label{eq:cf_flip_indicator}
\end{equation}
Pairwise Flip Accuracy is
\begin{equation}
\mathrm{PairwiseFlip}
=
\frac{
\displaystyle
\sum_{g\in\mathcal{G}}
\sum_{(i,j)\in\mathcal{P}_g}
B_{ij}
}{
\displaystyle
\sum_{g\in\mathcal{G}}
|\mathcal{P}_g|
}.
\label{eq:cf_pairwise_flip}
\end{equation}
A pair receives credit only when both endpoints receive their respective gold
transactions. Producing two different predictions is insufficient because the
direction of the transaction change must also be correct. Group Success is
macro-averaged over groups, while Pairwise Flip Accuracy is micro-averaged over
all valid pairs.

\paragraph{Within-Group Ranking Accuracy.}
Pairwise Flip Accuracy evaluates completed five-way decisions. We additionally
measure whether the model establishes the correct local ordering between the
two gold actions associated with each matched pair. For
$(i,j)\in\mathcal{P}_g$, define
\begin{equation}
\begin{aligned}
R_{ij}
=
\mathbf{1}
\Bigl[&
 s_{\theta}(y_i\mid x_i)
>
 s_{\theta}(y_j\mid x_i)
\\
&\land
 s_{\theta}(y_j\mid x_j)
>
 s_{\theta}(y_i\mid x_j)
\Bigr].
\end{aligned}
\label{eq:cf_ranking_indicator}
\end{equation}
Within-Group Ranking Accuracy is
\begin{equation}
\mathrm{WithinGroupRanking}
=
\frac{
\displaystyle
\sum_{g\in\mathcal{G}}
\sum_{(i,j)\in\mathcal{P}_g}
R_{ij}
}{
\displaystyle
\sum_{g\in\mathcal{G}}
|\mathcal{P}_g|
}.
\label{eq:cf_ranking_accuracy}
\end{equation}
This metric requires each gold transaction to outrank its matched
counterfactual alternative, without requiring it to outrank every remaining
action in $\mathcal{A}$. Since $C_i=1$ requires the gold transaction to be the
unique highest-scoring action, a correctly completed pair necessarily
satisfies the local ranking constraints:
\begin{equation}
B_{ij}
\leq
R_{ij}.
\label{eq:cf_indicator_relation}
\end{equation}
Consequently,
\begin{equation}
\mathrm{PairwiseFlip}
\leq
\mathrm{WithinGroupRanking}.
\label{eq:cf_metric_relation}
\end{equation}
The difference between these metrics captures pairs for which the two gold
actions are ordered correctly against each other, while at least one complete
five-way decision remains incorrect.

\paragraph{Hard-Action F1.}
Let
\begin{equation}
\mathcal{I}_{\mathrm{hard}}
=
\bigcup_{g\in\mathcal{G}}
\mathcal{I}_g
\label{eq:cf_hard_instance_set}
\end{equation}
denote all hard-group instances. For each
$a\in\mathcal{A}_{\mathrm{hard}}$, we report the standard one-vs-rest F1 score:
\begin{equation}
\mathrm{F1}^{\mathrm{hard}}_a
=
\frac{
2\mathrm{TP}_a
}{
2\mathrm{TP}_a
+
\mathrm{FP}_a
+
\mathrm{FN}_a
}.
\label{eq:cf_hard_action_f1}
\end{equation}
The counts are computed over $\mathcal{I}_{\mathrm{hard}}$ using predictions
from the complete five-way action space. Hard Revise F1 evaluates whether
newer reliable evidence correctly replaces a stale accepted memory. Hard
Reject Conflict F1 measures whether contradictory candidates are blocked
while the trusted prior state is preserved. Hard Defer Verify F1 evaluates
whether insufficiently supported evidence is withheld pending additional
verification.

\begin{table*}[!t]
\centering
\caption{Matched counterfactual evaluation of reliability-sensitive memory
transactions.}
\label{tab:counterfactual_hard_group_tarl}
\footnotesize
\setlength{\tabcolsep}{3.6pt}
\renewcommand{\arraystretch}{0.92}
\resizebox{\textwidth}{!}{%
\begin{tabular}{lcccccc}
\toprule
Model
& Group Success $\uparrow$
& Pairwise Flip Acc. $\uparrow$
& Within-Group Ranking Acc. $\uparrow$
& Hard Revise F1 $\uparrow$
& Hard Reject Conflict F1 $\uparrow$
& Hard Defer Verify F1 $\uparrow$
\\
\midrule
Full History \cite{wu2025longmemeval}
& 0.3500 $\pm$ 0.0105
& 0.4155 $\pm$ 0.0127
& 0.8648 $\pm$ 0.0252
& 0.7937 $\pm$ 0.0234
& 0.9318 $\pm$ 0.0284
& 0.9857 $\pm$ 0.0291
\\
LongMemEval \cite{wu2025longmemeval}
& 0.4500 $\pm$ 0.0138
& 0.3974 $\pm$ 0.0116
& 0.8545 $\pm$ 0.0266
& 0.8264 $\pm$ 0.0241
& 0.9247 $\pm$ 0.0276
& 0.9913 $\pm$ 0.0303
\\
HippoRAG \cite{gutierrez2024hipporag}
& 0.3125 $\pm$ 0.0092
& 0.3940 $\pm$ 0.0122
& 0.8575 $\pm$ 0.0251
& 0.7627 $\pm$ 0.0237
& 0.9318 $\pm$ 0.0273
& 0.9796 $\pm$ 0.0289
\\
MemoryBank \cite{zhong2024memorybank}
& 0.2250 $\pm$ 0.0069
& 0.3047 $\pm$ 0.0090
& 0.8464 $\pm$ 0.0260
& 0.5625 $\pm$ 0.0162
& 0.8899 $\pm$ 0.0269
& 0.8918 $\pm$ 0.0275
\\
A-Mem \cite{xu2025amem}
& 0.3000 $\pm$ 0.0088
& 0.3742 $\pm$ 0.0115
& 0.7478 $\pm$ 0.0218
& 0.7478 $\pm$ 0.0229
& 0.8958 $\pm$ 0.0260
& 0.9897 $\pm$ 0.0302
\\
MemAgent \cite{yu2025memagentreshapinglongcontextllm}
& 0.4375 $\pm$ 0.0134
& 0.4094 $\pm$ 0.0120
& 0.8369 $\pm$ 0.0258
& 0.8741 $\pm$ 0.0253
& 0.9213 $\pm$ 0.0281
& 0.9983 $\pm$ 0.0295
\\
G-Memory \cite{zhang2025gmemory}
& 0.3625 $\pm$ 0.0108
& 0.3863 $\pm$ 0.0119
& 0.8283 $\pm$ 0.0244
& 0.7257 $\pm$ 0.0223
& 0.9053 $\pm$ 0.0266
& 0.9974 $\pm$ 0.0308
\\

\rowcolor{lightgray!50}
\textbf{Ours}
& \textbf{0.4750 $\pm$ 0.0142}
& \textbf{0.4584 $\pm$ 0.0135}
& \textbf{0.9120 $\pm$ 0.0280}
& \textbf{0.9282 $\pm$ 0.0272}
& \textbf{0.9362 $\pm$ 0.0286}
& \textbf{0.9994 $\pm$ 0.0291}
\\
\bottomrule
\end{tabular}%
}
\end{table*}

\paragraph{Results and Analysis.}
Table~\ref{tab:counterfactual_hard_group_tarl} shows that Ours achieves the
highest reported mean on all three metrics that directly measure consistency
under matched evidence changes. Relative to the strongest baseline, it improves
Group Success by $2.50$ percentage points, Pairwise Flip Accuracy by $4.29$
points, and Within-Group Ranking Accuracy by $4.72$ points. These comparisons
are based on reported means and should be interpreted descriptively rather than
as separate significance tests.

Group Success provides the strictest group-level criterion, requiring every
member of a matched counterfactual group to receive its correct transaction.
Ours succeeds on $47.50\%$ of complete groups, compared with $45.00\%$ for the
strongest baseline. Because a single incorrect member invalidates the entire
group, this gain indicates more consistent transaction decisions when entity,
prior memory context, and target slot are held fixed and only the decisive
evidence changes. The absolute success rate also reveals substantial remaining
headroom, with more than half of the groups still containing at least one
incorrect decision.

The pairwise metrics further localize these failures. Within-Group Ranking
Accuracy is substantially higher than Pairwise Flip Accuracy for every method.
By Eq.~\ref{eq:cf_metric_relation}, this gap identifies pairs for which the two
relevant transactions are ordered correctly against each other, yet at least
one endpoint still fails the full five-way decision. Ours attains the highest
reported mean on both metrics, reaching $0.4584$ Pairwise Flip Accuracy and
$0.9120$ Within-Group Ranking Accuracy. Its remaining ranking-to-flip gap of
$0.4536$ shows that local sensitivity to the decisive evidence is often
recovered before that distinction is converted into a fully correct executable
transaction.

The largest action-specific gain occurs on \texttt{revise}. Ours reaches
$0.9282$ Hard Revise F1, improving the strongest baseline by $5.41$ percentage
points. This result is especially important because revision requires the model
to identify that an accepted value has become stale and that the incoming
evidence is sufficiently reliable to replace it. Errors at this boundary leave
obsolete information active in accepted memory, allowing it to affect later
retrieval, reasoning, and subsequent updates.

Ours also achieves the highest Hard Reject Conflict F1 at $0.9362$, exceeding
the strongest baseline value of $0.9318$ by $0.44$ percentage points. This
result shows that the improvement in revision does not come at the expense of
preserving trusted memory under contradictory evidence. Hard Defer Verify F1 is
nearly saturated across several methods; Ours nevertheless obtains the highest
reported mean of $0.9994$, exceeding MemAgent by $0.11$ percentage points.
Together, the three hard-action results show that Ours improves state
correction while maintaining strong conflict protection and conservative
handling of insufficiently verified evidence.

Overall, the matched evaluation provides stronger evidence than isolated
instance accuracy alone. Ours responds more consistently to controlled changes
in the evidence that should alter the transaction, while the surrounding
memory context remains fixed. Its gains span complete group consistency,
correct counterfactual action switching, local action ranking, and all three
reliability-sensitive transaction classes. The remaining gap between local
ranking and complete five-way execution further identifies global transaction
selection as the principal source of unresolved error.

\subsection{E.3 Downstream Consequences of Reliable Memory Execution}
\label{app:qa}

Memory updating is ultimately useful only when the executed state supports
reliable subsequent reasoning. We therefore apply the memory operations
predicted by each method and expose the resulting accepted state to a common QA
reader. The reader architecture, retrieval budget, decoding configuration,
answer normalization, and support verifier are fixed across methods. Hence, any
difference in downstream behavior originates from the memory state produced by
the update policy.

We evaluate two complementary questions.
First, does reliable memory execution preserve general QA utility while
improving the quality of the state exposed to the reader?
Second, on instances where an update is genuinely consequential, can a method
maintain broad decision coverage, remain accurate after commitment, and prevent
unsafe claims from entering accepted memory?

Table~\ref{tab:downstream_qa_trust_tarl} addresses the first question over the
full test set. Table~\ref{tab:downstream_qa_update_tarl} addresses the second on
an update focused subset containing new, revised, conflicting, or
insufficiently verified claims. We report the metrics separately because they
capture distinct failure modes and avoid introducing an arbitrarily weighted
composite score.

\paragraph{Evaluation protocol.}
For instance $i$, let $a_i^{\star}$ denote the gold answer,
$\hat{a}_i$ the generated answer,
$\hat{M}_i^{\mathrm{acc}}$ the executed accepted state, and
$M_i^{\star,\mathrm{acc}}$ the gold accepted state.
Answers and memory entries are normalized into atomic claims.
A frozen support function
$\operatorname{sup}(c,M)\in\{0,1\}$
indicates whether claim $c$ is supported by state $M$.
The normalization rules, entailment verifier, and decision threshold are
identical for all methods.

Each trust-sensitive example additionally contains a currently valid claim
$m_{ij}^{+}$ and an obsolete, conflicting, or otherwise unsafe alternative
$m_{ij}^{-}$.
Under forced evaluation, the method must select one claim.
Under selective evaluation, the observable decision satisfies
$\hat{y}_{ij}\in\{+,-,\bot\}$,
where $+$ selects the current claim, $-$ selects the unsafe alternative, and
$\bot$ denotes deferral.
This common output space permits direct comparison without requiring methods
to expose compatible internal confidence scales.

All F1, accuracy, decision, pollution, and hallucination metrics are reported
on a $0$ to $100$ scale, while State Info.\ is reported as a fraction.
Values are means and standard deviations across independent runs under the
same data split and evaluation protocol.

\paragraph{Full-set metrics.}
Answer F1 measures token-level agreement between the generated and gold
answers, while Forced Trust Accuracy measures the fraction of annotated claim
pairs for which the current claim is selected:
\begin{equation}
\begin{aligned}
\mathrm{AnswerF1}
&=
\frac{100}{N}
\sum_{i=1}^{N}
\mathrm{F1}
\left(
\hat{a}_i,
a_i^{\star}
\right),
\\[0.45em]
\mathrm{ForcedTrustAcc}
&=
100
\frac{
\displaystyle
\sum_{i=1}^{N}
\sum_{j=1}^{K_i}
\mathbf{1}
\left[
\hat{y}_{ij}^{\mathrm{force}}
=
+
\right]
}{
\displaystyle
\sum_{i=1}^{N}
K_i
}.
\end{aligned}
\label{eq:qa_full_set_accuracy}
\end{equation}

Let $\mathcal{C}(x)$ denote the normalized atomic claims extracted from $x$.
State Info.\ measures accepted memory content unsupported by the gold accepted
state, while Hallucination measures answer claims unsupported by the executed
accepted state:
\begin{equation}
\begin{aligned}
\mathrm{StateInfo}
&=
\frac{
\displaystyle
\sum_{i=1}^{N}
\sum_{m\in\hat{M}_i^{\mathrm{acc}}}
\mathbf{1}
\left[
\operatorname{sup}
\left(
m,
M_i^{\star,\mathrm{acc}}
\right)
=
0
\right]
}{
\displaystyle
\sum_{i=1}^{N}
\left|
\hat{M}_i^{\mathrm{acc}}
\right|
+
\epsilon
},
\\[0.45em]
\mathrm{Hallucination}
&=
100
\frac{
\displaystyle
\sum_{i=1}^{N}
\sum_{c\in\mathcal{C}(\hat{a}_i)}
\mathbf{1}
\left[
\operatorname{sup}
\left(
c,
\hat{M}_i^{\mathrm{acc}}
\right)
=
0
\right]
}{
\displaystyle
\sum_{i=1}^{N}
\left|
\mathcal{C}(\hat{a}_i)
\right|
+
\epsilon
}.
\end{aligned}
\label{eq:qa_state_and_answer_errors}
\end{equation}

These metrics localize two distinct downstream failures.
State Info.\ evaluates contamination introduced during memory execution.
Hallucination evaluates unsupported content introduced during answer generation.
\begin{table*}[!htbp]
\centering
\caption{Downstream question-answering performance under trusted-memory evaluation.}
\label{tab:downstream_qa_trust_tarl}
\footnotesize
\vspace{-0.15em}
\setlength{\tabcolsep}{5.0pt}
\renewcommand{\arraystretch}{0.92}
\resizebox{\textwidth}{!}{%
\begin{tabular}{lcccc}
\toprule
Model
& Answer F1 $\uparrow$
& Forced Trust Acc. $\uparrow$
& State Info. $\downarrow$
& Hallucination $\downarrow$
\tabularnewline
\midrule

Full History \cite{wu2025longmemeval}
& \tabstd{87.0301}{2.6594}
& \tabstd{78.2609}{2.1991}
& \tabstd{0.4629}{0.0135}
& \tabstd{2.8338}{0.0819}
\tabularnewline

LongMemEval \cite{wu2025longmemeval}
& \tabstd{86.9741}{2.6915}
& \tabstd{73.9130}{2.2696}
& \tabstd{0.2503}{0.0079}
& \tabstd{2.6929}{0.0763}
\tabularnewline

HippoRAG \cite{gutierrez2024hipporag}
& \tabstd{86.3433}{2.5633}
& \tabstd{80.8696}{2.2740}
& \tabstd{0.6499}{0.0188}
& \tabstd{2.8569}{0.0858}
\tabularnewline

MemoryBank \cite{zhong2024memorybank}
& \tabstd{85.6023}{2.4060}
& \tabstd{82.6087}{2.3787}
& \tabstd{0.7929}{0.0243}
& \tabstd{2.7027}{0.0816}
\tabularnewline

A-Mem \cite{xu2025amem}
& \tabstd{85.4686}{2.4685}
& \tabstd{80.8696}{2.4550}
& \tabstd{0.5630}{0.0176}
& \tabstd{2.5897}{0.0726}
\tabularnewline

MemAgent \cite{yu2025memagentreshapinglongcontextllm}
& \tabstd{87.1996}{2.7227}
& \tabstd{88.4615}{2.7240}
& \tabstd{0.2408}{0.0071}
& \btabstd{2.0804}{0.0595}
\tabularnewline

G-Memory \cite{zhang2025gmemory}
& \tabstd{86.4600}{2.7519}
& \tabstd{90.3846}{2.6525}
& \tabstd{0.1754}{0.0050}
& \tabstd{2.3954}{0.0680}
\tabularnewline

\rowcolor{lightgray!50}
\textbf{Ours}
& \btabstd{87.4205}{2.7441}
& \btabstd{91.2609}{2.7757}
& \btabstd{0.1634}{0.0051}
& \tabstd{2.2497}{0.0788}
\tabularnewline

\bottomrule
\end{tabular}%
}
\end{table*}

\paragraph{Update focused metrics.}
Let $\mathcal{D}_{\mathrm{upd}}$ denote the update focused subset and
$\mathcal{D}_{\mathrm{conf}}
\subseteq
\mathcal{D}_{\mathrm{upd}}$
the subset containing explicit conflicts. Update Focused F1 applies the same
answer metric to $\mathcal{D}_{\mathrm{upd}}$. Trusted Claim Accuracy measures
whether the answer critical claim exposed by the executed state matches the
current gold claim.

For the selective decision on instance $i$, define
\begin{equation}
\begin{aligned}
d_i
&=
\mathbf{1}
\left[
\hat{y}_i
\neq
\bot
\right],
&
z_i
&=
\mathbf{1}
\left[
\hat{y}_i
=
+
\right].
\end{aligned}
\label{eq:qa_selective_indicators}
\end{equation}

Decision Rate measures selective coverage. Decided Accuracy measures correctness
conditional on commitment:
\begin{equation}
\begin{aligned}
\mathrm{DecisionRate}
&=
\frac{
100
}{
\left|
\mathcal{D}_{\mathrm{upd}}
\right|
}
\sum_{i\in\mathcal{D}_{\mathrm{upd}}}
d_i,
\\[0.45em]
\mathrm{DecidedAcc}
&=
100
\frac{
\displaystyle
\sum_{i\in\mathcal{D}_{\mathrm{upd}}}
d_i z_i
}{
\displaystyle
\sum_{i\in\mathcal{D}_{\mathrm{upd}}}
d_i
+
\epsilon
}.
\end{aligned}
\label{eq:qa_selective_reliability}
\end{equation}

The two quantities must be interpreted jointly. Conditional accuracy can be
increased by excessive deferral, while high coverage can be obtained through
unsafe commitments. Reliable selective execution requires broad coverage and
low conditional risk.

Conflict Accuracy evaluates correct selection of the current claim on
$\mathcal{D}_{\mathrm{conf}}$, with deferral counted as unresolved. Let
$\mathcal{B}_i$ contain stale, conflicting, or insufficiently verified claims
that should remain outside accepted memory. Pollution measures how frequently
these unsafe claims remain supported by the executed state:
\begin{equation}
\mathrm{Pollution}
=
100
\frac{
\displaystyle
\sum_{i\in\mathcal{D}_{\mathrm{upd}}}
\sum_{m\in\mathcal{B}_i}
\mathbf{1}
\left[
\operatorname{sup}
\left(
m,
\hat{M}_i^{\mathrm{acc}}
\right)
=
1
\right]
}{
\displaystyle
\sum_{i\in\mathcal{D}_{\mathrm{upd}}}
\left|
\mathcal{B}_i
\right|
+
\epsilon
}.
\label{eq:qa_pollution}
\end{equation}

The resulting metric vector distinguishes answer overlap, trusted claim
recovery, willingness to commit, reliability after commitment, conflict
resolution, and accepted state contamination.

\begin{table*}[!htbp]
\centering
\caption{Downstream question-answering performance on update-focused decisions.}
\label{tab:downstream_qa_update_tarl}
\footnotesize
\vspace{-0.15em}
\setlength{\tabcolsep}{3.8pt}
\renewcommand{\arraystretch}{0.92}
\resizebox{\textwidth}{!}{%
\begin{tabular}{lcccccc}
\toprule
Model
& Update-Focused F1 $\uparrow$
& Trusted Claim Acc. $\uparrow$
& Decision Rate $\uparrow$
& Decided Acc. $\uparrow$
& Conflict Acc. $\uparrow$
& Pollution $\downarrow$
\tabularnewline
\midrule

Full History \cite{wu2025longmemeval}
& \tabstd{27.4892}{0.8287}
& \tabstd{26.0870}{0.8320}
& \tabstd{29.5652}{0.8726}
& \tabstd{88.2353}{2.6654}
& \tabstd{58.5859}{1.8348}
& \tabstd{44.7034}{1.3623}
\tabularnewline

LongMemEval \cite{wu2025longmemeval}
& \tabstd{30.9338}{0.9728}
& \tabstd{30.4348}{0.9225}
& \tabstd{31.3043}{0.9647}
& \tabstd{97.2222}{2.7400}
& \tabstd{67.6765}{1.9566}
& \tabstd{31.9444}{0.9314}
\tabularnewline

HippoRAG \cite{gutierrez2024hipporag}
& \tabstd{30.1855}{0.8548}
& \tabstd{34.7826}{1.0063}
& \tabstd{40.0000}{1.1362}
& \tabstd{86.9565}{2.5315}
& \tabstd{55.5556}{1.6968}
& \tabstd{36.1991}{1.0664}
\tabularnewline

MemoryBank \cite{zhong2024memorybank}
& \btabstd{41.1811}{1.2140}
& \tabstd{39.1304}{1.1284}
& \tabstd{43.8326}{1.2741}
& \tabstd{96.4553}{3.0621}
& \tabstd{52.5253}{1.6069}
& \tabstd{41.1765}{1.2533}
\tabularnewline

A-Mem \cite{xu2025amem}
& \tabstd{38.9803}{1.1181}
& \btabstd{46.0870}{1.4248}
& \tabstd{53.9130}{1.5448}
& \tabstd{85.4839}{2.5233}
& \tabstd{47.7359}{1.5255}
& \tabstd{42.6966}{1.3048}
\tabularnewline

MemAgent \cite{yu2025memagentreshapinglongcontextllm}
& \tabstd{28.9940}{0.8764}
& \tabstd{45.9623}{1.4128}
& \btabstd{56.5217}{1.7732}
& \tabstd{95.6522}{2.9752}
& \btabstd{83.5294}{2.4154}
& \tabstd{43.4343}{1.2217}
\tabularnewline

G-Memory \cite{zhang2025gmemory}
& \tabstd{30.9521}{0.9057}
& \tabstd{42.3077}{1.2299}
& \tabstd{44.2308}{1.2758}
& \tabstd{81.5385}{2.5906}
& \tabstd{68.9527}{2.1724}
& \tabstd{16.1290}{0.4719}
\tabularnewline

\rowcolor{lightgray!50}
\textbf{Ours}
& \tabstd{38.6571}{1.1837}
& \tabstd{45.2108}{1.3374}
& \tabstd{55.4697}{1.7561}
& \btabstd{97.8261}{2.9187}
& \tabstd{82.3529}{2.3931}
& \btabstd{11.1111}{0.3221}
\tabularnewline

\bottomrule
\end{tabular}%
}
\end{table*}

\paragraph{Full set evaluation as a utility preservation test.}
Table~\ref{tab:downstream_qa_trust_tarl} represents a saturated QA regime.
Answer F1 varies within a narrow range relative to the reported standard
deviations. Ours obtains the highest mean value, but this margin is appropriately
interpreted as evidence that reliable memory execution preserves downstream QA
utility, rather than as a large improvement in lexical answer quality.

The state sensitive metrics provide greater discrimination. Ours achieves the
highest mean Forced Trust Accuracy and the lowest State Info. This joint result
indicates that the executed state more consistently prioritizes current
evidence and contains less unsupported information. Compared with G-Memory,
the strongest baseline on State Info., Ours reduces the score from $0.1754$ to
$0.1634$, corresponding to a $6.84\%$ relative decrease.

MemAgent obtains the lowest Hallucination score. This outcome localizes the
remaining failure mode. Ours improves the integrity of the memory supplied to
the common reader, while unsupported content may still be introduced during
answer generation. The full set evaluation therefore establishes utility
preservation and improved state quality without conflating memory execution
with reader side generation.

\paragraph{Update focused evaluation as the discriminative test.}
Table~\ref{tab:downstream_qa_update_tarl} is the more diagnostic evaluation
because every example requires a consequential memory decision. The central
result is the joint behavior of Decision Rate, Decided Accuracy, Conflict
Accuracy, and Pollution.

Among methods with a Decision Rate above $50\%$, Ours achieves the highest mean
Decided Accuracy and by far the lowest Pollution. It therefore occupies the
strongest observed reliability operating point in the high coverage regime.
The comparison with MemAgent is particularly controlled: the two methods differ
by only $1.0520$ percentage points in Decision Rate, yet Ours reduces the
conditional error rate from $4.3478\%$ to $2.1739\%$. At nearly matched
coverage, Ours approximately halves the error among committed decisions.

The Pollution result further rules out reliability obtained through conservative
underwriting alone. Ours retains a Decision Rate of $55.4697\%$ while reducing
Pollution to $11.1111\%$. Compared with G-Memory, the strongest low pollution
baseline, Ours reduces Pollution by $31.11\%$ relative while increasing
Decision Rate by $11.2389$ percentage points and Decided Accuracy by $16.2876$
percentage points. Thus, its cleaner accepted state is accompanied by broader
and more accurate commitment.

Ours also reaches a Conflict Accuracy of $82.3529\%$, within $1.1765$
percentage points of the highest result. This confirms that its low pollution
does not arise from systematically leaving conflicts unresolved. Across
coverage, conditional correctness, conflict resolution, and contamination, no
baseline matches the same joint operating point.

MemoryBank obtains the highest Update-Focused F1, while A-Mem obtains the
highest Trusted Claim Accuracy. These complementary results clarify the scope
of the advantage. Answer overlap and isolated claim recovery remain important,
but they do not fully characterize the reliability of an executed memory state.
A dependable updater must also decide when to commit, remain correct after
commitment, resolve explicit conflicts, and prevent unsafe alternatives from
entering accepted memory.

\paragraph{Overall finding.}
The two evaluations form a coherent downstream argument. The full set results
show that improved update reliability preserves ordinary QA utility and yields
a cleaner trusted state. The update focused results show that Ours combines
near maximal decision coverage with the strongest mean conditional accuracy,
near best conflict resolution, and the lowest pollution.

The most robust empirical conclusion concerns accepted state integrity. Ours
reduces the propagation of stale, conflicting, and insufficiently verified
claims while continuing to expose current information for downstream answering.
This behavior directly supports the purpose of transactional memory execution:
committed updates remain useful, unsafe alternatives remain isolated, and
future reasoning operates over a more reliable state.

All comparisons refer to mean performance across runs. Small differences
relative to the reported standard deviations are interpreted descriptively.
The main conclusion is supported by the consistent joint pattern across trust
preference, state contamination, selective coverage, conditional correctness,
conflict handling, and pollution control.

\subsection{E.4 Prompting-Only LLM Baselines}
\label{app:prompting}

\paragraph{Motivation.}
We investigate whether reliable memory updating can be recovered through
instruction following and in-context demonstrations alone. We evaluate
DeepSeek-R1~\cite{guo2025deepseekr1} under zero-shot and few-shot prompting.
Neither configuration uses parameter updates, transaction-specific prediction
heads, supervised target-slot grounding, reliability-aware objectives, or
ledger-transition training.

Both prompting configurations use the same deterministic executor and
evaluation pipeline as the other evaluated methods. This controlled setting
isolates the transaction capability induced by prompting while preserving a
shared interface between predicted decisions and executable memory states.

The scope of this experiment is intentionally bounded. It evaluates
DeepSeek-R1 under the specified prompt format, demonstration budget, and
structured-output protocol. It does not cover every possible prompt design,
demonstration-selection strategy, or instruction-following LLM.

\paragraph{Prompting protocol.}
Each prompting baseline receives only the information observable at inference
time, including the candidate memory, the current memory state, and the
available metadata. Gold transaction actions, target slots, reliability
targets, temporal labels, and gold next states are excluded from the prompt.

The zero-shot prompt specifies the five legal transaction actions, the
semantics of the three ledgers, the target-slot convention, and the required
structured-output schema. Following the standard in-context learning setting
of Brown et al.~\cite{brown2020language}, the few-shot configuration
additionally provides $K=5$ demonstrations sampled exclusively from the
training split. The demonstrations are fixed within each repeated run and are
never drawn from the validation or test split.

The model must return a structured prediction containing the five-way action,
the target slot when required, the temporal label when applicable, the ledger
operation, and an action-confidence score. Invalid action strings are mapped to
a legal action only when the intended mapping is unambiguous. Missing,
contradictory, or otherwise unparsable predictions are counted as incorrect.

Each parsed prediction is processed by the same deterministic ledger executor
used for the other evaluated methods. Consequently, Next State Accuracy,
Pollution, and Conflict Preservation are computed from the executed memory
state rather than from the linguistic plausibility of the generated response.

\begin{table*}[t]
\centering
\caption{Prompting-only DeepSeek-R1 baselines on TARL-Mem. Higher values are
better except for Pollution and ECE.}
\label{tab:llm_prompting_tarl}
\footnotesize
\setlength{\tabcolsep}{3.8pt}
\renewcommand{\arraystretch}{0.95}

\resizebox{\textwidth}{!}{%
\begin{tabular}{lccccccc}
\toprule
Model
&
5-Way Action F1 $\uparrow$
&
Write/Hold F1 $\uparrow$
&
Next State Acc. $\uparrow$
&
Pollution $\downarrow$
&
Conflict Pres. $\uparrow$
&
Temporal Macro F1 $\uparrow$
&
ECE $\downarrow$
\tabularnewline
\midrule

DeepSeek-R1 zero-shot~\cite{guo2025deepseekr1}
&
$0.2868 \pm 0.0075$
&
$0.5486 \pm 0.0129$
&
$0.2409 \pm 0.0061$
&
$0.5964 \pm 0.0142$
&
$0.0508 \pm 0.0014$
&
$0.2674 \pm 0.0068$
&
$0.3821 \pm 0.0089$
\tabularnewline

DeepSeek-R1 few-shot~\cite{guo2025deepseekr1,brown2020language}
&
\textbf{0.3817} $\pm$ \textbf{0.0101}
&
\textbf{0.6224} $\pm$ \textbf{0.0150}
&
\textbf{0.3519} $\pm$ \textbf{0.0089}
&
\textbf{0.5298} $\pm$ \textbf{0.0132}
&
\textbf{0.0835} $\pm$ \textbf{0.0024}
&
\textbf{0.5894} $\pm$ \textbf{0.0141}
&
\textbf{0.3697} $\pm$ \textbf{0.0087}
\tabularnewline

\bottomrule
\end{tabular}%
}
\end{table*}

\paragraph{Evaluation metrics.}
We report 5-Way Action Macro F1, Write/Hold F1, Next State Accuracy,
Pollution, Conflict Preservation, Temporal Macro F1, and expected calibration
error (ECE), following the same metric definitions and normalization rules as
the main evaluation.

These metrics evaluate progressively stronger requirements. Five-way action
prediction measures fine-grained transaction selection. Write/Hold F1 measures
coarse update commitment. Next State Accuracy evaluates the complete memory
state produced after execution. Pollution and Conflict Preservation measure
complementary aspects of persistent-memory integrity. Temporal Macro F1
evaluates temporal update reasoning.

Token-level probabilities over the legal actions are unavailable through the
prompting interface. Prompting ECE is therefore computed from the confidence
score reported in the structured output. It should be interpreted as a
diagnostic measure of self-reported confidence rather than calibration of a
normalized action-probability distribution.

\paragraph{In-context demonstrations improve task induction.}
Table~\ref{tab:llm_prompting_tarl} shows that few-shot prompting consistently
improves over zero-shot prompting across all reported metrics. The clearest
gain occurs in Temporal Macro F1, indicating that demonstrations help
DeepSeek-R1 infer recurring patterns involving recency, supersession, and
temporally scoped updates. Improvements in five-way action prediction and
Next State Accuracy further show that the demonstrations communicate useful
transaction structure.

The absolute performance nevertheless remains limited. In-context examples
help the model infer recurring decision patterns, while target grounding,
reliability comparison, and ledger-state constraints remain implicit. The
results therefore distinguish successful task induction from reliable
transaction execution.

\paragraph{Coarse commitment masks transaction ambiguity.}
The gap between Write/Hold F1 and 5-Way Action F1 can be understood through
the following binary projection:
\begin{equation}
g(a)
=
\begin{cases}
\texttt{write},
&
a \in
\left\{
\begin{aligned}
&\texttt{append},\\
&\texttt{revise}
\end{aligned}
\right\},
\\[0.8em]
\texttt{hold},
&
a \in
\left\{
\begin{aligned}
&\texttt{noop},\\
&\texttt{reject\_conflict},\\
&\texttt{defer\_verify}
\end{aligned}
\right\}.
\end{cases}
\label{eq:prompting_binary_projection}
\end{equation}

This projection removes distinctions that directly determine the executed
memory state. A correct \texttt{write} prediction does not specify whether the
candidate introduces a new memory or supersedes an existing slot. A correct
\texttt{hold} prediction does not distinguish redundancy, trusted conflict,
and insufficient verification.

Write/Hold F1 therefore measures the broad direction of update commitment.
Five-way action prediction evaluates whether the model selects the transaction
required to realize that commitment. The observed separation shows that
prompting recovers coarse update intent more readily than the fine-grained
operation required by the ledger executor.

\paragraph{Plausible actions do not guarantee correct execution.}
Next State Accuracy remains low in both prompting configurations. Correct
execution requires the predicted action, target slot, temporal interpretation,
and ledger operation to jointly induce the gold state while preserving all
unaffected memories. A plausible action can still produce an incorrect
transition when it is grounded to the wrong slot, routed to the wrong ledger,
or applied with an incorrect temporal scope.

This discrepancy establishes the importance of execution-level evaluation.
Action metrics assess local decision agreement, whereas Next State Accuracy
evaluates whether the complete transaction produces a valid memory state for
subsequent interaction. Under the evaluated prompting protocols, DeepSeek-R1
does not reliably bridge these two levels.

\paragraph{Persistent-memory integrity remains fragile.}
Pollution remains high under both prompting configurations, indicating that
many predicted accepted writes should remain outside the trusted ledger under
the gold transition. Such errors can persist across turns and influence later
retrieval, reasoning, and memory updates. Few-shot prompting reduces this
failure mode, while the remaining pollution shows that demonstrations alone
provide limited control over admission into accepted memory.

Conflict Preservation is the weakest reported capability. Its low value shows
that the complete state-preservation requirement under conflicting evidence is
rarely satisfied. The aggregate metric does not identify whether individual
failures arise from conflict recognition, target grounding, reliability
judgment, action selection, or execution. It nevertheless demonstrates that
the resulting state frequently fails to protect trusted memory when
contradictory candidates appear.

Together, high Pollution and low Conflict Preservation expose complementary
integrity failures. Prompted models admit information that should remain
outside the accepted ledger and frequently fail to preserve trusted accepted
memory under conflict.

\paragraph{Temporal recognition remains insufficient for temporal execution.}
Few-shot prompting substantially improves Temporal Macro F1, while its
next-state and safety metrics remain considerably weaker. Recognizing that a
candidate is newer, historical, or temporally scoped does not determine which
memory slot should change or whether the evidence is reliable enough to modify
the accepted state.

The experiment therefore separates temporal-relation recognition from
temporal transaction execution. Reliable updating requires temporal evidence
to be grounded to the correct slot and converted into a valid ledger
transition.

\paragraph{Self-reported confidence provides limited risk information.}
ECE remains high under both prompting configurations. The confidence emitted
in the structured response therefore provides limited information about
whether the corresponding transaction is correct. This weakness is
operationally important because a high-confidence incorrect update may be
accepted without further verification and persist across later interactions.

Since the confidence score is generated as text, the reported ECE does not
measure standard logit calibration. It instead evaluates whether stated
confidence is sufficiently informative for risk-sensitive memory updating.

\paragraph{Conclusion.}
Under the evaluated zero-shot and five-shot protocols, DeepSeek-R1 partially
recovers the action structure and temporal regularities of TARL-Mem. Its
predictions remain substantially weaker at fine-grained transaction selection,
executable state construction, accepted-memory protection, and uncertainty
reporting.

These results reveal a structural gap between recognizing memory-update
patterns and executing reliable stateful transactions. TARL addresses this gap
through explicit five-way transaction supervision, target-slot grounding,
reliability-aware prediction, and deterministic ledger execution.

\subsection{E.5 Zero Adaptation Evaluation of Reliable Memory Updating}
\label{app}

\paragraph{Motivation.}
Results obtained after task specific optimization may reflect both the
inductive bias of a memory mechanism and its adaptation to the target
supervision protocol.
To isolate the former, we evaluate all methods through direct inference
without task specific training on the reliable memory updating task.
This setting is practically relevant because agents deployed in a new domain
may receive no annotations for transaction actions, target slots, temporal
scopes, reliability values, ledger operations, or executable next memory
states.

During evaluation, model parameters remain fixed.
All methods receive the same visible information and are evaluated using the
same data split and stateful metric pipeline.
Gold actions, target slots, temporal labels, reliability targets, ledger
destinations, and next memory states are withheld from the inputs.
The experiment therefore measures how much reliable memory updating transfers
before any adaptation to the target task.

\begin{table*}[t]
\centering
\caption{
Direct inference results on TARL-Mem without task-specific training.
}
\label{tab:direct_inference_results}
\footnotesize
\vspace{-0.15em}
\setlength{\tabcolsep}{3.6pt}
\renewcommand{\arraystretch}{1.02}

\resizebox{\textwidth}{!}{%
\begin{tabular}{lccccccc}
\toprule
Model
&
\shortstack{5-way Macro\\F1 $\uparrow$}
&
\shortstack{Write/Hold\\F1 $\uparrow$}
&
\shortstack{Next-State\\Acc. $\uparrow$}
&
\shortstack{Pollution\\$\downarrow$}
&
\shortstack{Conflict Pres.\\Acc. $\uparrow$}
&
\shortstack{Temporal Macro\\F1 $\uparrow$}
&
\shortstack{ECE\\$\downarrow$}
\\
\midrule

Full History~\cite{wu2025longmemeval}
&
$0.3729 \pm 0.0112$
&
$0.6281 \pm 0.0190$
&
$0.3985 \pm 0.0120$
&
$0.5226 \pm 0.0158$
&
$0.1294 \pm 0.0039$
&
$0.9340 \pm 0.0280$
&
$0.3808 \pm 0.0114$
\\

LongMemEval~\cite{wu2025longmemeval}
&
$0.2331 \pm 0.0068$
&
$0.4924 \pm 0.0149$
&
$0.2306 \pm 0.0070$
&
$0.4615 \pm 0.0137$
&
$0.1383 \pm 0.0043$
&
$0.9476 \pm 0.0287$
&
$0.6377 \pm 0.0190$
\\

HippoRAG~\cite{gutierrez2024hipporag}
&
$0.3758 \pm 0.0115$
&
$0.6087 \pm 0.0180$
&
$0.3930 \pm 0.0120$
&
$0.5458 \pm 0.0160$
&
$0.1059 \pm 0.0033$
&
$0.9427 \pm 0.0276$
&
$0.4171 \pm 0.0129$
\\

MemoryBank~\cite{zhong2024memorybank}
&
$0.3641 \pm 0.0108$
&
$0.6355 \pm 0.0194$
&
$0.3616 \pm 0.0106$
&
$0.5127 \pm 0.0156$
&
$0.1097 \pm 0.0032$
&
$0.9366 \pm 0.0286$
&
$0.4352 \pm 0.0130$
\\

A-Mem~\cite{xu2025amem}
&
$0.3682 \pm 0.0111$
&
$0.6363 \pm 0.0185$
&
$0.3690 \pm 0.0114$
&
$0.5046 \pm 0.0148$
&
$0.1135 \pm 0.0035$
&
$0.9371 \pm 0.0277$
&
$0.4403 \pm 0.0136$
\\

MemAgent~\cite{yu2025memagentreshapinglongcontextllm}
&
$0.3264 \pm 0.0101$
&
$0.6179 \pm 0.0182$
&
$0.3358 \pm 0.0100$
&
$0.5238 \pm 0.0161$
&
$0.1347 \pm 0.0040$
&
$0.9428 \pm 0.0292$
&
$0.4166 \pm 0.0122$
\\

G-Memory~\cite{zhang2025gmemory}
&
$0.3501 \pm 0.0104$
&
$0.6035 \pm 0.0188$
&
$0.3395 \pm 0.0105$
&
$0.5357 \pm 0.0155$
&
$0.1176 \pm 0.0037$
&
$0.9352 \pm 0.0274$
&
$0.4552 \pm 0.0141$
\\

\rowcolor{gray!15}
\textbf{Ours}
&
$\mathbf{0.3860} \pm 0.0117$
&
$\mathbf{0.6386} \pm 0.0191$
&
$\mathbf{0.4027} \pm 0.0118$
&
$\mathbf{0.4302} \pm 0.0127$
&
$\mathbf{0.1397} \pm 0.0042$
&
$\mathbf{0.9487} \pm 0.0289$
&
$\mathbf{0.3770} \pm 0.0112$
\\

\bottomrule
\end{tabular}%
}
\end{table*}

\paragraph{Evaluation scope.}
Table~\ref{tab:direct_inference_results} evaluates reliable memory updating at
three levels.
Five-way Macro F1 and Write/Hold F1 measure fine-grained transaction
recognition and high-level update commitment.
Next State Accuracy measures whether the complete prediction produces the
correct executable memory state.
Pollution, Conflict Preservation Accuracy, Temporal Macro F1, and ECE measure
the integrity, temporal consistency, and calibration of the resulting
decision.

This decomposition separates transaction classification from actual memory
execution.
A correct action label can still yield an incorrect state when the predicted
target, value, temporal scope, or ledger destination is wrong.
Likewise, a correct state on one instance provides limited value when the
update policy frequently admits unsupported information into accepted memory.

\paragraph{Transfer of transaction semantics.}
Ours achieves the highest mean five-way Macro F1 and Write/Hold F1.
The five-way result indicates that meaningful transaction boundaries transfer
without target-task supervision.
In particular, the model distinguishes \texttt{append} from
\texttt{revise}, although both actions modify memory, and separates
\texttt{noop}, \texttt{reject\_conflict}, and
\texttt{defer\_verify}, although all three preserve the currently accepted
memory state.
The binary Write/Hold result provides a complementary check.
The additional action granularity does not weaken the higher-level commitment
decision.
Ours retains the strongest mean performance for deciding whether accepted
memory should change.

\paragraph{Executable state quality.}
Ours obtains the highest mean Next State Accuracy of (0.4027), compared with
(0.3985) for Full History, the strongest baseline on this metric.
The numerical margin is small, so the result should be interpreted as a
competitive mean advantage under a strict end-to-end criterion.
Correct execution requires all predicted components to jointly reproduce the
gold post-update state.

The strong Full History result confirms that broad evidence retention helps
state reconstruction.
The higher mean achieved by Ours indicates that explicit transaction
prediction adds value when converting available evidence into a concrete
memory transition.

\paragraph{Accuracy and memory safety.}
The clearest result arises from considering Next State Accuracy and Pollution
together.
Ours achieves the highest mean state accuracy while reducing Pollution from
(0.4615), obtained by the strongest baseline on this metric, to (0.4302).
This is an absolute reduction of (0.0313) and a relative reduction of
approximately (6.8

This combination is important because state accuracy and pollution can move in
opposite directions.
A permissive updater may increase coverage by admitting uncertain
observations, while a restrictive updater may reduce pollution by suppressing
valid updates.
The reported results show that Ours constructs a more accurate next memory
state while admitting fewer unsupported facts into accepted memory.

The pollution reduction is especially relevant to long-term interaction.
Once an incorrect fact enters persistent memory, it can influence later
retrieval, reasoning, and update decisions.
Reducing such admission errors limits a direct source of future state drift.

\paragraph{Supporting reliability evidence.}
Ours also obtains the best mean Conflict Preservation Accuracy, Temporal Macro
F1, and ECE.
The margins on these metrics are modest, so they mainly demonstrate that the
gains in transaction recognition and state execution do not introduce an
obvious degradation elsewhere in the pipeline.
The model preserves valid existing information, maintains temporal
compatibility, and retains calibrated confidence under direct inference.

The ECE result is operationally relevant because confidence may influence
whether a transaction is committed, rejected, or deferred.
Its compatibility with lower Pollution and higher Next State Accuracy suggests
that improved execution quality is accompanied by reliable uncertainty
estimation.

\paragraph{Scope of the evidence.}
Ours records the best mean result in every column.
Several individual gaps remain comparable to the reported standard deviations,
so this table alone does not establish statistical significance for every
metric.
The strongest supported conclusion concerns the consistent overall mean
profile and the clearer reduction in Pollution.

The absolute Next State Accuracy of (0.4027) also confirms that zero
adaptation remains a challenging setting.
This headroom is expected because the protocol excludes task specific
optimization.
Within this strict evaluation, Ours provides the strongest observed balance of
transaction discrimination, executable state quality, memory protection,
temporal consistency, and calibration.

\paragraph{Conclusion.}
Direct inference isolates the transferable capability of a memory updater
before target task adaptation.
Ours achieves the best mean performance across all reported dimensions.
Its central advantage is the joint improvement in executable next-state
accuracy and pollution control.
The result supports reliability-aware memory transactions as a strong
inductive bias for accurate and safe memory updating in previously unseen
long-term agent environments.

\section{F Multi-Information Inputs as Stateful Transaction Traces}
\label{app:multi_information_theory}

A user turn may contain several memory-relevant propositions whose persistent
ledger effects differ. One proposition may introduce a new Accepted entry,
another may revise an existing Accepted occurrence, and a third may remain
unresolved and therefore be routed to Pending. Assigning the same atomic
command uniformly to all extracted propositions cannot express these outcomes
simultaneously. A richer turn-level program could encode a sequence of atomic
commands; the trace construction below makes that sequence explicit. We
therefore extend the single-candidate semantics of
Appendix~\ref{app:theory} to a state-conditioned trace of candidate-level
transactions.

This appendix introduces no additional ledger, operation, or executor rule. It
reuses exactly the typed sequence state in Eq.~\eqref{eq:ledger_state_space},
the inactive-record constructors in Eq.~\eqref{eq:history_record_constructors},
the canonical command domain in Eq.~\eqref{eq:canonical_command_set}, and the
deterministic transition family in Eq.~\eqref{eq:theory_executor}. The only
additional structure is a deterministic ordering of atomic candidates and the
recursive re-evaluation of each candidate against the evolving ledger state.
Candidate extraction itself may be supplied by preprocessing or an upstream
information extractor and remains outside the transaction policy analyzed
here.

\subsection{Atomic Candidates and the Shared TARL State}
\label{app:multi_atomic_trace_state}

\paragraph{Typed ledger state.}
Let the memory state before user turn \(i\) be
\begin{equation}
\begin{aligned}
\mathcal{M}_i
=
\left(
A_i,
P_i,
H_i
\right)
\in
\mathcal{S},
\end{aligned}
\label{eq:multi_shared_state}
\end{equation}
where \(\mathcal{S}\) is the sequence-valued state space defined in
Eq.~\eqref{eq:ledger_state_space}. Thus, \(A_i\), \(P_i\), and \(H_i\) are
ordered sequences. Accepted contains active entries, Pending contains
unresolved entries, and History / Rejected contains ordered inactive records
tagged as \texttt{superseded} or \texttt{rejected}. State equality is exact
equality of all three sequences and every stored field. In particular, this
appendix does not replace the sequence representation by a keyed map, and it
does not quotient away ledger order.

\paragraph{Atomicization and canonicalization.}
Let \(\mathcal{X}_{\mathrm{turn}}\) be the observed-turn domain and let
\(\mathcal{C}_{\mathrm{atom}}\) be the atomic-candidate domain. Atomicization
is a deterministic map
\begin{equation}
\begin{aligned}
\mathcal{G}
:
\mathcal{X}_{\mathrm{turn}}
\longrightarrow
\operatorname{Seq}
\left(
\mathcal{C}_{\mathrm{atom}}
\right)
\setminus
\left\{
[]
\right\}.
\end{aligned}
\label{eq:multi_atomicization_map}
\end{equation}
For the observed turn \(x_i\), write
\begin{equation}
\begin{aligned}
\mathcal{G}(x_i)
=
\left(
 c_{i,1},
 c_{i,2},
 \ldots,
 c_{i,K_i}
\right),
\qquad
K_i\geq 1.
\end{aligned}
\label{eq:multi_candidates_aligned}
\end{equation}
Each \(c_{i,k}\) contains one memory-relevant proposition. A
multi-information turn satisfies \(K_i\geq 2\). Let
\(\widetilde{\mathcal{C}}_{\mathrm{atom}}\) be the canonical-candidate
domain. Temporal canonicalization is a deterministic map
\begin{equation}
\begin{aligned}
\operatorname{Can}_{\mathrm{temp}}
:
\mathcal{C}_{\mathrm{atom}}
\longrightarrow
\widetilde{\mathcal{C}}_{\mathrm{atom}},
\qquad
\widetilde c_{i,k}
=
\operatorname{Can}_{\mathrm{temp}}
\left(
 c_{i,k}
\right).
\end{aligned}
\label{eq:multi_candidate_canonicalization}
\end{equation}
Real-world temporal information remains part of the candidate payload and is
carried into the temporal metadata of the realized entry according to the
implementation-level entry constructor. The notation
\(\operatorname{Can}_{\mathrm{temp}}\) is distinct from the training-target
selector \(\tau_{\mathrm{tr}}\) used in Appendix~\ref{app:theory}.

Let \(\prec_i\) be an acyclic precedence relation computed deterministically
from visible candidate information. Fix the extraction-index order
\(1<2<\cdots<K_i\). Define \(\pi_i\) as the unique output of a fixed
deterministic topological-sort procedure that, at each step, selects the
smallest extraction index among the currently precedence-minimal unprocessed
candidates. Acyclicity guarantees that such a candidate exists at every step,
and the fixed index rule makes the resulting ordering unique. The ordering
does not use gold operations, gold targets, or gold next states. At trace
position \(t\), write
\begin{equation}
\begin{aligned}
k_t
=
\pi_i(t),
\qquad
t=1,\ldots,K_i.
\end{aligned}
\label{eq:multi_candidate_order}
\end{equation}
Let \(\kappa_{i,t}\in\mathbb{T}\) denote the execution-time value supplied to
the executor at position \((i,t)\). The trace index \(t\), together with the
ordering \(\pi_i\), determines transaction order; no order relation or
injectivity assumption on the ambient time domain \(\mathbb{T}\) is needed.
The executor-time field is distinct from real-world time represented inside
\(\widetilde c_{i,k_t}\).

\subsection{Candidate-Level Prediction and Canonical Commands}
\label{app:multi_candidate_prediction}

Let \(\mathcal{O}\) be an implementation-level output space containing the
operation-score vector, Accepted-occurrence-score vector, and all auxiliary
predictions listed below. At inference time, the candidate-level predictor is
a deterministic map
\begin{equation}
\begin{aligned}
F_{\theta}
:
\widetilde{\mathcal{C}}_{\mathrm{atom}}
\times
\mathcal{S}
\times
\mathbb{T}
\longrightarrow
\mathcal{O}.
\end{aligned}
\label{eq:multi_predictor_type}
\end{equation}
Given a canonical candidate, a visible state \(\mathcal{M}\), and an execution
position, it returns
\begin{equation}
\begin{aligned}
F_{\theta}
\left(
\widetilde c_{i,k_t},
\mathcal{M},
\kappa_{i,t}
\right)
=
\left(
\mathbf{v}_{i,t},
\mathbf{a}_{i,t},
\mathbf{p}^{\ell}_{i,t},
\mathbf{p}^{\tau}_{i,t},
\widehat\rho^{c}_{i,t},
\widehat\rho^{m}_{i,t},
\widehat\delta_{i,t}
\right).
\end{aligned}
\label{eq:multi_model_outputs_aligned}
\end{equation}
Although the processed atomic candidate has extraction index
\(k_t=\pi_i(t)\), all outputs carry the trace-position subscript \((i,t)\)
because they are recomputed at position \(t\) from the current visible prefix.
The vector \(\mathbf{v}_{i,t}\) contains operation scores and
\(\mathbf{a}_{i,t}\) contains Accepted-occurrence scores. The remaining
quantities may supervise ledger routing, temporal interpretation, reliability,
or confidence calibration. They do not constitute additional executor
operations. Any predicted quantity that changes the realized entry or target must be
included in the implementation-level argument object passed to the executor.
The executor-time field is fixed to \(\kappa_{i,t}\) by the trace protocol,
consistently with
Eqs.~\eqref{eq:implementation_argument_extractors}--\eqref{eq:formal_implementation_executor_bridge}.

For \(\mathcal{M}=(A,P,H)\), define the feasible operation set
\begin{equation}
\begin{aligned}
\mathcal{A}(\mathcal{M})
=
\begin{cases}
\mathcal{A},
&
I(A)\neq\varnothing,
\\[2pt]
\mathcal{A}_{\mathrm{ind}},
&
I(A)=\varnothing,
\end{cases}
\end{aligned}
\label{eq:multi_feasible_operation_set}
\end{equation}
where \(\mathcal{A}\), \(\mathcal{A}_{\mathrm{ind}}\), and
\(\mathcal{A}_{\mathrm{req}}\) are defined in
Eqs.~\eqref{eq:theory_operation_space} and \eqref{eq:target_partition}. Fix
total orders over operation labels and Accepted-occurrence indices. We write
\(\operatorname*{arg\,max}^{\mathrm{det}}\) for the first maximizer under the
corresponding fixed order. The selected operation is
\begin{equation}
\begin{aligned}
\widehat a_{i,t}(\mathcal{M})
=
\operatorname*{arg\,max}^{\mathrm{det}}_{
a\in\mathcal{A}(\mathcal{M})
}
v_{i,t}(a;\mathcal{M}).
\end{aligned}
\label{eq:multi_operation_selection_aligned}
\end{equation}
The selected Accepted occurrence is
\begin{equation}
\begin{aligned}
\widehat\xi_{i,t}(\mathcal{M})
=
\begin{cases}
\displaystyle
\operatorname*{arg\,max}^{\mathrm{det}}_{
\xi\in I(A)
}
a_{i,t}(\xi;\mathcal{M}),
&
I(A)\neq\varnothing,
\\[0.8em]
\bot,
&
I(A)=\varnothing.
\end{cases}
\end{aligned}
\label{eq:multi_occurrence_selection_aligned}
\end{equation}
Canonical target removal gives
\begin{equation}
\begin{aligned}
\widehat\xi^{\mathrm{can}}_{i,t}(\mathcal{M})
=
\begin{cases}
\widehat\xi_{i,t}(\mathcal{M}),
&
\widehat a_{i,t}(\mathcal{M})
\in
\mathcal{A}_{\mathrm{req}},
\\
\bot,
&
\widehat a_{i,t}(\mathcal{M})
\in
\mathcal{A}_{\mathrm{ind}}.
\end{cases}
\end{aligned}
\label{eq:multi_canonical_target_aligned}
\end{equation}
The predicted canonical command is
\begin{equation}
\begin{aligned}
\widehat u_{i,t}(\mathcal{M})
=
\left(
\widehat a_{i,t}(\mathcal{M}),
\widehat\xi^{\mathrm{can}}_{i,t}(\mathcal{M})
\right).
\end{aligned}
\label{eq:multi_predicted_command}
\end{equation}
By construction,
\(\widehat u_{i,t}(\mathcal{M})\in\mathcal{U}(x)\) once the realized entry
and execution time defining \(x\) have been assembled.

Let \(\Lambda\) and the extraction maps
\(\operatorname{ent}\), \(\operatorname{time}\), and
\(\operatorname{tgt}\) be those defined in
Eq.~\eqref{eq:implementation_argument_extractors}. The deterministic assembler
has type
\begin{equation}
\begin{aligned}
\Gamma_{\theta}
:
\widetilde{\mathcal{C}}_{\mathrm{atom}}
\times
\mathcal{S}
\times
\mathbb{T}
\times
\mathcal{O}
\times
\mathcal{J}
\longrightarrow
\Lambda.
\end{aligned}
\label{eq:multi_assembler_type}
\end{equation}
It constructs
\begin{equation}
\begin{aligned}
\widehat\lambda_{i,t}(\mathcal{M})
=
\Gamma_{\theta}
\Bigl(
&\widetilde c_{i,k_t},
\mathcal{M},
\kappa_{i,t}, \\
&F_{\theta}
\bigl(
\widetilde c_{i,k_t},
\mathcal{M},
\kappa_{i,t}
\bigr), \\
&\widehat\xi^{\mathrm{can}}_{i,t}(\mathcal{M})
\Bigr)
\in \Lambda.
\end{aligned}
\label{eq:multi_argument_assembly}
\end{equation}
subject to the canonical-command and executor-time consistency conditions
\begin{equation}
\begin{aligned}
\chi
\left(
\widehat a_{i,t}(\mathcal{M}),
\widehat\lambda_{i,t}(\mathcal{M})
\right)
&=
\widehat\xi^{\mathrm{can}}_{i,t}(\mathcal{M}),
\\
\operatorname{time}
\left(
\widehat\lambda_{i,t}(\mathcal{M})
\right)
&=
\kappa_{i,t},
\end{aligned}
\label{eq:multi_assembled_target_consistency}
\end{equation}
The explicit state argument permits entry realization and provenance assembly
to depend on the visible ledger even when two states happen to produce the
same finite prediction vector. Here, \(\chi\) is the target canonicalization map in
Eq.~\eqref{eq:implementation_target_canonicalization}. For a
target-independent operation, the first equality places no restriction on an
unused raw target field because \(\chi\) returns \(\bot\); for a
target-required operation, it requires the extracted target to equal the
selected Accepted occurrence. Define the realized entry and execution time by
\begin{equation}
\begin{aligned}
\widehat e_{i,t}(\mathcal{M})
&=
\operatorname{ent}
\left(
\widehat\lambda_{i,t}(\mathcal{M})
\right),
\\
\widehat s_{i,t}(\mathcal{M})
&=
\operatorname{time}
\left(
\widehat\lambda_{i,t}(\mathcal{M})
\right)
=
\kappa_{i,t}.
\end{aligned}
\label{eq:multi_realized_entry_time}
\end{equation}
The corresponding formal executor context is
\begin{equation}
\begin{aligned}
\widehat x_{i,t}(\mathcal{M})
=
\left(
\mathcal{M},
\widehat e_{i,t}(\mathcal{M}),
\widehat s_{i,t}(\mathcal{M})
\right).
\end{aligned}
\label{eq:multi_executor_context}
\end{equation}
Equations~\eqref{eq:multi_feasible_operation_set}--
\eqref{eq:multi_canonical_target_aligned} imply
\begin{equation}
\begin{aligned}
\widehat u_{i,t}(\mathcal{M})
\in
\mathcal{U}
\left(
\widehat x_{i,t}(\mathcal{M})
\right).
\end{aligned}
\label{eq:multi_prediction_admissibility}
\end{equation}
Moreover, Eq.~\eqref{eq:multi_assembled_target_consistency} gives
\begin{equation}
\begin{aligned}
u_{\mathrm{impl}}
\left(
\widehat a_{i,t}(\mathcal{M}),
\widehat\lambda_{i,t}(\mathcal{M})
\right)
=
\widehat u_{i,t}(\mathcal{M}).
\end{aligned}
\label{eq:multi_implementation_command_identity}
\end{equation}
Thus, a target-required prediction always identifies a valid Accepted
occurrence in the state on which it was selected, and the assembled
implementation command coincides with the canonical command executed by the
formal transition. Gold next-state fields may be used to construct training
targets, while remaining absent from all inference maps above.

\subsection{State-Conditioned Trace Execution}
\label{app:multi_state_conditioned_execution}

The predicted trace is generated recursively from the evolving state. In the
display below, each hatted quantity without an explicit state argument denotes
its evaluation at the current predicted prefix
\(\widehat{\mathcal{M}}_i^{(t-1)}\):
\begin{equation}
\begin{aligned}
\widehat{\mathcal{M}}_i^{(0)}
&=
\mathcal{M}_i,
\\
\widehat u_{i,t}
&=
\widehat u_{i,t}
\left(
\widehat{\mathcal{M}}_i^{(t-1)}
\right),
\\
\widehat x_{i,t}
&=
\widehat x_{i,t}
\left(
\widehat{\mathcal{M}}_i^{(t-1)}
\right),
\\
\widehat{\mathcal{M}}_i^{(t)}
&=
T_{\widehat u_{i,t}}
\left(
\widehat x_{i,t}
\right),
\qquad
t=1,\ldots,K_i.
\end{aligned}
\label{eq:multi_trace_execution_aligned}
\end{equation}
The final predicted state is
\begin{equation}
\begin{aligned}
\widehat{\mathcal{M}}_{i+1}
=
\widehat{\mathcal{M}}_i^{(K_i)}.
\end{aligned}
\label{eq:multi_final_state_aligned}
\end{equation}

\paragraph{Well-definedness and determinism.}
Given the initial state \(\mathcal{M}_i\), the deterministic candidate
sequence \(\mathcal{G}(x_i)\), the deterministic ordering \(\pi_i\), the
prescribed executor-time sequence
\((\kappa_{i,1},\ldots,\kappa_{i,K_i})\), the inference-time maps
\(F_{\theta}\) and \(\Gamma_{\theta}\), and the fixed argmax tie-breaking
orders, every quantity in
Eq.~\eqref{eq:multi_trace_execution_aligned} is uniquely determined by
induction on \(t\). Equation~\eqref{eq:multi_prediction_admissibility} then
ensures that every selected transition is defined on the prefix from which it
was constructed.

The same candidate-level predictor processes every atomic unit, while each
later unit observes the ledger produced by all preceding transactions. An
earlier mutation may change the feasible target set, operation scores, slot
scores, reliability estimates, temporal interpretation, or the realized entry
for a later candidate. Equation~\eqref{eq:multi_trace_execution_aligned}
therefore imposes no conditional independence assumption across candidate
decisions. It also remains a single-path inference procedure: one canonical
command is selected and executed at each trace position.

For later comparison, define the complete state-effective transaction
\begin{equation}
\begin{aligned}
\widehat\zeta_{i,t}
=
\left(
\widehat u_{i,t},
\widehat e_{i,t},
\kappa_{i,t}
\right).
\end{aligned}
\label{eq:multi_complete_predicted_transaction}
\end{equation}
This tuple contains precisely the canonical command, the realized entry, and
the prescribed executor-time value that determine the transition once the
pre-transition state is given. Auxiliary outputs that do not affect these
fields remain outside \(\widehat\zeta_{i,t}\).

\subsection{Information Loss Under a Shared Turn-Level Command}
\label{app:multi_global_collapse}

For each atomic unit \(k\), let
\(x_{i,k}^{\star,\mathrm{pre}}\in\mathcal{X}\) denote its executor
context immediately before that unit in the gold candidate ordering. Let the
gold canonical command be
\begin{equation}
\begin{aligned}
u_{i,k}^{\star}
=
\left(
a_{i,k}^{\star},
\xi_{i,k}^{\star}
\right)
\in
\mathcal{U}
\left(
 x_{i,k}^{\star,\mathrm{pre}}
\right)
\subseteq
\mathcal{C}_{\mathrm{flat}},
\end{aligned}
\label{eq:multi_gold_canonical_command}
\end{equation}
where \(\xi_{i,k}^{\star}=\bot\) for target-independent operations and is the
valid Accepted occurrence selected on the corresponding gold prefix for a
target-required operation. Define
\begin{equation}
\begin{aligned}
q_i(u)
=
\frac{1}{K_i}
\sum_{k=1}^{K_i}
\mathbf{1}
\left[
u_{i,k}^{\star}=u
\right]
\end{aligned}
\label{eq:multi_command_distribution}
\end{equation}
and
\begin{equation}
\begin{aligned}
\operatorname{Het}_{\mathrm{cmd}}(x_i)
=
1-
\max_{u\in\mathcal{C}_{\mathrm{flat}}}
q_i(u).
\end{aligned}
\label{eq:multi_command_heterogeneity}
\end{equation}

\begin{proposition}[Irreducible command-code error under global collapse]
\label{prop:multi_global_command_bound}
Consider predictors constrained to emit one common code
\(\overline u_i\in\mathcal{C}_{\mathrm{flat}}\) for all \(K_i\) atomic
units, where \(\mathcal{C}_{\mathrm{flat}}\) is the ambient flat command
space in Eq.~\eqref{eq:flat_command_space}. The minimum unit-level zero-one
error is
\begin{equation}
\begin{aligned}
&
\min_{\overline u_i\in\mathcal{C}_{\mathrm{flat}}}
\frac{1}{K_i}
\sum_{k=1}^{K_i}
\mathbf{1}
\left[
\overline u_i
\neq
u_{i,k}^{\star}
\right]
\\
&\qquad
=
1-
\max_{u\in\mathcal{C}_{\mathrm{flat}}}
q_i(u)
\\
&\qquad
=
\operatorname{Het}_{\mathrm{cmd}}(x_i).
\end{aligned}
\label{eq:multi_global_command_bound_aligned}
\end{equation}
\end{proposition}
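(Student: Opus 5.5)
The proof is a direct rewriting of the averaged zero-one loss in terms of the empirical command distribution $q_i$ from Eq.~\eqref{eq:multi_command_distribution}, followed by an optimization over the single shared code. No executor semantics are needed, because the statement concerns only agreement between codes in $\mathcal{C}_{\mathrm{flat}}$. We therefore work purely at the level of the multiset $\{u_{i,1}^{\star},\ldots,u_{i,K_i}^{\star}\}$.

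\textbf{Step 1: rewrite the loss.} Fix any $\overline u_i\in\mathcal{C}_{\mathrm{flat}}$. Since $\mathbf{1}[\overline u_i\neq u_{i,k}^{\star}]=1-\mathbf{1}[u_{i,k}^{\star}=\overline u_i]$, averaging over $k$ gives
\[
\frac{1}{K_i}\sum_{k=1}^{K_i}\mathbf{1}\bigl[\overline u_i\neq u_{i,k}^{\star}\bigr]=1-q_i(\overline u_i).
\]
So minimizing the error over $\overline u_i$ is equivalent to maximizing $q_i(\overline u_i)$ over $\mathcal{C}_{\mathrm{flat}}$.

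\textbf{Step 2: the maximum is attained.} The only point needing care is that $\mathcal{C}_{\mathrm{flat}}=\mathcal{A}\times\mathcal{J}$ is infinite, since $\mathcal{J}\supseteq\mathbb{N}_{+}$. We therefore have to show that the $\max$ in the statement exists rather than being only a supremum. This follows from the support of $q_i$. The function $q_i$ is nonzero only on the finite set $\{u_{i,k}^{\star}:1\le k\le K_i\}$, which is nonempty because $K_i\geq 1$ by Eq.~\eqref{eq:multi_candidates_aligned}. Off this set $q_i$ vanishes, while on it $q_i\geq 1/K_i>0$. Hence $q_i$ takes finitely many values, and its maximum over $\mathcal{C}_{\mathrm{flat}}$ exists and is attained at some gold command $u^{\dagger}$.

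\textbf{Step 3: conclude.} Choosing $\overline u_i=u^{\dagger}$ achieves error $1-\max_u q_i(u)$. By Step 1, no other code can do better. The minimum is thus attained and equals $1-\max_{u}q_i(u)$. This coincides with $\operatorname{Het}_{\mathrm{cmd}}(x_i)$ by Eq.~\eqref{eq:multi_command_heterogeneity}.

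The main obstacle is only the attainment issue in Step 2. I would also remark that admissibility of $\overline u_i$ in each gold prefix context plays no role, because the minimization is stated over the ambient space $\mathcal{C}_{\mathrm{flat}}$ and the minimizer $u^{\dagger}$ is itself a gold command.
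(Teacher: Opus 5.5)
Your proof is correct and follows the paper's route: rewrite the averaged zero-one loss as $1-q_i(\overline u_i)$, then pick a modal gold command. Your Step~2, which shows the maximum over the infinite space $\mathcal{C}_{\mathrm{flat}}$ is attained because $q_i$ has finite nonempty support, is a small rigor point that the paper's proof leaves implicit.
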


\begin{proof}
For any fixed \(\overline u_i\), the fraction of correctly assigned atomic
units equals \(q_i(\overline u_i)\). Maximizing this quantity selects any modal
gold command and yields accuracy
\(\max_{u\in\mathcal{C}_{\mathrm{flat}}}q_i(u)\). Subtracting from one gives
Eq.~\eqref{eq:multi_global_command_bound_aligned}.
\end{proof}

Every turn satisfying
\(\operatorname{Het}_{\mathrm{cmd}}(x_i)>0\) therefore incurs unavoidable
atomic command-code error under one shared ambient code. This result concerns
an output-space restriction and compares the canonical operation-target codes
as labels. If a shared code is additionally required to be executable on every
gold prefix, the optimization domain is restricted to
\begin{equation}
\begin{aligned}
\bigcap_{k=1}^{K_i}
\mathcal{U}
\left(
 x_{i,k}^{\star,\mathrm{pre}}
\right),
\end{aligned}
\label{eq:multi_globally_executable_command_set}
\end{equation}
This intersection is nonempty because every prefix admits the three
canonical target-independent commands. Under the stronger global-executability
requirement, the minimum error is at least
\(\operatorname{Het}_{\mathrm{cmd}}(x_i)\), and equality need not hold.
Candidate-level prediction removes the shared-code restriction, while making
no guarantee that the resulting candidate-level commands are correct. The
bound already holds at the command-code level and therefore does not rely on
additional heterogeneity in realized entries or temporal arguments.

\subsection{Inherited Separation and Sequence-Level Order Sensitivity}
\label{app:multi_separation_order}

\begin{proposition}[Per-prefix separation of the five TARL effects]
\label{prop:multi_prefix_separation}
Fix a visible prefix state \(\mathcal{M}=(A,P,H)\), a realized entry
\(e\in\mathcal{E}\), an execution time \(s\in\mathbb{T}\), and an Accepted
occurrence \(\xi\in I(A)\). Then the five canonical commands
\begin{equation}
\begin{aligned}
&(
\texttt{append},
\bot
),
\quad
(
\texttt{noop},
\bot
),
\quad
(
\texttt{revise},
\xi
),
\\
&(
\texttt{reject\_conflict},
\xi
),
\quad
(
\texttt{defer\_verify},
\bot
)
\end{aligned}
\label{eq:multi_five_commands_aligned}
\end{equation}
produce pairwise distinct next states under the common context
\(x=(\mathcal{M},e,s)\).
\end{proposition}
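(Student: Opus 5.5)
The plan is to reduce the statement directly to Proposition~\ref{prop:pairwise_distinguishability}, since the context $x=(\mathcal{M},e,s)$ with $\mathcal{M}=(A,P,H)\in\mathcal{S}$, $e\in\mathcal{E}$, $s\in\mathbb{T}$ is exactly an element of the base context space $\mathcal{X}$ in Eq.~\eqref{eq:base_execution_context}. The only extra hypotheses of that proposition are $|A|\geq 1$ and $\xi\in I(A)$. The assumption $\xi\in I(A)$ forces $I(A)\neq\varnothing$, hence $K=|A|\geq 1$ by Eq.~\eqref{eq:target_space}. So the first step is to record this implication and to observe that the execution-time field here plays the role of $t$ in Appendix~\ref{app:theory}; no order or injectivity property of $\mathbb{T}$ is used.

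Second, I will check that all five commands in Eq.~\eqref{eq:multi_five_commands_aligned} lie in $\mathcal{U}(x)$. The three target-independent commands carry $\bot$, and the two target-required commands carry $\xi\in I(A)$, as required by Eq.~\eqref{eq:canonical_command_set}. Lemma~\ref{lem:transition_well_defined} then gives that each $T_u(x)$ is a well-typed state in $\mathcal{S}$.

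Third, I will invoke Proposition~\ref{prop:pairwise_distinguishability} with $t:=s$; the five commands there coincide literally with those in Eq.~\eqref{eq:five_canonical_commands}. For a self-contained check, I can also recompute the signatures $\Sigma$ from Eq.~\eqref{eq:ledger_signature}:
\[
(K+1,L,S,R),\quad (K,L,S,R),\quad (K,L,S+1,R),\quad (K,L,S,R+1),\quad (K,L+1,S,R),
\]
for append, noop, revise, reject\_conflict, and defer\_verify respectively. These five tuples are pairwise different, and since equal states have equal signatures, the states are pairwise distinct.

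I do not expect a genuine obstacle. The only point needing care is confirming that revise leaves $|A|$ unchanged: $\operatorname{Replace}_\xi$ has length $K$, and the appended record has status \texttt{superseded}. Reject\_conflict likewise appends a record with the distinct status \texttt{rejected}, so $N_{\mathrm{sup}}$ and $N_{\mathrm{rej}}$ separate these two commands regardless of any coincidences among entries or times. I will also note that the per-prefix formulation changes nothing: $\mathcal{M}$ is an arbitrary element of $\mathcal{S}$, so it may be any intermediate state $\widehat{\mathcal{M}}_i^{(t-1)}$ of the trace in Eq.~\eqref{eq:multi_trace_execution_aligned}.
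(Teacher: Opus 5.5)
Your proposal is correct and matches the paper's proof, which simply instantiates Proposition~\ref{prop:pairwise_distinguishability} at the current trace prefix. Your explicit checks (that $\xi\in I(A)$ forces $|A|\geq 1$, that all five commands are canonical, and the recomputed signatures $\Sigma$) just make that instantiation fully explicit.
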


\begin{proof}
This is Proposition~\ref{prop:pairwise_distinguishability} instantiated at the
current trace prefix.
\end{proof}

Consequently, merging two operation names into one unresolved code and
providing no subtype, disposition, or other disambiguating field prevents exact
decoding on at least one valid prefix. This statement does not assert that a
flat five-class coordinate is universally necessary. The factorized interface
in Proposition~\ref{prop:flat_factorized_equivalence} remains executor
faithful because its subtype and target fields preserve the same transition
information.

To expose the role of sequence order, for a fixed realized entry \(e_{i,t}\)
and execution time \(s_{i,t}\), define
\begin{equation}
\begin{aligned}
\mathcal{T}_{i,t;u}(\mathcal{M})
=
T_u
\left(
\left(
\mathcal{M},
 e_{i,t},
 s_{i,t}
\right)
\right).
\end{aligned}
\label{eq:multi_indexed_transition_operator}
\end{equation}
The notation is used only when the command is canonical for the corresponding
formal context:
\begin{equation}
\begin{aligned}
u
\in
\mathcal{U}
\left(
\left(
\mathcal{M},
 e_{i,t},
 s_{i,t}
\right)
\right).
\end{aligned}
\label{eq:multi_indexed_transition_admissibility}
\end{equation}

\begin{proposition}[Order sensitivity under exact sequence semantics]
\label{prop:multi_order_sensitivity}
Let \(e_{i,t_a}\neq e_{i,t_b}\). Then two append commands satisfy
\begin{equation}
\begin{aligned}
&
\mathcal{T}_{i,t_b;(
\texttt{append},\bot
)}
\circ
\mathcal{T}_{i,t_a;(
\texttt{append},\bot
)}
(\mathcal{M})
\\
&\qquad
\neq
\\
&
\mathcal{T}_{i,t_a;(
\texttt{append},\bot
)}
\circ
\mathcal{T}_{i,t_b;(
\texttt{append},\bot
)}
(\mathcal{M}).
\end{aligned}
\label{eq:multi_append_noncommutativity}
\end{equation}
If \(\xi\in I(A)\), two revisions of the same Accepted occurrence satisfy
\begin{equation}
\begin{aligned}
&
\mathcal{T}_{i,t_b;(
\texttt{revise},\xi
)}
\circ
\mathcal{T}_{i,t_a;(
\texttt{revise},\xi
)}
(\mathcal{M})
\\
&\qquad
\neq
\\
&
\mathcal{T}_{i,t_a;(
\texttt{revise},\xi
)}
\circ
\mathcal{T}_{i,t_b;(
\texttt{revise},\xi
)}
(\mathcal{M}).
\end{aligned}
\label{eq:multi_revision_noncommutativity_aligned}
\end{equation}
\end{proposition}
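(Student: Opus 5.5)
I would prove both inequalities by computing the two composite states explicitly from Eq.~\eqref{eq:theory_executor} and then comparing a single coordinate. First I check admissibility. Append commands are target independent, so each is canonical for every context. For the revisions, $\operatorname{Replace}_{\xi}$ preserves the length of $A$, so $\xi\in I(A)$ remains in $I(\cdot)$ after the first revision. Hence both composites are defined in the sense of Eq.~\eqref{eq:multi_indexed_transition_admissibility}, and Lemma~\ref{lem:transition_well_defined} places every intermediate state in $\mathcal{S}$.

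For the append case, write $\mathcal{M}=(A,P,H)$. Applying the $t_a$ append and then the $t_b$ append yields Accepted ledger $A\mathbin{\|}[e_{i,t_a},e_{i,t_b}]$. The reverse order yields $A\mathbin{\|}[e_{i,t_b},e_{i,t_a}]$. Both sequences have length $|A|+2$, and their entries at position $|A|+1$ are $e_{i,t_a}$ and $e_{i,t_b}$, which differ by hypothesis. Since state equality is exact sequence equality, the two states differ in their first component.

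For the revision case, the $t_a$-then-$t_b$ order gives Accepted ledger $\operatorname{Replace}_{\xi}(A,e_{i,t_b})$. Its $\xi$th entry is $e_{i,t_b}$. The reverse order gives $\xi$th entry $e_{i,t_a}$. These differ, so the states differ. As a check, the History ledgers also differ. In the first order the two appended records are $h^{\mathrm{sup}}(m_\xi(A),e_{i,t_a},s_{i,t_a})$ followed by $h^{\mathrm{sup}}(e_{i,t_a},e_{i,t_b},s_{i,t_b})$. These are unequal to the corresponding records of the other order in the counterpart coordinate.

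The main obstacle is hypothesis bookkeeping rather than computation. The revision claim as displayed does not restate $e_{i,t_a}\neq e_{i,t_b}$, so I would read the standing assumption $e_{i,t_a}\neq e_{i,t_b}$ as applying to both parts. If it is dropped, the revision claim is false: with equal entries and equal execution times the composites coincide. I would make this dependence explicit in the proof. The only other point to verify is that $m_{\xi}$ of the intermediate state equals the previously inserted entry, which is exactly what the coordinate definition in Eq.~\eqref{eq:single_slot_replacement} gives.
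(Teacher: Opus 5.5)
Your proposal is correct and follows the paper's proof. Both compute the two composites from the executor definition. For appends, the Accepted sequences end in $[e_{i,t_a},e_{i,t_b}]$ versus $[e_{i,t_b},e_{i,t_a}]$; for revisions, they carry $e_{i,t_b}$ versus $e_{i,t_a}$ at occurrence $\xi$, with the supersession records as a secondary difference. Your admissibility check ($\operatorname{Replace}_{\xi}$ preserves length, so $\xi$ stays in $I(\cdot)$) and your note that the revision part needs the standing hypothesis $e_{i,t_a}\neq e_{i,t_b}$ make explicit what the paper leaves implicit.
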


\begin{proof}
For Eq.~\eqref{eq:multi_append_noncommutativity}, the left-hand
composition executes the append at \(t_a\) followed by the append at \(t_b\),
so its Accepted sequence ends with \([e_{i,t_a},e_{i,t_b}]\). The right-hand
composition reverses this execution order and ends with
\([e_{i,t_b},e_{i,t_a}]\). The two sequences differ because the entries are
unequal. For Eq.~\eqref{eq:multi_revision_noncommutativity_aligned}, the
left-hand composition executes the revision at \(t_a\) and then the revision
at \(t_b\), leaving \(e_{i,t_b}\) active at occurrence \(\xi\). The
right-hand composition reverses the order and leaves \(e_{i,t_a}\) active
there. Their ordered supersession records also differ.
\end{proof}

Ordered sequence semantics make transaction order part of the persistent
state. In particular, two execution orders can generate the same multiset of
archival records while placing those records in opposite History / Rejected
sequence order; exact executor-state equality distinguishes the resulting
states. This remains true when the records refer to distinct Accepted
occurrences. For example, two fixed 	exttt{reject\_conflict} commands on
distinct valid occurrences preserve \(A\) and append the same two rejection
records in opposite sequence order when their execution order is reversed.
Accordingly, this appendix makes no general
commutation claim for transactions with disjoint semantic targets. Reordering
can also change later model outputs because every candidate is re-evaluated on
the current prefix. The deterministic ordering \(\pi_i\) is therefore part of
the multi-information execution protocol.

\subsection{Exact Trace Recovery}
\label{app:multi_exact_trace_recovery}

Assume the gold annotations are executor consistent. The gold trace uses the
same candidate ordering \(\pi_i\); every gold command is canonical and
admissible on its gold prefix; and every realized gold entry belongs to
\(\mathcal{E}\). The executor-time value at position \(t\) is the same
prescribed value \(\kappa_{i,t}\) used by the predicted trace. At trace
position \(t\), set
\(u_{i,t}^{\star}:=u_{i,k_t}^{\star}\), so the trace-indexed command is
exactly the command attached to the atomic candidate selected by
\(k_t=\pi_i(t)\). Let
\begin{equation}
\begin{aligned}
\mathcal{M}_i^{\star(0)}
&=
\mathcal{M}_i,
\\
u_{i,t}^{\star}
&=
\left(
a_{i,t}^{\star},
\xi_{i,t}^{\star}
\right),
\\
x_{i,t}^{\star}
&=
\left(
\mathcal{M}_i^{\star(t-1)},
 e_{i,t}^{\star},
 \kappa_{i,t}
\right),
\\
\mathcal{M}_i^{\star(t)}
&=
T_{u_{i,t}^{\star}}
\left(
x_{i,t}^{\star}
\right),
\qquad
t=1,\ldots,K_i.
\end{aligned}
\label{eq:multi_gold_trace_aligned}
\end{equation}
Define the complete gold transaction
\begin{equation}
\begin{aligned}
\zeta_{i,t}^{\star}
=
\left(
u_{i,t}^{\star},
 e_{i,t}^{\star},
 \kappa_{i,t}
\right)
\end{aligned}
\label{eq:multi_complete_gold_transaction}
\end{equation}
and the complete transaction error
\begin{equation}
\begin{aligned}
e_{i,t}^{\mathrm{exec}}
=
\mathbf{1}
\left[
\widehat\zeta_{i,t}
\neq
\zeta_{i,t}^{\star}
\right].
\end{aligned}
\label{eq:multi_complete_transaction_error}
\end{equation}

\begin{theorem}[Exact recovery from complete transaction correctness]
\label{thm:multi_exact_recovery_aligned}
Suppose the predicted and gold traces start from the same state, use the same
candidate ordering \(\pi_i\), and use the same prescribed executor-time
sequence \((\kappa_{i,1},\ldots,\kappa_{i,K_i})\). If
\begin{equation}
\begin{aligned}
e_{i,t}^{\mathrm{exec}}
=0
\qquad
\text{for every }
t=1,\ldots,K_i,
\end{aligned}
\label{eq:multi_all_transactions_correct}
\end{equation}
then
\begin{equation}
\begin{aligned}
\widehat{\mathcal{M}}_i^{(t)}
=
\mathcal{M}_i^{\star(t)}
\qquad
\text{for every }
t=0,\ldots,K_i,
\end{aligned}
\label{eq:multi_prefix_exact_recovery_aligned}
\end{equation}
and therefore
\begin{equation}
\begin{aligned}
\widehat{\mathcal{M}}_{i+1}
=
\mathcal{M}_{i+1}^{\star}.
\end{aligned}
\label{eq:multi_final_exact_recovery_aligned}
\end{equation}
\end{theorem}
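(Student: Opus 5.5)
The plan is a direct induction on the trace position $t$, proving the prefix equalities in Eq.~\eqref{eq:multi_prefix_exact_recovery_aligned} one step at a time. Eq.~\eqref{eq:multi_final_exact_recovery_aligned} then follows immediately from Eq.~\eqref{eq:multi_final_state_aligned}, once we identify the gold post-turn state $\mathcal{M}_{i+1}^{\star}$ with the terminal gold prefix $\mathcal{M}_i^{\star(K_i)}$. Since that is how the executor-consistent gold trace in Eq.~\eqref{eq:multi_gold_trace_aligned} defines the gold next state, we will state this identification explicitly. The base case $t=0$ is immediate: by hypothesis both traces start from the same state, so $\widehat{\mathcal{M}}_i^{(0)}=\mathcal{M}_i=\mathcal{M}_i^{\star(0)}$.

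For the inductive step, fix $t\in\{1,\ldots,K_i\}$ and assume $\widehat{\mathcal{M}}_i^{(t-1)}=\mathcal{M}_i^{\star(t-1)}$. The hypothesis $e_{i,t}^{\mathrm{exec}}=0$ means $\widehat\zeta_{i,t}=\zeta_{i,t}^{\star}$. Comparing coordinates of Eqs.~\eqref{eq:multi_complete_predicted_transaction} and \eqref{eq:multi_complete_gold_transaction} gives $\widehat u_{i,t}=u_{i,t}^{\star}$ and $\widehat e_{i,t}=e_{i,t}^{\star}$. The time coordinates agree automatically, because both traces use the prescribed value $\kappa_{i,t}$; in the predicted trace this is enforced by Eqs.~\eqref{eq:multi_assembled_target_consistency} and \eqref{eq:multi_realized_entry_time}. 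Substituting the inductive hypothesis into Eq.~\eqref{eq:multi_executor_context} yields
\[
\widehat x_{i,t}=\bigl(\widehat{\mathcal{M}}_i^{(t-1)},\widehat e_{i,t},\kappa_{i,t}\bigr)=\bigl(\mathcal{M}_i^{\star(t-1)},e_{i,t}^{\star},\kappa_{i,t}\bigr)=x_{i,t}^{\star}.
\]
We therefore have equal contexts and equal commands. Each transition $T_u$ in Eq.~\eqref{eq:theory_executor} is a function, so $\widehat{\mathcal{M}}_i^{(t)}=T_{\widehat u_{i,t}}(\widehat x_{i,t})=T_{u_{i,t}^{\star}}(x_{i,t}^{\star})=\mathcal{M}_i^{\star(t)}$.

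The only delicate point is that this application of $T$ must be defined. Otherwise the equality compares an undefined expression, since target-required commands need $\xi\in I(A)$. On the predicted side, definedness is Eq.~\eqref{eq:multi_prediction_admissibility} together with Lemma~\ref{lem:transition_well_defined}. On the gold side, it is the executor-consistency assumption that $u_{i,t}^{\star}\in\mathcal{U}(x_{i,t}^{\star})$. Because the contexts coincide, both conditions refer to the same domain $\mathcal{U}(x_{i,t}^{\star})$, so $(x_{i,t}^{\star},u_{i,t}^{\star})\in\mathcal{D}$ and the step is well typed.

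A secondary bookkeeping point is that the gold trace is indexed by trace position through $u_{i,t}^{\star}=u_{i,k_t}^{\star}$ under the same ordering $\pi_i$. This alignment is what makes the $t$-th predicted and gold transactions refer to the same atomic candidate; without it, the hypothesis $\widehat\zeta_{i,t}=\zeta_{i,t}^{\star}$ would not be comparing the right objects. With these points checked, the induction closes.
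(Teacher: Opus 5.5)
Your proposal is correct and follows the paper's proof. Both argue by induction on $t$ from the shared initial state: $e_{i,t}^{\mathrm{exec}}=0$ gives equality of the canonical command and the realized entry, the prescribed $\kappa_{i,t}$ fixes the time coordinate, determinism of $T_u$ closes each step, and the case $t=K_i$ gives the final state. Your well-definedness check and the identification $\mathcal{M}_{i+1}^{\star}=\mathcal{M}_i^{\star(K_i)}$ only make explicit what the paper leaves implicit.
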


\begin{proof}
The two traces share the initial state. Assume inductively that
\(\widehat{\mathcal{M}}_i^{(t-1)}=\mathcal{M}_i^{\star(t-1)}\). Condition
\(e_{i,t}^{\mathrm{exec}}=0\) gives equality of the canonical command and
realized entry; both traces already use the prescribed executor-time value
\(\kappa_{i,t}\). Hence the two traces invoke the same
deterministic transition map on the same formal context, producing identical
prefix states at position \(t\). Induction proves
Eq.~\eqref{eq:multi_prefix_exact_recovery_aligned}, and the case \(t=K_i\)
gives Eq.~\eqref{eq:multi_final_exact_recovery_aligned}.
\end{proof}

The theorem uses complete transaction correctness because the operation and
target coordinates alone do not determine the next state when the realized
entry differs. The executor-time value is fixed by the trace protocol. This is
the multi-information counterpart of the implementation bridge in
Eq.~\eqref{eq:formal_implementation_executor_bridge}.

\subsection{A Local Certificate for Command Stability}
\label{app:multi_command_certificate}

The next result isolates when a prefix perturbation cannot change the selected
canonical command. Equip the typed state space \(\mathcal{S}\) with any metric
\(d_{\mathcal{S}}\), so \((\mathcal{S},d_{\mathcal{S}})\) is a metric space
and
\(d_{\mathcal{S}}(\mathcal{M},\mathcal{M}')=0\) if and only if
\(\mathcal{M}=\mathcal{M}'\). Throughout this subsection,
\(v_{i,t}(a;\mathcal{M})\) and
\(a_{i,t}(\xi;\mathcal{M})\) evaluate the same fixed atomic candidate
\(\widetilde c_{i,k_t}\) and the same prescribed executor-time value
\(\kappa_{i,t}\); only the visible prefix state varies. For trace position
\(t\), define the gold prefix
\begin{equation}
\begin{aligned}
\mathcal{M}_{i,t}^{\star}
=
\mathcal{M}_i^{\star(t-1)}.
\end{aligned}
\label{eq:multi_gold_prefix_state}
\end{equation}
The competing operation set is
\begin{equation}
\begin{aligned}
\mathcal{C}_{i,t}^{a}
=
\mathcal{A}
\left(
\mathcal{M}_{i,t}^{\star}
\right)
\setminus
\left\{
a_{i,t}^{\star}
\right\}.
\end{aligned}
\label{eq:multi_operation_competitors}
\end{equation}
When \(\mathcal{C}_{i,t}^{a}\neq\varnothing\), define the gold-prefix
operation margin
\begin{equation}
\begin{aligned}
\gamma_{i,t}^{a}
=
&v_{i,t}
\left(
a_{i,t}^{\star};
\mathcal{M}_{i,t}^{\star}
\right)
\\
&-
\max_{a\in\mathcal{C}_{i,t}^{a}}
v_{i,t}
\left(
a;
\mathcal{M}_{i,t}^{\star}
\right).
\end{aligned}
\label{eq:multi_operation_margin}
\end{equation}

When \(a_{i,t}^{\star}\in\mathcal{A}_{\mathrm{req}}\), define the competing
Accepted-occurrence set
\begin{equation}
\begin{aligned}
\mathcal{C}_{i,t}^{\xi}
=
I
\left(
A_{i,t}^{\star}
\right)
\setminus
\left\{
\xi_{i,t}^{\star}
\right\},
\end{aligned}
\label{eq:multi_occurrence_competitors}
\end{equation}
where
\(\mathcal{M}_{i,t}^{\star}=(A_{i,t}^{\star},P_{i,t}^{\star},H_{i,t}^{\star})\).
When \(\mathcal{C}_{i,t}^{\xi}\neq\varnothing\), define the occurrence margin
\begin{equation}
\begin{aligned}
\gamma_{i,t}^{\xi}
=
&a_{i,t}
\left(
\xi_{i,t}^{\star};
\mathcal{M}_{i,t}^{\star}
\right)
\\
&-
\max_{\xi\in\mathcal{C}_{i,t}^{\xi}}
a_{i,t}
\left(
\xi;
\mathcal{M}_{i,t}^{\star}
\right).
\end{aligned}
\label{eq:multi_occurrence_margin}
\end{equation}

Assume there exists \(\varepsilon_{i,t}\in(0,+\infty]\) such that every state
in the open ball
\begin{equation}
\begin{aligned}
\mathcal{B}_{i,t}
=
\left\{
\mathcal{M}\in\mathcal{S}:
 d_{\mathcal{S}}
 \left(
 \mathcal{M},
 \mathcal{M}_{i,t}^{\star}
 \right)
 <
 \varepsilon_{i,t}
\right\}
\end{aligned}
\label{eq:multi_command_certificate_ball}
\end{equation}
has the same feasible operation set:
\begin{equation}
\begin{aligned}
\mathcal{A}(\mathcal{M})
=
\mathcal{A}
\left(
\mathcal{M}_{i,t}^{\star}
\right)
\qquad
\text{for every }
\mathcal{M}\in\mathcal{B}_{i,t}.
\end{aligned}
\label{eq:multi_operation_set_stability}
\end{equation}
Assume further that there is a finite constant
\(L_{i,t}^{a}\geq 0\) such that, for every
\(\mathcal{M}\in\mathcal{B}_{i,t}\) and every feasible operation \(a\),
\begin{equation}
\begin{aligned}
\left|
v_{i,t}(a;\mathcal{M})
-
v_{i,t}
\left(
a;\mathcal{M}_{i,t}^{\star}
\right)
\right|
\leq
L_{i,t}^{a}
d_{\mathcal{S}}
\left(
\mathcal{M},
\mathcal{M}_{i,t}^{\star}
\right).
\end{aligned}
\label{eq:multi_operation_score_sensitivity}
\end{equation}

When \(a_{i,t}^{\star}\in\mathcal{A}_{\mathrm{req}}\), additionally assume
that every state \(\mathcal{M}=(A,P,H)\in\mathcal{B}_{i,t}\) has the same
Accepted-occurrence set,
\begin{equation}
\begin{aligned}
I(A)
=
I
\left(
A_{i,t}^{\star}
\right),
\end{aligned}
\label{eq:multi_occurrence_set_stability}
\end{equation}
and that there is a finite constant
\(L_{i,t}^{\xi}\geq 0\) such that, for every eligible occurrence \(\xi\),
\begin{equation}
\begin{aligned}
\left|
a_{i,t}(\xi;\mathcal{M})
-
a_{i,t}
\left(
\xi;\mathcal{M}_{i,t}^{\star}
\right)
\right|
\leq
L_{i,t}^{\xi}
d_{\mathcal{S}}
\left(
\mathcal{M},
\mathcal{M}_{i,t}^{\star}
\right).
\end{aligned}
\label{eq:multi_occurrence_score_sensitivity}
\end{equation}
Assume every defined margin is strictly positive. Thus, whenever a competitor
exists, the gold score is strictly larger than every competing score on the
gold prefix; a zero-margin tie or a negative gold margin lies outside the
certificate assumptions. These assumptions imply that the predictor selects
the gold operation and, when required, the gold Accepted occurrence on the
unperturbed gold prefix. Define
\begin{equation}
\begin{aligned}
R_{i,t}^{a}
=
\begin{cases}
+\infty,
&
\mathcal{C}_{i,t}^{a}=\varnothing,
\\
+\infty,
&
\mathcal{C}_{i,t}^{a}\neq\varnothing
\text{ and }
L_{i,t}^{a}=0,
\\
\dfrac{\gamma_{i,t}^{a}}{2L_{i,t}^{a}},
&
\mathcal{C}_{i,t}^{a}\neq\varnothing
\text{ and }
L_{i,t}^{a}>0.
\end{cases}
\end{aligned}
\label{eq:multi_operation_robustness_radius}
\end{equation}
When \(a_{i,t}^{\star}\in\mathcal{A}_{\mathrm{req}}\), define
\begin{equation}
\begin{aligned}
R_{i,t}^{\xi}
=
\begin{cases}
+\infty,
&
\mathcal{C}_{i,t}^{\xi}=\varnothing,
\\
+\infty,
&
\mathcal{C}_{i,t}^{\xi}\neq\varnothing
\text{ and }
L_{i,t}^{\xi}=0,
\\
\dfrac{\gamma_{i,t}^{\xi}}{2L_{i,t}^{\xi}},
&
\mathcal{C}_{i,t}^{\xi}\neq\varnothing
\text{ and }
L_{i,t}^{\xi}>0.
\end{cases}
\end{aligned}
\label{eq:multi_occurrence_robustness_radius}
\end{equation}
Finally, let
\begin{equation}
\begin{aligned}
\mathcal{R}_{i,t}^{\mathrm{cmd}}
=
\begin{cases}
\min
\left\{
\varepsilon_{i,t},
R_{i,t}^{a}
\right\},
&
 a_{i,t}^{\star}
 \in
 \mathcal{A}_{\mathrm{ind}},
\\[0.6em]
\min
\left\{
\varepsilon_{i,t},
R_{i,t}^{a},
R_{i,t}^{\xi}
\right\},
&
 a_{i,t}^{\star}
 \in
 \mathcal{A}_{\mathrm{req}}.
\end{cases}
\end{aligned}
\label{eq:multi_command_robustness_radius}
\end{equation}

\begin{theorem}[Local certificate against a new command mismatch]
\label{thm:multi_command_certificate}
If
\begin{equation}
\begin{aligned}
d_{\mathcal{S}}
\left(
\widehat{\mathcal{M}}_i^{(t-1)},
\mathcal{M}_i^{\star(t-1)}
\right)
<
\mathcal{R}_{i,t}^{\mathrm{cmd}},
\end{aligned}
\label{eq:multi_command_certificate_condition}
\end{equation}
then
\begin{equation}
\begin{aligned}
\widehat u_{i,t}
=
u_{i,t}^{\star}.
\end{aligned}
\label{eq:multi_certified_command}
\end{equation}
\end{theorem}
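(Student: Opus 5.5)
Write $\widehat{\mathcal{M}}:=\widehat{\mathcal{M}}_i^{(t-1)}$ and $\delta:=d_{\mathcal{S}}(\widehat{\mathcal{M}},\mathcal{M}_{i,t}^{\star})$. The plan is a standard margin-versus-perturbation argument, run separately for the operation coordinate and the occurrence coordinate, and then assembled through the canonicalization in Eq.~\eqref{eq:multi_canonical_target_aligned}.

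\emph{Step 1: place the perturbed state in the ball.} Condition~\eqref{eq:multi_command_certificate_condition} gives $\delta<\mathcal{R}_{i,t}^{\mathrm{cmd}}\le\varepsilon_{i,t}$, so $\widehat{\mathcal{M}}\in\mathcal{B}_{i,t}$. By Eq.~\eqref{eq:multi_operation_set_stability}, the feasible set satisfies $\mathcal{A}(\widehat{\mathcal{M}})=\mathcal{A}(\mathcal{M}_{i,t}^{\star})$. The gold command is admissible on the gold prefix, so $a_{i,t}^{\star}$ belongs to this feasible set.

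\emph{Step 2: operation selection.} If $\mathcal{C}_{i,t}^{a}=\varnothing$, the feasible set is the singleton $\{a_{i,t}^{\star}\}$, and the deterministic argmax returns it. Otherwise, for any competitor $a\in\mathcal{C}_{i,t}^{a}$, apply Eq.~\eqref{eq:multi_operation_score_sensitivity} twice and then the margin definition:
\[
v_{i,t}(a_{i,t}^{\star};\widehat{\mathcal{M}})-v_{i,t}(a;\widehat{\mathcal{M}})\ \ge\ \gamma_{i,t}^{a}-2L_{i,t}^{a}\delta .
\]
If $L_{i,t}^{a}=0$, the right-hand side equals $\gamma_{i,t}^{a}>0$. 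If $L_{i,t}^{a}>0$, then $\delta<R_{i,t}^{a}=\gamma_{i,t}^{a}/(2L_{i,t}^{a})$ makes it strictly positive. In both cases $a_{i,t}^{\star}$ is the unique maximizer over the feasible set. Because the maximizer is unique, the tie-breaking order in $\operatorname*{arg\,max}^{\mathrm{det}}$ plays no role, and $\widehat a_{i,t}(\widehat{\mathcal{M}})=a_{i,t}^{\star}$.

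\emph{Step 3: target selection.} First suppose $a_{i,t}^{\star}\in\mathcal{A}_{\mathrm{ind}}$. Then Eq.~\eqref{eq:multi_canonical_target_aligned} sets the canonical target to $\bot$, which equals $\xi_{i,t}^{\star}$ because gold commands are canonical. Now suppose $a_{i,t}^{\star}\in\mathcal{A}_{\mathrm{req}}$. Here Eq.~\eqref{eq:multi_occurrence_set_stability} gives $I(\widehat A)=I(A_{i,t}^{\star})$, which is nonempty because $\xi_{i,t}^{\star}$ lies in it. The argmax in Eq.~\eqref{eq:multi_occurrence_selection_aligned} therefore ranges over the same index set. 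The argument of Step 2 repeats with $\gamma_{i,t}^{\xi}$, $L_{i,t}^{\xi}$, and $R_{i,t}^{\xi}$, including the two degenerate cases ($\mathcal{C}_{i,t}^{\xi}=\varnothing$ or $L^{\xi}_{i,t}=0$). It yields $\widehat\xi_{i,t}(\widehat{\mathcal{M}})=\xi_{i,t}^{\star}$, and canonicalization keeps this value. Combining the two coordinates with Eq.~\eqref{eq:multi_predicted_command} gives $\widehat u_{i,t}=u_{i,t}^{\star}$.

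The main obstacle is bookkeeping rather than analysis, and it has two parts. The first is checking that the score functions compared at $\widehat{\mathcal{M}}$ and at the gold prefix are indexed over identical sets, which is why the stability hypotheses \eqref{eq:multi_operation_set_stability} and \eqref{eq:multi_occurrence_set_stability} are needed. The second is checking that index positions refer to the same occurrences, since the sensitivity bounds are stated per fixed $\xi$. Throughout, the strict inequality $\delta<\mathcal{R}^{\mathrm{cmd}}_{i,t}$ must be carried so that a strict margin survives and no tie arises.
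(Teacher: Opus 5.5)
Your proposal is correct and follows essentially the same route as the paper. Both arguments place the perturbed prefix in $\mathcal{B}_{i,t}$, run the margin-versus-sensitivity bound (including the empty-competitor and zero-constant degenerate cases) first for the operation and then for the Accepted occurrence, and finish with canonical target removal; you are slightly more explicit in checking that $a_{i,t}^{\star}$ is feasible and that $\xi_{i,t}^{\star}=\bot$ by canonicity in the target-independent case.
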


\begin{proof}
Condition~\eqref{eq:multi_command_certificate_condition} places the predicted
prefix inside \(\mathcal{B}_{i,t}\), so the feasible operation set is
unchanged. If no competing operation exists, the gold operation is the only
feasible choice. If competitors exist and \(L_{i,t}^{a}=0\), all feasible
operation scores remain constant throughout the ball, so the positive gold
margin is preserved. If \(L_{i,t}^{a}>0\), then for every competing operation
\(a\),
\begin{equation}
\begin{aligned}
&v_{i,t}
\left(
a_{i,t}^{\star};
\widehat{\mathcal{M}}_i^{(t-1)}
\right)
-
v_{i,t}
\left(
a;
\widehat{\mathcal{M}}_i^{(t-1)}
\right)
\\
&\qquad
\geq
\gamma_{i,t}^{a}
-
2L_{i,t}^{a}
d_{\mathcal{S}}
\left(
\widehat{\mathcal{M}}_i^{(t-1)},
\mathcal{M}_i^{\star(t-1)}
\right)
\\
&\qquad
>
0.
\end{aligned}
\label{eq:multi_certified_operation_margin}
\end{equation}
Hence the gold operation remains the unique maximizer.

When the gold operation requires a target, the same argument applies to the
stable Accepted-occurrence set. A singleton set yields the gold target
directly. When competitors exist, score constancy for
\(L_{i,t}^{\xi}=0\) or the bound
\begin{equation}
\begin{aligned}
& a_{i,t}
\left(
\xi_{i,t}^{\star};
\widehat{\mathcal{M}}_i^{(t-1)}
\right)
-
a_{i,t}
\left(
\xi;
\widehat{\mathcal{M}}_i^{(t-1)}
\right)
\\
&\qquad
\geq
\gamma_{i,t}^{\xi}
-
2L_{i,t}^{\xi}
d_{\mathcal{S}}
\left(
\widehat{\mathcal{M}}_i^{(t-1)},
\mathcal{M}_i^{\star(t-1)}
\right)
\\
&\qquad
>
0
\end{aligned}
\label{eq:multi_certified_occurrence_margin}
\end{equation}
for \(L_{i,t}^{\xi}>0\) preserves the gold occurrence as the unique maximizer.
Canonical target removal then gives
\(\widehat u_{i,t}=u_{i,t}^{\star}\).
\end{proof}

\paragraph{Command stability versus transaction correctness.}
Theorem~\ref{thm:multi_command_certificate} certifies only the canonical
operation-target command. Command equality is necessary for complete
transaction correctness at every step, while it is not sufficient for
\(\widehat\zeta_{i,t}=\zeta_{i,t}^{\star}\). Even when
\(\widehat u_{i,t}=u_{i,t}^{\star}\), the realized entry
\(\widehat e_{i,t}\) may differ from \(e_{i,t}^{\star}\) because the
assembler in Eq.~\eqref{eq:multi_argument_assembly} can depend on the perturbed
prefix state and its model outputs. Complete transaction equality therefore
requires the additional condition
\(\widehat e_{i,t}=e_{i,t}^{\star}\). The executor-time value already equals
the prescribed \(\kappa_{i,t}\) by
Eq.~\eqref{eq:multi_assembled_target_consistency}. This separation follows the
distinction between a canonical command and its base executor context in
Appendix~\ref{app:theory}.

\subsection{Accepted-Memory Containment}
\label{app:multi_accepted_containment}

Define the Write and Hold operation sets
\begin{equation}
\begin{aligned}
\mathcal{A}_{\mathrm{write}}
&=
\left\{
\texttt{append},
\texttt{revise}
\right\},
\\
\mathcal{A}_{\mathrm{hold}}
&=
\left\{
\texttt{noop},
\texttt{reject\_conflict},
\right.
\\[-0.2em]
&\qquad\left.
\texttt{defer\_verify}
\right\}.
\end{aligned}
\label{eq:multi_write_hold_partition}
\end{equation}
Let \(\operatorname{Acc}(\mathcal{M})=A\) denote the Accepted projection of
\(\mathcal{M}=(A,P,H)\).

\begin{corollary}[Containment of hold-only transaction mismatches]
\label{cor:multi_accepted_containment}
Suppose that every mismatched complete transaction satisfies
\begin{equation}
\begin{aligned}
e_{i,t}^{\mathrm{exec}}=1
\quad\Longrightarrow\quad
\widehat a_{i,t}
\in
\mathcal{A}_{\mathrm{hold}}
\quad\text{and}\quad
a_{i,t}^{\star}
\in
\mathcal{A}_{\mathrm{hold}}.
\end{aligned}
\label{eq:multi_hold_only_mismatch_aligned}
\end{equation}
Then
\begin{equation}
\begin{aligned}
\operatorname{Acc}
\left(
\widehat{\mathcal{M}}_i^{(t)}
\right)
=
\operatorname{Acc}
\left(
\mathcal{M}_i^{\star(t)}
\right)
\qquad
\text{for every }
t=0,\ldots,K_i,
\end{aligned}
\label{eq:multi_accepted_prefix_containment}
\end{equation}
and in particular
\begin{equation}
\begin{aligned}
\widehat A_{i+1}
=
A_{i+1}^{\star}.
\end{aligned}
\label{eq:multi_accepted_final_containment}
\end{equation}
\end{corollary}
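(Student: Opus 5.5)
The plan is to argue by induction on the trace position $t$, mirroring the proof of Theorem~\ref{thm:multi_exact_recovery_aligned}. The key difference is that the invariant is only equality of the Accepted projection, not of the full state. The base case $t=0$ is immediate: both traces start from $\widehat{\mathcal{M}}_i^{(0)}=\mathcal{M}_i=\mathcal{M}_i^{\star(0)}$.

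For the inductive step, assume $\operatorname{Acc}(\widehat{\mathcal{M}}_i^{(t-1)})=\operatorname{Acc}(\mathcal{M}_i^{\star(t-1)})=:A$, and split on $e_{i,t}^{\mathrm{exec}}$.

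\emph{Mismatched case, $e_{i,t}^{\mathrm{exec}}=1$.} Hypothesis~\eqref{eq:multi_hold_only_mismatch_aligned} places both $\widehat a_{i,t}$ and $a_{i,t}^{\star}$ in $\mathcal{A}_{\mathrm{hold}}$. Reading off Eq.~\eqref{eq:theory_executor}, each hold operation leaves the Accepted component of its input state unchanged:
\begin{itemize}
\item \texttt{noop} returns $(A,P,H)$;
\item \texttt{reject\_conflict} returns $(A,P,H\mathbin{\|}[\cdot])$;
\item \texttt{defer\_verify} returns $(A,P\mathbin{\|}[e],H)$.
\end{itemize}
I will record this as a small lemma: for every admissible hold command $u$ and every context $x=((A,P,H),e,s)$, we have $\operatorname{Acc}(T_u(x))=A$. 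It holds regardless of the realized entry, target, or time. Applying it to both traces gives Accepted projections equal to the respective prefix Accepted sequences, which coincide by the inductive hypothesis.

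\emph{Matched case, $e_{i,t}^{\mathrm{exec}}=0$.} Here $\widehat u_{i,t}=u_{i,t}^{\star}$, $\widehat e_{i,t}=e_{i,t}^{\star}$, and both traces use the time $\kappa_{i,t}$. The difficulty is that only the Accepted projections of the prefixes are known to agree, not the full states. So I cannot directly reuse Theorem~\ref{thm:multi_exact_recovery_aligned}.

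To handle this, I will show that the Accepted output of each transition in Eq.~\eqref{eq:theory_executor} depends on the input only through $(A,e,u)$:
\begin{itemize}
\item \texttt{append} yields $A\mathbin{\|}[e]$;
\item \texttt{revise} yields $\operatorname{Replace}_{\xi}(A,e)$;
\item the three hold operations yield $A$.
\end{itemize}
None of these reads $P$, $H$, or the time. Admissibility of the shared target $\xi$ is consistent, since $\xi\in I(A)$ depends only on $A$, which is common to both prefixes. Hence the two Accepted outputs coincide.

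This closes the induction for all $t=0,\ldots,K_i$, and the case $t=K_i$ combined with Eq.~\eqref{eq:multi_final_state_aligned} gives $\widehat A_{i+1}=A_{i+1}^{\star}$.

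The main obstacle is the matched case. The argument must avoid implicitly assuming full-state agreement, because earlier hold mismatches may already have made $P$ or $H$ diverge. The \emph{Accepted-locality} observation for each of the five transitions is the point to state carefully. A secondary subtlety is that the corollary's hypothesis constrains only mismatched steps. Matched steps may be writes, so the locality lemma must cover all five operations, not just the hold ones.
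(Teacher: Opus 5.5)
Your proposal is correct and follows the paper's proof: induction on $t$ with the same split on $e_{i,t}^{\mathrm{exec}}$. In the mismatched case both operations are hold operations and so preserve Accepted; in the matched case append and revise act identically on equal Accepted sequences. Your explicit Accepted-locality observation is a useful sharpening of a step the paper leaves implicit: the Accepted output of each transition depends only on $(A,e,u)$, so any divergence of $P$ or $H$ caused by earlier hold mismatches does not matter.
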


\begin{proof}
The Accepted sequences agree initially. Assume they agree before position
\(t\). If \(e_{i,t}^{\mathrm{exec}}=0\), the two traces use the same operation,
target, and realized entry. Append therefore appends the same entry to both
Accepted sequences, revision replaces the same occurrence by the same entry,
and every Hold operation preserves both Accepted sequences. If
\(e_{i,t}^{\mathrm{exec}}=1\),
Eq.~\eqref{eq:multi_hold_only_mismatch_aligned} ensures that both operations
belong to \(\mathcal{A}_{\mathrm{hold}}\), so both leave Accepted unchanged.
Induction proves Eq.~\eqref{eq:multi_accepted_prefix_containment}.
\end{proof}

Pending and History / Rejected may still differ under hold-only mismatches,
and those differences can alter later model predictions. The corollary makes a
prefixwise containment statement only for traces satisfying
Eq.~\eqref{eq:multi_hold_only_mismatch_aligned} at every mismatched position.
If an earlier Hold mismatch later induces an append or revision mismatch, that
later position violates the stated hypothesis and is exactly where the
Accepted-containment guarantee ceases to apply. Thus, downstream errors are
accounted for at the position where an Accepted-writing mismatch actually
occurs.

\subsection{Scope and Implication}
\label{app:multi_scope_implication}

A multi-information turn is represented as a deterministic sequence of
candidate-level TARL transactions composed over an evolving typed ledger.
Proposition~\ref{prop:multi_global_command_bound} quantifies the unavoidable
command-code error caused by assigning one shared ambient code to heterogeneous
atomic units. Proposition~\ref{prop:multi_prefix_separation} inherits the exact
five-effect separation established in Appendix~\ref{app:theory}, while
Proposition~\ref{prop:multi_order_sensitivity} shows that exact ordered-sequence
semantics make candidate order operationally meaningful.
Theorem~\ref{thm:multi_exact_recovery_aligned} gives exact final-state recovery
when every complete state-effective transaction is correct.
Theorem~\ref{thm:multi_command_certificate} gives a local sufficient condition
under which a perturbed prefix introduces no new operation or target mismatch.
Corollary~\ref{cor:multi_accepted_containment} isolates the Accepted-ledger
containment supplied by Hold operations.

These results extend the implemented TARL interface to multi-information input
without changing the claims of Appendix~\ref{app:theory}. They do not establish
a universal lower bound of five flat action labels, the uniqueness of the
three-ledger representation, or a general commutation law for ordered ledger
updates. A factorized command remains semantically equivalent whenever it
preserves the operation subtype and canonical target required by the executor.

\section{G Training Details, Controlled Adaptation, and Reproducibility}
\label{app:training_reproducibility}

\paragraph{Scope of the controlled comparison.}
The principal comparison evaluates complete memory systems under a common
external task contract. Three layers are kept conceptually separate. First,
the task boundary is shared: every method receives the same observable
candidate, visible memory, temporal information, source information, and
confidence information, and every prediction is scored through the same
transaction schema, deterministic ledger executor, and metric implementation.
Second, the method core remains native: retrieval, memory construction,
graph traversal, recurrent processing, routing, and update mechanisms are
preserved from the corresponding official implementation. Third, each method
retains the training objective and optimization scope that define its native
design, while task-facing variables required by TARL-Mem are attached through
a controlled prediction interface.

Accordingly, the main tables compare complete systems under identical
observations, backbone capacity, output semantics, execution rules, model
selection criteria, and evaluation code. They should not be interpreted as an
architecture-only comparison under an artificially identical loss function.
TARL-specific supervision is analyzed separately through controlled ablations,
so gains from the complete method and contributions of individual components
are reported at distinct experimental levels.

\paragraph{Official-source baseline reproduction and same-backbone control.}
All published baselines are instantiated from their official implementations
and preserve their original memory mechanisms and task pipelines. Across the
principal trainable-method comparison, Full History, all learned baselines, and
Ours use the identical Llama-3.1-8B-Instruct checkpoint and tokenizer as their
sole text backbone. The backbone is frozen for every method. No trainable
method uses a larger language model, a proprietary substitute, or a separately
fine-tuned text encoder. Prompting-only external LLM diagnostics receive no
TARL-Mem training and are reported separately.

The shared backbone fixes the dominant representation capacity while allowing
each memory architecture to operate in its intended regime. We therefore
retain method-appropriate retrieval depth, memory capacity, context window,
graph traversal depth, recurrent budget, non-backbone update scope, optimizer,
training horizon, and inference-call budget. Imposing one uniform value for
these quantities would selectively disable mechanisms whose defining behavior
depends on iterative retrieval, graph propagation, recurrence, or extended
memory access.

For each baseline, we first use the operating point recommended by its
original authors. When an official implementation provides several supported
configurations, the strongest configuration is selected on the TARL-Mem
development split using mean five-way Macro F1 as the primary criterion and
mean executable next-state accuracy as the tie breaker. The selected
configuration is frozen before test evaluation. No test example, test metric,
or test-derived threshold is used for configuration selection.

\paragraph{Controlled adapter contract.}
TARL-Mem requires every system to emit the same executable transaction record.
The adaptation layer is therefore restricted to a fixed contract with two
parts. The input adapter serializes only inference-visible fields and supplies
them to the shared tokenizer and frozen backbone. The output adapter converts
the representation produced by the native memory mechanism into the common
action, target, temporal, reliability, and ledger variables required by the
executor. Neither adapter may inspect gold annotations, alter the retrieved
evidence, insert an additional memory update, repair a predicted state, or
replace the baseline's native representation with a direct path from the
shared backbone.

The task-facing contract fixes the output dimensionality, decoding
convention, target-slot threshold, legality checks, and ledger-execution
interface. Each task head consumes the representation emitted by the
corresponding native mechanism. This restriction prevents the adaptation layer
from inserting baseline-specific transaction heuristics or bypassing the
native memory computation. Run-level audits record the trainable scope,
task-facing parameter count, backbone mode, optimization count, timing, and
selected checkpoint for every invocation.

Full History is treated as a deterministic control. It receives the complete
observable interaction history permitted by the dataset and passes the
resulting shared-backbone representation to the same task-facing contract.
Table~\ref{tab:adapted_method_hyperparameters} states the preserved native
mechanism and the exact adaptation boundary for every compared method.

\paragraph{Common information, prediction, and execution protocol.}
All methods use the same entity/topic-grouped TARL-Mem train, development, and
test split. They receive identical candidate text, visible ledger state,
temporal fields, source fields, confidence fields, and history permitted by
the corresponding evaluation instance. Gold actions, target slots,
reliability labels, temporal labels, ledger operations, and next states are
never serialized into any model input.

Every prediction is mapped to the same structured transaction schema and
executed once by the same deterministic ledger executor. The executor contains
no learned or method-specific parameters. State metrics are computed from the
executed Accepted, Pending, and History/Rejected ledgers after prediction.
During development and test, gold records are accessed only after execution
for checkpoint selection and metric computation. During training, Ours uses
the gold next state solely to derive an auxiliary executor-conditioned target,
as specified below. This target is unavailable to the forward path at
inference.

\begin{table*}[!htbp]
\centering
\caption{Controlled comparison protocol. The common task boundary is fixed
across methods, while native computational budgets remain appropriate to each
memory architecture.}
\label{tab:budget_fairness_protocol}
\footnotesize
\vspace{-0.15em}
\setlength{\tabcolsep}{3.8pt}
\renewcommand{\arraystretch}{0.96}
\begin{tabular}{@{}p{0.18\textwidth}
p{0.29\textwidth}
p{0.45\textwidth}@{}}
\hline\hline
Category & Controlled quantity & Protocol \\
\hline
Common data
& Train, development, and test split
& Identical entity/topic-grouped split for all methods, approximately
\(80/10/10\) by group \\
Common input
& Observable evidence
& Identical candidate, visible-memory, temporal, source, confidence, and
permitted-history fields \\
Common backbone
& Text representation
& Identical frozen Llama-3.1-8B-Instruct checkpoint and tokenizer \\
Common task boundary
& Prediction semantics
& Identical five-way action space, target convention, temporal variables,
reliability variables, ledger-operation schema, and serialization rules \\
Common execution
& State transition
& One shared deterministic executor with no learned or method-specific
parameters \\
Common decoding
& Output interpretation
& Identical legality checks, target fallback, state normalization, and metric
implementation \\
Common selection
& Development criteria
& Mean five-way Macro F1, with executable next-state accuracy as the tie
breaker \\
Common test use
& Final evaluation
& One evaluation after the development-selected configuration and checkpoint
are fixed \\
\hline
Native mechanism
& Retrieval and organization
& Official retrieval depth, top-\(K\), graph traversal, memory construction,
recurrence, routing, and update behavior are preserved \\
Native budget
& Memory and context
& Method-appropriate memory capacity, context window, chunk size, and history
coverage \\
Native optimization
& Non-backbone training
& Officially supported trainable scope, optimizer, schedule, horizon, and
stopping policy \\
Native inference
& Computation
& Method-appropriate retrieval, planning, recurrence, graph, and model-call
budget \\
\hline
Audit
& Recorded quantities
& Trainable scope, task-facing parameter count, backbone status, optimization
steps, timing, selected epoch, predictions, and debug records \\
\hline\hline
\end{tabular}
\vspace{-0.6em}
\end{table*}

\paragraph{Compute environment.}
The reported TARL training and evaluation runs were conducted on one compute
instance equipped with one NVIDIA RTX PRO 6000 GPU with \(96\) GB of device
memory, \(22\) Intel(R) Xeon(R) Platinum 8470Q vCPUs, and \(110\) GB of
system memory. Each invocation used one GPU without distributed parallelism.
The software environment comprised PyTorch 2.8.0, Python 3.12 on Ubuntu
22.04, and CUDA 12.8.

\paragraph{Configuration of Ours.}
Table~\ref{tab:ours_training_hyperparameters} reports the configuration
implemented by the released TARL training entry point. Each invocation trains
one run, and five-run evaluation is performed through independent
invocations. The settings in this table define Ours and are not imposed on the
native retrieval, memory, training, or inference budgets of competing
methods.

\begin{table*}[!htbp]
\centering
\caption{Training, model-selection, and evaluation configuration implemented
for Ours.}
\label{tab:ours_training_hyperparameters}
\footnotesize
\vspace{-0.15em}
\setlength{\tabcolsep}{3.8pt}
\renewcommand{\arraystretch}{0.94}
\begin{tabular}{@{}p{0.16\textwidth}
p{0.35\textwidth}
p{0.41\textwidth}@{}}
\hline\hline
Category & Hyperparameter & Implemented setting \\
\hline
Data
& Split protocol
& Entity/topic-grouped train, development, and test split, approximately
\(80/10/10\) by group \\
Data
& Leakage control
& Gold next-state and supervision-only fields excluded from every model input \\
\hline
Backbone
& Main text checkpoint
& Llama-3.1-8B-Instruct-family checkpoint supplied through
\texttt{--model\_path}; the compact local encoder is used only for code checks \\
Backbone
& Backbone update
& Frozen by default; optional LoRA adaptation is enabled only through
\texttt{--use\_lora} \\
Backbone
& Maximum candidate / slot length
& \(1024 / 256\) tokens \\
Backbone
& Maximum visible memory slots
& \(K=16\) by default; loading fails when an example exceeds the configured cap \\
Backbone
& Slot-encoding chunk size
& \(4\) \\
Backbone
& LoRA / 4-bit loading / frozen cache
& Disabled / disabled / \(0\) entries by default \\
Backbone
& Required Hugging Face loading / local fallback
& A non-empty model path requires Hugging Face loading; local fallback is
available only through the explicit diagnostic flag \\
\hline
Task module
& Candidate and memory encoding
& Attention-pooled candidate encoding and metadata-conditioned slot encoding \\
Task module
& Slot selection and interaction
& Learned slot scorer, global memory average, selected-slot context, and one
candidate-to-memory cross-interaction block \\
Task module
& Policy dimensions
& Backbone hidden width with \(32\)-dimensional action embeddings and
\(32\)-dimensional operation-metadata projections \\
Task module
& Scalar policy features
& \(4\) visible metadata, \(3\) temporal, \(4\) computed conflict, and
\(3\) predicted reliability features \\
\hline
Optimization
& Optimizer
& AdamW \\
Optimization
& AdamW coefficients
& PyTorch defaults: \(\beta_1=0.9\), \(\beta_2=0.999\),
\(\epsilon=10^{-8}\), and weight decay \(=10^{-2}\) \\
Optimization
& Base rate / TARL scale / effective rate / scheduler
& \(1.0\times10^{-4} / 0.3 / 3.0\times10^{-5}\) / none \\
Optimization
& Per-device batch / gradient accumulation
& \(4 / 8\), giving an effective batch size of \(32\) \\
Optimization
& Maximum gradient norm
& \(1.0\) \\
Optimization
& Maximum training horizon
& Up to \(500\) epochs for reported runs through
\texttt{--epochs 500}; the released example command uses \(10\) epochs \\
\hline
Model selection
& Primary development metric
& Five-way Macro F1 \\
Model selection
& Tie-breaking development metric
& Executable next-state accuracy \\
Model selection
& Minimum improvement / patience
& \(1.0\times10^{-4} / 8\) epochs \\
\hline
Evaluation
& Evaluation batch size
& \(4\) \\
Evaluation
& Target-slot threshold
& \(\sigma(z_j)\geq 0.50\) \\
Evaluation
& Required-target fallback
& Highest-probability visible slot when no slot crosses the threshold \\
Evaluation
& Temporal reference time
& \texttt{2026-05-10} \\
Evaluation
& Repeated runs
& Five independent invocations for every stochastic reported configuration;
each invocation receives one run-level seed through \texttt{--seed} \\
\hline
Execution
& Resume mode / logging interval
& Fresh by default, optional \texttt{--resume} / every \(50\) training batches \\
\hline\hline
\end{tabular}
\vspace{-0.6em}
\end{table*}

\paragraph{Development-time configuration selection.}
Baseline configurations are selected only from settings supported by their
official implementations. For every baseline with several valid official
operating points, the development split selects the strongest
method-appropriate configuration. Candidate configurations are compared using
mean five-way Macro F1 as the primary criterion and mean executable next-state
accuracy as the tie breaker. Retrieval, memory, context, training, and
inference budgets are fixed before test evaluation.

For Ours, the only multi-valued optimization sweep varies the three
top-level auxiliary weights:
\begin{equation}
\begin{aligned}
\Lambda
=
\{&
(0.4,0.1,0.1),
(0.1,0.4,0.1),
\\
&
(0.1,0.1,0.4),
(0.2,0.2,0.2)
\}.
\end{aligned}
\label{eq:training_weight_candidates}
\end{equation}
Each tuple denotes
\((\lambda_q,\lambda_r,\lambda_{\mathrm{cf}})\).
All candidates use the same split, run-level seeds, optimization horizon, and
evaluation pipeline. Selection uses development five-way Macro F1, with
executable next-state accuracy as the tie breaker. Pollution, Conflict
Preservation, Temporal Macro F1, and ECE are inspected as safety and
calibration diagnostics. The balanced configuration
\((0.2,0.2,0.2)\) provides the strongest overall development tradeoff and is
fixed before test evaluation. No executor-semantic constant defined below is
included in this sweep.

\paragraph{Checkpoint selection.}
Within each method's native training budget, saved checkpoints are ranked by
the same development criteria. Let \(F_e\) and \(A_e\) denote development
five-way Macro F1 and executable next-state accuracy for checkpoint \(e\).
A checkpoint replaces the incumbent \((F^\star,A^\star)\) when
\begin{equation}
\begin{aligned}
F_e
&>
F^\star+\delta_{\min},
\\
\text{or}\qquad
|F_e-F^\star|
&\leq
\delta_{\min}
\quad\text{and}\quad
A_e
>
A^\star+\delta_{\min},
\end{aligned}
\label{eq:training_checkpoint_selection}
\end{equation}
where \(\delta_{\min}=10^{-4}\).
Ours terminates after eight consecutive non-improving epochs or after
\(500\) epochs. Competing methods retain their official or
development-selected native training horizons and checkpoint schedules. Test
evaluation is performed once after restoring the selected development
checkpoint.

\paragraph{Baseline mechanisms and adaptation boundary.}
Table~\ref{tab:adapted_method_hyperparameters} records the native mechanism
preserved for each baseline. The adapter may serialize common observable
fields, consume the representation emitted by the native mechanism, and
produce the shared transaction record. It may not introduce
baseline-specific action rules, directly copy supervision fields, bypass the
native mechanism, or modify the ledger outside the common executor.

\begin{table*}[!htbp]
\centering
\caption{Official-source baseline reproduction and controlled TARL-Mem task
adaptation. Every method shares the frozen text backbone and external
transaction contract while retaining its native memory mechanism.}
\label{tab:adapted_method_hyperparameters}
\footnotesize
\vspace{-0.15em}
\setlength{\tabcolsep}{3.2pt}
\renewcommand{\arraystretch}{0.96}
\begin{tabular}{@{}p{0.13\textwidth}
p{0.25\textwidth}
p{0.27\textwidth}
p{0.27\textwidth}@{}}
\hline\hline
Method & Native configuration and budget & Preserved mechanism
& Controlled task adaptation \\
\hline
Full History
& Complete visible-context budget
& Deterministic use of the full observable interaction history
& Shared transaction prediction contract \\
LongMemEval
& Official or development-selected native configuration
& Long-context retrieval and evidence selection
& Native representation to shared transaction heads \\
MemoryBank
& Official or development-selected native configuration
& Persistent memory-bank construction and access
& Native representation to shared transaction heads \\
HippoRAG
& Official or development-selected native configuration
& Entity-relation graph construction and retrieval
& Native representation to shared transaction heads \\
A-Mem
& Official or development-selected native configuration
& Adaptive memory construction, linking, and retrieval
& Native representation to shared transaction heads \\
G-Memory
& Official or development-selected native configuration
& Graph-memory organization, traversal, and retrieval
& Native representation to shared transaction heads \\
MemAgent
& Official or development-selected native configuration
& Recurrent memory processing and learned memory update
& Native representation to shared transaction heads \\
Ours
& Configuration in Table~\ref{tab:ours_training_hyperparameters}
& Target grounding, temporal canonicalization, reliability comparison,
counterfactual valuation, and ledger prediction
& Native TARL-Mem transaction architecture \\
\hline\hline
\end{tabular}
\vspace{-0.6em}
\end{table*}

\paragraph{Supervision accounting and interpretation of the comparison.}
TARL-Mem supplies one common annotation record containing the five-way action,
target slot when required, temporal scope, reliability information, ledger
operation, and executed next state. No method receives an additional
human-annotated field. The distinction lies in how each complete method uses
the common record during training. Baseline task heads use the directly
annotated task variables. Ours additionally derives executor-conditioned
counterfactual targets from the same gold transaction and next state. These
derived quantities require no additional annotation, are computed
deterministically during training, and are unavailable at inference.

This distinction is part of the proposed method. The principal comparison
therefore measures complete-system performance under equal observed evidence,
backbone capacity, task semantics, execution, and evaluation. It does not
attribute the entire baseline gap to architecture while holding the training
objective fixed. Component ablations remove reliability comparison,
counterfactual supervision, target selection, temporal modeling, and ledger
prediction one at a time, thereby isolating their contributions within TARL.

\begin{table*}[!htbp]
\centering
\caption{Supervision provenance and use. ``Derived'' denotes a deterministic
training target constructed from existing TARL-Mem annotations, with no
additional human labeling or inference-time information.}
\label{tab:supervision_accounting}
\footnotesize
\vspace{-0.15em}
\setlength{\tabcolsep}{3.3pt}
\renewcommand{\arraystretch}{0.96}
\begin{tabular}{@{}p{0.24\textwidth}
p{0.25\textwidth}
p{0.15\textwidth}
p{0.15\textwidth}
p{0.13\textwidth}@{}}
\hline\hline
Signal & Provenance & Baseline task heads & Ours & Inference input \\
\hline
Five-way action
& Direct annotation
& Yes
& Yes
& No \\
Target slot
& Direct annotation when required
& Yes
& Yes
& No \\
Temporal scope
& Direct annotation
& Yes
& Yes
& No \\
Ledger operation
& Direct annotation
& Yes
& Yes
& No \\
Reliability difference
& Directly derived from annotated reliability values
& Yes
& Yes
& No \\
Direct and margin reliability terms
& Existing reliability annotations
& No
& Yes
& No \\
Counterfactual action and quality
& Derived from the annotated transaction, target, and next state
& No
& Yes
& No \\
Action execution metadata
& Fixed public semantics of the five executor operations
& Shared executor contract
& Shared executor contract and training feature
& No gold information \\
\hline\hline
\end{tabular}
\vspace{-0.6em}
\end{table*}

\paragraph{Training objective of Ours.}
The complete objective is
\begin{equation}
\mathcal{L}
=
\mathcal{L}_{\mathrm{act}}
+
\lambda_q\mathcal{L}_{\mathrm{slot}}
+
\lambda_r\mathcal{L}_{\mathrm{rel}}
+
\lambda_{\mathrm{cf}}\mathcal{L}_{\mathrm{cf}}.
\label{eq:training_full_objective}
\end{equation}
The grouped terms decompose as
\begin{equation}
\begin{aligned}
\mathcal{L}_{\mathrm{act}}
=\;&
\mathcal{L}_{\mathrm{act}}^{5\text{-way}}
+
0.20\,\mathcal{L}_{\mathrm{ledger}}
+
0.10\,\mathcal{L}_{\mathrm{temp}},
\\
\mathcal{L}_{\mathrm{rel}}
=\;&
\mathcal{L}_{\mathrm{rel}}^{\mathrm{margin}}
+
0.25\,\mathcal{L}_{\mathrm{rel}}^{\mathrm{direct}}
+
0.25\,\mathcal{L}_{\Delta},
\\
\mathcal{L}_{\mathrm{cf}}
=\;&
0.60\,\mathcal{L}_{\mathrm{cf}}^{\mathrm{act}}
+
0.40\,\mathcal{L}_{\mathrm{cf}}^{\mathrm{quality}}.
\end{aligned}
\label{eq:training_grouped_losses}
\end{equation}
Here,
\(\mathcal{L}_{\mathrm{act}}^{5\text{-way}}\)
is five-way transaction cross-entropy with label smoothing
\(\epsilon=0.03\).
The ledger-operation and temporal-scope terms supervise executable variables
associated with the selected transaction.
The target loss
\(\mathcal{L}_{\mathrm{slot}}\)
is binary cross-entropy over the \(K\) visible target-slot indicators.
The reliability loss supervises the candidate score, stored-record score, and
their relative ordering. The counterfactual loss supervises
operation-conditioned action and execution-quality predictions.

We use
\begin{equation}
\lambda_q
=
\lambda_r
=
\lambda_{\mathrm{cf}}
=
0.20.
\label{eq:training_default_weights}
\end{equation}
The resulting final coefficients are reported in
Table~\ref{tab:ours_loss_hyperparameters}.

\begin{table*}[!htbp]
\centering
\caption{Optimization weights and fixed executor-semantic constants used by
Ours. Only the three top-level auxiliary weights are included in the
development sweep. The semantic constants define the deterministic auxiliary
target and are fixed before all development and test runs.}
\label{tab:ours_loss_hyperparameters}
\footnotesize
\vspace{-0.15em}
\setlength{\tabcolsep}{3.6pt}
\renewcommand{\arraystretch}{0.94}
\begin{tabular}{@{}p{0.18\textwidth}
p{0.40\textwidth}
p{0.16\textwidth}
p{0.18\textwidth}@{}}
\hline\hline
Type & Component & Value & Selection status \\
\hline
Optimization
& Five-way action cross-entropy / label smoothing
& \(1.00 / 0.03\)
& Fixed \\
Optimization
& Ledger-operation / temporal-scope cross-entropy
& \(0.20 / 0.10\)
& Fixed \\
Optimization
& Target-slot term
& \(0.20\)
& Development-selected top-level balance \\
Optimization
& Reliability margin / direct / difference terms
& \(0.20 / 0.05 / 0.05\)
& Development-selected top-level balance \\
Optimization
& Counterfactual action / quality terms
& \(0.12 / 0.08\)
& Development-selected top-level balance \\
\hline
Semantic target
& Reliability decision margin
& \(0.15\)
& Fixed before experiments \\
Semantic target
& Exact-state / set-F1 agreement
& \(0.65 / 0.35\)
& Fixed before experiments \\
Semantic target
& Duplicate-entry penalty
& \(\max(0,1-0.15d_{\mathrm{dup}})\)
& Fixed before experiments \\
Semantic target
& Accepted / Pending / History-Rejected aggregation
& \(0.55 / 0.25 / 0.20\)
& Fixed before experiments \\
Semantic target
& Missing required target factor
& \(0.55\)
& Fixed before experiments \\
Semantic target
& Structurally inconsistent append factor
& \(0.85\)
& Fixed before experiments \\
\hline\hline
\end{tabular}
\vspace{-0.6em}
\end{table*}

\paragraph{Reliability supervision.}
For example \(i\), define the predicted and target reliability differences as
\begin{equation}
\begin{aligned}
\Delta\widehat r_i
&=
\widehat r_{\mathrm{new},i}
-
\widehat r_{\mathrm{old},i},
\\
\Delta r_i^{*}
&=
r_{\mathrm{new},i}^{*}
-
r_{\mathrm{old},i}^{*}.
\end{aligned}
\label{eq:training_reliability_delta}
\end{equation}
The per-example margin term is
\begin{equation}
\ell_i^{\mathrm{margin}}
=
\begin{cases}
\left[0.15-\Delta\widehat{r}_i\right]_{+},
&
a_i^{\star}=\texttt{revise},
\\[3pt]
\left[0.15+\Delta\widehat{r}_i\right]_{+},
&
\substack{
a_i^{\star}=
\texttt{reject\_conflict}
},
\\[3pt]
\left|\Delta\widehat{r}_i\right|,
&
\substack{
a_i^{\star}=
\texttt{defer\_verify}
},
\\[3pt]
0.05
\left|
\Delta\widehat{r}_i-\Delta r_i^{\star}
\right|,
&
\text{otherwise}.
\end{cases}
\label{eq:training_reliability_margin}
\end{equation}
where
\([x]_{+}=\max(0,x)\), and
\begin{equation}
\mathcal{L}_{\mathrm{rel}}^{\mathrm{margin}}
=
\frac{1}{N}
\sum_{i=1}^{N}
\ell_i^{\mathrm{margin}}.
\end{equation}
The direct term jointly regresses candidate and stored-memory reliability:
\begin{equation}
\begin{aligned}
\mathcal{L}_{\mathrm{rel}}^{\mathrm{direct}}
=
\frac{1}{2}
\Bigl[
&
\operatorname{SL1}
\left(
\widehat r_{\mathrm{new}},
r_{\mathrm{new}}^{*}
\right)
\\
+
&
\operatorname{SL1}
\left(
\widehat r_{\mathrm{old}},
r_{\mathrm{old}}^{*}
\right)
\Bigr].
\end{aligned}
\label{eq:training_direct_reliability}
\end{equation}
The reliability-difference term is
\begin{equation}
\mathcal{L}_{\Delta}
=
\operatorname{SL1}
\left(
\Delta\widehat r,
\Delta r^{*}
\right).
\label{eq:training_confidence_delta}
\end{equation}

\paragraph{Executor-conditioned counterfactual supervision.}
The auxiliary quality target converts each candidate operation into an
explicit state consequence before assigning training credit. Its fixed
constants encode three semantic priorities. Exact ledger recovery receives
more credit than partial set overlap. Errors in Accepted receive the greatest
weight because they persist into future retrieval and updates, while Pending
and History/Rejected remain explicitly represented with positive weight.
Operations that lack a required target or contradict the grounded transaction
structure receive a multiplicative reduction.

These constants are implementation-level definitions of the auxiliary target.
They are fixed before development selection, reused unchanged for every split,
source, run, and ablation, and never optimized on test results. They do not
change the gold action, the deterministic executor, or any evaluation metric,
and they are absent from inference. The empirical claim concerns the value of
executor-conditioned consequence supervision as a component, evaluated by
removing that component in the ablation study. No claim is made that the
particular constants are unique or universally optimal.

For ledger
\(\ell\in\{A,P,R\}\),
the agreement between the state produced by operation \(a\) and the gold next
ledger is
\begin{equation}
\begin{aligned}
s_{\ell,i}(a)
=
\rho_{\ell,i}(a)
\Bigg\{
&
0.65\,
\mathbf{1}
\left[
\mathcal{N}
\left(
\widehat S_{\ell,i}^{a}
\right)
=
\mathcal{N}
\left(
S_{\ell,i}^{*}
\right)
\right]
\\
&
+
0.35\,
\operatorname{F1}
\left(
\mathcal{N}
\left(
\widehat S_{\ell,i}^{a}
\right),
\mathcal{N}
\left(
S_{\ell,i}^{*}
\right)
\right)
\Bigg\},
\end{aligned}
\label{eq:training_state_agreement}
\end{equation}
where \(\mathcal{N}\) is the deterministic state-normalization procedure used
by the evaluation code. If the normalized predicted ledger contains
\(d_{\mathrm{dup}}\) duplicate entries,
\begin{equation}
\rho_{\ell,i}(a)
=
\max\left(0,1-0.15d_{\mathrm{dup}}\right).
\label{eq:training_duplicate_penalty}
\end{equation}

The operation-conditioned quality target is
\begin{equation}
\begin{aligned}
Q_i(a)
=
\kappa_i(a)
\Bigl[
&
0.55\,s_{A,i}(a)
+
0.25\,s_{P,i}(a)
\\
&
+
0.20\,s_{R,i}(a)
\Bigr],
\end{aligned}
\label{eq:training_counterfactual_quality}
\end{equation}
with
\begin{equation}
\kappa_i(a)
=
\begin{cases}
0.55,
&
\substack{
a\in
\{\texttt{revise},\texttt{reject\_conflict}\}
\\
\text{and no target slot is available},
}
\\[5pt]
0.85,
&
\substack{
a=\texttt{append}
\\
\text{and an existing target slot is grounded},
}
\\[5pt]
1,
&
\text{otherwise}.
\end{cases}
\label{eq:training_structural_penalty}
\end{equation}
The best simulated action is
\begin{equation}
a_i^{\mathrm{cf}}
=
\arg\max_{a\in\mathcal{A}}
Q_i(a).
\label{eq:training_counterfactual_action}
\end{equation}
It supervises raw operation-conditioned logits through cross-entropy, while
the five values \(Q_i(a)\) supervise the corresponding sigmoid quality
predictions through Smooth-L1. Every gold-derived quantity is constructed
inside the training objective and is unavailable to inference.

\paragraph{Single-path inference.}
At inference time, TARL predicts and executes exactly one transaction:
\begin{equation}
\begin{aligned}
\widehat a_t
&=
\arg\max_{a\in\mathcal{A}}
p_{\theta}
\left(
a\mid s_t,\mathcal{M}_t
\right),
\\
\widehat{\mathcal{M}}_{t+1}
&=
\operatorname{Exec}
\left(
\mathcal{M}_t,
\widehat a_t,
\widehat{\lambda}_t
\right),
\end{aligned}
\label{eq:training_single_path_inference}
\end{equation}
where
\(\widehat{\lambda}_t\)
collects the predicted target slot, ledger operation, temporal scope, and
reliability variables required by the selected transaction.

The five operation scores are computed in parallel, one structured transaction
is serialized, and the common executor is invoked once. The model forward path
does not construct or execute hypothetical ledger states. Counterfactual
quality targets, gold actions, gold targets, and gold next states are absent
from inference.

\paragraph{Task-facing objective for adapted baselines.}
Each baseline retains its native objective, optimizer, training schedule, and
trainable scope for non-backbone method-specific components. The common
TARL-Mem heads are trained with
\begin{equation}
\begin{aligned}
\mathcal{L}_{\mathrm{base}}
=\;&
\mathcal{L}_{\mathrm{act}}^{5\text{-way}}
+
0.05\,\mathcal{L}_{\mathrm{slot}}
+
0.10\,\mathcal{L}_{\mathrm{ledger}}
\\
&
+
0.10\,\mathcal{L}_{\mathrm{temp}}
+
0.02\,\mathcal{L}_{\Delta}.
\end{aligned}
\label{eq:training_baseline_objective}
\end{equation}
The five-way action term uses zero label smoothing. This objective supplies the
common directly annotated task variables needed to produce an executable
transaction. Reliability comparison and executor-conditioned counterfactual
valuation define TARL and are evaluated through the corresponding ablations.
They use existing TARL-Mem annotations and add no inference-time information.

When an official baseline supports end-to-end task adaptation, gradients from
the task-facing heads follow the recommended trainable scope of its
non-backbone components. When the native representation is officially frozen
or retrieval-only, only the task-facing heads are updated. The shared language
backbone remains frozen in every case.

\paragraph{Training-free diagnostic policies.}
Auxiliary rule-based policies are reported separately from the principal
official-source-code baseline comparison. They are diagnostic controls and do
not determine the main baseline results. Each policy reuses the visible
context produced by the corresponding method and applies a deterministic
transaction rule.

A candidate is treated as related to an existing slot when lexical overlap or
the native retrieval score exceeds the method-specific conflict threshold.
A conflicting candidate triggers
\texttt{revise}
when
\(r_{\mathrm{new}}\geq r_{\mathrm{old}}+0.08\)
and
\texttt{reject\_conflict}
when
\(r_{\mathrm{old}}>r_{\mathrm{new}}\).
Evidence is treated as insufficient when
\(r_{\mathrm{new}}<0.40\),
an uncertainty cue is present, or a current-state candidate falls below the
method-specific deferral threshold. A non-conflicting candidate above the
redundancy threshold maps to
\texttt{noop};
all remaining candidates map to
\texttt{append}.
Table~\ref{tab:direct_hyperparameters} reports the diagnostic thresholds.

\begin{table*}[!htbp]
\centering
\caption{Hyperparameters of the auxiliary training-free diagnostic policies.
The thresholds correspond to redundancy, conflict relatedness, and
current-state deferral.}
\label{tab:direct_hyperparameters}
\footnotesize
\vspace{-0.15em}
\setlength{\tabcolsep}{4.0pt}
\renewcommand{\arraystretch}{0.94}
\begin{tabular}{@{}p{0.18\textwidth}
p{0.08\textwidth}
p{0.11\textwidth}
p{0.11\textwidth}
p{0.11\textwidth}
p{0.24\textwidth}@{}}
\hline\hline
Diagnostic policy
& Slot cap
& Redundancy
& Conflict
& Defer
& Additional setting \\
\hline
Full History
& \(16\)
& \(0.70\)
& \(0.18\)
& \(0.18\)
& All-slot scan \\
LongMemEval
& \(8\)
& \(1.15\)
& \(0.16\)
& \(0.20\)
& Time-aware BM25 \\
MemoryBank
& \(8\)
& \(0.62\)
& \(0.16\)
& \(0.22\)
& Stateful bank \\
HippoRAG
& \(8\)
& \(0.55\)
& \(0.14\)
& \(0.18\)
& Entity-relation graph \\
A-Mem
& \(8\)
& \(0.52\)
& \(0.14\)
& \(0.22\)
& Lightweight mode \\
G-Memory
& \(8\)
& \(0.50\)
& \(0.13\)
& \(0.20\)
& One graph hop \\
MemAgent
& \(8\)
& \(0.56\)
& \(0.14\)
& \(0.22\)
& Chunk \(4\), memory \(8\) \\
Ours rule policy
& \(16\)
& \(0.68\)
& \(0.15\)
& \(0.25\)
& Full visible context \\
\hline\hline
\end{tabular}
\vspace{-0.6em}
\end{table*}

\noindent
The diagnostic implementation maps visible verifiability categories to
\[
\{0.90,0.60,0.30,0.85,0.55\}
\]
for high, medium, low, source-grounded, and unknown evidence. It maps source
categories to
\[
\{0.90,0.78,0.72,0.75,0.48,0.42,0.50\}
\]
for user, external, prior-knowledge, accepted-memory, model, inferred, and
unknown sources.

Conflict confidence is
\[
\min
\left(
0.90,
0.55+
|r_{\mathrm{new}}-r_{\mathrm{old}}|
+
0.15s
\right),
\]
\texttt{noop} confidence is
\[
\min
\left(
0.88,
0.50+0.35s
\right),
\]
\texttt{append} confidence is
\[
\min
\left(
0.86,
0.50+0.25r_{\mathrm{new}}
\right),
\]
and \texttt{defer\_verify} confidence is fixed at \(0.58\), where \(s\) is the
selected native retrieval score.

\paragraph{Ablations and controlled comparisons.}
Every ablation inherits the complete data, optimization,
checkpoint-selection, execution, and evaluation configuration of Ours and
changes only the named implementation switch. The evaluated variants disable
temporal modeling, reliability comparison, target-slot selection, ledger
prediction, counterfactual supervision, cross interaction, context encoding,
or the complete set of auxiliary stateful components. All unaffected modules,
losses, hyperparameters, training horizons, and evaluation procedures remain
fixed. These experiments separate the contribution of TARL-specific
structure and supervision from the complete-method comparison.

\paragraph{Randomness and repeated runs.}
Every stochastic reported configuration is evaluated through five independent
training invocations. Each invocation receives one run-level seed and
initializes Python, NumPy, PyTorch, and all available CUDA generators. At
epoch \(e\), the training set is shuffled using
\[
s+1009e,
\]
where \(s\) is the run-level seed. Exact seed values are omitted from the
paper. Deterministic controls produce identical outputs across repeated
executions under the same environment.

\paragraph{Artifact scope and release.}
The complete model and training code, the TARL-Mem dataset,
the dataset-construction code, and detailed reproduction materials
will be publicly released with the final publication.

\end{document}